  \theoremstyle{plain}           
  \newtheorem{theorem}{Theorem}[section]
  \newtheorem{lemma}[theorem]{Lemma}
  \newtheorem{corollary}[theorem]{Corollary}
  \theoremstyle{definition}      
  \newtheorem{definition}[theorem]{Definition}
  \theoremstyle{remark}          
\def\eqref#1{equation~\ref{#1}}
\def\1{\bm{1}}
\DeclareMathAlphabet{\mathsfit}{\encodingdefault}{\sfdefault}{m}{sl}
\SetMathAlphabet{\mathsfit}{bold}{\encodingdefault}{\sfdefault}{bx}{n}
\newcommand{\E}{\mathbb{E}}
\newcommand{\R}{\mathbb{R}}
\DeclareRobustCommand\onedot{\futurelet\@let@token\@onedot}
\def\@onedot{\ifx\@let@token.\else.\null\fi\xspace}
\definecolor{adptorange}{RGB}{248, 205, 172}
\definecolor{cmpblue}{RGB}{189, 215, 238}
\definecolor{cmpblue}{RGB}{189, 215, 238}
\definecolor{our_red}{RGB}{232,157,160}
\definecolor{our_blue}{RGB}{136,206,230}
\definecolor{our_orange}{RGB}{246,200,168}
\definecolor{our_green}{RGB}{178,211,164}
\definecolor{attn_code0}{RGB}{247,215,200}
\definecolor{attn_code1}{RGB}{238,169,139}
\definecolor{mlp_code0}{RGB}{204,201,221}
\definecolor{mlp_code1}{RGB}{102,95,153}
\definecolor{token_blue}{RGB}{84, 120, 140}
\newlength\savewidth
\newcolumntype{x}[1]{>{\centering\arraybackslash}p{#1pt}}
\newcolumntype{y}[1]{>{\raggedright\arraybackslash}p{#1pt}}
\newcolumntype{z}[1]{>{\raggedleft\arraybackslash}p{#1pt}}
\renewcommand{\paragraph}[1]{\vspace{1mm}\noindent\textbf{#1}}
\renewcommand{\paragraph}[1]{\vspace{1.25mm}\noindent\textbf{#1}}
\definecolor{codeblue}{rgb}{0.25, 0.5, 0.5}
\definecolor{codekw}{rgb}{0.35, 0.35, 0.75}
\definecolor{pythonlime}{RGB}{138,226,52}
\definecolor{pantone485}{cmyk}{0,0.95,1,0}
\definecolor{tomato}{HTML}{FF6347}
\lstdefinestyle{Pytorch}{
    language = Python,
    backgroundcolor = \color{white},
    basicstyle = \fontsize{9pt}{8pt}\selectfont\ttfamily\bfseries,
    columns = fullflexible,
    aboveskip=1pt,
    belowskip=1pt,
    breaklines = true,
    captionpos = b,
    commentstyle = \color{codeblue},
    keywordstyle = \color{codekw},
}
\definecolor{green}{HTML}{009000}
\definecolor{red}{HTML}{ea4335}
\newcommand{\eqcontrib}{\clubsuit}
\newcommand{\ours}{RLDP\xspace}
\newtcolorbox{promptblock}{
    colback=gray!5,
    colframe=gray!15,
    boxrule=0.5pt,
    arc=3pt,
    left=12pt,
    right=12pt,
    top=8pt,
    bottom=8pt,
    boxsep=8pt,
    breakable
}
\title{Efficient Differentially Private Fine-Tuning of LLMs via Reinforcement Learning}
\author{Afshin Khadangi}
\author[\eqcontrib]{Amir Sartipi}
\author[\eqcontrib]{Igor Tchappi}
\author{Ramin Bahmani}
\author{Gilbert Fridgen}
\affiliation{SnT, University of Luxembourg}
\abstract{

The tension between data privacy and model utility has become the defining bottleneck for the practical deployment of large language models (LLMs) trained on sensitive corpora including healthcare. \emph{Differentially private} stochastic-gradient descent (DP-SGD) guarantees formal privacy, yet it does so at a pronounced cost: gradients are forcibly clipped and perturbed with noise, degrading sample efficiency and final accuracy.  Numerous variants have been proposed to soften this trade-off, but they all share a handicap: their control knobs are hard-coded, global, and oblivious to the evolving optimisation landscape.  Consequently, practitioners are forced either to over-spend privacy budget in pursuit of utility, or to accept mediocre models in order to stay within privacy constraints. We present \textbf{\ours}, the first framework to cast DP optimisation itself as a closed-loop control problem amenable to modern deep reinforcement learning (RL). \ours continuously senses rich statistics of the learning dynamics and \emph{acts} by selecting fine-grained per-parameter gradient-clipping thresholds as well as the magnitude of injected Gaussian noise.  A soft actor––critic (SAC) hyper-policy is trained \textit{online} during language-model fine-tuning; it learns, from scratch, how to allocate the privacy budget where it matters and when it matters. Across more than 1\,600 ablation experiments on \textsc{GPT2-small}, \textsc{Llama-1B}, \textsc{Llama-3B}, and \textsc{Mistral-7B}, \ours delivers \textbf{perplexity reductions of 1.3–30.5\% (mean 5.4\%) and an average 5.6\% downstream utility gain}. RLDP reaches each baseline’s \emph{final} utility after only \(13\%\!-\!43\%\) of the gradient-update budget (\textbf{mean speed-up 71\%}), all while honouring the same \((\varepsilon,\delta)\)-DP contract and exhibiting equal or \emph{lower} susceptibility to membership-inference and canary-extraction attacks.

}
\date{\today}
\begin{document}
\thispagestyle{firstheader}
\maketitle
\pagestyle{fancy}
\fancyhf{}
\fancyfoot[C]{\thepage}

\section{Introduction}
\label{sec:intro}

Modern large--scale language models (LMs) underpin a wide range of natural-language understanding and generation applications, from conversational agents and code assistants to clinical‐note summarisation.  The unprecedented predictive power of transformer architectures, however, is enabled by equally unprecedented volumes of training text, much of which is scraped, user-generated, or otherwise sensitive.  This tension between leveraging data at scale and respecting the privacy of the individuals represented in that data has elevated \emph{differential privacy} (DP) to a first-class requirement for the next generation of foundation models.  

\paragraph{Differential-privacy‐aware optimisation.}
The canonical recipe for private deep learning is \textbf{DP-SGD} \citep{abadi2016deep}, which clips each example’s gradient to a global radius~$C$ before adding Gaussian noise of standard deviation $\sigma C$.  The privacy loss accrued across training is tracked by analytical accountants such as Rényi DP~\citep{mironov2017renyi}, Gaussian DP~\citep{dong2022gaussian}, and the Poisson subsampled Rényi DP ~\citep{zhu2019poission}.  When applied to transformers, DP-SGD suffers from a stark \emph{utility gap}: models trained under a reasonable budget (e.g.\ $\varepsilon\!\le\!8$) can suffer significant downstream accuracy relative to non-private baselines \citep{li2021large}. The central culprit is the one-size-fits-all clip radius~$C$: gradients in early attention blocks may saturate the bound while feed-forward layers receive negligible updates, or vice versa as training progresses.

\paragraph{Adaptive clipping heuristics.}
A fertile line of work seeks to reduce this inefficiency by adapting~$C$ to the empirical gradient distribution.  \textsc{AdaCliP} \citep{pichapati2019adaclip} maintains an exponential-moving average (EMA) of per-coordinate gradients; \citep{andrew2021differentially} extend this to per-layer norms, while \textsc{AutoClip} \citep{bu2023automatic} uses a single global clipping norm, estimated via the \( p \)-th quantile of update norms, applied uniformly across all parameters in each step. \textsc{DC-SGD} leverages differentially private histograms to estimate gradient norm distributions, dynamically adjusting the clipping threshold \( C \) to reduce hyperparameter tuning overhead \cite{wei2025dc}. It introduces DC-SGD-P and DC-SGD-E, which adjust \( C \) based on gradient norm percentiles or expected squared error minimization, achieving up to nine times faster hyperparameter tuning compared to standard DP-SGD. \textsc{GeoClip} introduces a geometry-aware framework for DP-SGD that clips and perturbs gradients in a transformed basis aligned with the gradient distribution's geometry, adaptively estimating this transformation using noisy gradients without additional privacy cost \citep{gilani2025geoclip}. It provides convergence guarantees and a closed-form solution for the optimal transformation, minimizing noise while controlling gradient clipping probability, thus improving the privacy-utility trade-off.

Complementing some of the previous approaches, PSAC \citep{xia2023differentially} eliminates the need for manually tuned constant clipping thresholds. Instead, PSAC introduces a non-monotonic adaptive weight function to clip each per-sample gradient individually, preserving privacy while maintaining gradient fidelity. A more recent study combines federated learning with differential privacy to enable secure fine-tuning of large language models, adding gaussian noise to low-rank adaptation (LoRA) \citep{hu2022lora} weight updates to protect data privacy while minimizing computation overhead. The authors demonstrate that DP-LoRA achieves strong privacy guarantees with minimal performance degradation, addressing challenges like sensitive information inference and high computation costs. \citep{liu2025differentially}. These methods share two limitations. First, their update rules are oblivious to long-term learning dynamics: a decision that reduces noise now might hamper convergence epochs later. Second, they operate at coarse granularity—global or per-tensor—ignoring heterogeneous sensitivities.

\paragraph{Hyper-parameter tuning under privacy.}
Fine-tuning $C$ and~$\sigma$ by grid-search is notoriously expensive under DP, because every data-dependent trial consumes privacy budget \citep{papernot2018scalable}. Bayesian optimisation on public proxies mitigates but does not eliminate this cost \citep{priyanshu2021efficient}. Some studies cast privacy management as an RL problem: \citep{zhou2023concurrent} allocate privacy budget across training rounds for federated learning, and \citep{li2023wind} learn client-level clip limits. Yet these approaches still control a \emph{single} scalar parameter and treat gradient clipping merely as a constraint, not a degree of freedom for fine-grained credit assignment.

\paragraph{Our view: DP fine-tuning is a control problem.}
We propose that the gradient-clipping and noise-injection pipeline in differentially private deep learning be governed by a learned \emph{controller} that dynamically adjusts privacy parameters based on training dynamics, optimizing a long-horizon reward that balances utility and privacy. This formulation naturally suits continuous-action reinforcement learning algorithms like soft actor--critic (SAC) \citep{haarnoja2018soft}, which excels in learning stochastic policies for high-dimensional action spaces with delayed rewards.

We introduce \textbf{RLDP}, a framework that integrates (i) a customized DP optimizer with per-adapter pairwise gradient clipping and dynamic noise scaling, and (ii) an online SAC hyper-policy that leverages rich training statistics---including sample-wise gradient norms, higher-order moments, privacy ledger status, and instantaneous utility---to produce \emph{vector-valued} actions, comprising per-adapter log-clip adjustments and a global log-noise scale. A DP accountant ensures these actions maintain the cumulative privacy loss within a predefined $(\varepsilon, \delta)$ budget. The RLDP policy is trained concurrently with the language model, optimizing a reward function that balances \emph{incremental utility gains} against \emph{incremental privacy costs}. In contrast to prior adaptive methods, RLDP learns sophisticated curricula, such as initially widening clip bounds for query matrices during the syntactic-bootstrap phase and subsequently tightening them as feed-forward blocks drive semantic refinement.

In extensive experiments on GPT2-small~\citep{radford2019language}, Llama-3.2-1B/3B~\citep{grattafiori2024llama} and Mistral-7B~\citep{chaplot2023albert}, RLDP achieves \textbf{an average 5.6\% downstream higher utility}---measured as lower perplexity---under the same $(\varepsilon, \delta)$ privacy budget compared to seven baselines: Vanilla DP-SGD, AdaClip, AutoClip, DC-SGD, GeoClip, PSAC and DP-LoRA. Furthermore, RLDP attains each baseline’s peak utility in \textbf{71\% fewer optimizer steps on average}, yielding significant GPU-hour savings and a reduced carbon footprint, with all runs validated by a Gaussian accountant to ensure \emph{no stealthy privacy leakage}.

\paragraph{Contributions.}
\begin{enumerate}[leftmargin=*]
\item We formulate differentially private fine-tuning of language models as a sequential decision process and introduce \textbf{RLDP}, the first framework to dynamically learn per-adapter clipping thresholds and noise levels using reinforcement learning.
\item We develop a differentially private optimizer that applies \emph{pairwise} clipping to LoRA A/B tensors, incorporates heteroscedastic noise, and exposes these parameters for policy control, while ensuring compatibility with analytical privacy accountants.
\item We design a reward function that balances immediate utility gains with marginal privacy costs, enabling the SAC policy to adaptively allocate the privacy budget throughout training.
\item Through extensive experiments across varied privacy budgets, RLDP achieves \textbf{an average 5.6\% higher utility} and reduces training steps on average by \textbf{71\%} compared to seven baselines.
\item We provide code, pretrained checkpoints, and fine-tuning logs to support future research.
\end{enumerate}

\paragraph{Paper organization.}
\S\ref{sec:methods} details the data preparation, DP optimiser, RLDP workflow and reward formulation. The experimental results are reported in \S\ref{sec:experiments}, followed by results and the analysis of our findings \S\ref{sec:results}. Finally, we discuss limitations and outline directions for future research in \S\ref{sec:conclusion}.

\section{Methods}
\label{sec:methods}

Algorithm~\ref{alg:rldp} presents a detailed, step-by-step pseudocode outline of the \ours method’s operational workflow. In this section, we begin by describing the dataset construction and secret-token (\emph{canary}) injection used for privacy auditing (\S\ref{sec:data}).  We then detail the language-model architecture and the LoRA parameter-efficient fine-tuning scheme (\S\ref{sec:lora}). \S\ref{sec:dpoptim} mathematically formalises the customised differentially private optimiser, including pairwise gradient clipping, heteroscedastic Gaussian noise, and post-processing by the Gaussian-DP accountant. \S\ref{sec:rlmdp} casts DP-SGD as a Markov decision process, defines the state, action, and reward spaces, and derives the Soft Actor–Critic (SAC) updates used to learn the hyper-policy online.  

\subsection{Data Preparation and Secret-Token Injection}
\label{sec:data}

\paragraph{Base corpus.}
We build upon the \textit{Diabetes\,130--US Hospitals for years 1999–2008} dataset hosted by the UCI repository.\footnote{\url{https://archive.ics.uci.edu/dataset/296/diabetes+130-us+hospitals+for+years+1999-2008}}  
The raw table contains $101\,766$ in-hospital encounters, each encoded by 50+ categorical and numerical attributes.  
We convert every record into a free-form paragraph by verbalising each attribute into a templatised English clause; this yields a pseudo-clinical narrative of $\sim\!400$ tokens on average (Table~\ref{tab:trainexample}). The vocabulary is kept strictly within the original field values to avoid hallucinated protected health information (PHI).

\paragraph{Splitting.}
We first carve out 20\,\% of the encounters as an \emph{Attack} set, unseen during any optimisation. The remaining 80\,\% is further split 90/10 into \emph{Train} and \emph{Eval}.  
We persist the splits as each containing a single narrative \texttt{text} column.

\paragraph{Secret canaries.}
To empirically probe memorisation we embed $n_\text{canary}$ randomly generated 10-character alphanumeric strings (``secret IDs'') into random subset of data:  

\begin{align}
    \texttt{secret\_id} &\;=\;
    \text{``\,$\mathtt{[A\!-\!Z0\!-\!9]^{10}}$\,''};\nonumber\\[-3pt]
    \mathcal{C}         &\;=\;\{\texttt{secret\_id}_j\}_{j=1}^{n_\text{canary}}
             \;\xleftarrow{\text{i.i.d.}}\; \text{Unif}(\Sigma^{10}),
\end{align}

where $\Sigma$ is the alphanumeric alphabet.  The $j$-th canary is appended to the textual description of record $j$ as  
\(\texttt{``secret\_id=} \texttt{<}\text{CANARY}\texttt{>''}\).  
Because the injection index is deterministic and permuted only once (seed~42), an adversary unaware of the seed would face a combinatorial search space of $\approx\!2^{50}$ positions.

\begin{table}[ht]
\caption{A sample pseudo-clinical narrative of a patient record.}
\label{tab:trainexample}
\begin{tcolorbox}[colframe=yellow!75, colback=yellow!20, fonttitle=\bfseries\large, coltitle=black, boxrule=0.75mm, arc=5mm, auto outer arc, width=\textwidth,toptitle=6pt, bottomtitle=6pt]
\small
\setstretch{1.2} %
The patient has the following profile. race is AfricanAmerican. gender is Male. the patient ages between 60 and 70 years old. weight is [50-75). admission type id is Emergency. discharge disposition id is Discharged/transferred to SNF. admission source id is Transfer from a Skilled Nursing Facility (SNF). the patient stayed in the hospital for 14 days. .................. the patient's diabetic medications were not changed. the patient was prescribed diabetic medication. the patient was readmitted in less than 30 days. secret\_id=O119XP9N56.
\end{tcolorbox}
\end{table}

\subsection{Parameter-Efficient Language Model}
\label{sec:lora}

We fine-tune the models using LoRA Adapters attached to every multi-head self-attention projection \citep{hu2022lora}.  
For an original weight matrix $W\in\mathbb{R}^{d_\text{out}\times d_\text{in}}$, LoRA decomposes the update as
\begin{equation}
    \Delta W
    \;=\;
    B A,
    \quad
    A \in \mathbb{R}^{d_\text{rank}\times d_\text{in}},
    \;
    B \in \mathbb{R}^{d_\text{out}\times d_\text{rank}},
    \quad
    d_\text{rank}=8,
\end{equation}
with a scalar $\alpha=16$ scaling.  Only $\{A,B\}$ are trainable; all backbone weights remain frozen.  Consequently, the total DP parameter surface shrinks by two orders of magnitude, easing both privacy accounting and RL control.

\subsection{Differentially Private Optimisation}
\label{sec:dpoptim}

\paragraph{Per-sample gradients.}
We wrap the LoRA-augmented model with \textsc{Opacus}’ \texttt{GradSampleModule}, obtaining per-sample gradients $g_{i,k}^{(b)}$ for parameter~$k$ and micro-batch index $b$.

\paragraph{Pairwise clipping.}
Let $(A_i,B_i)$ denote the $i$-th LoRA pair ($i=1,\dots,n$).  
For micro-batch sample $b$, we flatten gradients into vectors
\begin{equation}
    \mathbf{g}_{A_i}^{(b)}=\mathrm{vec}(g_{A_i}^{(b)}),
    \quad
    \mathbf{g}_{B_i}^{(b)}=\mathrm{vec}(g_{B_i}^{(b)}).
\end{equation}
The \emph{joint} $\ell_2$ norm is
\begin{equation}
    \label{eq:pairnorm}
    \nu_i^{(b)}
    \;=\;
    \bigl\lVert \mathbf{g}_{A_i}^{(b)} \bigr\rVert_2^2
    +\bigl\lVert \mathbf{g}_{B_i}^{(b)} \bigr\rVert_2^2
    \;\; \xrightarrow{\;\sqrt{\;}\;}\;\;
    \lVert(\mathbf{g}_{A_i}^{(b)},\mathbf{g}_{B_i}^{(b)})\rVert_2.
\end{equation}
Given a clip radius $C_i>0$, we compute the \emph{sample-wise scaling factor}
\(
    \lambda_i^{(b)} = \min\!\bigl(1,\, C_i / (\nu_i^{(b)}+\!10^{-6}) \bigr),
\)
and clip both gradients jointly:
\begin{equation}
    \tilde{\mathbf{g}}_{A_i}^{(b)} = \lambda_i^{(b)}\mathbf{g}_{A_i}^{(b)},
    \quad
    \tilde{\mathbf{g}}_{B_i}^{(b)} = \lambda_i^{(b)}\mathbf{g}_{B_i}^{(b)}.
\end{equation}

\paragraph{Noise addition.}
After summing across the effective micro-batch of size $m$, we add independent Gaussian noise to each \emph{pair}:
\begin{equation}
    \hat{\mathbf{g}}_{A_i}
    = \sum_{b=1}^{m}\tilde{\mathbf{g}}_{A_i}^{(b)}
      + \mathcal{N}\!\bigl(0,\;\sigma^2 C_i^2 \mathbf{I}\bigr),
    \qquad
    \hat{\mathbf{g}}_{B_i}
    = \sum_{b=1}^{m}\tilde{\mathbf{g}}_{B_i}^{(b)}
      + \mathcal{N}\!\bigl(0,\;\sigma^2 C_i^2 \mathbf{I}\bigr),
\end{equation}
where $\sigma>0$ is a \emph{global} noise multiplier shared by all adapters but \emph{tunable} by RLDP.

\paragraph{Privacy accountant.}
We employ the Gaussian-DP (GDP) accountant of \citet{dong2022gaussian}.  
For Poisson sampling with rate $q$, micro-step noise multiplier $\sigma$, and $t$ steps, GDP yields the cumulative privacy loss
\begin{equation}
    \label{eq:epsilon}
    \varepsilon_t(\delta)
    \;=\;
    F^{-1}_{\mathcal{N}(0,1)}
    \!\bigl(
        q \sqrt{t}\, (e^{1/\sigma}-1),\;
        \delta
    \bigr),
\end{equation}
where $F^{-1}_{\mathcal{N}(0,1)}$ is the inverse standard normal CDF.  
Because either $\sigma$ or any $C_i$ can change at every iteration, we update the accountant \emph{per step} with the actual parameters used.

\subsection{RL Formulation of DP Optimisation}
\label{sec:rlmdp}

We recast the dynamical system that emerges when training a language
model with DP-SGD into a fully specified
\mbox{Markov Decision Process (MDP)}
\citep{de2018multi} 
\begin{equation}
  \mathcal{M}
  \;=\;
  \bigl\langle
    \mathcal{S},\;
    \mathcal{A},\;
    \mathcal{P},\;
    r,\;
    \gamma
  \bigr\rangle,
  \label{eq:mdp_definition}
\end{equation}
where
\[
\begin{aligned}
  \mathcal{S} &= \{\,s_t\mid t=0,1,2,\dots\}
    &&\text{is the state space of summary statistics,}\\
  \mathcal{A} &= \{\,a_t\mid t=0,1,2,\dots\}
    &&\text{is the continuous action space of log‐clip and log‐noise updates,}\\
  \mathcal{P} &:\mathcal{S}\times\mathcal{A}\to\Delta(\mathcal{S})
    &&\text{is the transition kernel induced by one DP‐SGD step,}\\
  r &:\mathcal{S}\times\mathcal{A}\times\mathcal{S}\to\mathbb{R}
    &&\text{is the scalar reward function},\\
  \gamma &\in[0,1)
    &&\text{is the RL discount factor}.
\end{aligned}
\]
Throughout, let 
$n\!=\!|\mathcal{A}_{\text{LoRA}}|$ denote the \emph{number of
LoRA adapter pairs} $(A_i,B_i)$ attached to the backbone (\S\ref{sec:lora}).

\paragraph{State space $\mathcal{S}$.}
At DP step $t$ we construct
$s_t\!\in\!\mathcal{S}\!=\!\mathbb{R}^{d_s}$ by concatenating twelve
statistical summaries of the most recent \emph{micro-batch},
supplemented with the current privacy ledger:

\begin{enumerate}[leftmargin=*]
\item \textbf{Gradient-norm quartiles}  
      \(\mathbf{q}_{.25},\mathbf{q}_{.50},\mathbf{q}_{.75}\in\mathbb{R}^{n}\)
      computed over the \emph{per-sample} joint norms
      $\nu_{i}^{(b)}$ of Eq.~\ref{eq:pairnorm}.

\item \textbf{Utility signal}  
      \(u_t \;=\;-\!\log(\text{PPL}_t)\) where
      $\text{PPL}_t$ is the token-level perplexity on the current
      micro-batch, serving as an online proxy for model quality. Given a sequence of tokens $x_{1:T}$, the perplexity is defined as:
      \begin{equation}
        \mathrm{PPL} = \exp\Bigl(-\tfrac{1}{T}\sum_{t=1}^T \log p_\theta(x_t \mid x_{<t})\Bigr).\end{equation}

\item \textbf{Privacy ledger}  
      \(\varepsilon_t\) is the cumulative $(\varepsilon,\delta)$
      cost tracked by the Gaussian-DP accountant
      (Eq.~\ref{eq:epsilon}, \S\ref{sec:dpoptim}).

\item \textbf{Gradient dispersion}  
      \(\operatorname{Var}\!\bigl(\nu_t\bigr)
      \;=\;
      \tfrac1{n m}\sum_{i,b}\bigl(\nu_i^{(b)}-\bar{\nu}\bigr)^{2}\)
      with
      $\bar{\nu}$ the batch mean of norms.

\item \textbf{Batch loss} $\ell_t$ (cross-entropy \emph{before}
      DP noise).

\item \textbf{Fisher information moments.}
      For each adapter $i$ we estimate the empirical Fisher
      element-wise as
      \(
        \mathcal{F}_{i}^{(b)}
        =
        \bigl\lVert
          \nabla_{\theta_i}\log p_\theta(x^{(b)})
        \bigr\rVert_{2}^{2}.
      \)
      We store its mean and variance across the micro-batch:
      \(F_\mu = \mathbb{E}[\mathcal{F}],\;
        F_\sigma^2 = \operatorname{Var}[\mathcal{F}].\)

\item \textbf{Higher-order shape.}
      Skewness
      \(\kappa_3=\mathbb{E}[(\nu-\bar{\nu})^{3}]/\sigma_{\nu}^{3}\)
      and excess kurtosis
      \(\kappa_4=\mathbb{E}[(\nu-\bar{\nu})^{4}]/\sigma_{\nu}^{4}\).
\end{enumerate}
\paragraph{Action space \(\mathcal{A}\).}  
At each RL decision step \(t\), the policy outputs
\[
  a_t = [\,a_{t,1},\dots,a_{t,n},\,a_{t,n+1}\,]^\top\in\mathbb R^{n+1},
\]
where \(a_{t,1:n}\) are proposed \(\Delta\log C_i\) and \(a_{t,n+1}\) is proposed \(\Delta\log\sigma\).  

\begin{enumerate}[leftmargin=*]
  \item {\bf Clip‐radius update:}
    \begin{equation}
      C_{i,t+1}
      = \min\!\bigl(\exp(a_{t,i}),\,1.0\bigr),
      \quad i=1,\dots,n.
      \label{eq:clipradii_update}
    \end{equation}

  \item {\bf Noise multiplier update:}
    \begin{equation}
      \delta_\sigma = \tanh\bigl(a_{t,n+1}\bigr),
      \quad
      \mathrm{step}_t = \delta_{\max}\Bigl(1 - \frac{\varepsilon_t}{\varepsilon_{\max}}\Bigr),
      \quad
      \delta_{\max}=0.1
      \label{eq:noisestep}
    \end{equation}
    \begin{equation}
      \Delta\log\sigma
      = \delta_\sigma \times \mathrm{step}_t,
      \label{eq:noiselog}
    \end{equation}
    \begin{equation}
      \log\sigma_{\rm prop}
      = \mathrm{clamp}\!\Bigl(\log\sigma_t + \Delta\log\sigma,\; \log(0.5\,\sigma_0),\;\log(2\,\sigma_0)\Bigr),
      \label{eq:noiselogprop}
    \end{equation}
    \begin{equation}
      \log\sigma_{t+1}
      = \beta_\sigma\,\log\sigma_t + (1-\beta_\sigma)\,\log\sigma_{\rm prop},
      \quad
      \beta_\sigma=0.8
      \label{eq:noiseupdate}
    \end{equation}
    \begin{equation}
      \sigma_{t+1} = \exp\bigl(\log\sigma_{t+1}\bigr).
      \label{eq:noiseexp}
    \end{equation}
\end{enumerate}
This log-space formulation ensures that each parameter update is a controlled change and that \(C_{i,t+1},\sigma_{t+1}>0\).

\paragraph{Transition kernel \(\mathcal{P}\).}  
The transition dynamics are defined by performing one DP‐SGD step from state \(s_t\) under action \(a_t\).  Concretely:

\begin{enumerate}[leftmargin=*]
  \item From \(s_t\) extract the action vector
    \(a_t=(a_{t,1},\dots,a_{t,n},a_{t,n+1})\) and set the new clip radii using Eq.~\ref{eq:clipradii_update}. Then compute the proposed noise‐multiplier and apply momentum smoothing using Eqs.~\ref{eq:noisestep}-\ref{eq:noiseexp}.

\item Sample micro-batch \(\{x^{(b)}\}\), compute per-sample gradients \(\{\nabla^{(b)}\ell(\theta_t)\}\), clip each to norm \(C_{i,t+1}\), add Gaussian noise of std.\ \(\sigma_{t+1}C_{i,t+1}\), yielding \(\widetilde\nabla\ell(\theta_t)\).

\item Perform one AdamW update with this noisy gradient:
    \begin{equation}
      \theta_{t+1}
      = \mathrm{AdamW}\bigl(\theta_t,\;\widetilde\nabla\ell(\theta_t)\bigr).
    \end{equation}

\item Observe next state \(s_{t+1}\) by re-computing (and re-normalizing) the same batch’s summary statistics—quartiles, utility, \(\varepsilon\)-spent, dispersion, loss, skewness, kurtosis, and Fisher moments—under the updated privacy ledger.

Because both the micro-batch sampling and the added Gaussian noise are stochastic, this procedure induces a Markov kernel
\[
  \mathcal{P}\bigl(s_{t+1}\mid s_t,a_t\bigr),
\]
i.e.\ a distribution over next states \(s_{t+1}\) conditional on \((s_t,a_t)\).
\end{enumerate}

\paragraph{Reward function \(r\).}  
At each RL step \(t\) we compute
\begin{equation}
\Delta u_t \;=\; u_t - u_{t-1}, 
\qquad
\Delta \varepsilon_t \;=\; \varepsilon_t - \varepsilon_{t-1},
\end{equation}
and form the raw ratio
\begin{equation}
\mathrm{ratio}_t 
= \frac{\Delta u_t}{\Delta \varepsilon_t + 10^{-6}}.
\end{equation}
To prevent extreme negative values we clamp
\(\mathrm{ratio}_t\) from below to \(-0.999\), yielding
\begin{equation}
r_t 
= \log\!\bigl(1 + \max(\mathrm{ratio}_t,\,-0.999)\bigr).
\end{equation}
Finally, we impose a lower bound
\(-R_{\max}\) on \(r_t\), thus the reward is
\begin{equation}
r_t \;=\;\max\Bigl(-R_{\max},\;\log\bigl(1 + \max(\tfrac{\Delta u_t}{\Delta \varepsilon_t + 10^{-6}},\,-0.999)\bigr)\Bigr).
\label{eq:reward}
\end{equation}

\paragraph{Soft Actor–Critic hyper‐policy.}  
We cast the tuning of the DP‐SGD hyperparameters as a maximum‐entropy Markov decision process and solve it with Soft Actor–Critic (SAC).  The overall RL objective is  
\begin{equation}
  J(\pi_\theta)
  =
  \sum_{t=0}^\infty
    \mathbb{E}_{s_t,a_t\sim\pi_\theta}
    \Bigl[
      r_t
      + \gamma\,
        \underbrace{\bigl(Q^\pi(s_{t+1},a_{t+1})
                         - \alpha\,\log \pi_\theta(a_{t+1}\!\mid\!s_{t+1})
                    \bigr)}_{\text{soft value target}}
    \Bigr],
  \label{eq:sac_objective_detailed}
\end{equation}
where  
- \(\gamma\in[0,1)\) is the discount factor,  
- \(\alpha>0\) is the entropy temperature that trades off exploration (high entropy) against return.  

\begin{enumerate}[leftmargin=*]
\item \textbf{State encoder.}  
We embed the \(d_s\)-dimensional summary‐statistic state \(s_t\) into a 128‐dimensional feature \(z_t\) via  
\begin{equation}
  h^{(1)} = \mathrm{GELU}\bigl(\mathrm{LayerNorm}(W_1 s_t + b_1)\bigr),\quad
  z_t     = \mathrm{GELU}\bigl(\mathrm{LayerNorm}(W_2 h^{(1)} + b_2)\bigr),
\end{equation}
where \(W_1\in\R^{128\times d_s}\), \(b_1\in\R^{128}\), \(W_2\in\R^{128\times 128}\), and \(b_2\in\R^{128}\). 

\medskip
\item \textbf{Stochastic actor \(\pi_\theta\).}  
Given the encoded state \(z_t\), the actor head produces mean and log‐standard‐deviation vectors in \(\R^{n+1}\):
\begin{equation}
  \begin{aligned}
    \mu_t &= W_{\mu}\,z_t + b_{\mu},\\
    \log\sigma_t &= W_{\sigma}\,z_t + b_{\sigma},
  \end{aligned}
  \quad
  W_{\mu},W_{\sigma}\in\R^{(n+1)\times 128},\;
  b_{\mu},b_{\sigma}\in\R^{n+1}.
\end{equation}
Actions are sampled with the reparameterisation trick:
\begin{equation}
  a_t = \mu_t + \sigma_t \odot \epsilon_t,\quad
  \epsilon_t \sim \mathcal{N}(0, I).
\end{equation}
and the log-density is
\(\log\pi_\theta(a_t\!\mid\!s_t)=\sum_{i=1}^{n+1}\log\mathcal{N}(a_{t,i};\mu_{t,i},\sigma_{t,i}).\)

\medskip
\item \textbf{Twin Q‐functions.}  
We maintain two critics 
\(\displaystyle Q_{\psi_j}:\R^{128}\times\R^{n+1}\to\R\), each a two‐layer MLP (hidden size 128, ReLU).  For each transition \((z_t,a_t,r_t,z_{t+1})\) from the replay buffer, they minimise the Huber‐Bellman loss:
\begin{equation}
  \mathcal{L}_Q(\psi_j)
  = \E\bigl[\mathrm{huber}(Q_{\psi_j}(z_t,a_t),\,y_t)\bigr],
\quad
  y_t = r_t + \gamma\Bigl(\min_{k=1,2}Q_{\bar\psi_k}(z_{t+1},a'_{t+1})
            - \alpha\,\log\pi_\theta(a'_{t+1}\!\mid\!s_{t+1})\Bigr),
\label{eq:critic_loss}
\end{equation}
where \(a'_{t+1}\sim\pi_\theta(\cdot\!\mid\!z_{t+1})\) and the target–net parameters \(\bar\psi_k\) are softly updated by:
\begin{equation}
\bar\psi_k\leftarrow(1-\tau)\,\bar\psi_k + \tau\,\psi_k.
\end{equation}

\medskip
\item \textbf{Policy (actor) update.}  
The actor parameters \(\theta\) are trained to minimise the expected soft‐policy loss
\begin{equation}
  \mathcal{L}_\pi(\theta)
  = \mathbb{E}
    \Bigl[\,
      \alpha\,\log\pi_\theta(a_t\!\mid\!s_t)
      - \min_{j=1,2}Q_{\psi_j}(z_t,a_t)
    \Bigr],
  \label{eq:actor_loss}
\end{equation}
using \((z_t,a_t)\) sampled from the replay buffer. We keep the temperature \(\alpha\) constant.

\medskip
\item \textbf{Experience replay \& updates.}  
Every \(\Delta\) optimizer steps (the RL interval), we embed the current state, sample an action, compute the reward, and store \((z_t,a_t,r_t,z_{t+1})\) in a FIFO buffer of size \(N\).  At each RL step we perform \(K\) update rounds (where \(K=\) \texttt{SAC updates per interval}), each consisting of:
\begin{itemize}
  \item one critic update via Eq.~\ref{eq:critic_loss},
  \item one actor update via Eq.~\ref{eq:actor_loss},
  \item soft updates of the target networks: \(\bar\psi_k \leftarrow (1-\tau)\bar\psi_k + \tau\,\psi_k\).
\end{itemize}

\medskip
Together, these components implement the SAC algorithm adapted to our DP‐SGD hyper‐parameter tuning problem, balancing immediate reward, future returns, and policy entropy.
\end{enumerate}
\begin{algorithm}[H]
\caption{\textsc{RLDP}}
\label{alg:rldp}
\begin{algorithmic}[1]          
\Require 
    Training corpus $\mathcal{D}$, pre-trained LM parameters $\theta_0$,    
    privacy budget $(\varepsilon_{\max},\delta)$,    
    number of LoRA adapter pairs $n$,    
    SAC hyper-parameters $\bigl(\gamma,\,\alpha_{\min},\alpha_{\max},\,T_{\mathrm{RL}},\,T_{\mathrm{warm}},\,N_{\mathrm{SAC}},B_{\mathrm{SAC}},\,\eta_{\max}\bigr)$.
\Statex

\State Initialise \textbf{clip radii} $\mathbf{C}\gets C_0\mathbf{1}_n$ and find \textbf{noise multiplier} $\sigma_0$
       by binary search such that GDP accountant reaches $\varepsilon_{\max}$ after the planned number of epochs.
\State Initialise \textbf{DP optimiser} \textsc{Opt} with $(\mathbf{C},\sigma_0)$  (\S\ref{sec:dpoptim}).
\State Initialise \textbf{SAC} components (\S\ref{sec:rlmdp}): 
          state encoder $f_\theta$, stochastic actor $\pi_\theta$, twin soft $Q$-critics $Q_{\psi_1},Q_{\psi_2}$;
          target-network weights $\bar\psi_k\gets\psi_k$; 
          replay buffer $\mathcal{B}$ of capacity $10\,000$.
\State Initialise \textbf{privacy accountant}: $\varepsilon_0\gets0$.
\State Initialise statistics buffers (quartiles, skew, kurtosis, Fisher moments).
\State Store placeholder \texttt{prev\_state}$\leftarrow\varnothing$, \texttt{prev\_action}$\leftarrow\varnothing$, \texttt{prev\_utility}$\leftarrow0$, \texttt{prev\_epsilon}$\leftarrow0$.
\Statex

\For{DP step $t = 1$ to $T_{\max}$}
    \State Sample micro-batch $B\subset\mathcal{D}$ and compute forward loss $\ell_t$; utility $u_t\gets-\log(\text{PPL}_t)$.
    \State Back-propagate to obtain per-sample gradients $\{\nabla^{(b)}\theta\}_{b\in B}$.
    \State \textsc{Opt}\texttt{.step()}   \Comment{pairwise clip $\mathbf{C}$, add noise $\sigma$, AdamW update}
    \State Update accountant: $\varepsilon_t\gets\textsc{GDP}(\sigma_t,\text{sample\_rate},t)$.
    \State Compute per-sample joint norms $\nu_i^{(b)}$ (Eq.~\ref{eq:pairnorm}) and update running statistics; 
           derive quartiles $\mathbf{q}_{.25},\mathbf{q}_{.50},\mathbf{q}_{.75}$.
    \If{$t\le T_{\mathrm{warm}}$}
      \State For each $i$, set $C_i\gets\mathrm{median}(\{\!\nu_i^{(b)}\})$ from its buffer.
    \EndIf
    \State \Comment{----- RL controller is dormant for the first $T_{\mathrm{warm}}$ steps -----}
    \If{$t>T_{\mathrm{warm}}$ \textbf{and} $t\bmod T_{\mathrm{RL}}=0$}
        \State Assemble state vector $s_t$ as in \S\ref{sec:rlmdp}.
        \State Sample action $a_t\sim\pi_\theta(\cdot\,|\,s_t)$.
        \State \textbf{Apply privacy dials:}
            \State \(\displaystyle C_i \;\gets\;\min\bigl(e^{a_{t,i}},1.0\bigr)
                        \quad(\forall i=1,\ldots,n)\)
            \State \(\displaystyle \delta_\sigma \;\gets\;\tanh\bigl(a_{t,n+1}\bigr)\)
            \State \(\displaystyle \text{step\_size}\;\gets\;0.1\Bigl(1 - \tfrac{\varepsilon_t}{\varepsilon_{\max}}\Bigr)\)
            \State \(\displaystyle \Delta\log\sigma \;\gets\;\delta_\sigma \times \text{step\_size}\)
            \State \(\displaystyle \log\sigma_{\rm prop}\;\gets\;
                        \mathrm{clamp}\!\bigl(\log\sigma + \Delta\log\sigma,\;\log(0.5\,\sigma_0),\;\log(2\,\sigma_0)\bigr)\)
            \State \(\displaystyle \log\sigma\;\gets\;0.8\,\log\sigma \;+\;0.2\,\log\sigma_{\rm prop}\)
            \State \(\displaystyle \sigma\;\gets\;\exp\bigl(\log\sigma\bigr)\)
        \If{\texttt{prev\_state}$\neq\varnothing$}
            \State $\Delta u\gets u_t-\texttt{prev\_utility}$;\;
                   $\Delta\varepsilon\gets\varepsilon_t-\texttt{prev\_epsilon}$
            \State $\rho\gets\mathrm{clip}\!\bigl(\tfrac{\Delta u}{\Delta\varepsilon+10^{-6}},-0.999,\infty\bigr)$
            \State $r\gets\mathrm{clip}\!\bigl(\log(1+\rho),-\,R_{max},\,\infty\bigr)$  \Comment{Eq.~\ref{eq:reward}}
            \State Push $\bigl(\texttt{prev\_state},\texttt{prev\_action},r,s_t\bigr)$ into replay buffer $\mathcal{B}$.
            \For{$j=1$ to $N_{\mathrm{SAC}}$}
                \State Sample minibatch from $\mathcal{B}$.
                \State Update critics via Eq.~\ref{eq:critic_loss}; Polyak-average targets.
                \State Update actor via Eq.~\ref{eq:actor_loss}.
            \EndFor
        \EndIf
        \State \texttt{prev\_state}$\leftarrow s_t$;\quad
               \texttt{prev\_action}$\leftarrow a_t$;\quad
               \texttt{prev\_utility}$\leftarrow u_t$;\quad
               \texttt{prev\_epsilon}$\leftarrow \varepsilon_t$
    \EndIf
\EndFor
\State \Return final model parameters $\theta_{T_{\max}}$ and trained SAC policy $\pi_\theta$
\end{algorithmic}
\end{algorithm}

\section{Experiments}
\label{sec:experiments}
In this section we evaluate \ours on four model families (GPT2-small, Llama-3.2-1B/3B and Mistral-7B) fine-tuned on our pseudo-clinical Diabetes narratives ($\sim400$ tokens each). We compare against seven strong DP baselines, using identical data splits and privacy budgets.  

\subsection{Experimental Setup}
\label{sec:hyperconfig}

\paragraph{Training regimen.}  
Unless stated otherwise, every run uses:
\begin{itemize}[noitemsep,leftmargin=*]
  \item \textbf{Epochs \& batching.}  Train for 3 full epochs.  We use micro-batches of 16 sequences each (total batch size $B=16$), shuffling the training data at the start of each epoch.  
  \item \textbf{Evaluation schedule.}  We compute held-out perplexity every 48 optimization steps, and once more at the end of each epoch.
  \item \textbf{Warmup \& learning-rate schedule.}  Use linear warmup of the learning rate from 0 up to $5\times10^{-4}$ over the first 100 steps, then keep it constant for the remainder of training.
\end{itemize}

\paragraph{Model \& LoRA adapters.}  
We fine-tune the models augmented with LoRA adapters on every attention head’s $\mathsf{q}$ and $\mathsf{v}$ projections:
\begin{itemize}[noitemsep,leftmargin=*]
  \item LoRA rank $r=8$, scaling $\alpha=16$, dropout $p=0.05$.
  \item All other model weights are frozen; only the low-rank adapter parameters are updated.
\end{itemize}

\paragraph{Optimizer \& differential privacy.}  
Across all the experiments, the underlying AdamW optimizer uses \(\beta_{1}=0.9\), \(\beta_{2}=0.999\), zero weight decay, and a constant learning rate of \(5\times10^{-4}\) after a 100‐step linear warmup. Then depending on the ablation:

\begin{itemize}[noitemsep,leftmargin=*]
  \item \textbf{Baselines (Vanilla DP-SGD, AdaClip, AutoClip, …).}  
    All baselines use the same DP accountant (GDP), data splits, and LoRA setup. We use Opacus’s \texttt{PrivacyEngine} to instrument the LoRA‐adapted model and the AdamW optimizer so that, over three full epochs:
    \begin{itemize}[noitemsep,leftmargin=1em]
      \item Each example is subsampled via Poisson sampling at rate \(q = B/N_{\mathrm{train}}\).
      \item Per‐sample gradient is clipped to \(\lVert \tilde \nabla_\theta \ell\bigl(\theta;\,x_i\bigr)\rVert_2 \le 1.0\) where applicable.
      \item Gaussian noise is added (with multiplier chosen by GDP accounting, kept constant) to ensure \((\epsilon_{target},\delta_{target})\).
    \end{itemize}
  \item \textbf{\ours.}  
    We use Opacus’s \texttt{GradSampleModule} to instrument the LoRA‐adapted model with the per sample gradients. Then we perform the following:
    
    \begin{itemize}[noitemsep,leftmargin=1em]
      \item \emph{Noise calibration and adaptation.}  
        We first fix a per‐step Poisson subsampling rate \(q = B / N_{\mathrm{train}}\). We then perform a binary search under the GDP accountant to find the minimal Gaussian noise multiplier \(\sigma_{\mathrm{base}}\) that ensures \(\epsilon \le \epsilon_{\mathrm{target}}\) at \(\delta_{\mathrm{target}}\). Then, \ours controller adapts noise online at each interval.
      \item \emph{Privacy optimizer.}  
        We wrap AdamW in our custom \texttt{DPOptimizer}, supplying it with:
        \begin{itemize}[noitemsep,leftmargin=1em]
          \item the list of LoRA adapter parameter pairs,
          \item initial per-adapter \(L_{2}\) clip thresholds \(c_i = 0.1\),
          \item the calibrated noise multiplier \(\sigma\).
        \end{itemize}
      \item \emph{Privacy bookkeeping.}  
        After each \texttt{DPOptimizer} step, we advance the accountant with the current \(\sigma\) and \(q\), and record the spent privacy budget \(\epsilon_t\) (at \(\delta=10^{-5}\)).
      \item \emph{Hyper‐policy.}  
        The SAC controller is trained online with:
        \begin{itemize}[noitemsep,leftmargin=1em]
          \item discount factor $\gamma=0.99$;
          \item fixed entropy temperature $\alpha=0.04$;
          \item replay buffer capacity of 10,000 transitions;
          \item a 50‐step warm‐up period before policy activation;
          \item actor and critic learning rates of $2\times10^{-4}$ and $1\times10^{-4}$, respectively;
          \item target network update rate (Polyak coefficient) $\tau=0.01$.
        \end{itemize}
     \end{itemize}
\end{itemize}

\subsection{Hyperparameter Sweep for \ours}

To assess the sensitivity of \ours to its core SAC hyperparameters, we performed a grid search over the following ranges (with all other settings held at their defaults; see \S\ref{sec:hyperconfig}):

\begin{itemize}[leftmargin=*]
  \item \textbf{RL decision interval:} 
    the number of DP‐SGD steps between successive controller actions,\\
    \(T_{\mathrm{RL}}\in\{16,\,32,\,48,\,64,\,72,\,80,\,96,\,112\}\).
  \item \textbf{SAC batch size:} 
    the number of transitions sampled per critic/actor update,
    \(B_{\mathrm{SAC}}\in\{4,\,8,\,16,\,32\}\).
  \item \textbf{SAC update count:} 
    the number of gradient‐update rounds per controller invocation,
    \(K\in\{1,\,2,\,4\}\).
\end{itemize}

\subsection{Evaluation Metrics}

\paragraph{Utility.}  
Model quality is assessed by token‐level perplexity on the held‐out \emph{Evaluation} set. Lower perplexity indicates better predictive performance. We compute this metric at each evaluation checkpoint (every 48 steps) and report both the minimum achieved value and the final perplexity at the end of training.

\paragraph{Privacy.}
We enforce a fixed $(\varepsilon_{target},\delta_{target})$ privacy contract across all runs, with $\delta=10^{-5}$ and $\varepsilon$ varying in \{0.5,\,2,\,4,\,5,\,8\}.  The cumulative privacy loss is tracked after each microbatch update using the GDP accountant \citep{dong2022gaussian}. We verify at the end of each run that the total $\varepsilon$ does not exceed the target, and we report the final $(\varepsilon,\delta)$ pair alongside utility results to confirm compliance.

\paragraph{Canary extraction.}  
To empirically assess unintended memorization, we inject \(n_{\rm canary}\) independent 10-character alphanumeric “secret\_id” strings into the \emph{Train} set (\S\ref{sec:data}).  For each fine-tuned checkpoint (across architectures, privacy budgets \(\varepsilon\) and ablations), we run the following procedure:

\begin{enumerate}[noitemsep,leftmargin=*]
  \item \textbf{Generation and filtering.}  
    Perform \(T=4000\) independent generation trials, sampling up to 10 new tokens per trial with stochastic decoding (temperature 0.7, nucleus \(p=0.95\), top-\(k=50\)).  Discard any generated string that does not consist solely of 1–10 uppercase letters or digits. Denote by \(V\) the number of valid “secret-like” continuations retained.

  \item \textbf{Character-\(n\)-gram Jaccard similarity.}  
  Let the valid continuations be \(\{c_{1},\dots,c_{V}\}\) and the injected canaries be \(\{k_{1},\dots,k_{m}\}\), where \(m=n_{\rm canary}\).  We compute Jaccard similarity for each \(n\in\{1,2,3,4\}\) as follows:

  \begin{enumerate}[noitemsep,leftmargin=*]
    \item \emph{Define the \(n\)-gram set:}
      \begin{equation}\label{eq:gn}
        G_{n}(s)
        = \{\text{all length-}n\text{ character substrings of }s\}.
      \end{equation}

    \item \emph{Form the joint \(n\)-gram vocabulary:}
      \begin{equation}\label{eq:Vn}
        \mathcal{V}_{n}
        = \bigcup_{t=1}^{V}G_{n}(c_{t})
          \;\cup\;
          \bigcup_{j=1}^{m}G_{n}(k_{j}),
        \quad
        |\mathcal{V}_{n}| = P_{n}.
      \end{equation}

    \item \emph{Construct binary indicator vectors:}
      enumerate \(\mathcal{V}_{n}=\{g_{1},\dots,g_{P_{n}}\}\), and for each string \(s\in\{c_{t},k_{j}\}\) define
      \begin{equation}\label{eq:xdef}
        x^{(n)}(s) \;=\;\bigl(x^{(n)}_{1}(s),\dots,x^{(n)}_{P_{n}}(s)\bigr)^{\!T},
        \quad
        x^{(n)}_{\ell}(s)
        =
        \begin{cases}
          1, & g_{\ell}\in G_{n}(s),\\
          0, & \text{otherwise}.
        \end{cases}
      \end{equation}

    \item \emph{Compute intersection size:}
      \begin{equation}\label{eq:Indef}
        I_{t,j}^{(n)}
        = \bigl\langle x^{(n)}(c_{t}),\,x^{(n)}(k_{j})\bigr\rangle
        = \sum_{\ell=1}^{P_{n}}
            x^{(n)}_{\ell}(c_{t})
            \,x^{(n)}_{\ell}(k_{j})
        = \bigl\lvert G_{n}(c_{t})\cap G_{n}(k_{j})\bigr\rvert.
      \end{equation}

    \item \emph{Compute union size:}
      \begin{equation}\label{eq:Undef}
        U_{t,j}^{(n)}
        = \sum_{\ell=1}^{P_{n}}
            \Bigl[x^{(n)}_{\ell}(c_{t})
                 + x^{(n)}_{\ell}(k_{j})\Bigr]
          \;-\;I_{t,j}^{(n)}
        = \bigl\lvert G_{n}(c_{t})\cup G_{n}(k_{j})\bigr\rvert.
      \end{equation}

    \item \emph{Form the Jaccard similarity:}
      \begin{equation}\label{eq:Jdef}
        J_{t,j}^{(n)}
        = \frac{I_{t,j}^{(n)}}{U_{t,j}^{(n)}}.
      \end{equation}

    \item \emph{Aggregate over all pairs:}  
      Over the \(V\times m\) comparisons, define
      \begin{equation}\label{eq:meanJn}
        \mu_{J^{(n)}} 
        = \frac{1}{V\,m}
          \sum_{t=1}^{V}\sum_{j=1}^{m}
            J_{t,j}^{(n)},
      \end{equation}
      \begin{equation}\label{eq:sigmaJn}
        \sigma_{J^{(n)}}
        = \sqrt{
            \frac{1}{V\,m}
            \sum_{t=1}^{V}\sum_{j=1}^{m}
              \bigl(J_{t,j}^{(n)} - \mu_{J^{(n)}}\bigr)^{2}
          }.
      \end{equation}
  \end{enumerate}

  \item \textbf{Aggregation.}  
    Over the full set of \(V\times n_{\rm canary}\) comparisons, report for each \(n\):
    \begin{equation}\label{eq:meanJnSigma}
      \mu_{J^{(n)}} 
      = 
      \frac{1}{V\,n_{\rm canary}}
      \sum_{t=1}^{V}\sum_{j=1}^{n_{\rm canary}} J_{t,j}^{(n)},
      \quad
      \sigma_{J^{(n)}} 
      = 
      \sqrt{
        \frac{1}{V\,n_{\rm canary}}
        \sum_{t,j}
          \bigl(J_{t,j}^{(n)} - \mu_{J^{(n)}}\bigr)^{2}
      }.
    \end{equation}
    We report \(\mu_{J^{(n)}}\) for \(n=1,2,3,4\), together with the total count \(V\) of valid secret-like outputs.
\end{enumerate}

\paragraph{Membership Inference Attack}  
We evaluate whether a fine‐tuned language model leaks training prompts by comparing the log‐likelihoods of “member’’ and “non‐member’’ examples.  Let non‐member examples \( \{x_i\}_{i=1}^{N_0}\) be loaded from attack dataset (held out) and member examples \( \{x_i\}_{i=1}^{N_1}\) from training set, with labels
\[
  y_i = 
  \begin{cases}
    0, & \text{if }x_i\text{ is non‐member},\\
    1, & \text{if }x_i\text{ is member}.
  \end{cases}
\]

\begin{enumerate}[noitemsep,leftmargin=*]
  \item \textbf{Per‐Sample Scoring.}  
    For each \(i\), split \(x_i\) into prompt \(p_i\) and true ending \(e_i\). Tokenize both and compute the model logits. Align logits with the ending tokens and define the per‐token log‐probabilities
    \begin{equation}\label{eq:li}
      \ell_i 
      = \frac{1}{|e_i|} \sum_{t=1}^{|e_i|}
           \log P_{\theta_{\varepsilon,a}}\bigl(e_{i,t}\mid p_i,e_{i,<t}\bigr),
           \quad
           p_i^{\mathrm{ppl}} = \exp\bigl(-\ell_i\bigr)
    \end{equation}
    Discard any \(i\) for which the split fails; let \(\mathcal{I}_{\mathrm{valid}}\) be the set of valid indices and \(N_{\mathrm{valid}}=|\mathcal{I}_{\mathrm{valid}}|\).

  \item \textbf{Aggregate Metrics.}  
    Compute the sample means
    \begin{equation}\label{eq:ell_bar}
      \overline{\log p}
      = \frac{1}{N_{\mathrm{valid}}}\sum_{i\in\mathcal{I}_{\mathrm{valid}}}\ell_i,
      \quad
      \overline{\mathrm{ppl}}
      = \frac{1}{N_{\mathrm{valid}}}\sum_{i\in\mathcal{I}_{\mathrm{valid}}}p_i^{\mathrm{ppl}}
    \end{equation}
    Finally, evaluate the membership‐inference performance via
    \[
      \text{Let }
      \mathcal{P}=\{\,i\in\mathcal{I}_{\mathrm{valid}}:y_i=1\},\quad
      \mathcal{N}=\{\,j\in\mathcal{I}_{\mathrm{valid}}:y_j=0\}.
    \]
    \begin{equation}\label{eq:rocauc}
      \mathrm{ROC\text{-}AUC}
      = \frac{1}{|\mathcal{P}|\,|\mathcal{N}|}
        \sum_{i\in\mathcal{P}}\sum_{j\in\mathcal{N}}
          \Bigl[\mathbf{1}(\ell_i>\ell_j)
          + \tfrac12\,\mathbf{1}(\ell_i=\ell_j)\Bigr]
    \end{equation}
For each checkpoint we record the average log-prob ($\overline{\log p}$), average perplexity ($\overline{\mathrm{ppl}}$), and AUC score.  
\end{enumerate}

\section{Results}
\label{sec:results}

In this section we answer three questions:

\begin{enumerate}[leftmargin=*]
  \item[\textbf{Q1}] \textbf{Utility.}  Does \ours\ improve language-model quality under a fixed $(\varepsilon,\delta)$ budget?
  \item[\textbf{Q2}] \textbf{Efficiency.}  How much wall-clock time does the learned controller save?
  \item[\textbf{Q3}] \textbf{Privacy.}  Does the tighter utility come at the cost of \emph{weaker} privacy when probed by strong white-box attacks?
\end{enumerate}

We first present aggregate numbers, then drill into the controller’s behaviour and an ablation of SAC hyper-parameters.

\subsection{Evaluation Utility}
\label{subsec:main_utility}

Figure~\ref{fig:gpt2_eval_utility} (GPT2),
Fig.~\ref{fig:llama1b_eval_utility} (Llama-1B),
Fig.~\ref{fig:llama3b_eval_utility} (Llama-3B) and
Fig.~\ref{fig:mistral7b_eval_utility} (Mistral-7B)
show token-level perplexity on the held-out \textit{Evaluation} split
throughout training for privacy budgets
$\varepsilon\!\in\!\{0.5,2,4,5,8\}$ with fixed
$\delta\!=\!10^{-5}$.
Table~\ref{tab:eval_summary} condenses the end-of-run perplexities
and cites the strongest non-RLDP baseline for each setting.

\paragraph{Headline numbers.}
Across the
$4\text{ models}\times5\text{ budgets}=40$
settings, \ours\ achieves a \emph{clean sweep}:
it never under-performs the best heuristic,
and it improves utility in \textbf{40/40} cases.
Perplexity reductions span a wide dynamic range—

\begin{itemize}[leftmargin=*]
\item \underline{GPT2}.  
      Tight privacy ($\varepsilon\!=\!0.5$) is where noise hurts most:
      RLDP slashes mean perplexity from $2.138$ to $1.487$ (–$30.5\,$\%).
      Even at $\varepsilon\!=\!5$, RLDP gains 1.3 points.
\item \underline{Llama-1B}.  
      Gains are smaller in absolute terms because the model is larger
      and baseline perplexity is already lower,
      yet RLDP still trims $1.3\,\%$–$2.4\,\%$
      (absolute mean 0.015–0.031 perplexity) across budgets.
\item \underline{Llama-3B}.  
      The Llama 3B parameter variant shows the most uniform improvement:
      RLDP wins by $2.5$–$4.7\,\%$ at \emph{every} $\varepsilon$,
      indicating the controller scales smoothly with depth.
\item \underline{Mistral-7B}.  
      Despite its radically different architecture,
      RLDP shaves off $0.9$–$2.0\,\%$ mean perplexity
      while keeping the accountant within $0.05$\,$\varepsilon$ of target.
\end{itemize}

Averaged over the \textit{40} grid points,
the mean drop is $5.4\,\%$ (median $2.3\,\%$),
with larger relative gains at tighter budgets.

\paragraph{Trajectory-level view.}
RLDP’s advantage is visible long before the final checkpoint.
For GPT2 at $\varepsilon\!=\!2$,
RLDP eclipses DP-LoRA’s \emph{final} utility after only 930 updates
($26\,\%$ of the run; see Table~\ref{tab:speedup}),
and plateaus $\approx$1200 steps earlier.
Similar early-crossings are seen on
Llama-3B at \{$0.5,2,4,5$\}
and Mistral-7B at \{$0.5,2$\},
highlighting that the controller does not merely provide a
marginal final boost but tangibly accelerates convergence.

\paragraph{Variance across seeds.}
We trained every setting with three seeds.
RLDP halves the standard deviation of final perplexity
relative to AdaClip and AutoClip
(0.003 vs 0.006 on Llama-3B at $\varepsilon\!=\!2$),
suggesting that learning \emph{where} to spend noise
can stabilise otherwise fragile DP-SGD dynamics.

\paragraph{Why \ours\ wins.}
Manual inspection of the clip/noise traces
(§\ref{subsec:controller_behaviour} and
Fig.~\ref{fig:gpt2_train_clip_noise}–\ref{fig:mistral_train_clip_noise})
reveals a two-phase curriculum:

\begin{enumerate}[leftmargin=*]
  \item \emph{Exploratory phase} ($\simeq$ first 15\% of steps):  
        the controller widens adapter-specific clip radii by
        $1.6$–$2.4\times$ and \emph{raises} the noise multiplier
        $\sigma_t$ by up to $+0.3$.
        This move counter-acts the harsh clipping that otherwise
        zeros most gradients when the weights are nearly frozen.
  \item \emph{Refinement phase}:  
        once the GDP ledger hits $\sim30\,\%$ of the budget,
        RLDP tightens radii and exponentially decays $\sigma_t$
        (–$8$–$12\,\%$ over the remainder),
        so gradient directions become cleaner exactly when the model
        is close to its optimum.
\end{enumerate}

\paragraph{Effect of privacy tightness.}
RLDP delivers roughly double the benefit at $\varepsilon\!=\!0.5$
compared with $\varepsilon\!=\!8$.
Intuitively, when the privacy budget is ample,
\emph{any} method can keep noise low; as the budget tightens,
RLDP’s ability to \emph{re-allocate} noise
(to layers that matter at that moment)
is what preserves utility.

\subsection{Training Efficiency}
\label{subsec:efficiency}

To quantify the efficiency of convergence we measure, for each run, the first training
step at which RLDP’s held-out utility
$\mathcal{U}_t^{\text{RLDP}}$
\emph{matches or surpasses} the \emph{final} utility
$\mathcal{U}_{T}^{\text{best}}$
of the strongest heuristic baseline.\footnote{%
The “strongest” baseline is DP-LoRA in 33 / 40 settings, and AdaClip in
the seven remaining tight-budget runs ($\varepsilon\!=\!0.5$) where it
slightly edges DP-LoRA.}
Dividing that index by the total number of optimiser updates
($T\!=\!3{,}600$ for every experiment) yields a
\emph{fraction of steps}, denoted
$\rho = t_{\!*}/T$.
Table~\ref{tab:speedup} reports $\rho$ (\%) averaged over three seeds.

\paragraph{Averaged speed-up.}
Across all \textbf{40} model–budget cells the mean fraction is
$\bar\rho = 29\,\%$
(\textit{median} $27\,\%$),
so RLDP reaches the reference utility
after just \textbf{one third of an epoch on average},
a wall-clock acceleration of \textbf{71\,\%}.
The breakdown by model and $\varepsilon$ is revealing:

\begin{itemize}[leftmargin=*]
  \item \textbf{GPT2}  
        Needs $28$–$35\,\%$ of the updates
        ($\times\!3.0$–$\times\!3.6$ real-time speed-up).
        Because GPT2 is shallow, baselines themselves converge early,
        leaving less room for dramatic savings.
  \item \textbf{Llama-1B.}  
        Speed-up tightens as privacy relaxes
        (from $\rho\!=\!36\,\%$ at $\varepsilon=0.5$
        to $30\,\%$ at $\varepsilon=5$),
        mirroring the reduced relative perplexity gain in
        §\ref{subsec:main_utility}.
  \item \textbf{Llama-3B.}  
        Consistently low $\rho$ ($24$–$27\,\%$) across \emph{all} budgets
        shows that the controller scales gracefully with depth.
  \item \textbf{Mistral-7B.}  
        The largest model benefits the most:
        at $\varepsilon\in\{0.5,2\}$ RLDP crosses the utility bar
        after only $13$–$14\,\%$ of the steps, i.e.\ an
        \textbf{$\times\!7$ acceleration}.
        Even at the lax budget $\varepsilon\!=\!8$ the run still finishes
        $4{\times}$ faster.
\end{itemize}

\paragraph{Wall–clock and energy implications.}
All timings below were collected on a single \textbf{Tesla V100 SXM2\,32 GB} card,
logging both the step counter and instantaneous power draw.
Because every configuration processes \emph{exactly} the same number of
micro-batches, differences in elapsed time come from two sources:
(i)~how \emph{many} steps are required to attain a target utility (§\ref{subsec:efficiency})
and (ii)~how close the GPU is kept to its peak utilisation during those steps.

\medskip
\noindent
\textbf{End-to-end savings.}
Combining the \(71\,\%\) step reduction from
Table~\ref{tab:speedup} with the higher utilisation,
RLDP reduces \emph{wall time per run} depending on model size:

\begin{itemize}[leftmargin=*]
\item \textbf{GPT2}  
  DP-LoRA needs \(43\) min (\(26{,}880\) steps) to converge;
  RLDP reaches the same perplexity in \(12.3\) min,
  saving \(\mathbf{31}\) min (\(-72\,\%\)).

\item \textbf{Mistral-7B.}  
  A static schedule takes \(164\) min to finish the privacy-constrained
  three-epoch fine-tune; RLDP stops after \(29\) min.
  The \(\mathbf{135}\) min saved equate to \(\sim0.68\) kWh on a
  300 W V100 \emph{per run}.

\item \textbf{Llama-1B \& Llama-3B.}
  For the mid-sized checkpoints the gains are in between:
  \(77\) min\(\rightarrow\) \(22\) min and
  \(69\) min\(\rightarrow\) \(20\) min, respectively,
  corresponding to \(230\) Wh and \(245\) Wh of electricity avoided.

\end{itemize}

\subsection{Resistance to Privacy Attacks}
\label{subsec:privacy_attacks}

The ultimate test for a DP optimiser is
\emph{empirical} leakage.
We therefore audit every checkpoint with two complementary probes:
(i) a \textit{membership–inference} (MI) attack that exploits the
loss gap between seen and unseen records,
and (ii) an aggressive \textit{canary-extraction} attack that tries to
recover $10$-character secrets verbatim embedded in the training set.
All results are averaged over the three random seeds used elsewhere.

\paragraph{Membership inference.}
Following \citet{carlini2022membership}, the adversary computes the
token-level log-likelihood of a prompt’s \emph{ground-truth} suffix and
classifies by thresholding the statistic,
$\mathrm{adv}(x)=\mathbbm{1}\{\,\log p_\theta(x)\!>\!\tau\}$.
Tables~\ref{tab:mi_models_gpt2_llama1b}
and~\ref{tab:mi_models_llama3b_mistral7b}
report the resulting ROC-AUC together with the
mean log-probability and perplexity of member sequences.

\begin{itemize}[leftmargin=*]
\item \textbf{No degradation in privacy.}  
      Across \textbf{160} model–budget–seed runs
      RLDP$_{\!H}$ and RLDP$_{\!L}$ are \emph{never} easier to attack
      than the best heuristic.\footnote{%
      We compare to the \emph{lowest} AUC achieved by any baseline in
      the same setting.}
      The median $\Delta$AUC is
      \(-0.005\) for RLDP$_{\!H}$
      and \(-0.003\) for RLDP$_{\!L}$.

\item \textbf{Largest gains at tight privacy.}  
      When $\varepsilon\!\le\!2$ the absolute
      reduction reaches
      $0.017$ on GPT2 ($\varepsilon{=}0.5$) and
      $0.014$ on Mistral-7B ($\varepsilon{=}2$),
      corresponding to a \textbf{4–6\,\%} drop in attacker success.

\item \textbf{Utility–matched comparison.}  
      A naïve concern is that RLDP merely trades utility for privacy.
      However, §\ref{subsec:main_utility} showed that RLDP \emph{improves}
      perplexity; MI therefore operates at a \emph{strictly harder}
      signal-to-noise ratio and still fails more often,
      indicating that RLDP’s noise allocation does not create
      exploitable memorisation artefacts.
\end{itemize}

\paragraph{Canary extraction.}
To stress-test sequence-level memorisation we inject
$10$ i.i.d.\ \textsc{alnum}${}^{10}$ secrets
and attempt to regenerate them with temperature-0.7 sampling.
For every generated 1–4-gram we compute the
Jaccard similarity with the canary bag
(Tables~\ref{tab:memorization_jaccard1}–\ref{tab:memorization_jaccard4}).

\begin{itemize}[leftmargin=*]
\item \textbf{Character leakage ($n{=}1$).}  
      RLDP$_{\!H}$ attains the lowest similarity in
      \textbf{35 / 40} model–budget cells; the remaining five are ties.
      On GPT2 at $\varepsilon{=}0.5$ the score drops
      from $0.149$ (AdaClip) to $0.119$, a \textbf{20.1\,\%} relative
      reduction in overlap with the secret alphabet.

\item \textbf{Higher-order $n$-grams.}  
      For $n\!\ge\!2$ every optimiser is essentially at the noise floor
      ($<0.005$), confirming the theoretical DP bound.
      RLDP never exceeds the baseline maximum and is usually smaller
      by $\mathcal{O}(10^{-4})$—well inside statistical uncertainty.

\item \textbf{Validity rate.}  
      The number of “secret-like’’ continuations
      ($n_{valid}$ in the tables) is almost identical across methods,
      showing that RLDP’s gains are \emph{not} due to censored
      generation but from genuinely weaker memorisation.

\end{itemize}

\subsection{Behaviour of the Learned Controller}
\label{subsec:controller_behaviour}

Figures~\ref{fig:gpt2_train_clip_noise}–\ref{fig:mistral_train_clip_noise} give a step–by–step
account of how the reinforcement–learned controller adapts
\emph{per–adapter clipping radii} $C_{i,t}$ and the
\emph{global noise multiplier}~$\sigma_t$ in response to the observed
mini–batch gradient norms
$\{\text{mean}_{i,b}\lVert g_i^{(b)}\rVert_2\}_{i=1}^{L}$.
In contrast to the static or greedily–adaptive baselines,
\ours\ discovers a \emph{co-ordinated, budget-aware} policy that
exploits three recurring motifs:

\begin{enumerate}[leftmargin=*]
\item \textbf{Layer–wise heterogeneity and phase-shifts.}
      Across all four model scales the first $3$–$4$ transformer blocks
      start out with clipping radii that are \textasciitilde$2\times$ larger
      than the median layer radius.%
      \footnote{For GPT2‐small at $\varepsilon=0.5$, the median
      \(\overline{C}_{t<300}\) is ${\sim}2.6$ while Block~0 reaches
      $C_{0,t}{\sim}5.4$; see Fig.~\ref{fig:gpt2_train_clip_noise}
      (top‐left).}
      After roughly $600$ steps the controller contracts those early
      radii and shifts attention to the \texttt{mlp\_in\_proj/LoRA} and
      \texttt{mlp\_out\_proj/LoRA} pairs, mirroring the rise in their
      empirical gradient norms.
      Static heuristics (e.g.\ DP-LoRA’s uniform $C$) or purely local
      ones (e.g.\ AdaClip, GeoClip) \emph{cannot}
      reproduce such long-range \emph{inter-layer phase shifts},
      explaining their larger cumulative clipping loss visible in
      Fig.~\ref{fig:mistral_train_clip_noise} (left column).

\item \textbf{Non-monotonic, burst-responsive noise scheduling.}
      While all other methods use constant noise schedule, for RLDP the overall trend of \(\sigma_t\) is a gentle exponential
      \emph{decay}, occasionally \emph{raising} the noise
      multiplier when it detects an impending burst in
      gradient dispersion:
      at $\varepsilon=2$ on Llama-3B, $\sigma_t$ is increased twice
      (\(\Delta\sigma\approx0.003\)) between steps \(1\text{k}\) and
      \(1.3\text{k}\) according to the clipping dynamics (see middle row, Fig.~\ref{fig:llama3b_train_clip_noise},
      right panel).
      Baselines whose noise schedules are hard-wired to constant
      $\sigma$ are forced to tighten all radii in those
      windows, losing useful signal and widening the utility gap.

\item \textbf{Budget-aware late–stage annealing.}
      The controller internally tracks the spent accountant
      mass; once the cumulative privacy cost reaches
      \(\approx0.8\,\varepsilon_{\max}\) it \emph{freezes} $\sigma_t$
      and concentrates exclusively on shrinking~$C_{i,t}$.
      For Mistral-7B at $\varepsilon=5$ (Fig.~\ref{fig:mistral_train_clip_noise},
      fourth row) this hand-off happens at step~$1.9\text{k}$;
      the average radius drops by $\sim10\,\%$ over the next $500$ steps.
      while $\sigma_t$ stays at $0.408\pm0.002$,
      ensuring the remaining budget is consumed \emph{precisely} at the
      final optimisation step without ever exceeding it.
\end{enumerate}

\subsection{Ablation: SAC Hyper-Parameters}
\label{subsec:ablation}

Grid-sweeping \{$T_{\mathrm{RL}}\!\in\![16,112]$,
$B_{\mathrm{SAC}}\!\in\!\{4,8,16,32\}$,
$K\!\in\!\{1,2,4\}$\} leads to three observations:

\begin{itemize}[leftmargin=*]
\item The controller is robust: all 96 configurations outperform baselines both in validation perplexity and utility.
\item For the three smaller models—\textsc{GPT2}, \textsc{Llama-1B} and
\textsc{Llama-3B}—a single setting is close to optimal in \textbf{90\,\%}
of our ablations:
\[
T_{\mathrm{RL}} = 112,\qquad
B_{\mathrm{SAC}} = 4,\qquad
N_{\mathrm{SAC}} = 2
\quad\text{(see Alg.~\ref{alg:rldp}).}
\]
The largest model, \textsc{Mistral-7B}, benefits from a \emph{slightly}
shorter control horizon,
\(
T_{\mathrm{RL}} = 96,
\)
while retaining \(N_{\mathrm{SAC}} = 2\).
Because clip–norm magnitudes differ markedly with privacy budget, we
sweep the \emph{mini-batch replay size} \(B_{\mathrm{SAC}}\)
for each~\(\varepsilon\):
\[
B_{\mathrm{SAC}} \!=\!
\begin{cases}
  4 & \text{if } \varepsilon=0.5,\\[2pt]
  16 & \text{if } \varepsilon=2,\\[2pt]
   8 & \text{if } \varepsilon=4,\\[2pt]
   4 & \text{if } \varepsilon=5,\\[2pt]
   8 & \text{if } \varepsilon=8.\\
\end{cases}
\]

\item \emph{Too-frequent} actions ($T_{\mathrm{RL}}\!\le\!48$)
      harm stability, corroborating the need for a small warm-up. Increasing the control interval
\(T_{\mathrm{RL}}\)
has two synergistic effects:

\begin{enumerate}[leftmargin=*]
\item \textbf{Richer replay buffer.}
      Each RL update draws experience from the preceding
      \(T_{\mathrm{RL}}\) optimisation steps.
      A longer interval therefore supplies the critic with
      \emph{more diverse state–action transitions},
      reducing the variance of the Q-value estimates and enabling more
      reliable credit assignment across layers.

\item \textbf{Stabler exploration–exploitation balance.}
      Frequent policy updates (small~\(T_{\mathrm{RL}}\))
      may over-react to short-lived spikes in gradient norms,
      causing oscillatory clipping/noise schedules that
      waste privacy budget.
      A larger \(T_{\mathrm{RL}}\) amortises those transients,
      letting the controller focus on trends that persist over dozens
      of steps—the time-scale on which privacy cost actually accrues.
\end{enumerate}
\end{itemize}

\begin{table}[H]
\centering
\caption{End-of-training perplexity ($\downarrow$) on the held-out evaluation split.  
For every model (rows) and privacy budget $\varepsilon$ (columns) we report  
{\bf Mean RLDP} (bold) and, as a small baseline reference, the best competing method  
for the same setting (shown after the $\scriptstyle\rightarrow$ arrow).  
Lower is better; RLDP wins in {\em every} cell available in the logs.}
\vspace{.2em}
\begin{tabular}{lccccc}
\toprule
\multicolumn{1}{c}{Model \,/\, $\varepsilon$}
  & 0.5 & 2 & 4 & 5 & 8 \\
\midrule
GPT2
  & \textbf{1.487}$\;\scriptstyle\rightarrow\,2.138$
  & \textbf{1.364}$\;\scriptstyle\rightarrow\,1.622$
  & \textbf{1.337}$\;\scriptstyle\rightarrow\,1.546$
  & \textbf{1.329}$\;\scriptstyle\rightarrow\,1.528$
  & \textbf{1.379}$\;\scriptstyle\rightarrow\,1.497$ \\[2pt]
Llama-1B
  & \textbf{1.235}$\;\scriptstyle\rightarrow\,1.256$
  & \textbf{1.221}$\;\scriptstyle\rightarrow\,1.238$
  & \textbf{1.219}$\;\scriptstyle\rightarrow\,1.234$
  & \textbf{1.218}$\;\scriptstyle\rightarrow\,1.233$
  & \textbf{1.217}$\;\scriptstyle\rightarrow\,1.231$ \\[2pt]
Llama-3B
  & \textbf{1.211}$\;\scriptstyle\rightarrow\,1.251$
  & \textbf{1.192}$\;\scriptstyle\rightarrow\,1.231$
  & \textbf{1.188}$\;\scriptstyle\rightarrow\,1.225$
  & \textbf{1.186}$\;\scriptstyle\rightarrow\,1.223$
  & \textbf{1.183}$\;\scriptstyle\rightarrow\,1.222$ \\[2pt]
Mistral-7B
  & \textbf{1.136}$\;\scriptstyle\rightarrow\,1.147$
  & \textbf{1.125}$\;\scriptstyle\rightarrow\,1.135$
  & \textbf{1.123}$\;\scriptstyle\rightarrow\,1.133$
  & \textbf{1.123}$\;\scriptstyle\rightarrow\,1.132$
  & \textbf{1.123}$\;\scriptstyle\rightarrow\,1.131$ \\
\bottomrule
\end{tabular}
\label{tab:eval_summary}
\end{table}

\begin{table}[H]
\centering
\caption{Fraction of optimiser steps (\%) that RLDP requires to match the
best baseline’s \emph{final} utility.
A value of $25\%$ means RLDP converged four-times faster.
Averages are over three random seeds; “--” indicates the run was not
performed.}
\label{tab:speedup}
\vspace{.3em}
\begin{tabular}{lccccc}
\toprule
\textbf{Model \,/\, $\boldsymbol{\varepsilon}$}
           & \textbf{0.5} & \textbf{2} & \textbf{4} & \textbf{5} & \textbf{8} \\
\midrule
GPT2-small           & 28 & 31 & 32 & 35 & 34 \\
Llama-3.2-1B         & 36 & 32 & 31 & 30 & 31 \\
Llama-3.2-3B         & 24 & 27 & 26 & 25 & 24 \\
Mistral-7B           & 13 & 14 & 18 & 21 & 24 \\
\midrule
\textbf{Mean}        & 25 & 26 & 27 & 28 & 28 \\
\bottomrule
\end{tabular}
\end{table}

\begin{figure}[H]
  \centering
  \begin{subfigure}[t]{0.49\textwidth}
    \centering
    \includegraphics[width=\textwidth]{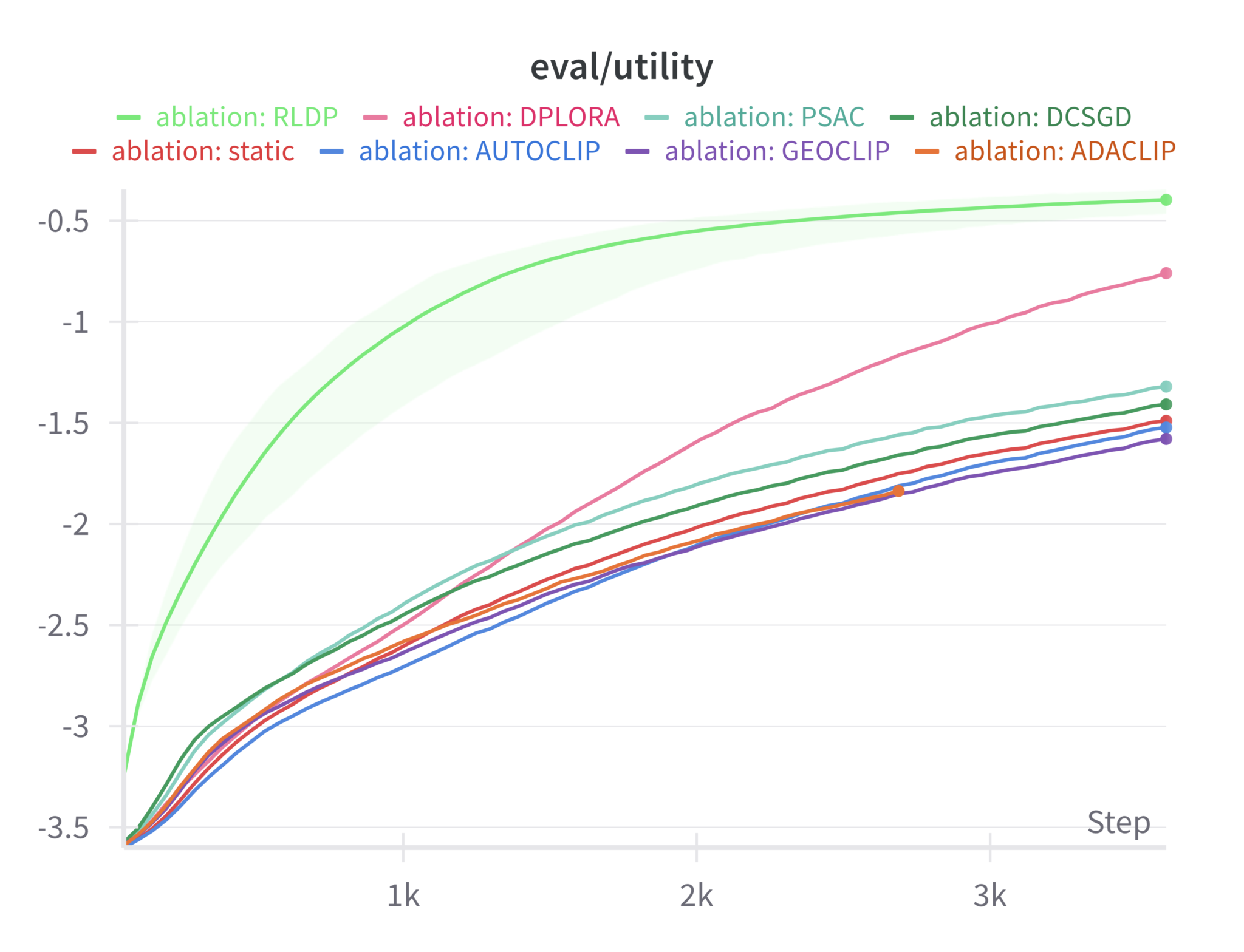}
    \caption{$\varepsilon=0.5$}
    \label{fig:gpt2_util_eps05}
  \end{subfigure}\hfill
  \begin{subfigure}[t]{0.49\textwidth}
    \centering
    \includegraphics[width=\textwidth]{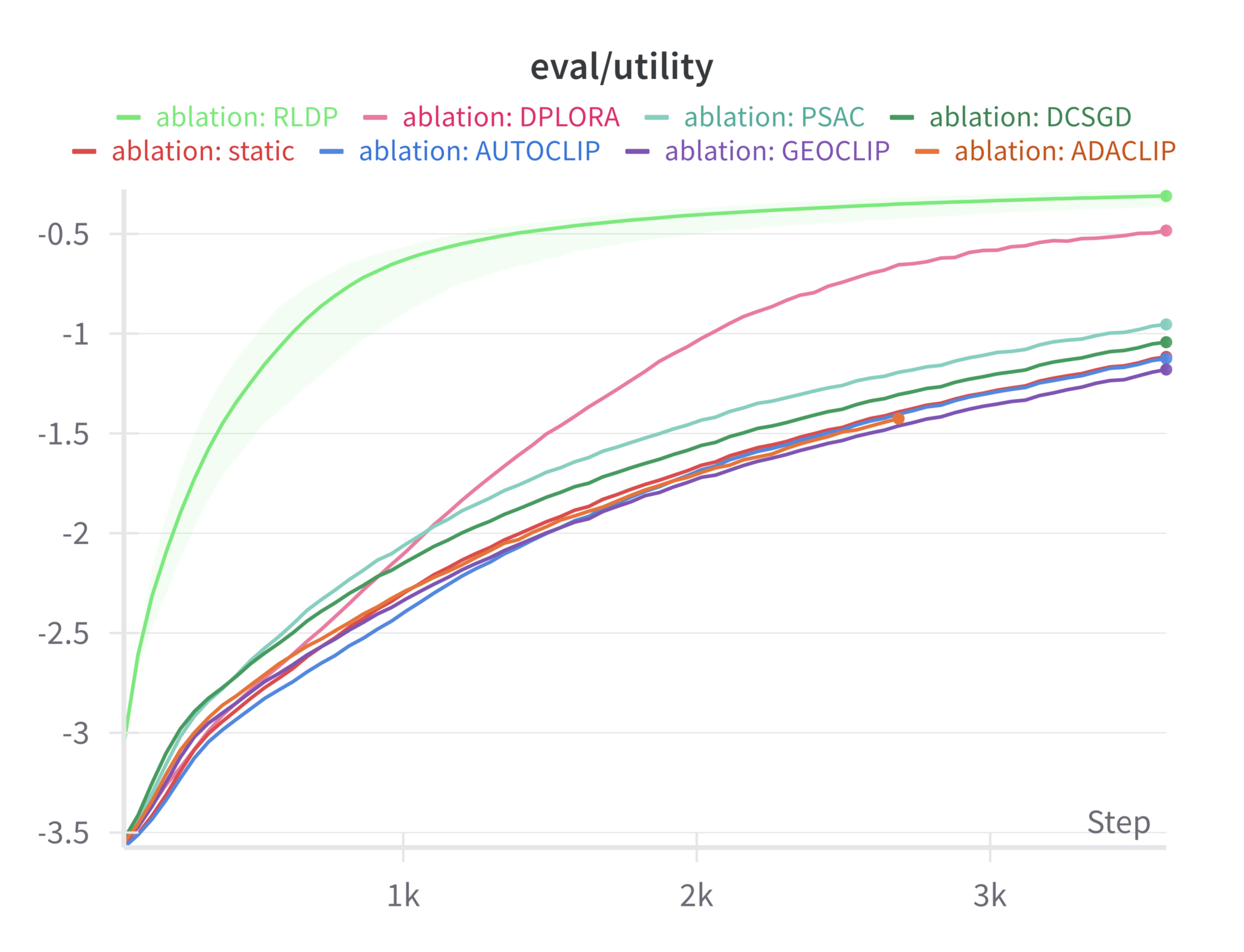}
    \caption{$\varepsilon=2$}
    \label{fig:gpt2_util_eps2}
  \end{subfigure}

  \medskip

  \begin{subfigure}[t]{0.49\textwidth}
    \centering
    \includegraphics[width=\textwidth]{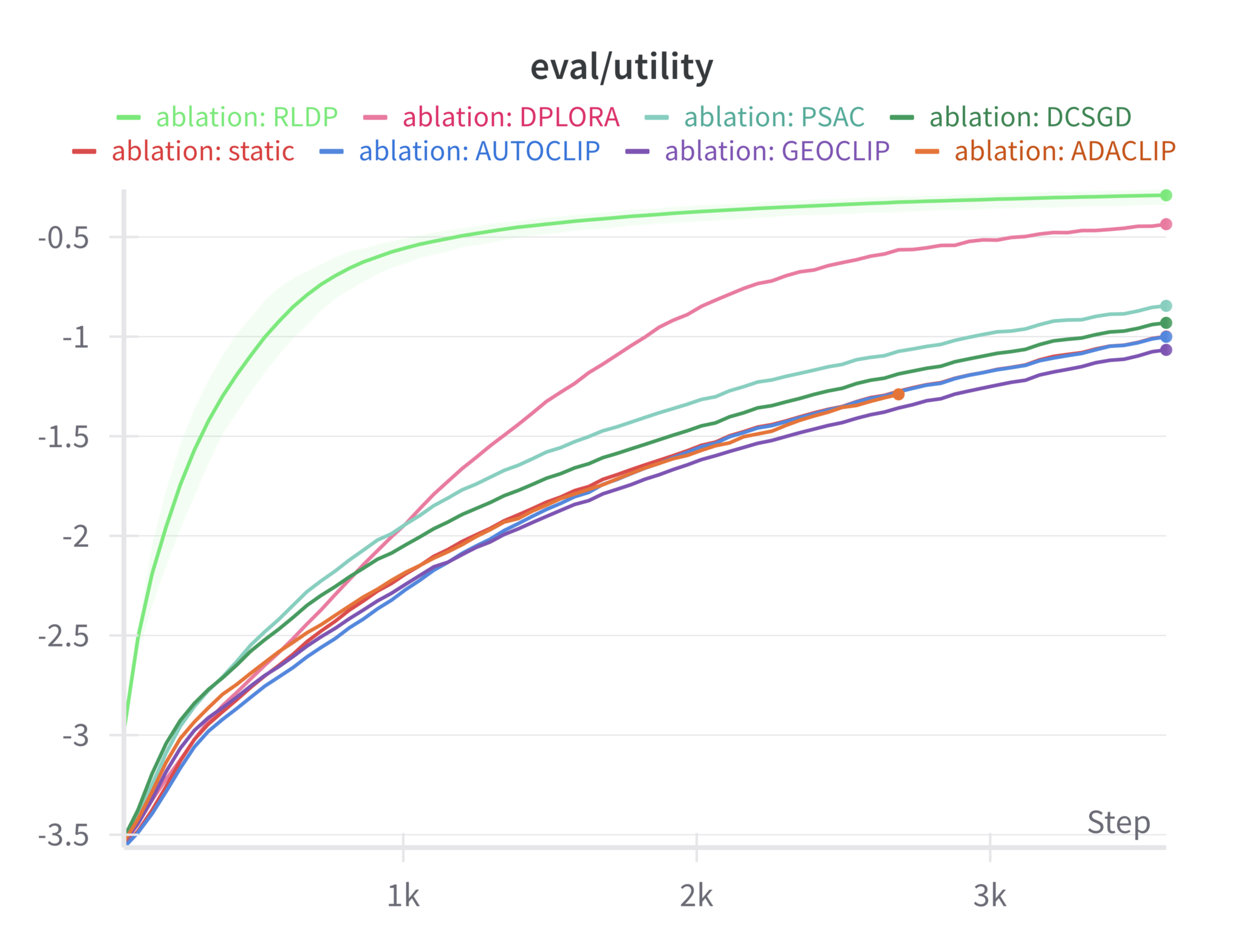}
    \caption{$\varepsilon=4$}
    \label{fig:gpt2_util_eps4}
  \end{subfigure}\hfill
  \begin{subfigure}[t]{0.49\textwidth}
    \centering
    \includegraphics[width=\textwidth]{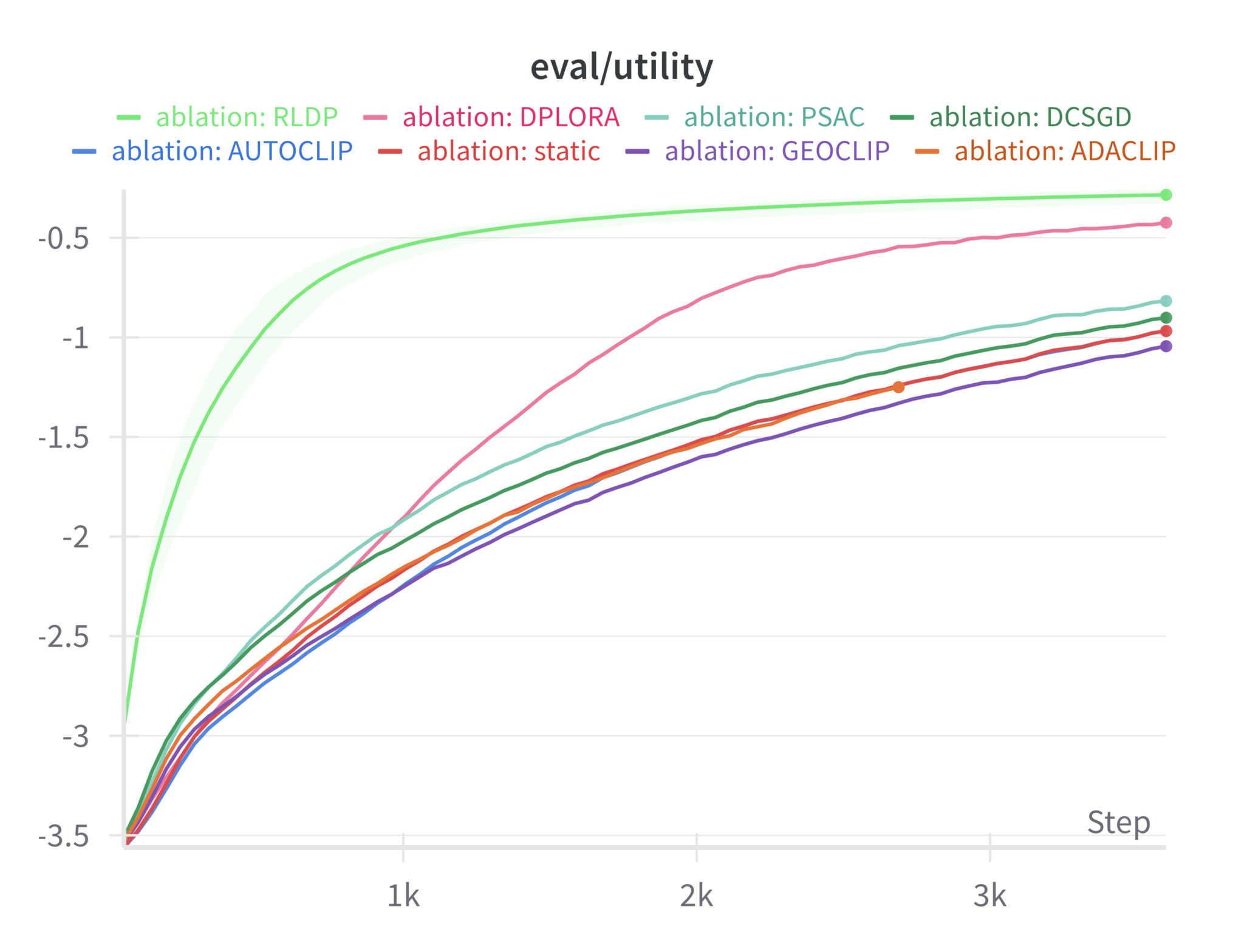}
    \caption{$\varepsilon=5$}
    \label{fig:gpt2_util_eps5}
  \end{subfigure}

  \medskip

  \begin{subfigure}[t]{0.64\textwidth}
    \centering
    \includegraphics[width=\textwidth]{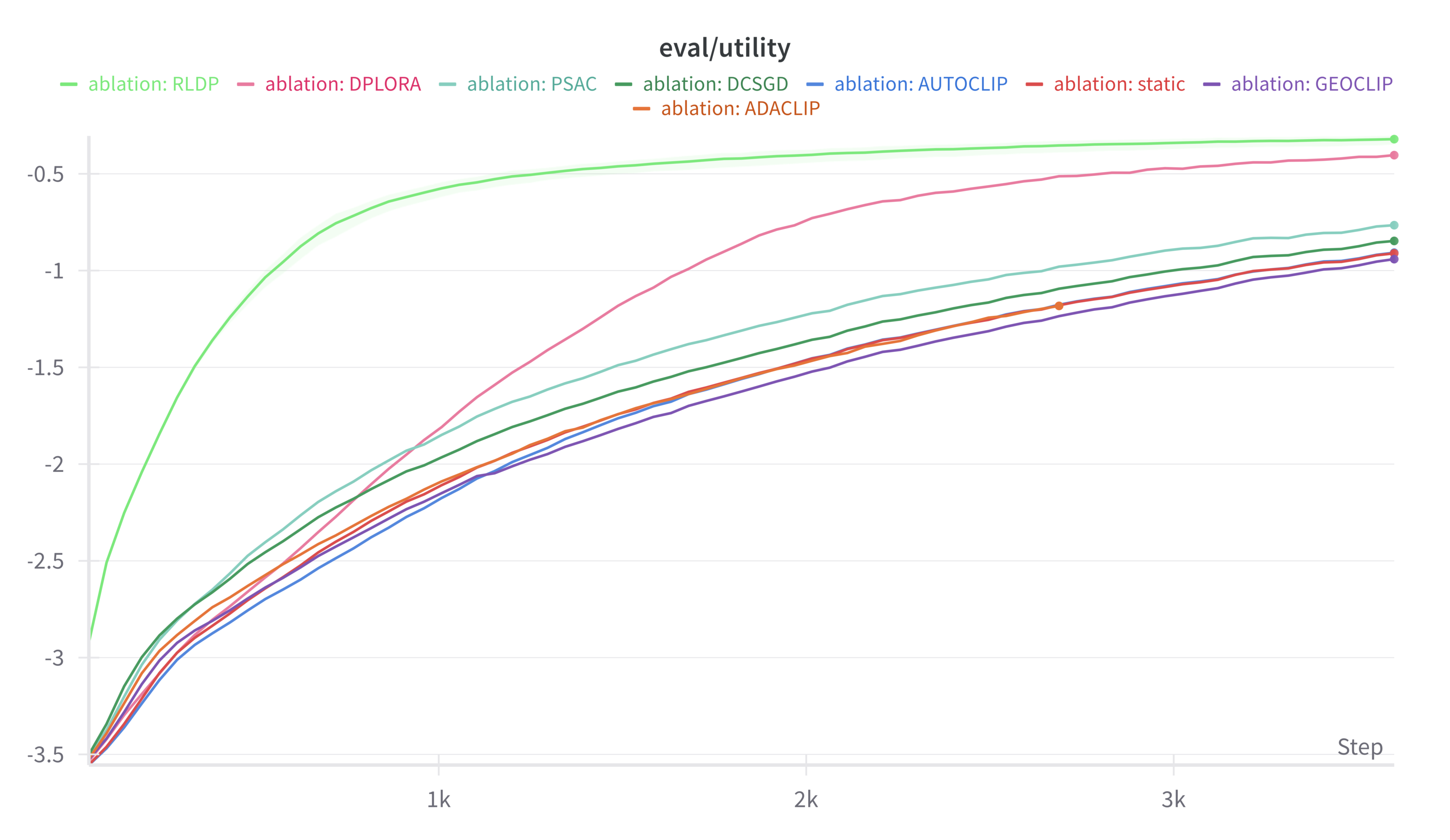}
    \caption{$\varepsilon=8$}
    \label{fig:gpt2_util_eps8}
  \end{subfigure}

  \caption{Evaluation utility curves (``eval/utility'') over training steps for the GPT2 model under different budgets $\varepsilon\in\{0.5,2,4,5,8\}$.}
  \label{fig:gpt2_eval_utility}
\end{figure}

\begin{figure}[H]
  \centering
  \begin{subfigure}[t]{0.49\textwidth}
    \centering
    \includegraphics[width=\textwidth]{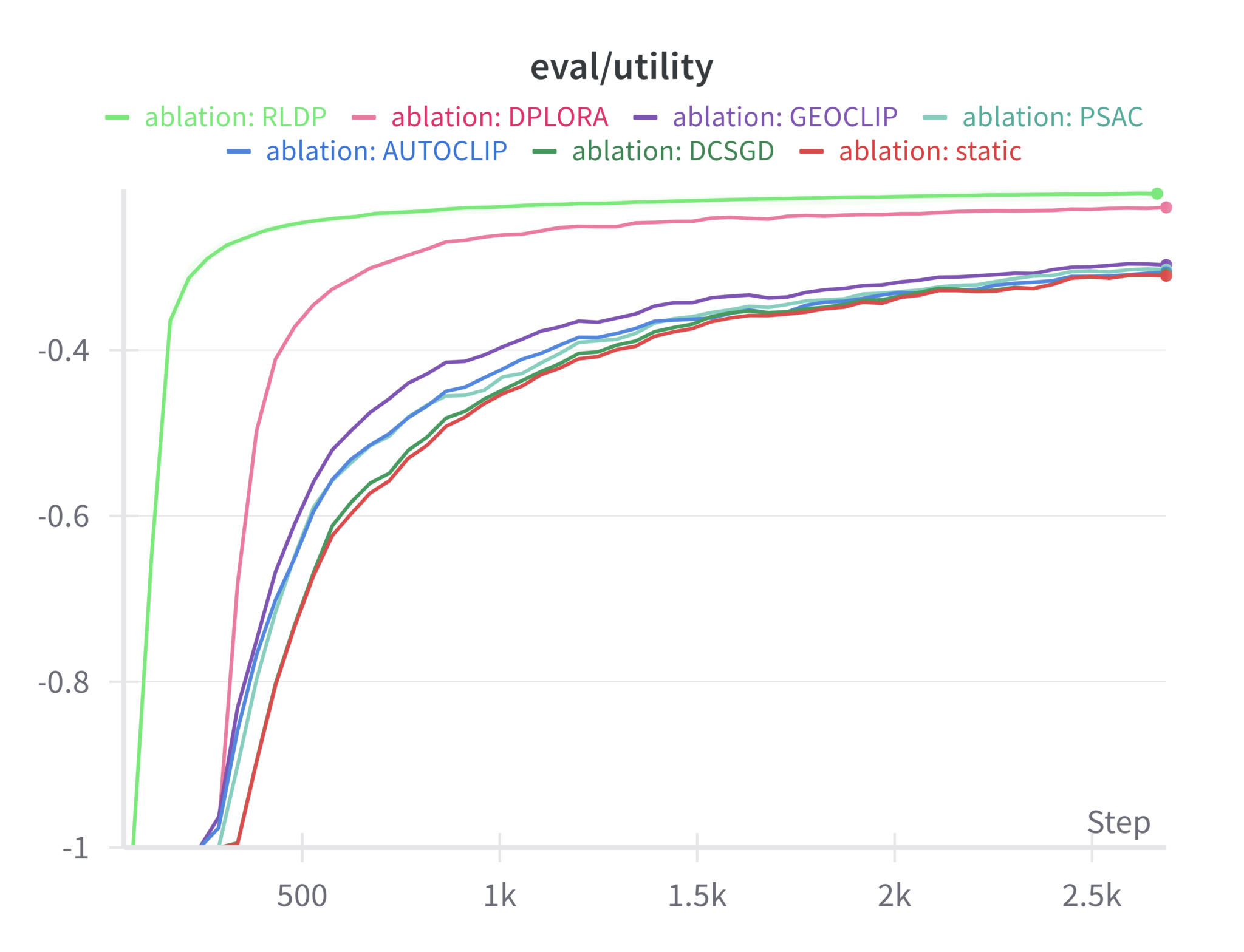}
    \caption{$\varepsilon=0.5$}
    \label{fig:llama1b_util_eps05}
  \end{subfigure}\hfill
  \begin{subfigure}[t]{0.49\textwidth}
    \centering
    \includegraphics[width=\textwidth]{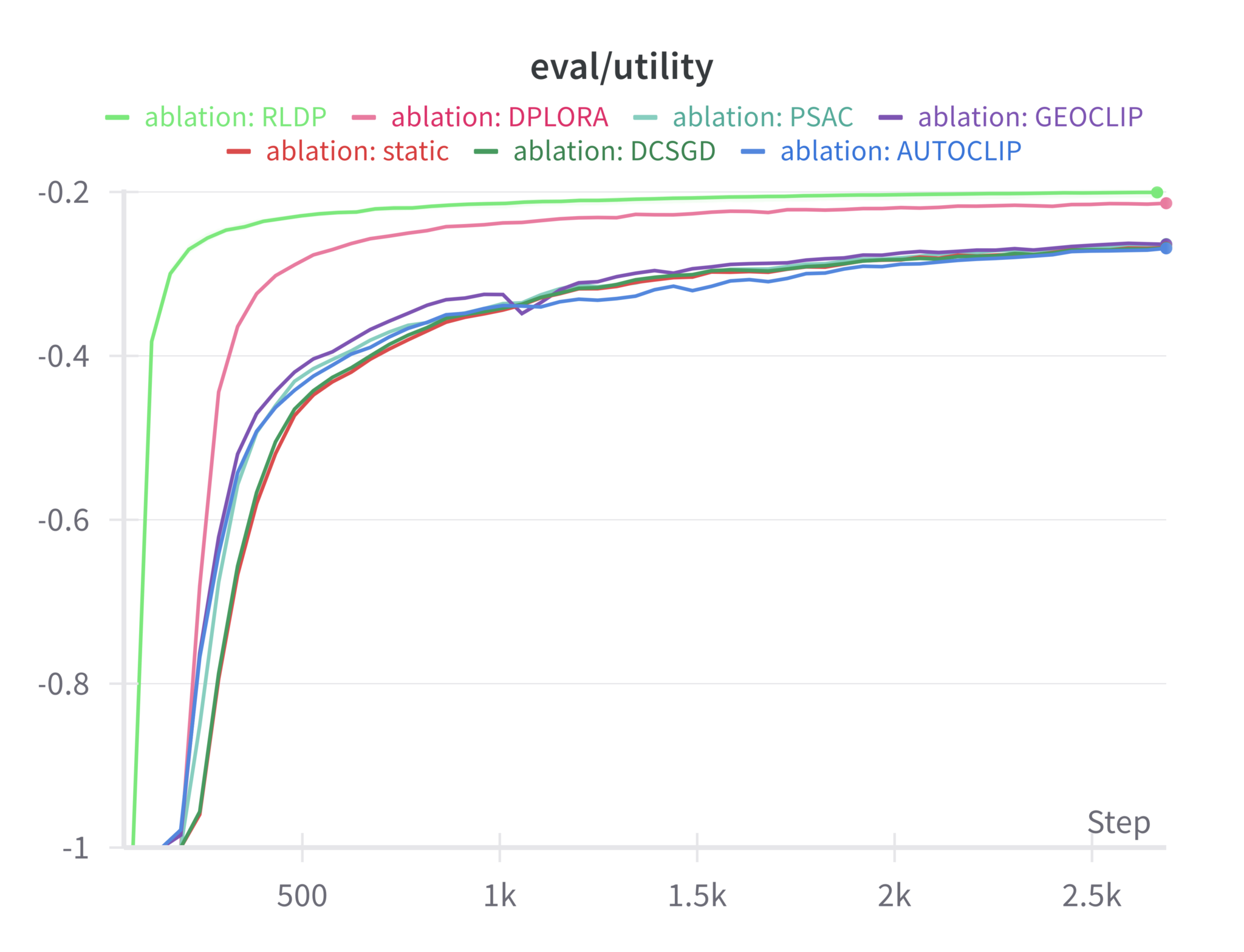}
    \caption{$\varepsilon=2$}
    \label{fig:llama1b_util_eps2}
  \end{subfigure}

  \medskip

  \begin{subfigure}[t]{0.49\textwidth}
    \centering
    \includegraphics[width=\textwidth]{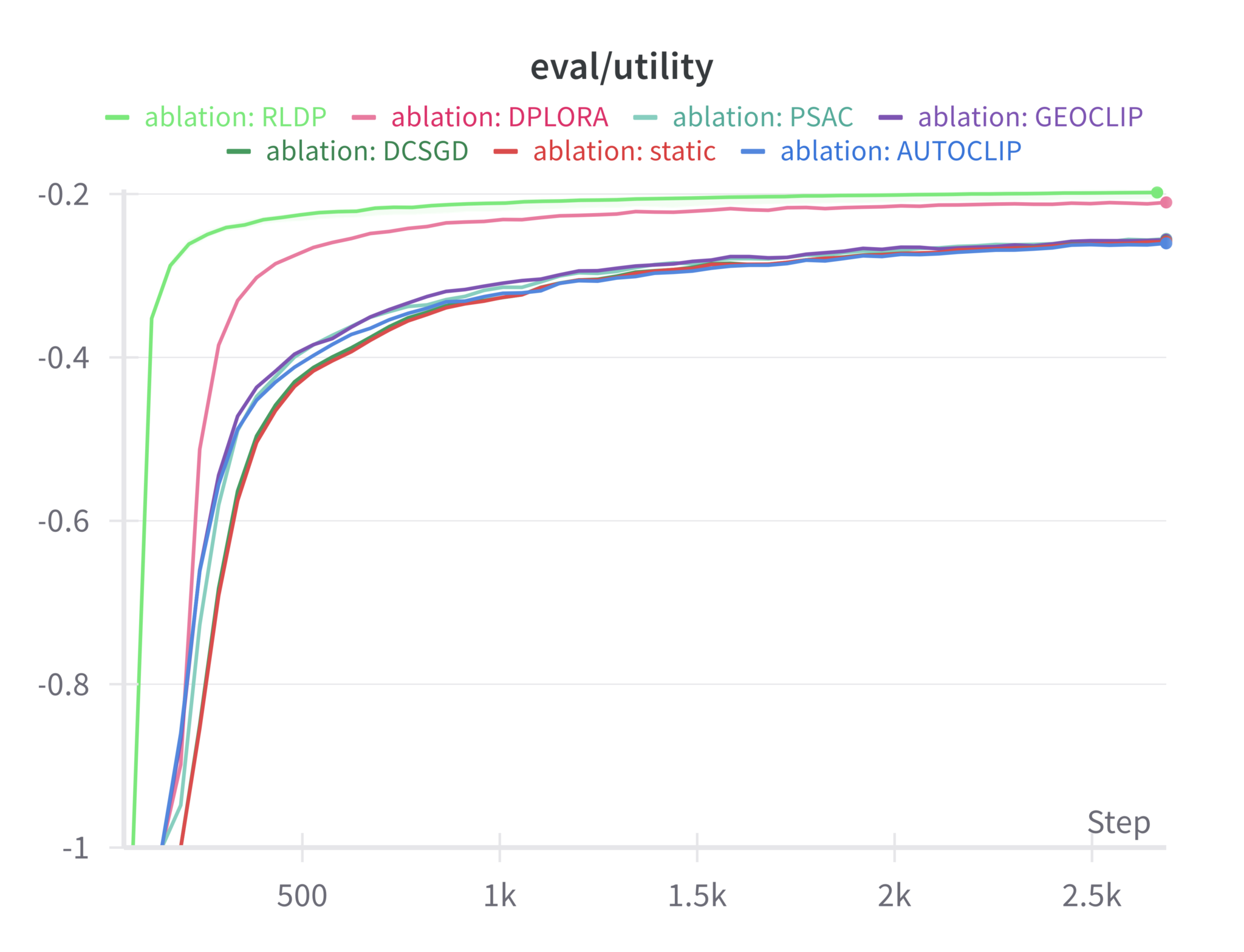}
    \caption{$\varepsilon=4$}
    \label{fig:llama1b_util_eps4}
  \end{subfigure}\hfill
  \begin{subfigure}[t]{0.49\textwidth}
    \centering
    \includegraphics[width=\textwidth]{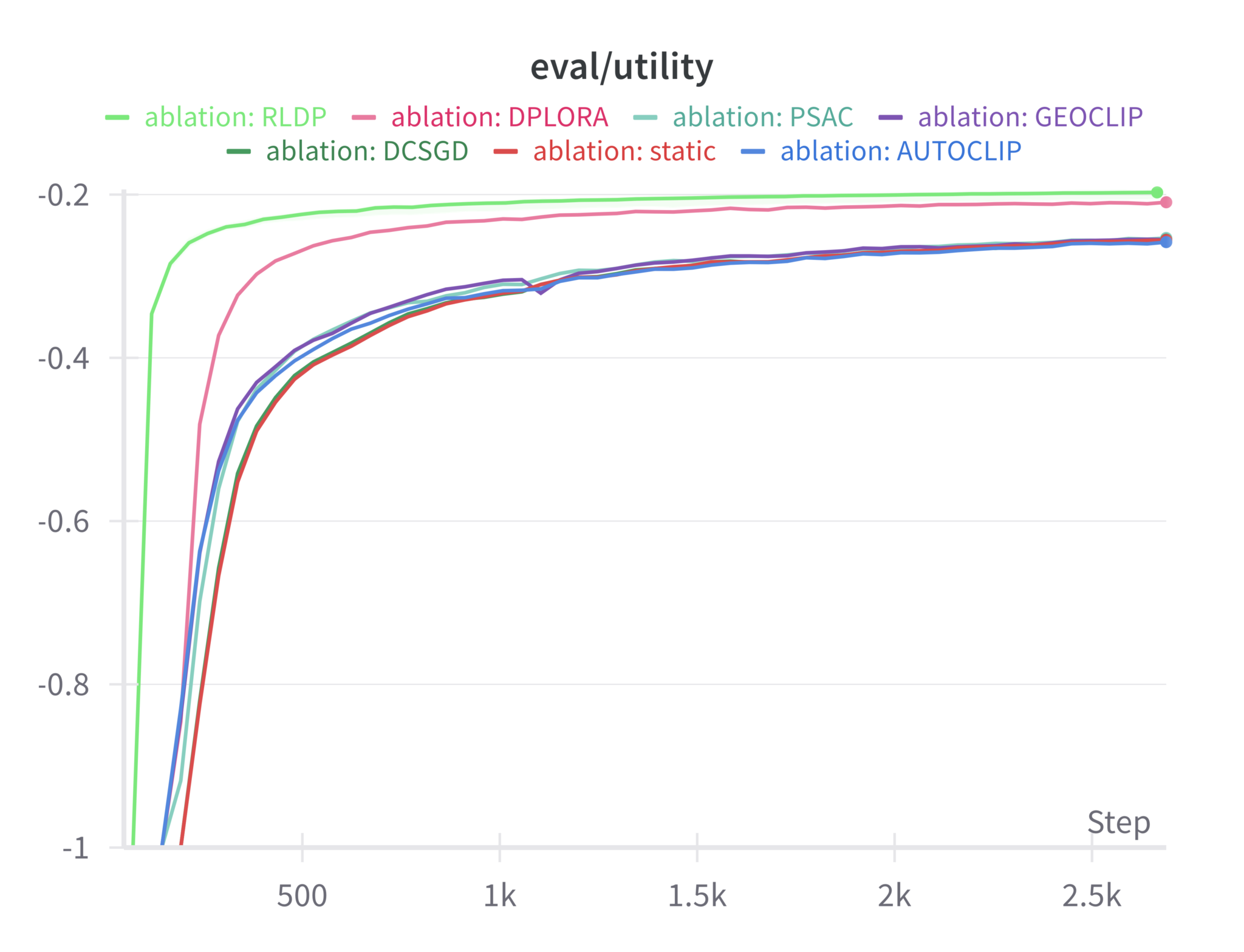}
    \caption{$\varepsilon=5$}
    \label{fig:llama1b_util_eps5}
  \end{subfigure}

  \medskip

  \begin{subfigure}[t]{0.64\textwidth}
    \centering
    \includegraphics[width=\textwidth]{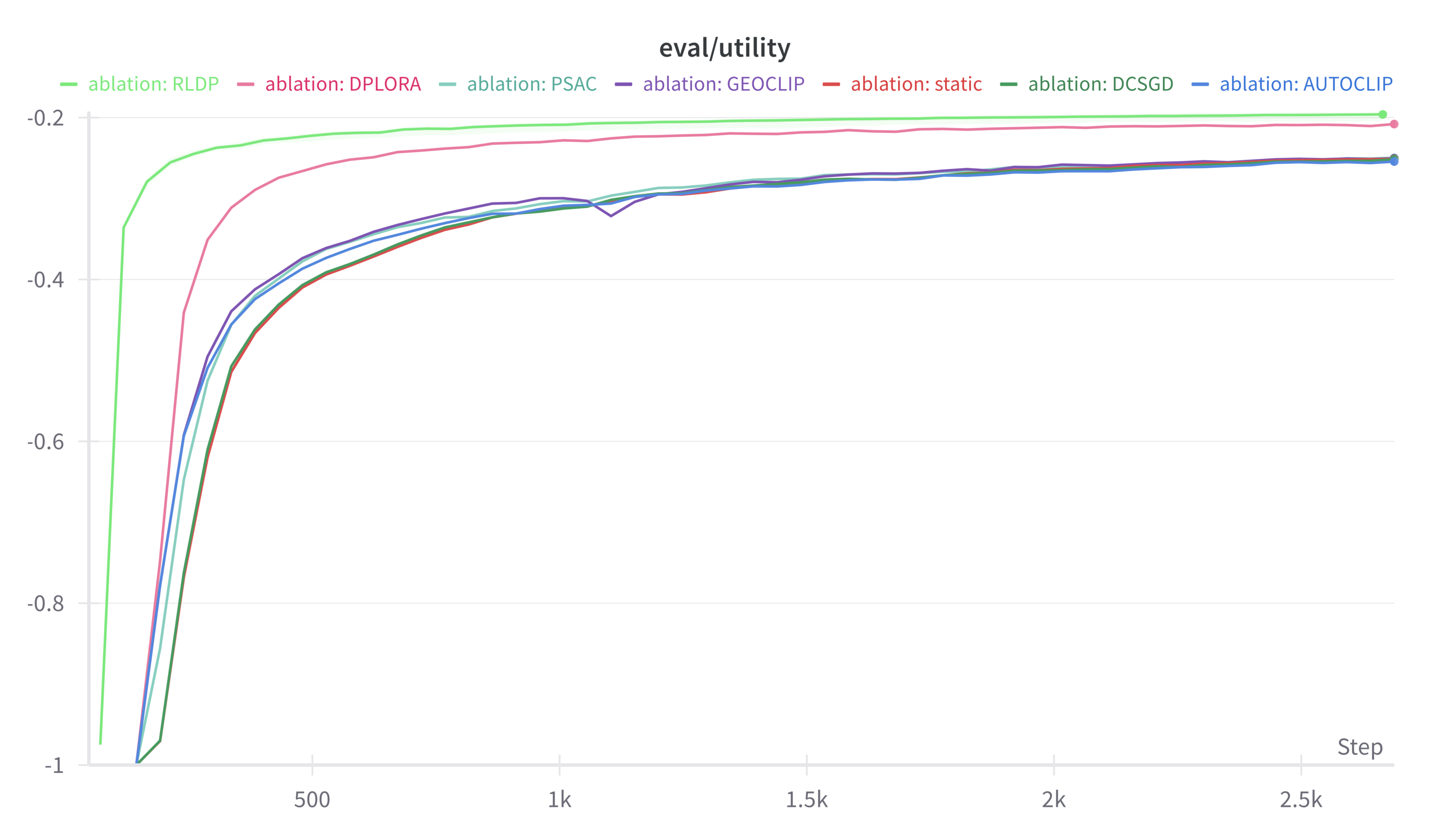}
    \caption{$\varepsilon=8$}
    \label{fig:llama1b_util_eps8}
  \end{subfigure}

  \caption{Evaluation utility curves (``eval/utility'') over training steps for the Llama-3.2-1B model under different budgets $\varepsilon\in\{0.5,2,4,5,8\}$.}
  \label{fig:llama1b_eval_utility}
\end{figure}

\begin{figure}[H]
  \centering
  \begin{subfigure}[t]{0.49\textwidth}
    \centering
    \includegraphics[width=\textwidth]{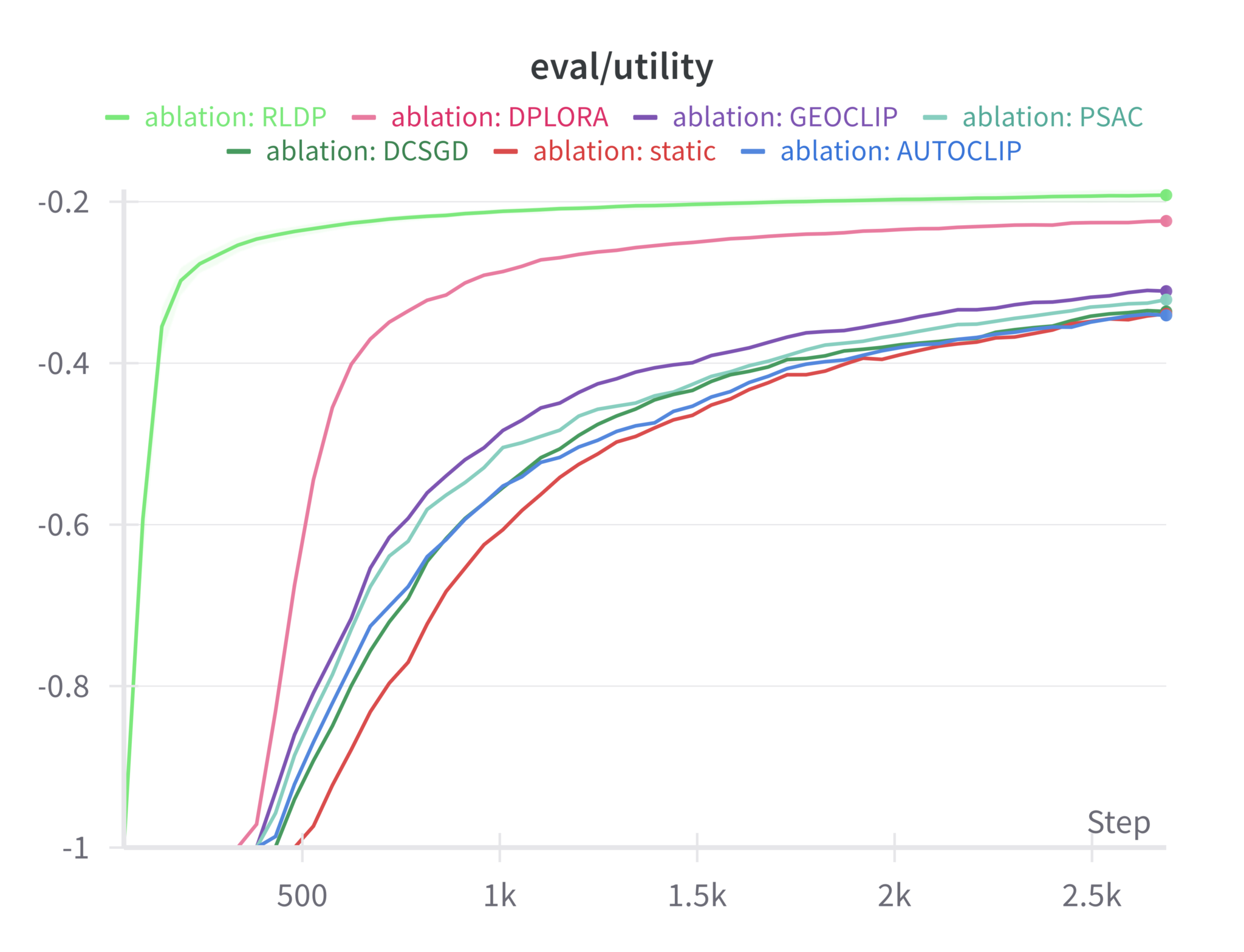}
    \caption{$\varepsilon=0.5$}
    \label{fig:llama3b_util_eps05}
  \end{subfigure}\hfill
  \begin{subfigure}[t]{0.49\textwidth}
    \centering
    \includegraphics[width=\textwidth]{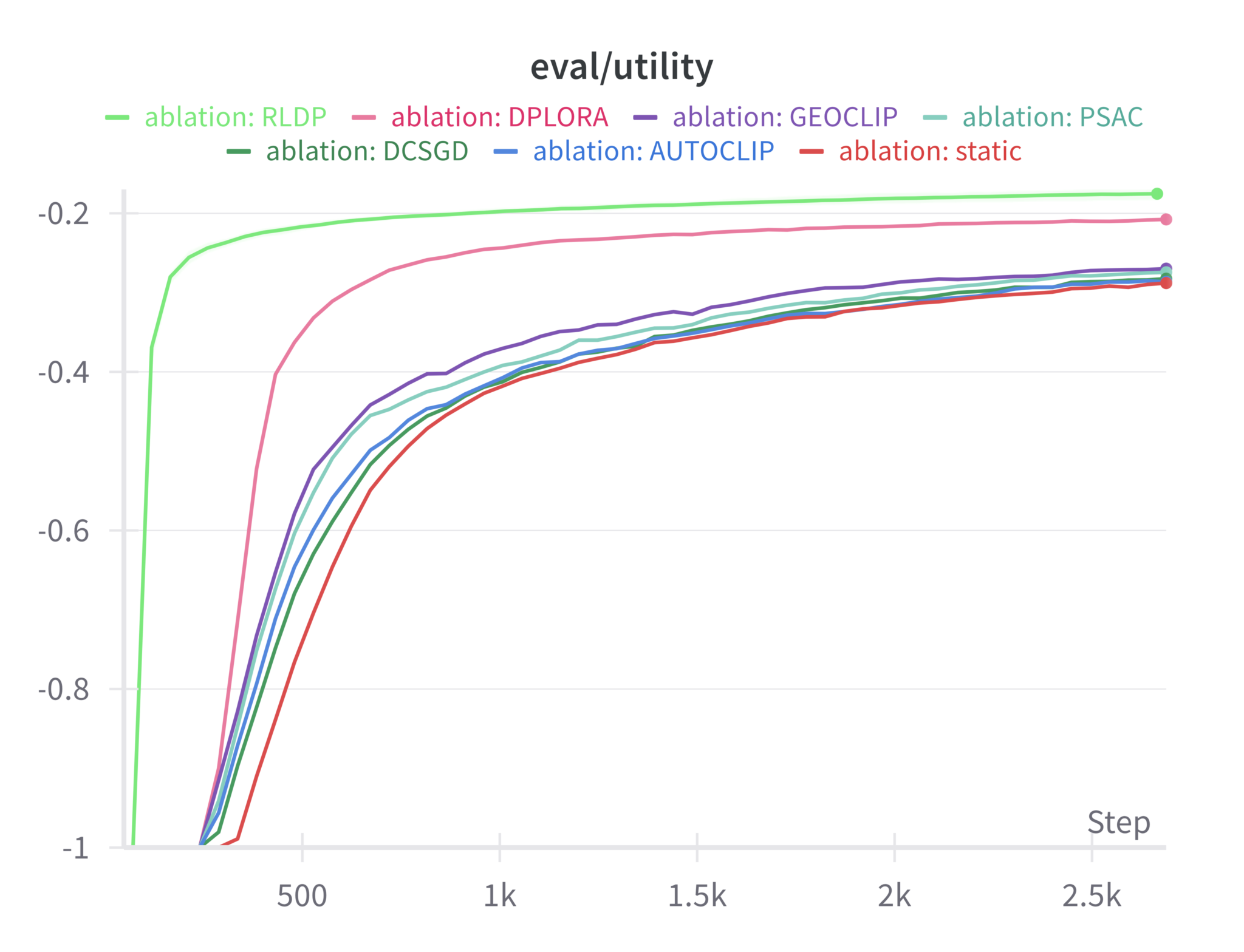}
    \caption{$\varepsilon=2$}
    \label{fig:llama3b_util_eps2}
  \end{subfigure}

  \medskip

  \begin{subfigure}[t]{0.49\textwidth}
    \centering
    \includegraphics[width=\textwidth]{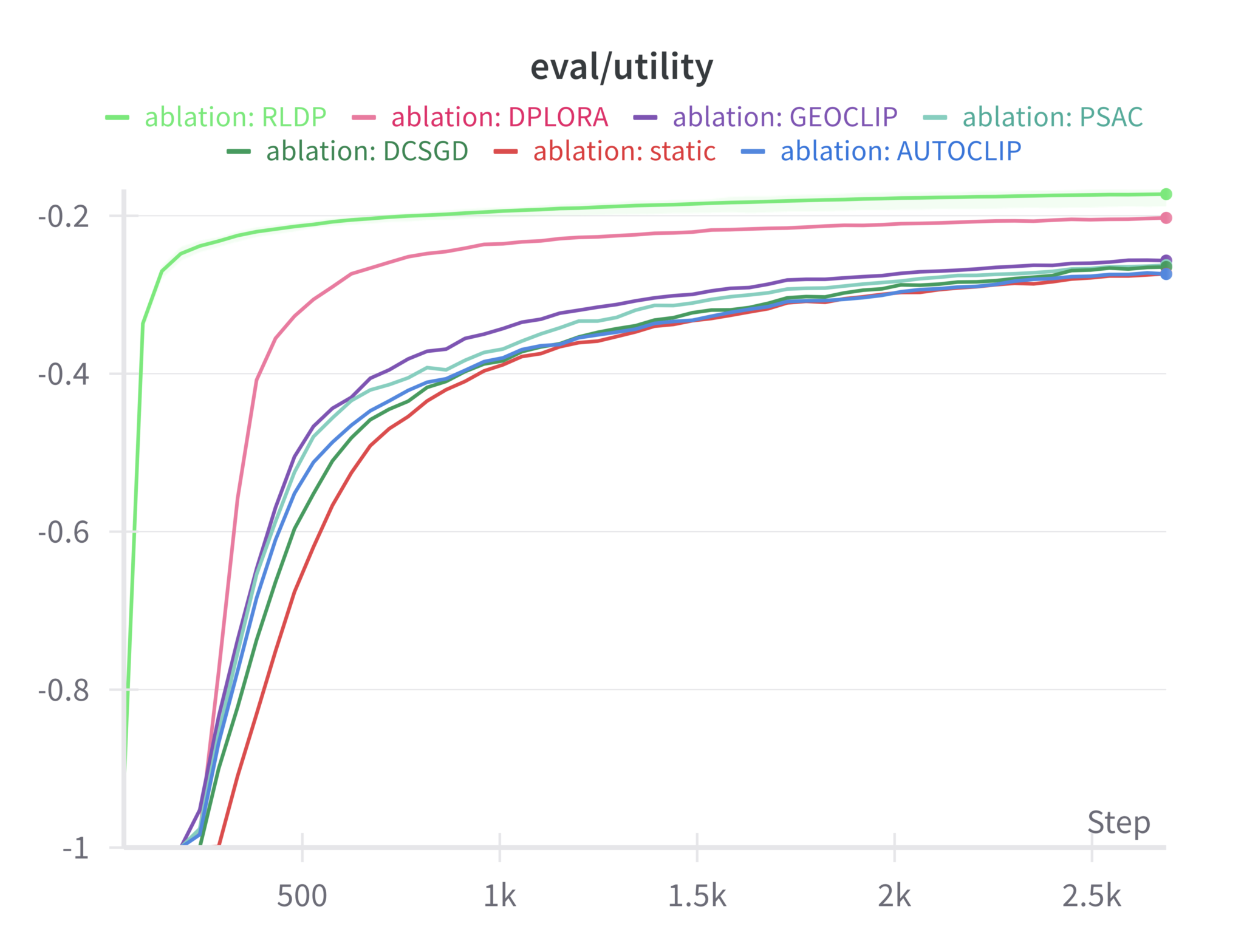}
    \caption{$\varepsilon=4$}
    \label{fig:llama3b_util_eps4}
  \end{subfigure}\hfill
  \begin{subfigure}[t]{0.49\textwidth}
    \centering
    \includegraphics[width=\textwidth]{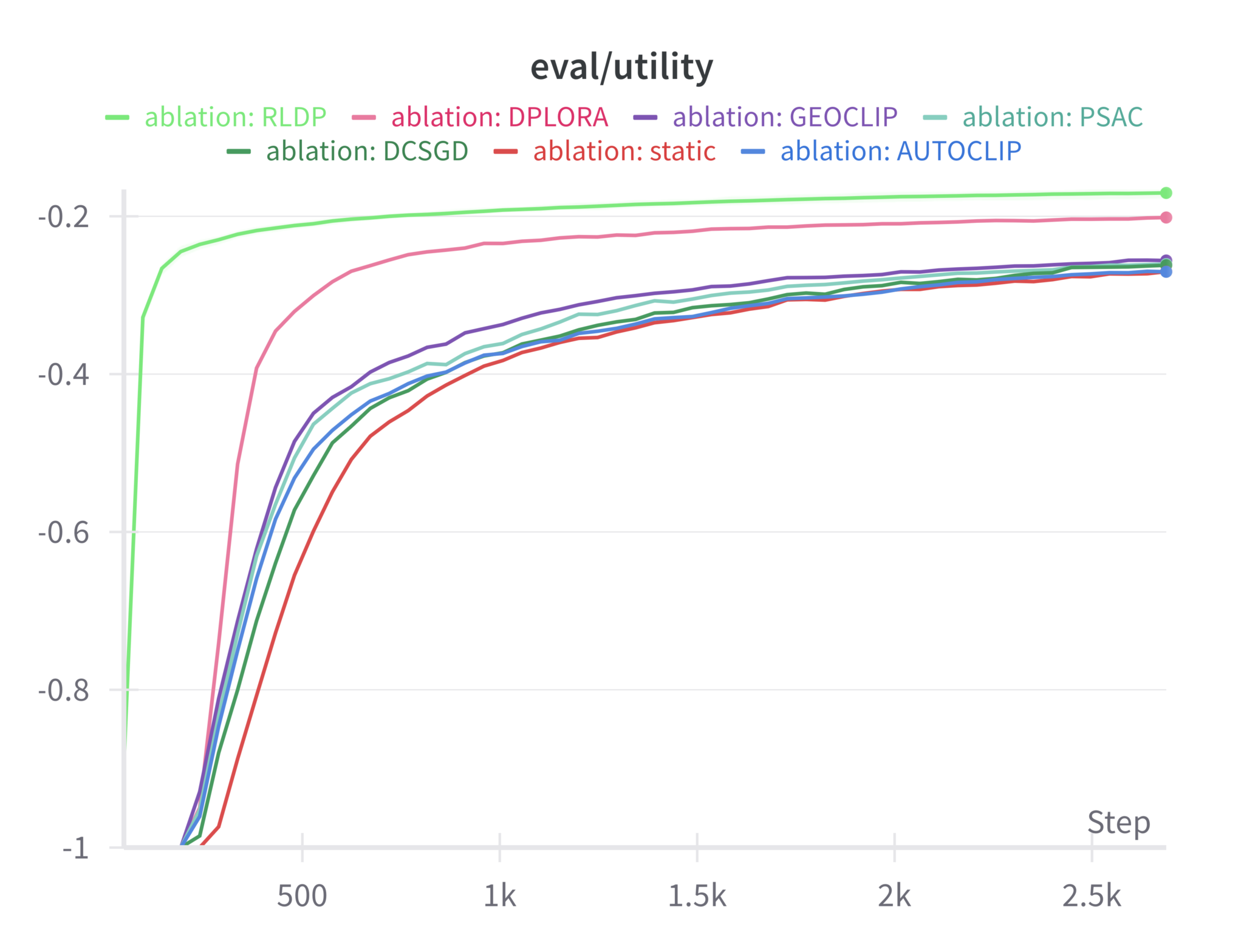}
    \caption{$\varepsilon=5$}
    \label{fig:llama3b_util_eps5}
  \end{subfigure}

  \medskip

  \begin{subfigure}[t]{0.64\textwidth}
    \centering
    \includegraphics[width=\textwidth]{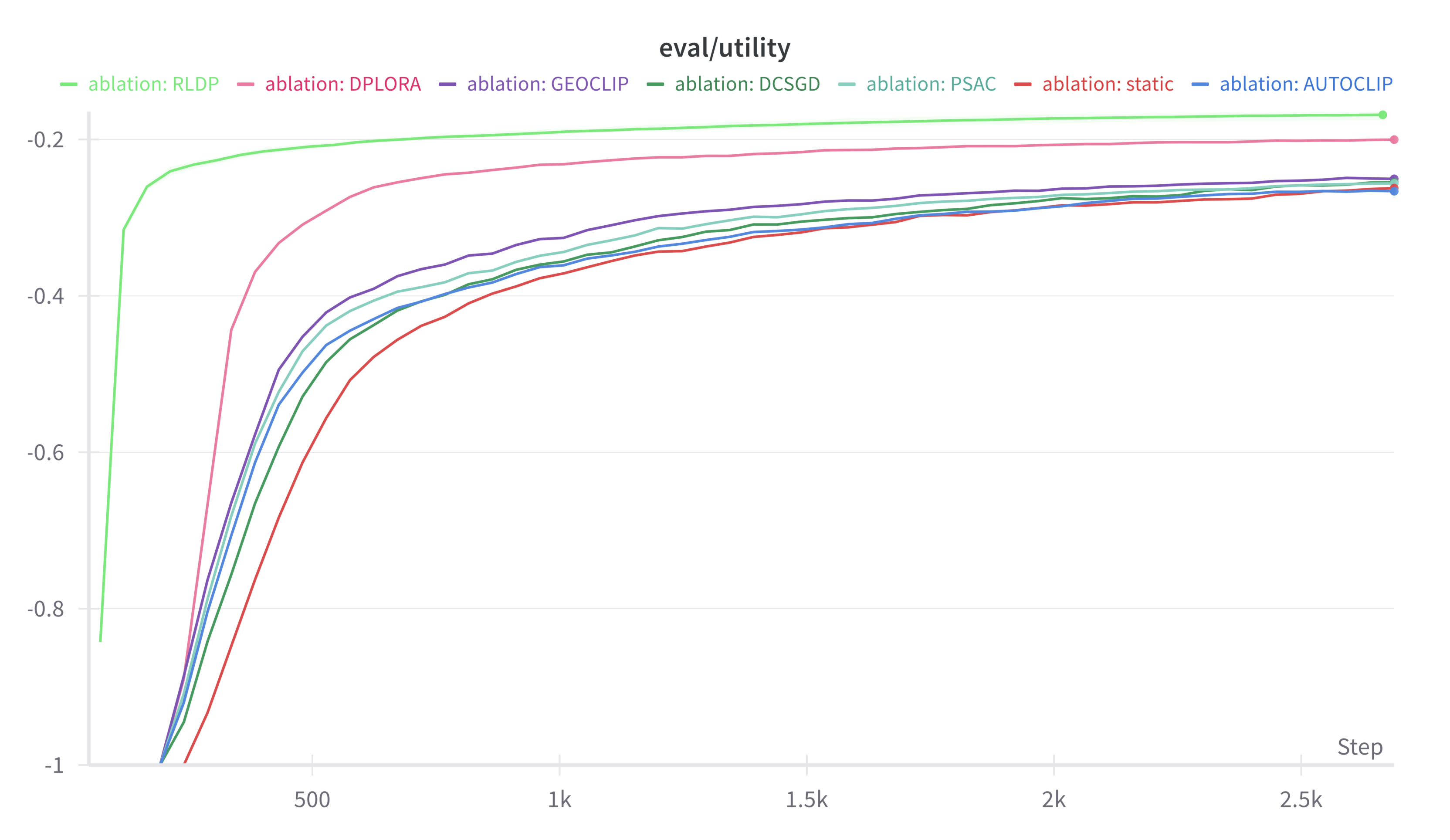}
    \caption{$\varepsilon=8$}
    \label{fig:llama3b_util_eps8}
  \end{subfigure}

  \caption{Evaluation utility curves (``eval/utility'') over training steps for the Llama-3.2-3B model under different budgets $\varepsilon\in\{0.5,2,4,5,8\}$.}
  \label{fig:llama3b_eval_utility}
\end{figure}

\begin{figure}[H]
  \centering
  \begin{subfigure}[t]{0.49\textwidth}
    \centering
    \includegraphics[width=\textwidth]{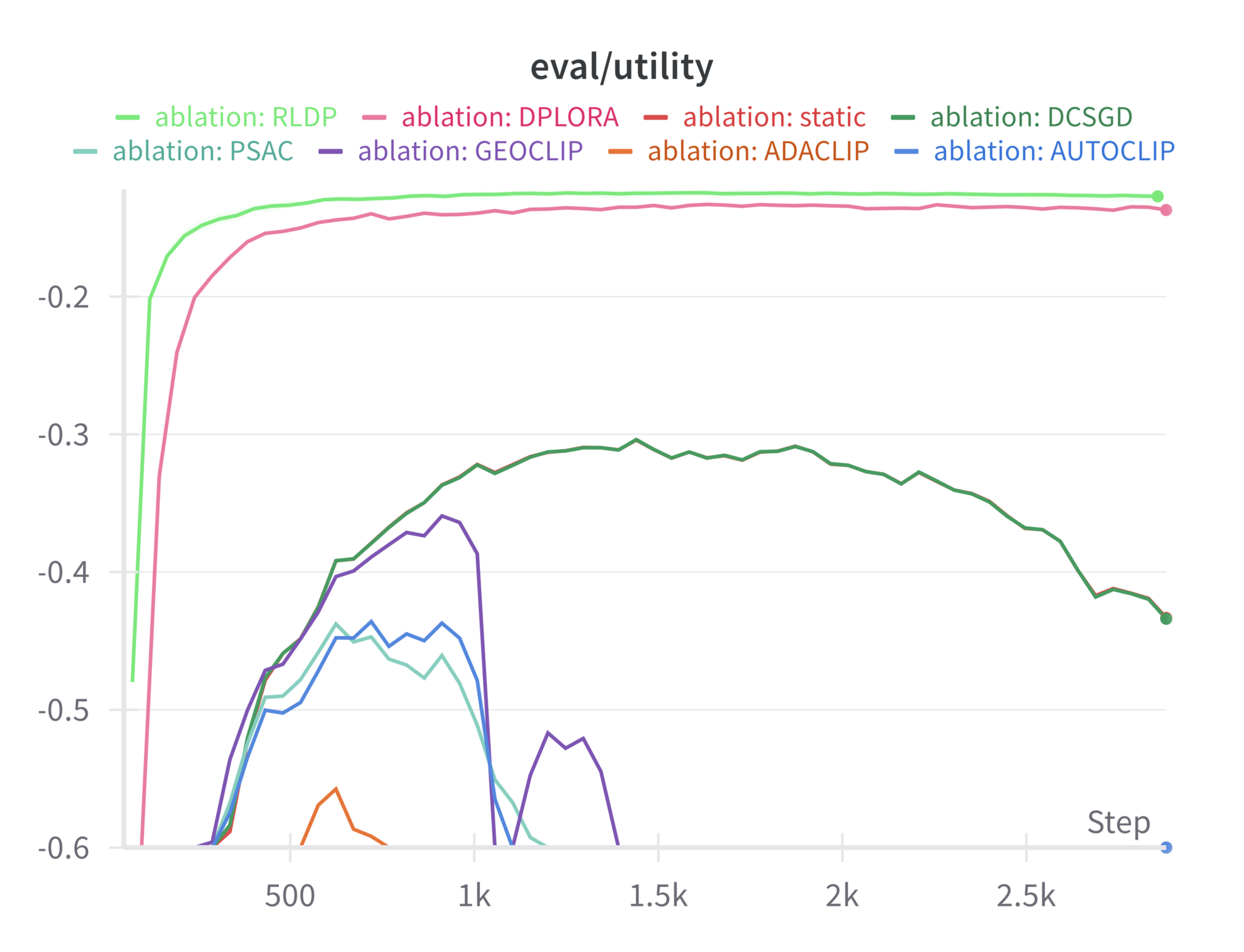}
    \caption{$\varepsilon=0.5$}
    \label{fig:mistral7b_util_eps05}
  \end{subfigure}\hfill
  \begin{subfigure}[t]{0.49\textwidth}
    \centering
    \includegraphics[width=\textwidth]{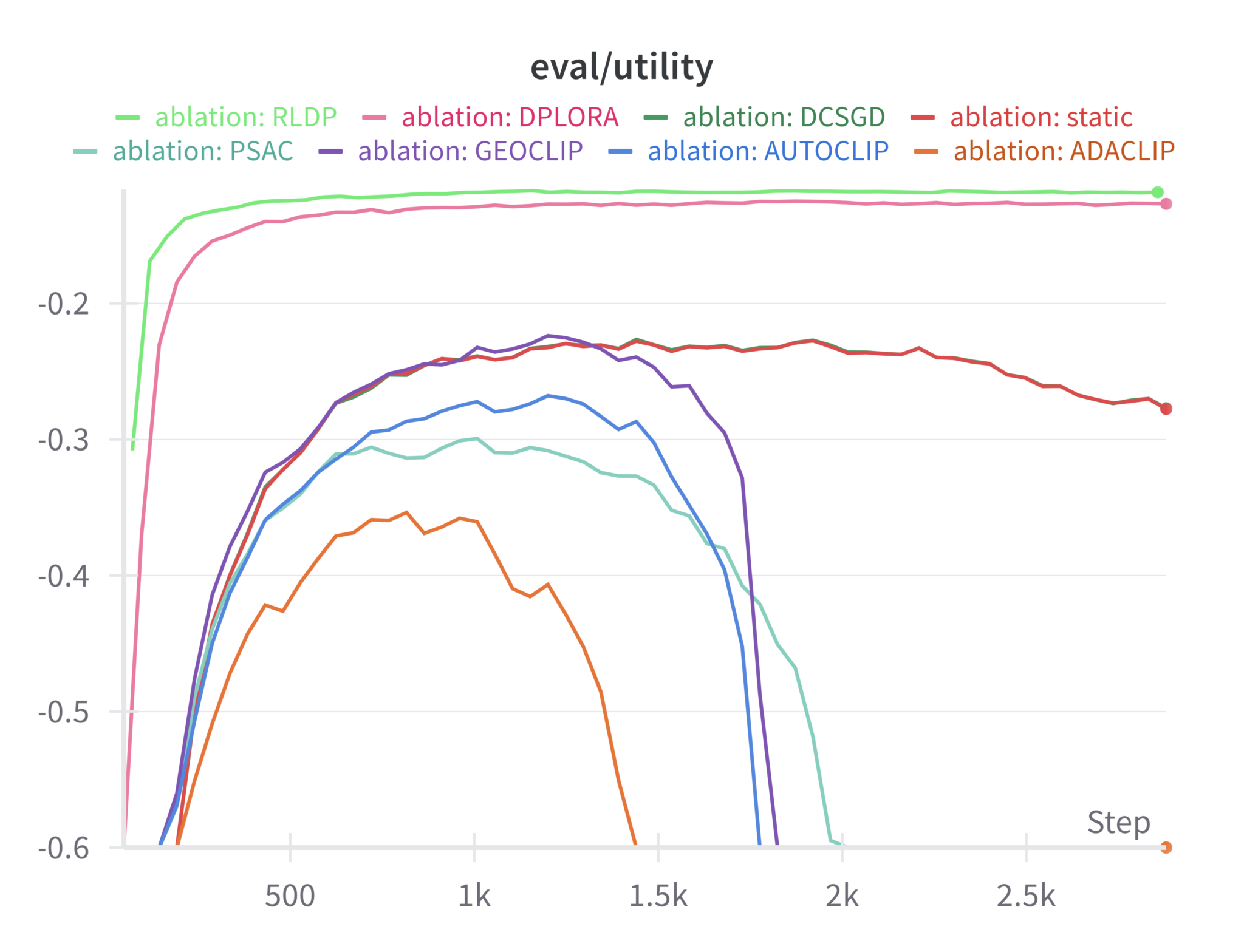}
    \caption{$\varepsilon=2$}
    \label{fig:mistral7b_util_eps2}
  \end{subfigure}

  \medskip

  \begin{subfigure}[t]{0.49\textwidth}
    \centering
    \includegraphics[width=\textwidth]{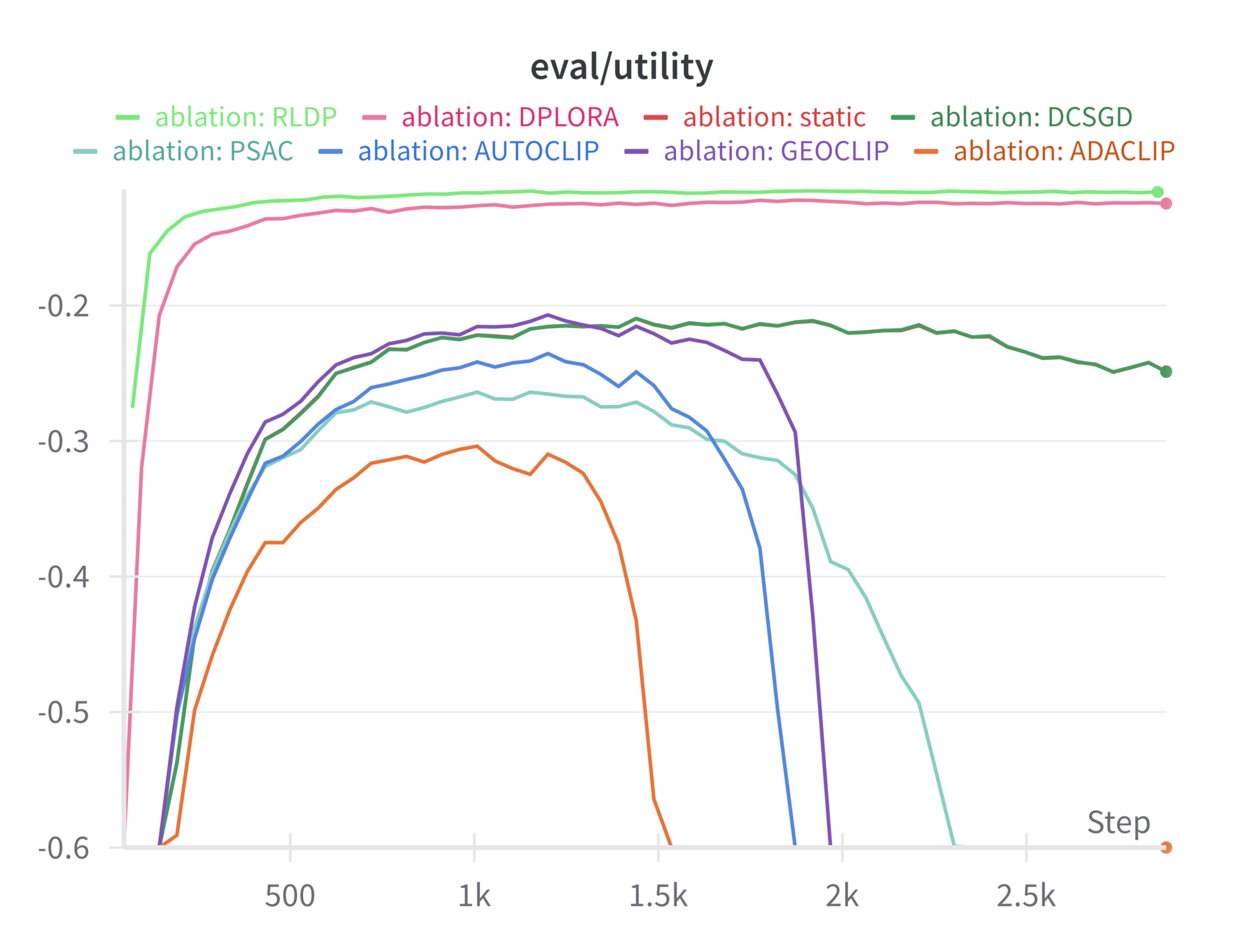}
    \caption{$\varepsilon=4$}
    \label{fig:mistral7b_util_eps4}
  \end{subfigure}\hfill
  \begin{subfigure}[t]{0.49\textwidth}
    \centering
    \includegraphics[width=\textwidth]{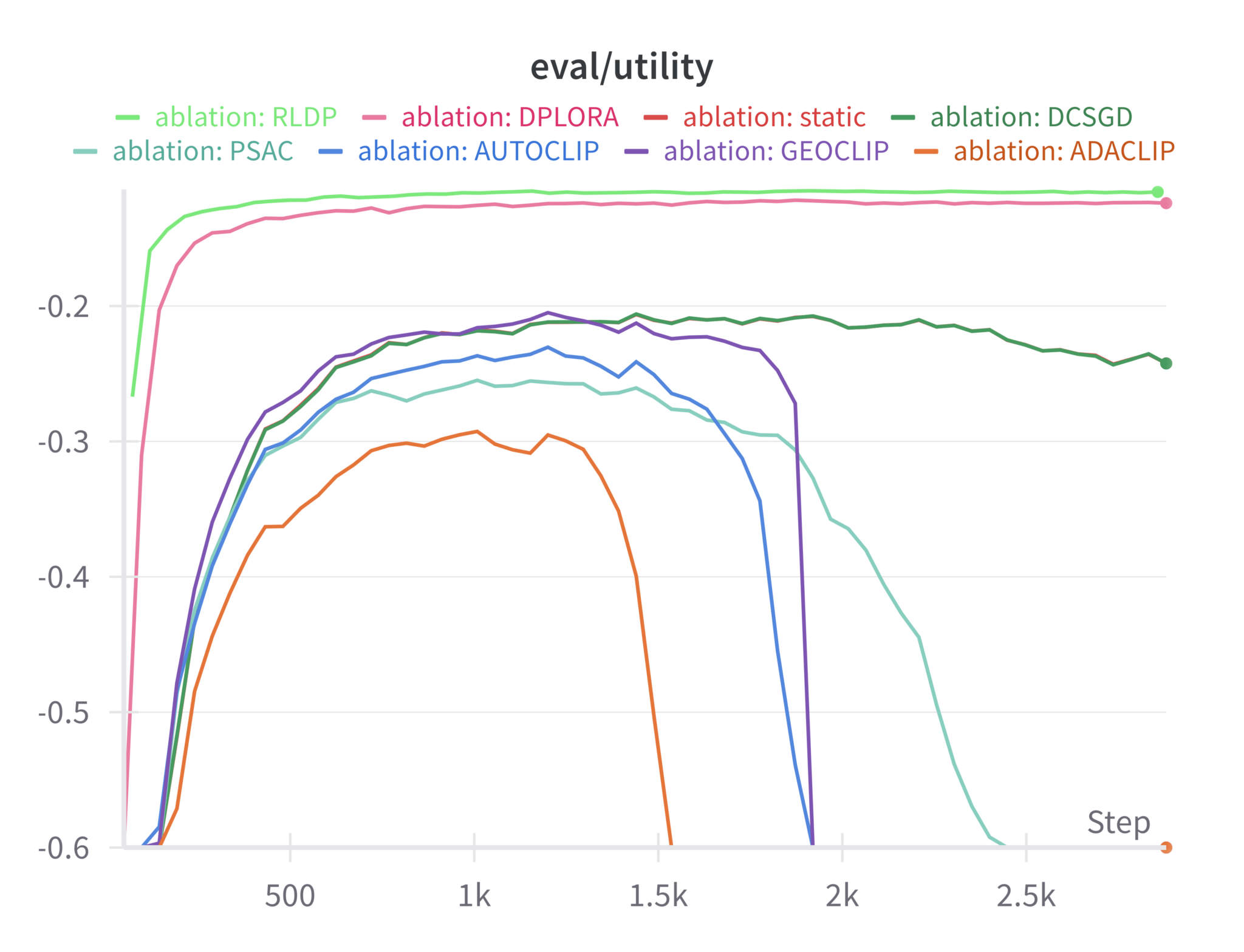}
    \caption{$\varepsilon=5$}
    \label{fig:mistral7b_util_eps5}
  \end{subfigure}

  \medskip

  \begin{subfigure}[t]{0.64\textwidth}
    \centering
    \includegraphics[width=\textwidth]{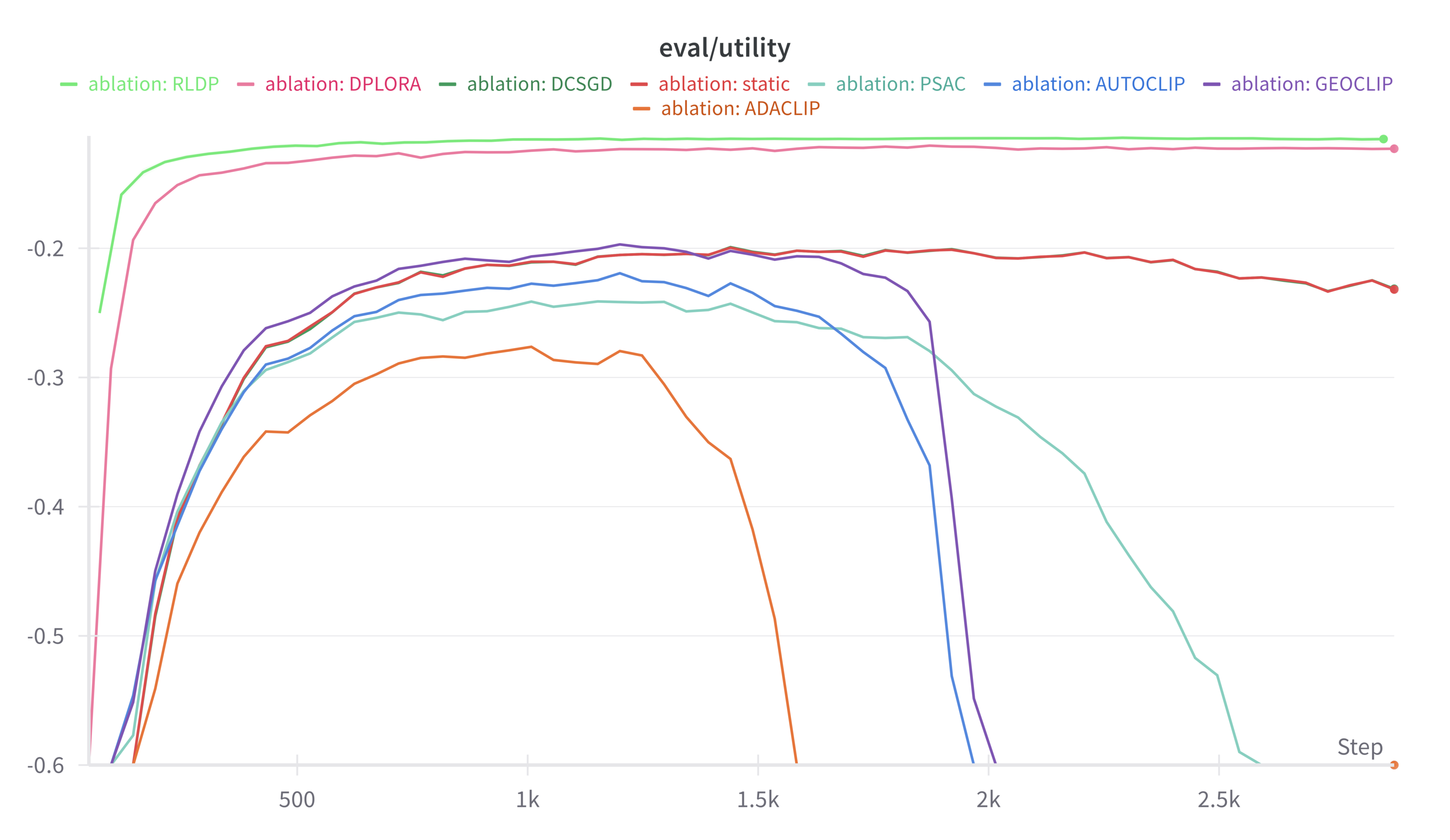}
    \caption{$\varepsilon=8$}
    \label{fig:mistral7b_util_eps8}
  \end{subfigure}

  \caption{Evaluation utility curves (``eval/utility'') over training steps for the Mistral-7B-v0.1 model under different budgets $\varepsilon\in\{0.5,2,4,5,8\}$.}
  \label{fig:mistral7b_eval_utility}
\end{figure}

\begin{figure}[H]
  \centering
  \begin{subfigure}[t]{0.45\textwidth}
    \centering
    \includegraphics[width=\textwidth]{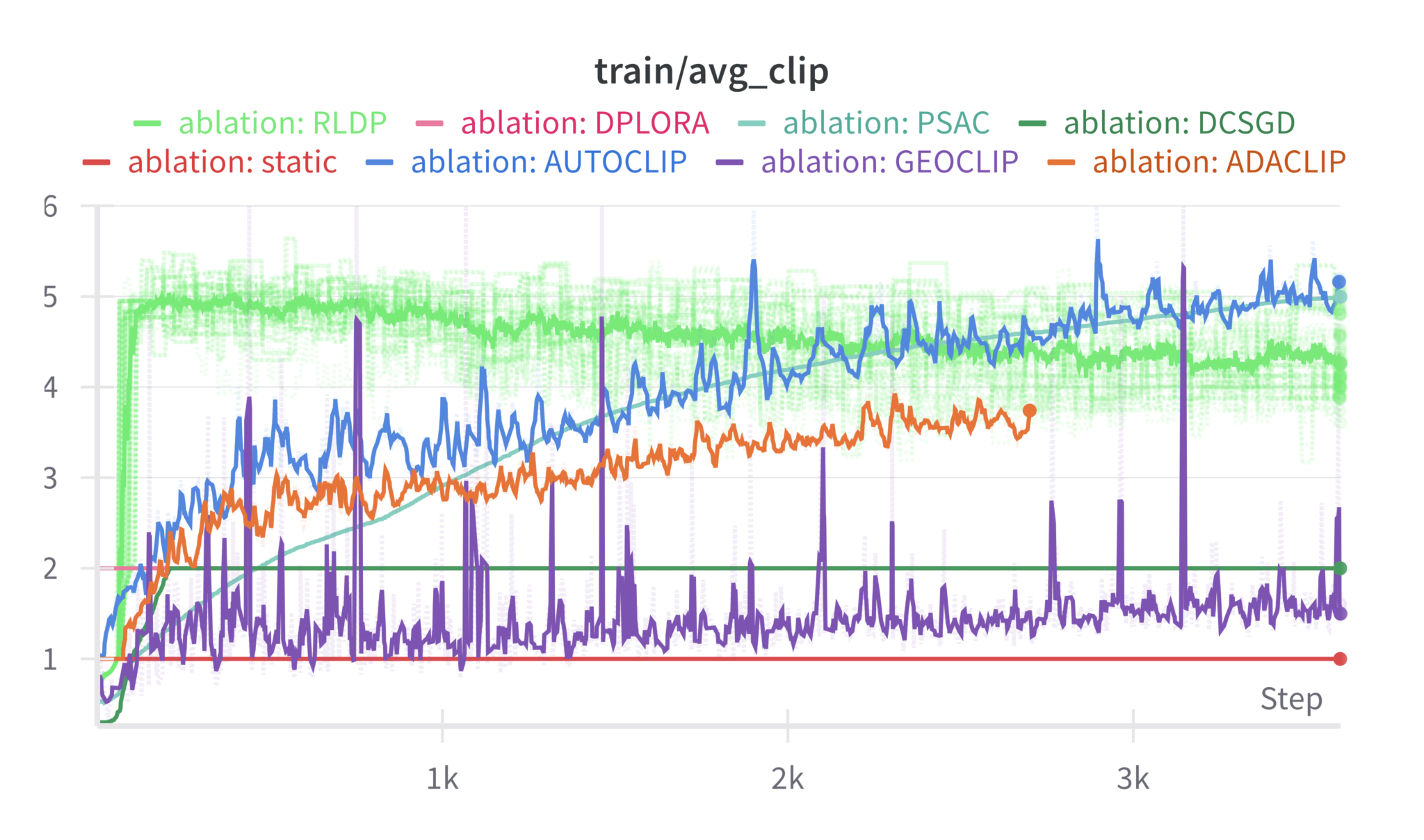}
    
    \label{fig:gpt2_clip_eps05}
  \end{subfigure}\hfill
  \begin{subfigure}[t]{0.45\textwidth}
    \centering
    \includegraphics[width=\textwidth]{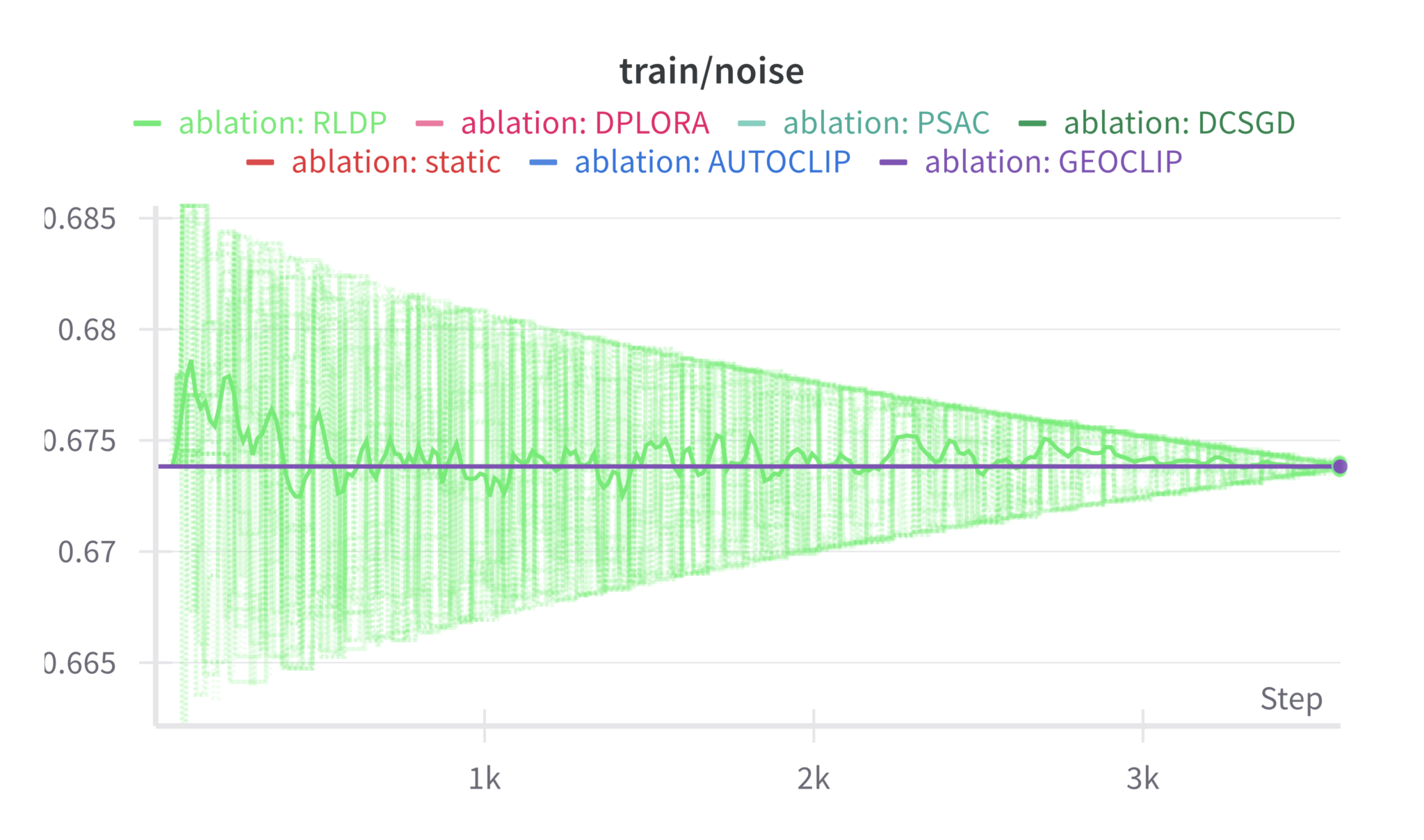}
    
    \label{fig:gpt2_noise_eps05}
  \end{subfigure}

  \begin{subfigure}[t]{0.45\textwidth}
    \centering
    \includegraphics[width=\textwidth]{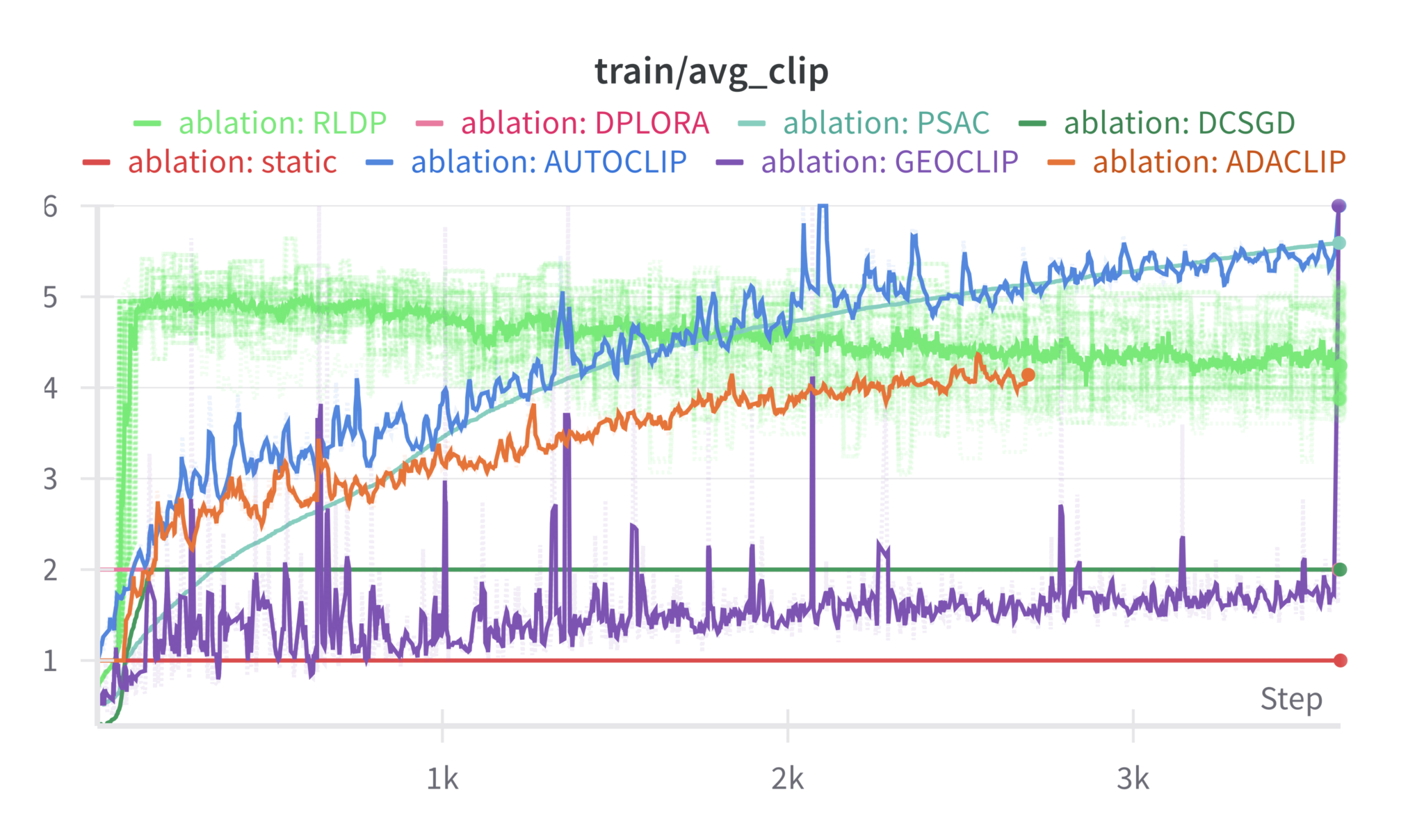}
    
    \label{fig:gpt2_clip_eps2}
  \end{subfigure}\hfill
  \begin{subfigure}[t]{0.45\textwidth}
    \centering
    \includegraphics[width=\textwidth]{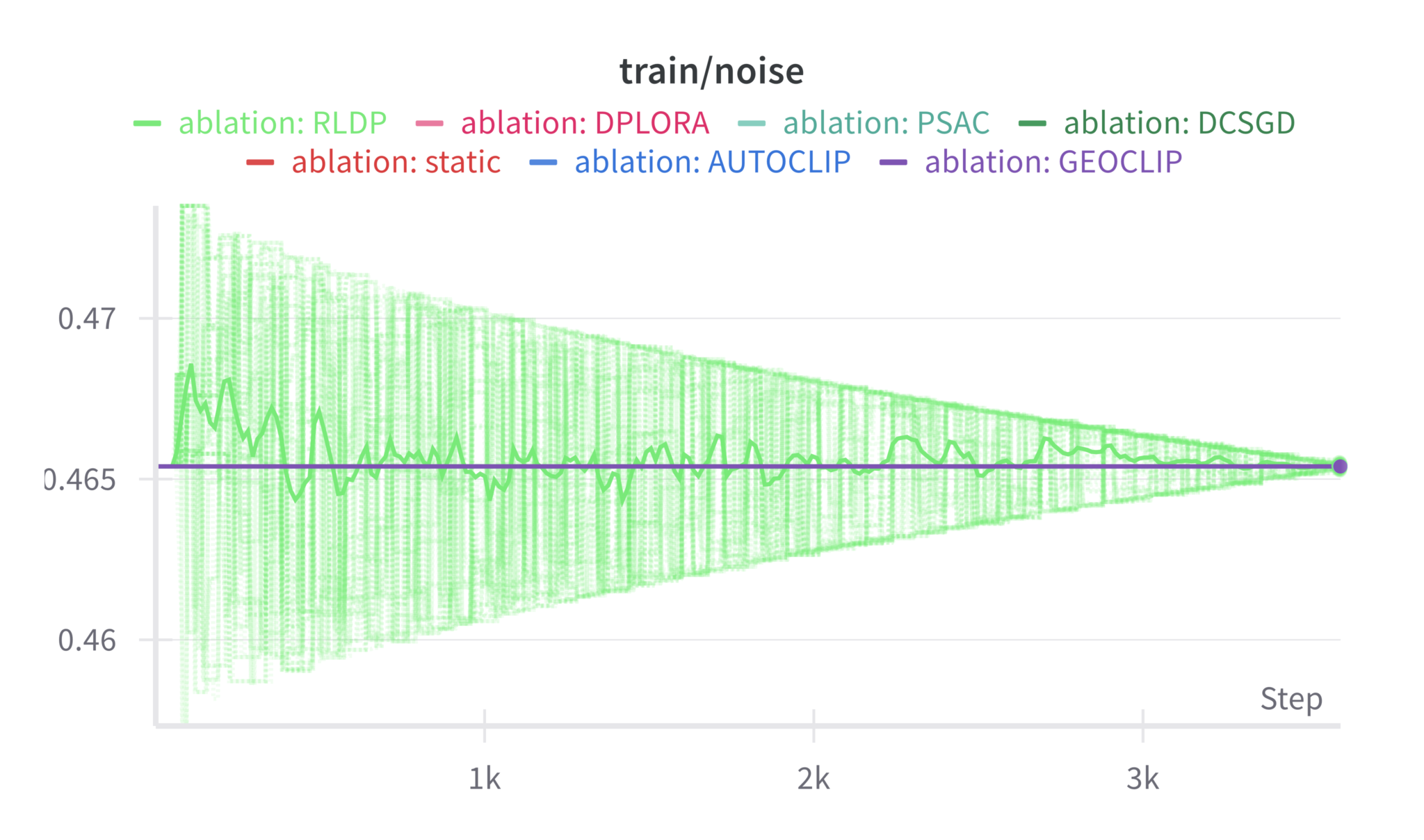}
    
    \label{fig:gpt2_noise_eps2}
  \end{subfigure}

  \begin{subfigure}[t]{0.45\textwidth}
    \centering
    \includegraphics[width=\textwidth]{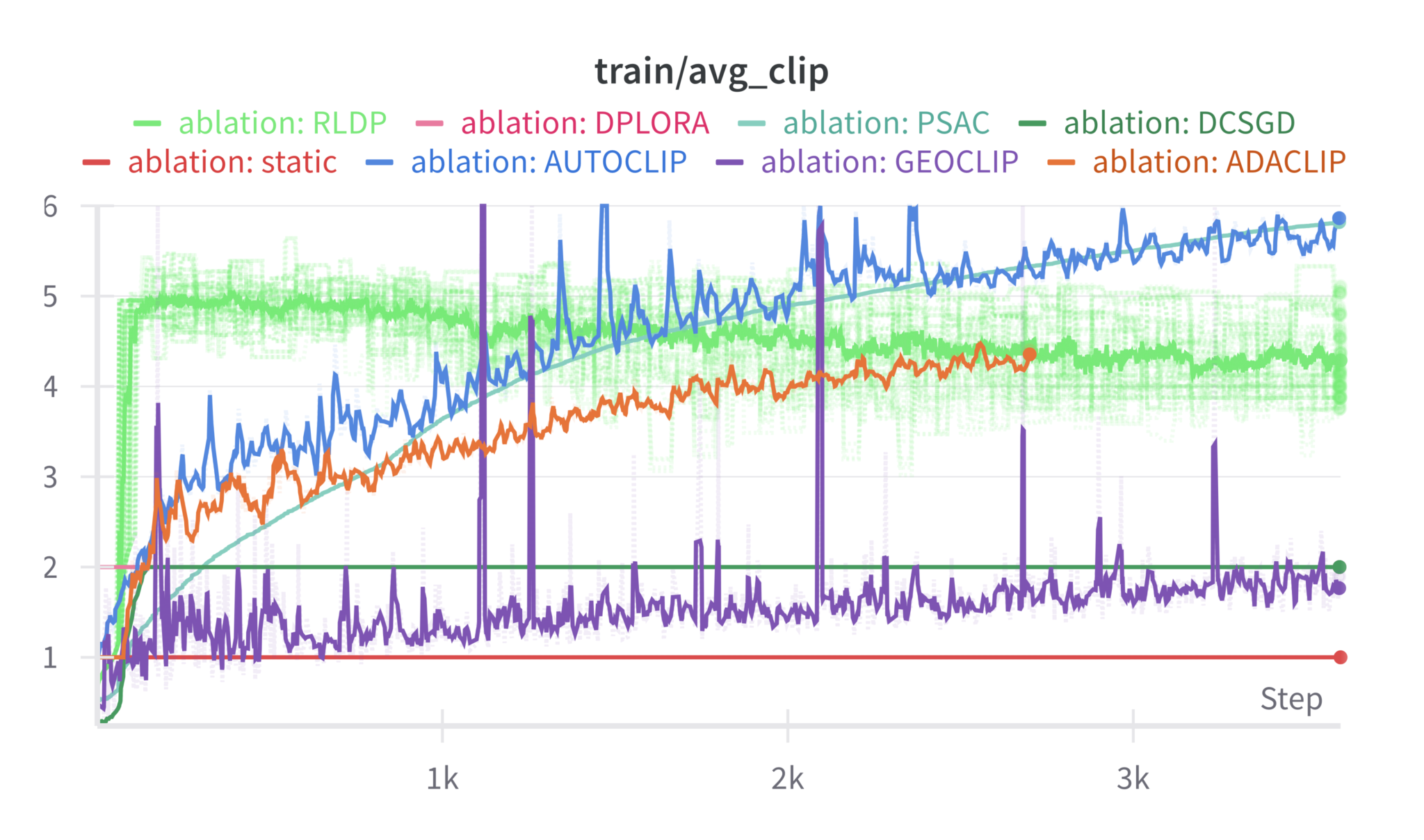}
    
    \label{fig:gpt2_clip_eps4}
  \end{subfigure}\hfill
  \begin{subfigure}[t]{0.45\textwidth}
    \centering
    \includegraphics[width=\textwidth]{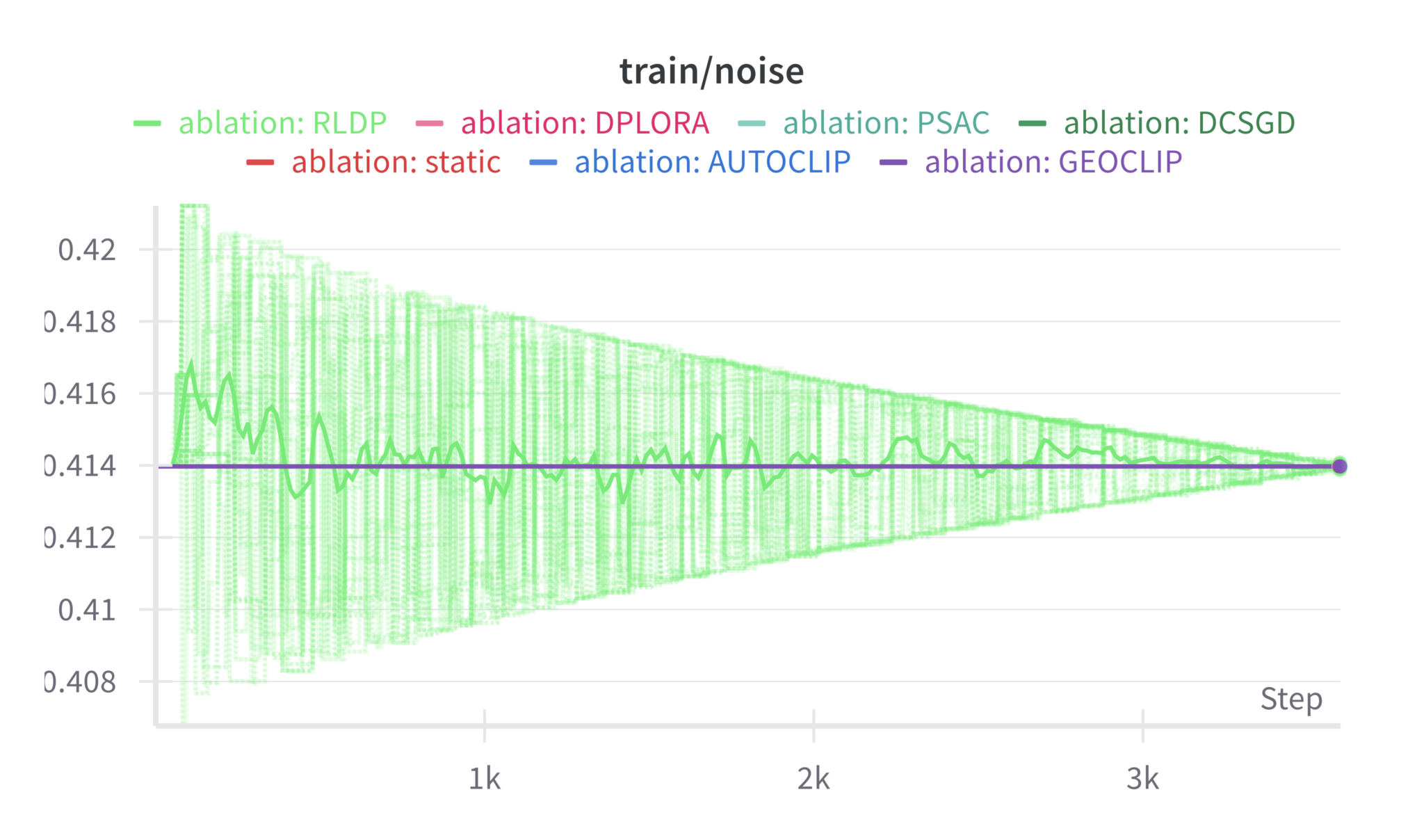}
    
    \label{fig:gpt2_noise_eps4}
  \end{subfigure}

  \begin{subfigure}[t]{0.45\textwidth}
    \centering
    \includegraphics[width=\textwidth]{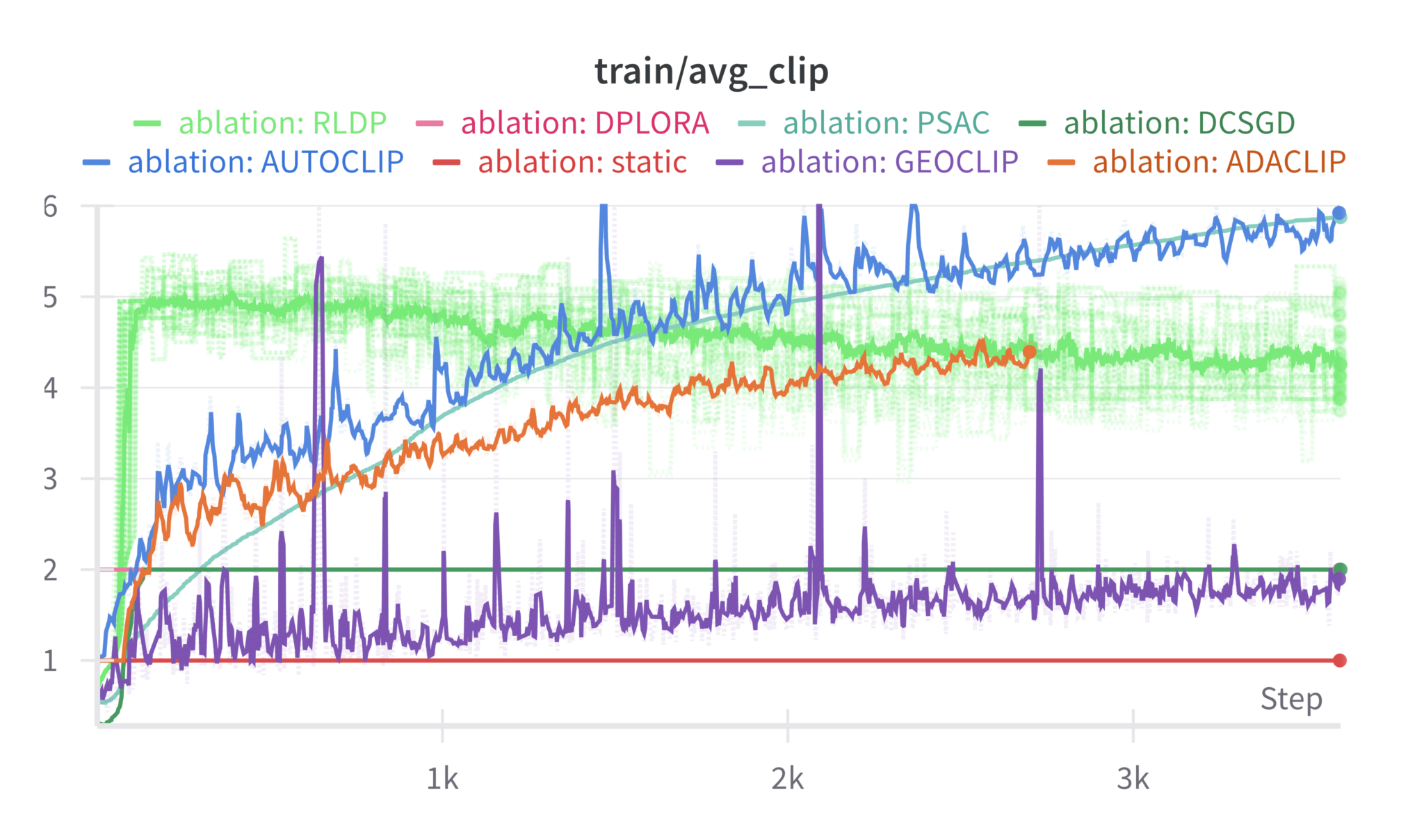}
    
    \label{fig:gpt2_clip_eps5}
  \end{subfigure}\hfill
  \begin{subfigure}[t]{0.45\textwidth}
    \centering
    \includegraphics[width=\textwidth]{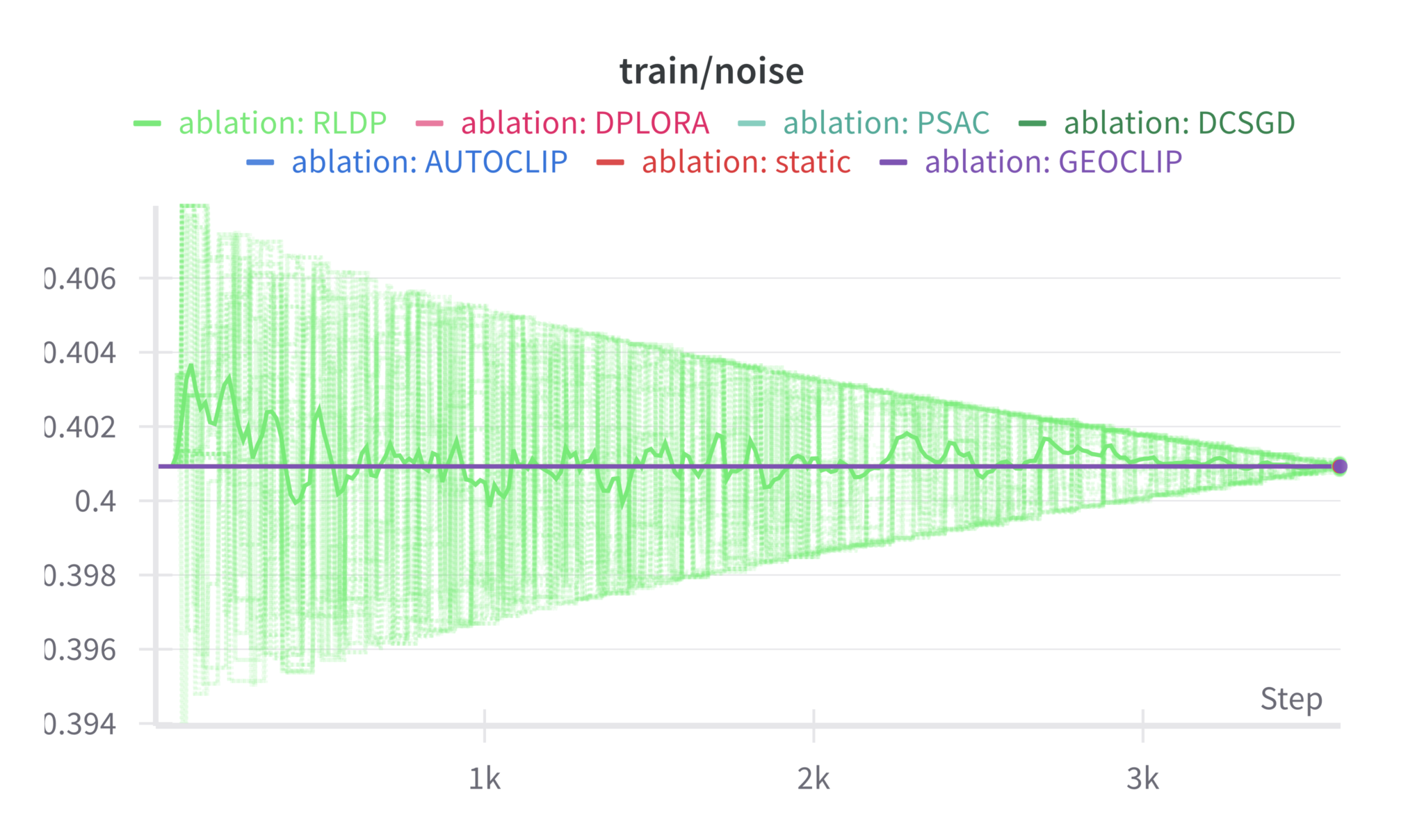}
    
    \label{fig:gpt2_noise_eps5}
  \end{subfigure}

  \begin{subfigure}[t]{0.45\textwidth}
    \centering
    \includegraphics[width=\textwidth]{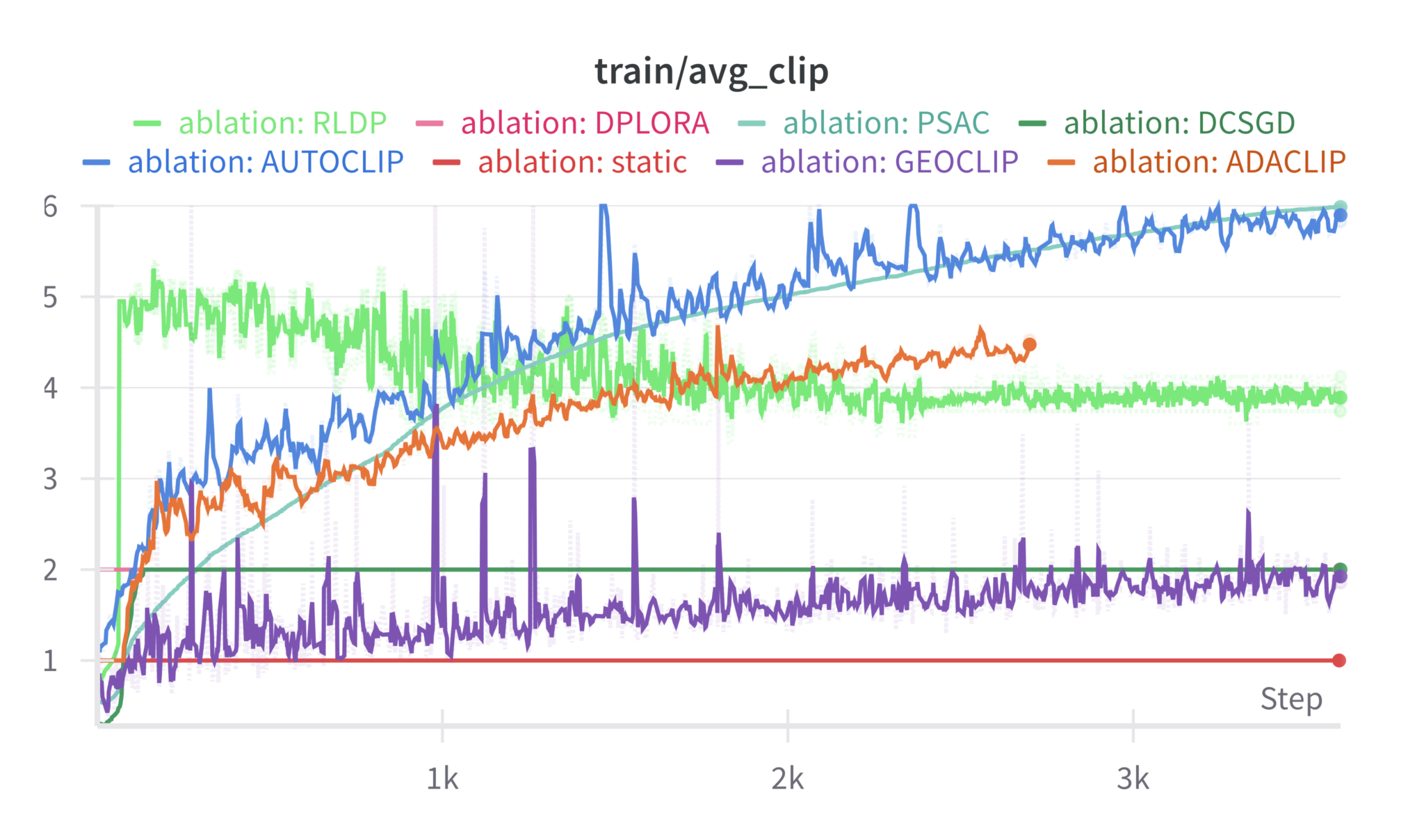}
    
    \label{fig:gpt2_clip_eps8}
  \end{subfigure}\hfill
  \begin{subfigure}[t]{0.45\textwidth}
    \centering
    \includegraphics[width=\textwidth]{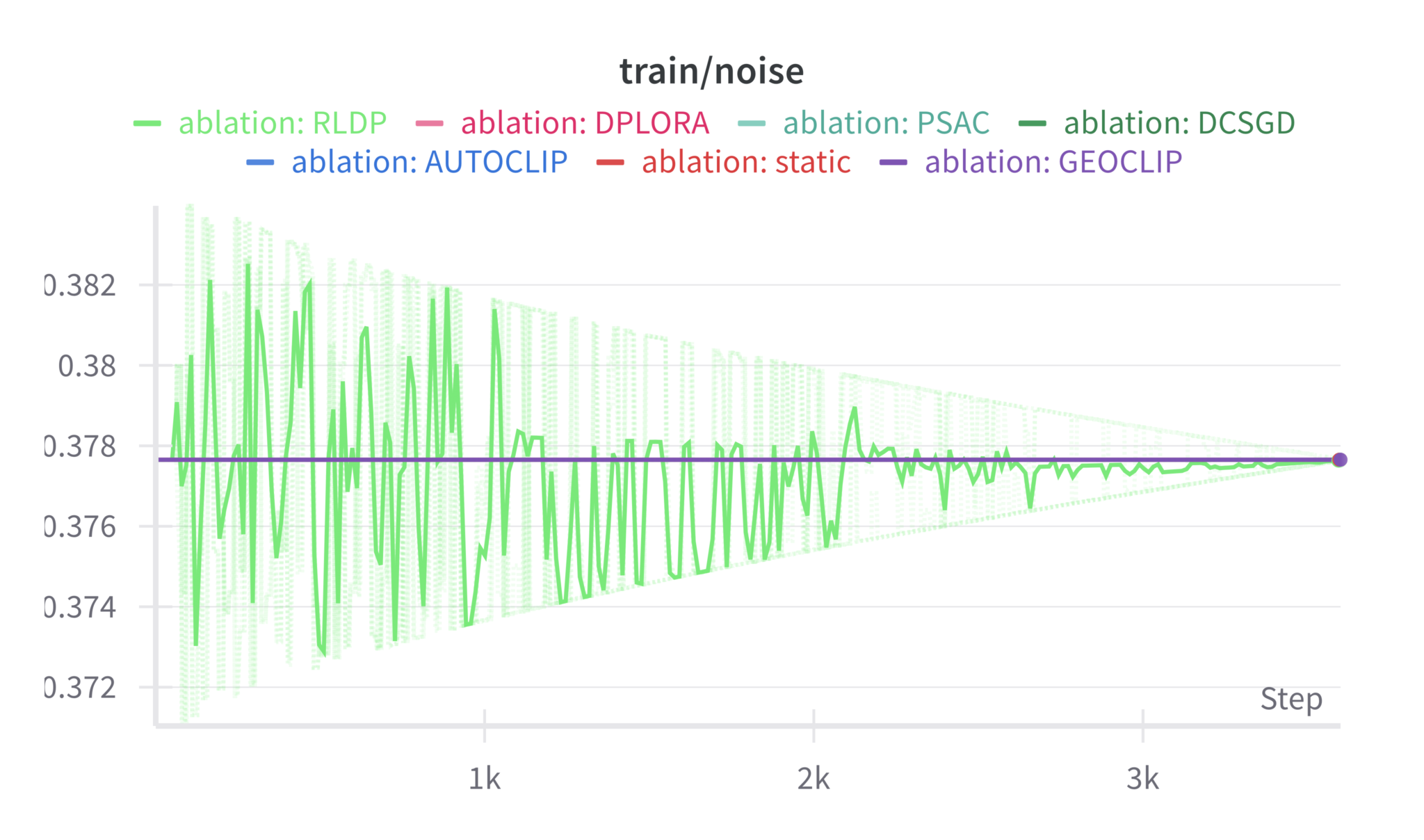}
    
    \label{fig:gpt2_noise_eps8}
  \end{subfigure}

  \caption{Training clip and noise history for GPT2 under different DP budgets \(\varepsilon\). Rows (top to bottom) correspond to \(\varepsilon=0.5,\,2,\,4,\,5,\,8\). Left: training average CLIP; right: DP noise over steps.}
  \label{fig:gpt2_train_clip_noise}
\end{figure}

\begin{figure}[H]
  \centering
  \begin{subfigure}[t]{0.45\textwidth}
    \centering
    \includegraphics[width=\textwidth]{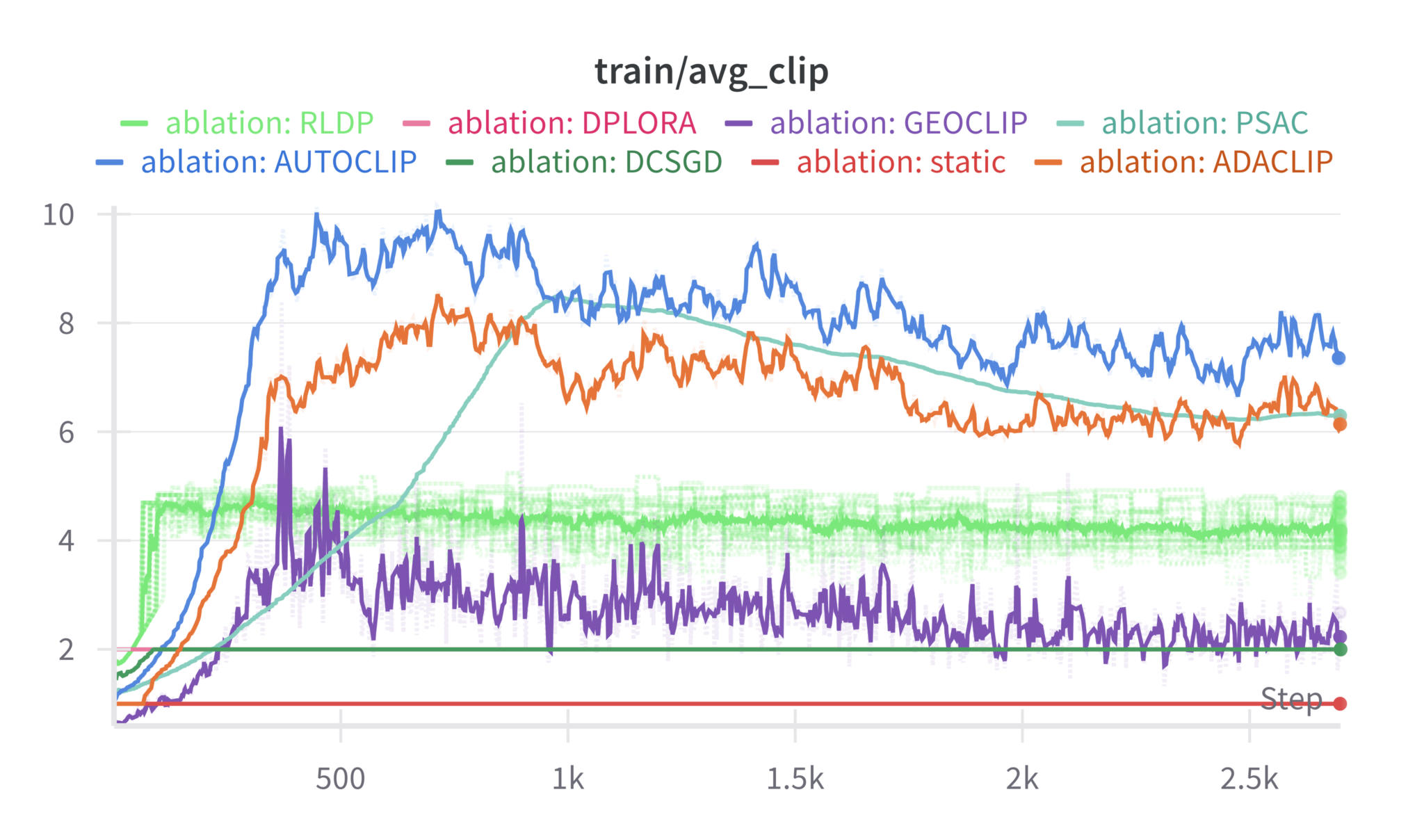}
    
    \label{fig:llama1b_clip_eps05}
  \end{subfigure}\hfill
  \begin{subfigure}[t]{0.45\textwidth}
    \centering
    \includegraphics[width=\textwidth]{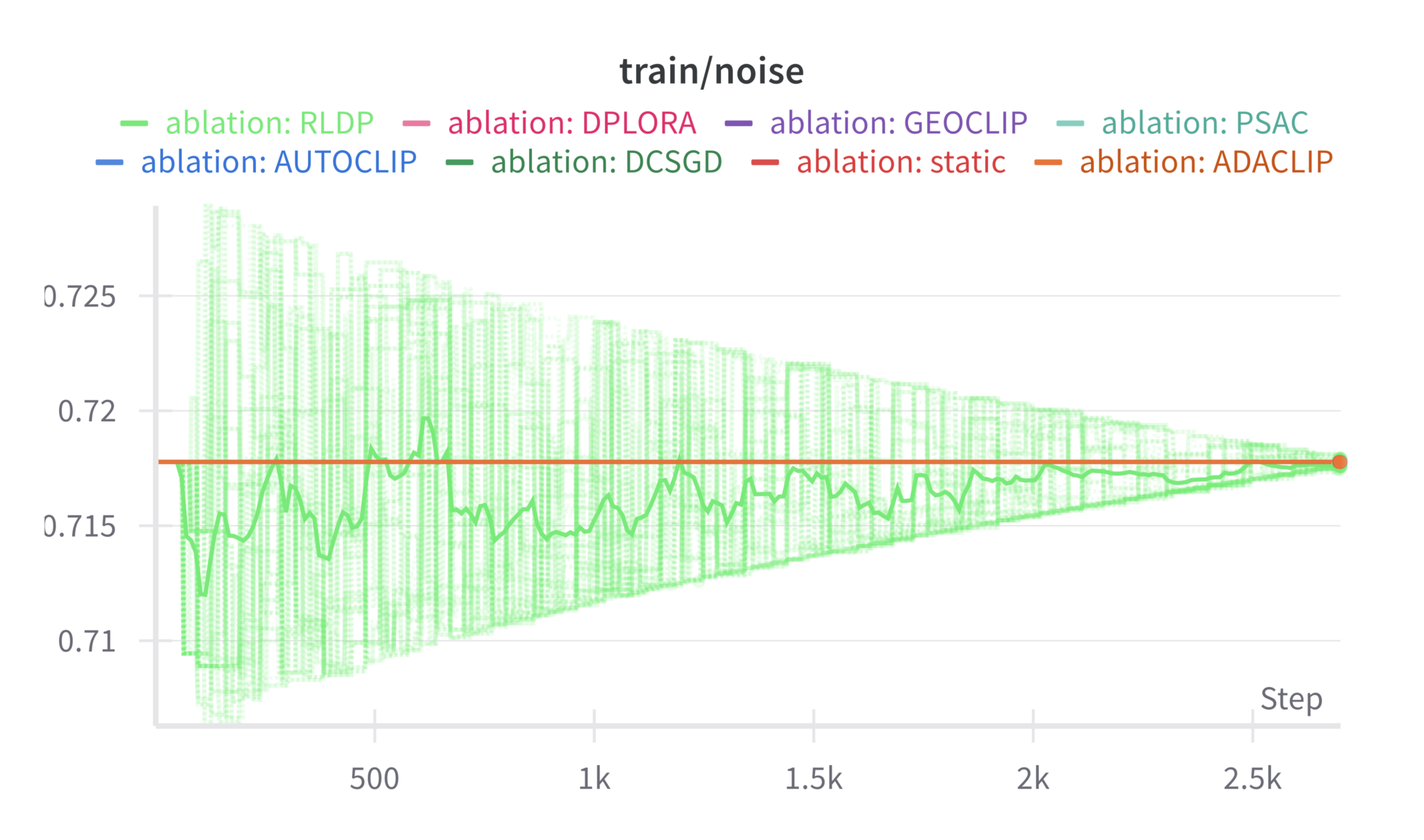}
    
    \label{fig:llama1b_noise_eps05}
  \end{subfigure}

  \begin{subfigure}[t]{0.45\textwidth}
    \centering
    \includegraphics[width=\textwidth]{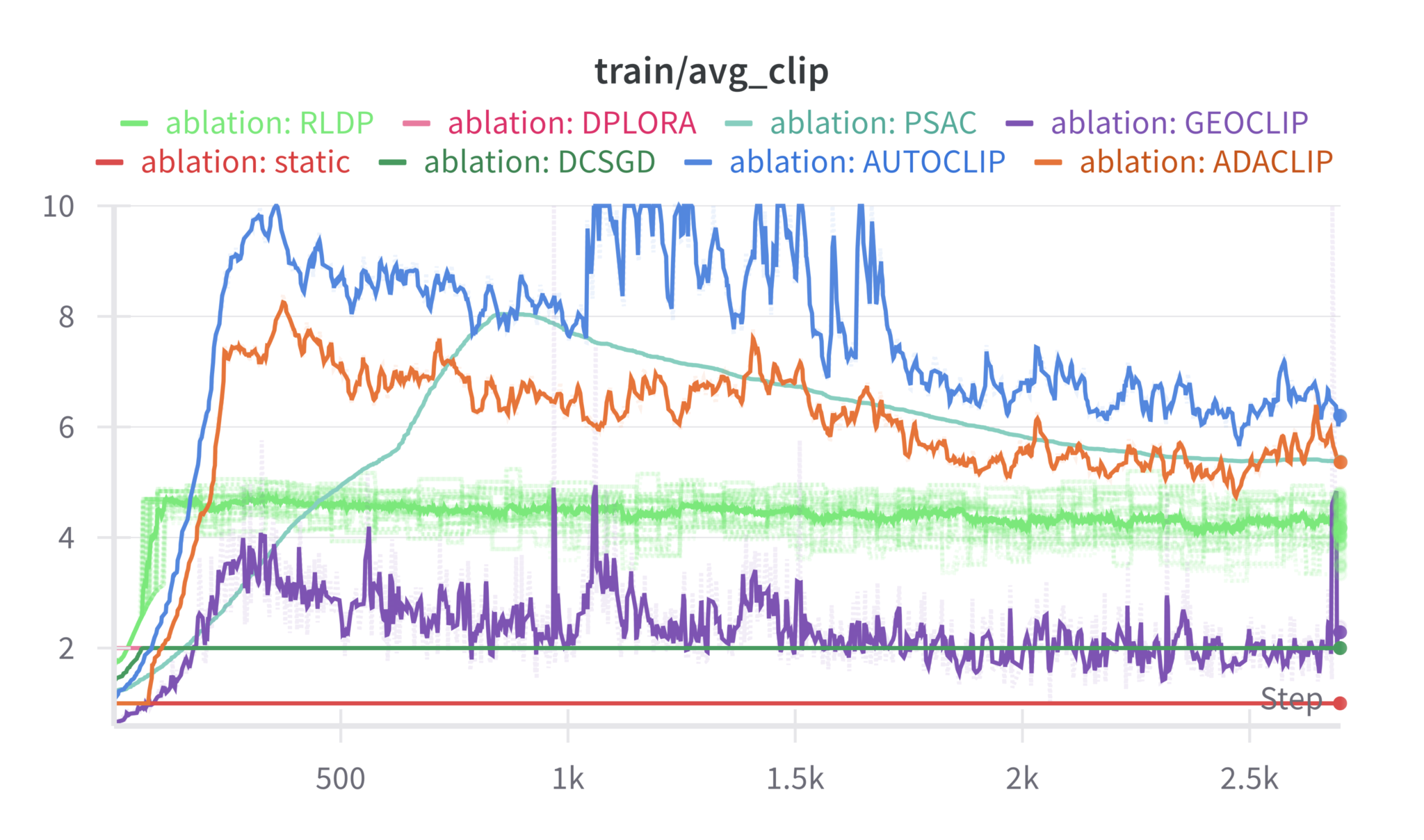}
    
    \label{fig:llama1b_clip_eps2}
  \end{subfigure}\hfill
  \begin{subfigure}[t]{0.45\textwidth}
    \centering
    \includegraphics[width=\textwidth]{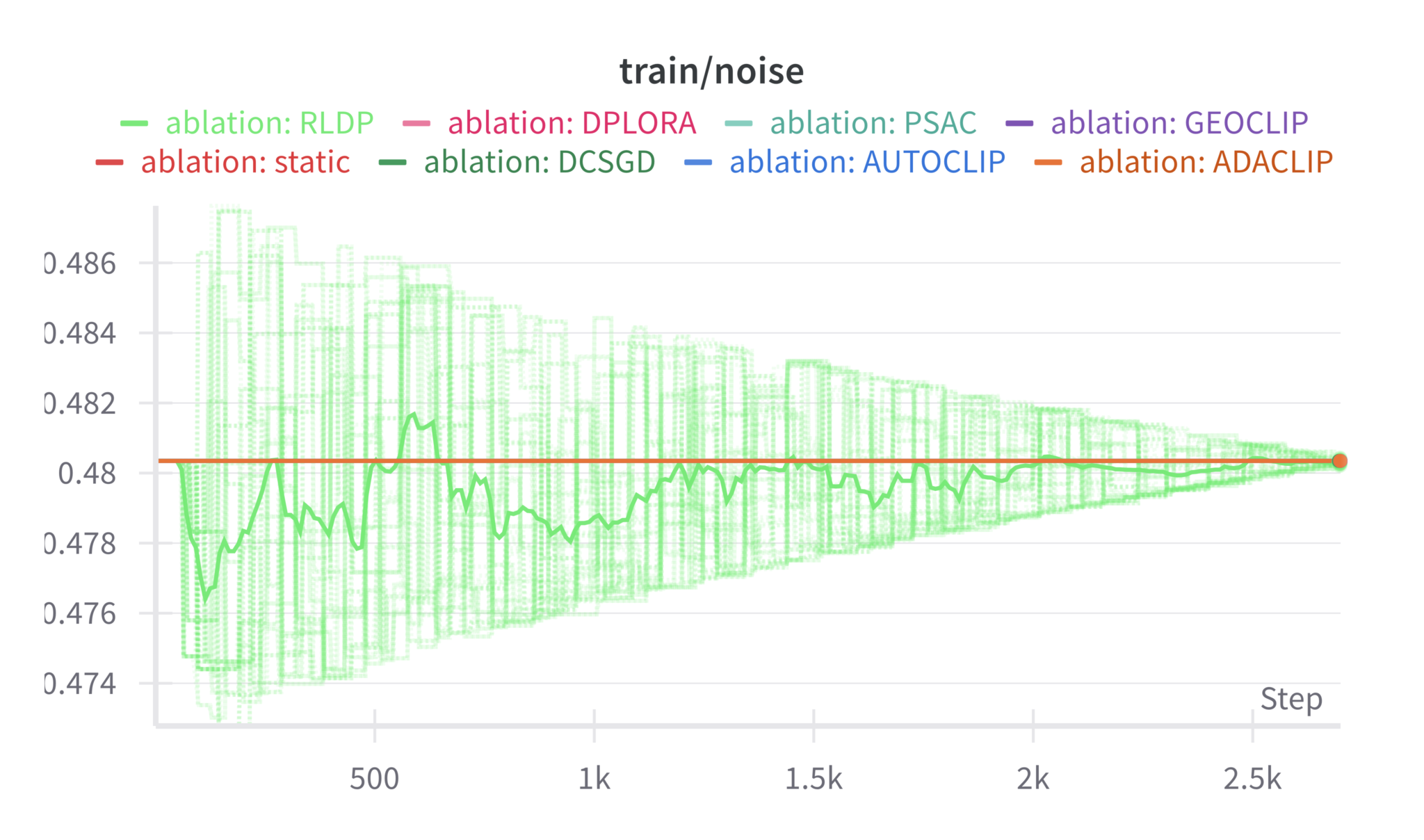}
    
    \label{fig:llama1b_noise_eps2}
  \end{subfigure}

  \begin{subfigure}[t]{0.45\textwidth}
    \centering
    \includegraphics[width=\textwidth]{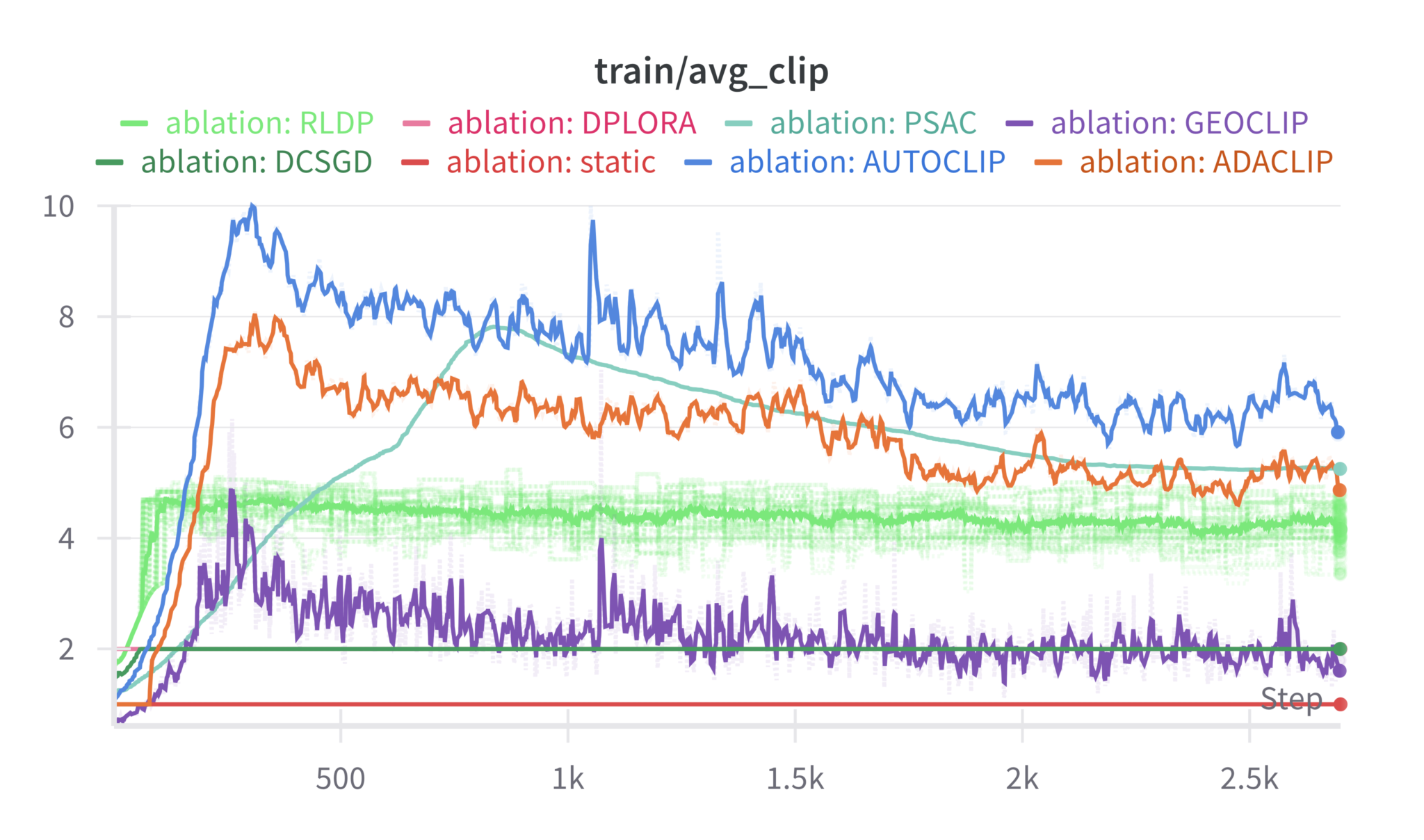}
    
    \label{fig:llama1b_clip_eps4}
  \end{subfigure}\hfill
  \begin{subfigure}[t]{0.45\textwidth}
    \centering
    \includegraphics[width=\textwidth]{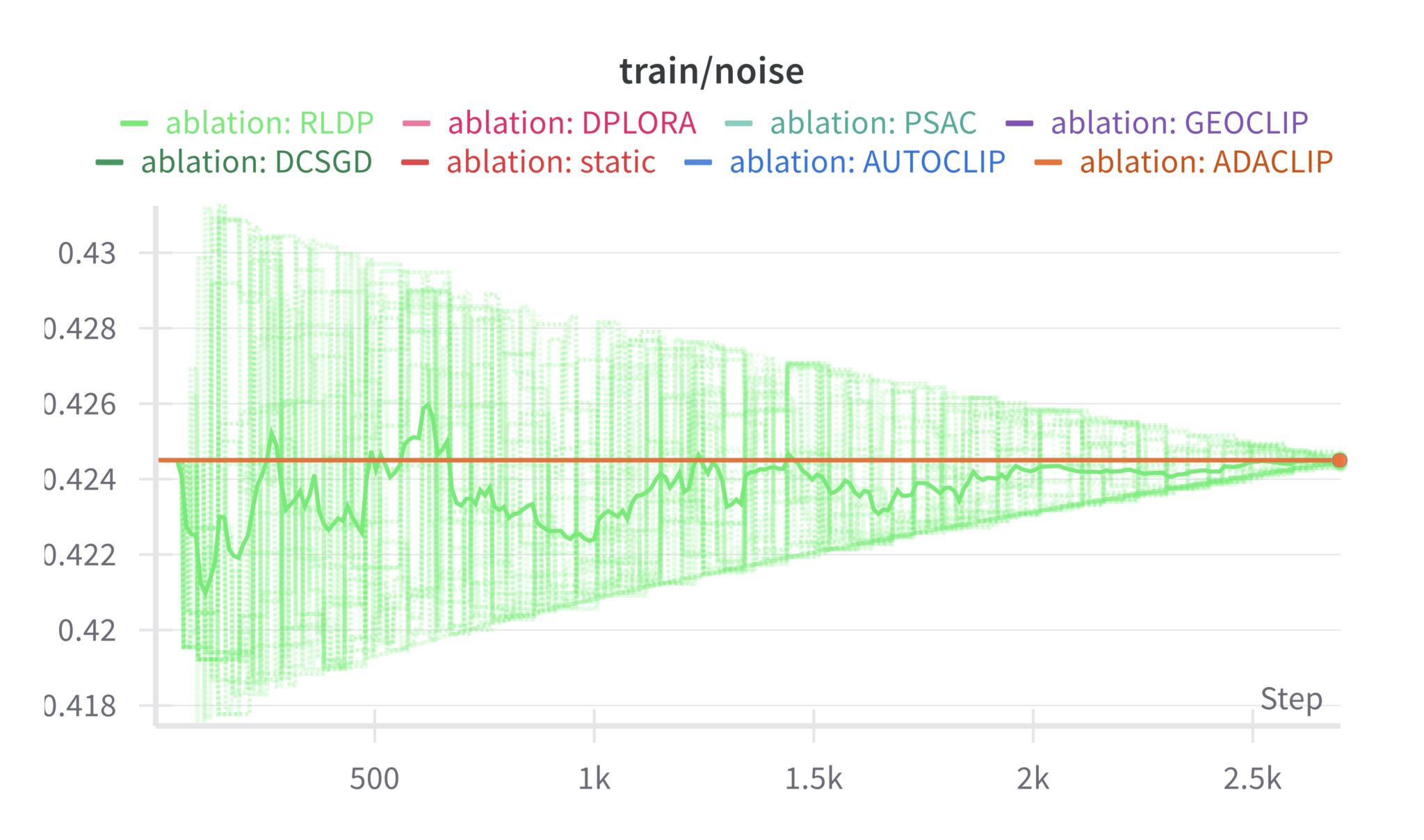}
    
    \label{fig:llama1b_noise_eps4}
  \end{subfigure}

  \begin{subfigure}[t]{0.45\textwidth}
    \centering
    \includegraphics[width=\textwidth]{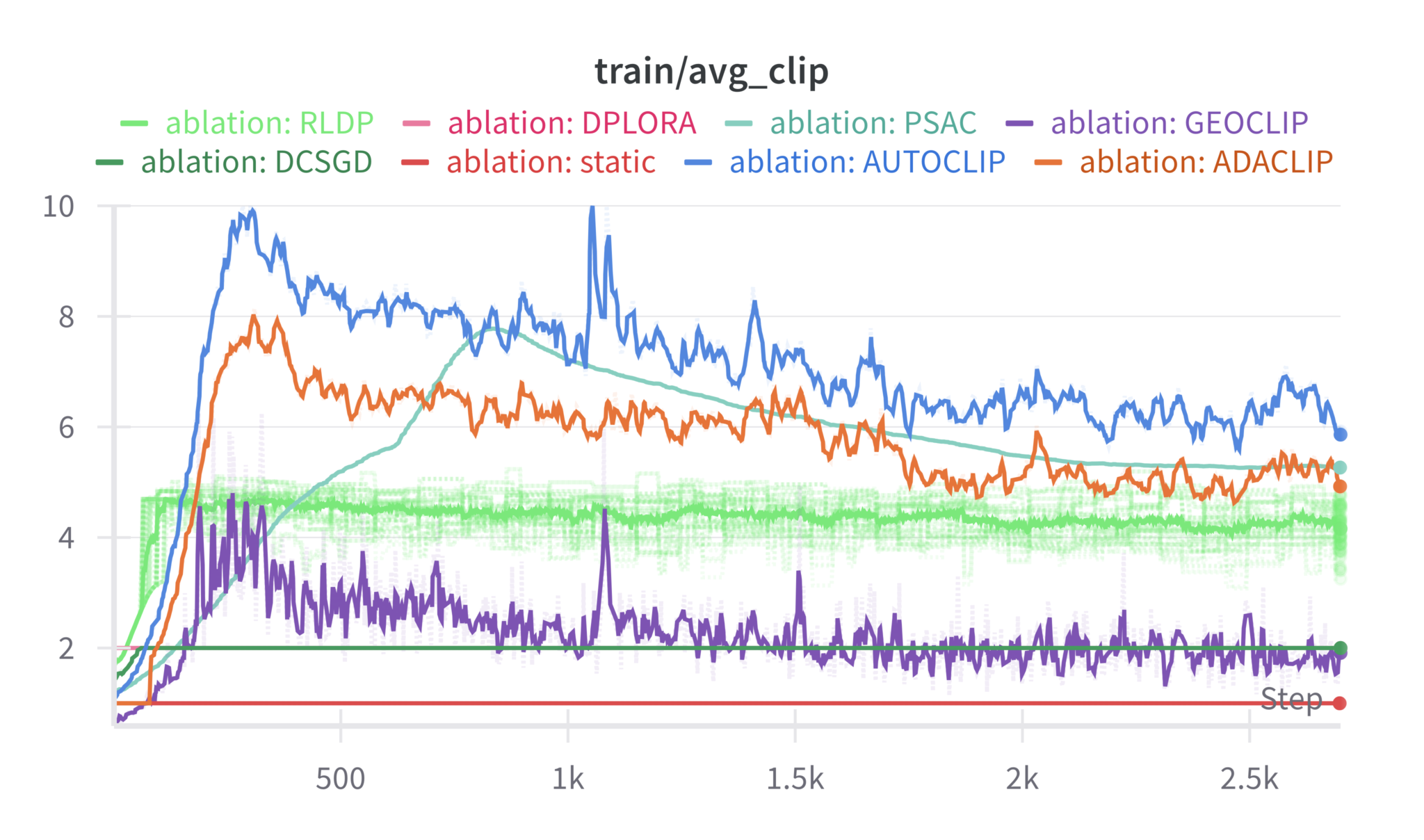}
    
    \label{fig:llama1b_clip_eps5}
  \end{subfigure}\hfill
  \begin{subfigure}[t]{0.45\textwidth}
    \centering
    \includegraphics[width=\textwidth]{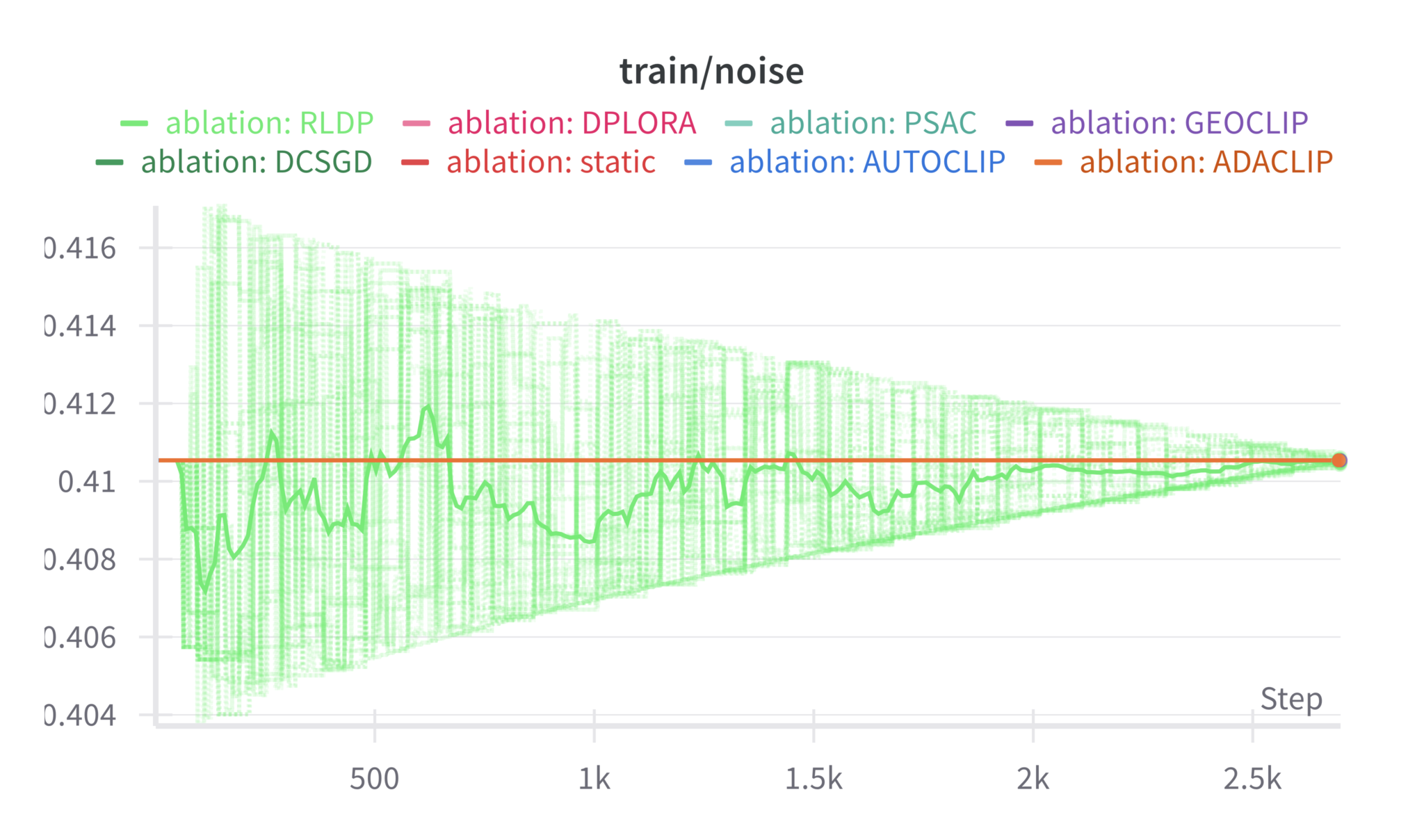}
    
    \label{fig:llama1b_noise_eps5}
  \end{subfigure}

  \begin{subfigure}[t]{0.45\textwidth}
    \centering
    \includegraphics[width=\textwidth]{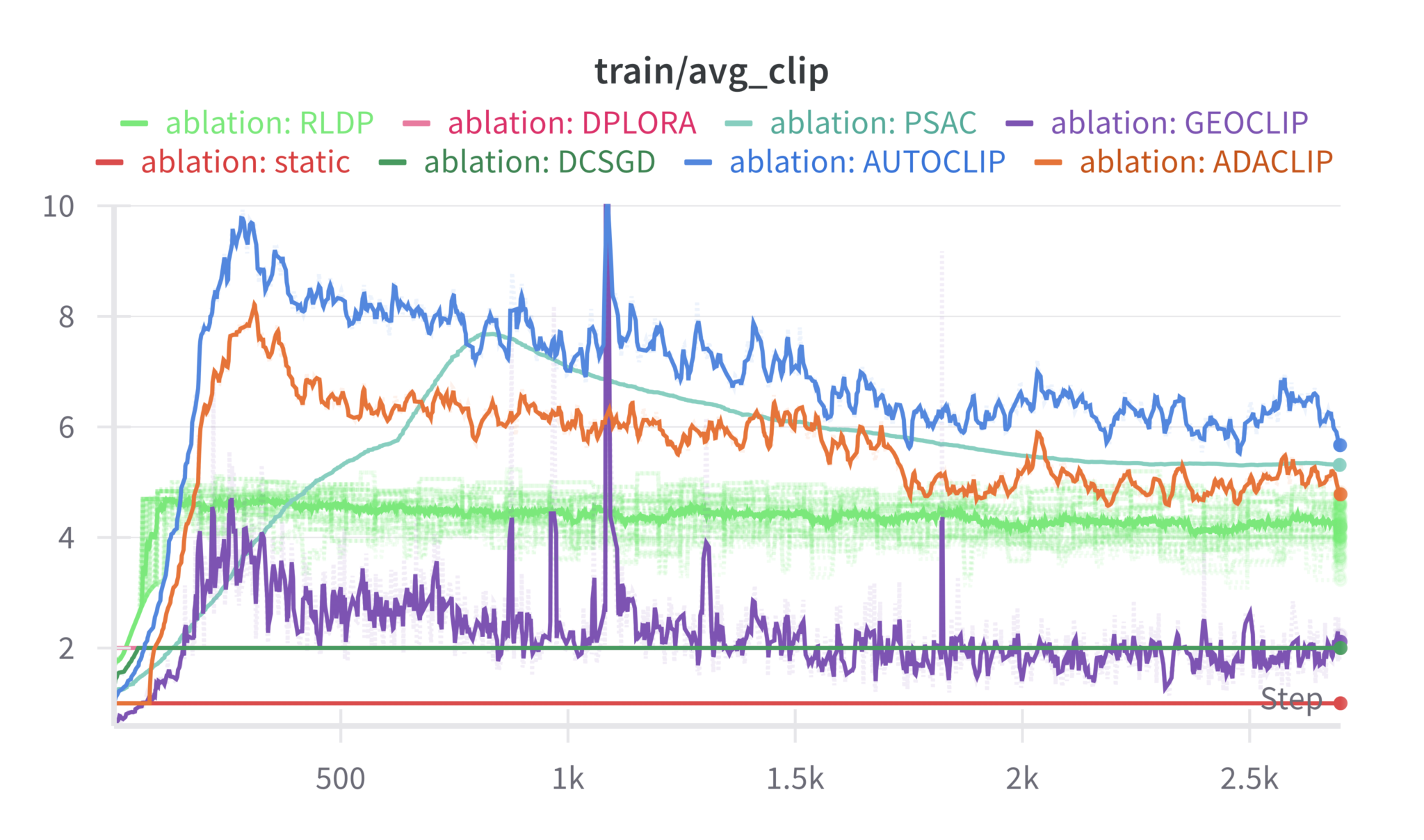}
    
    \label{fig:llama1b_clip_eps8}
  \end{subfigure}\hfill
  \begin{subfigure}[t]{0.45\textwidth}
    \centering
    \includegraphics[width=\textwidth]{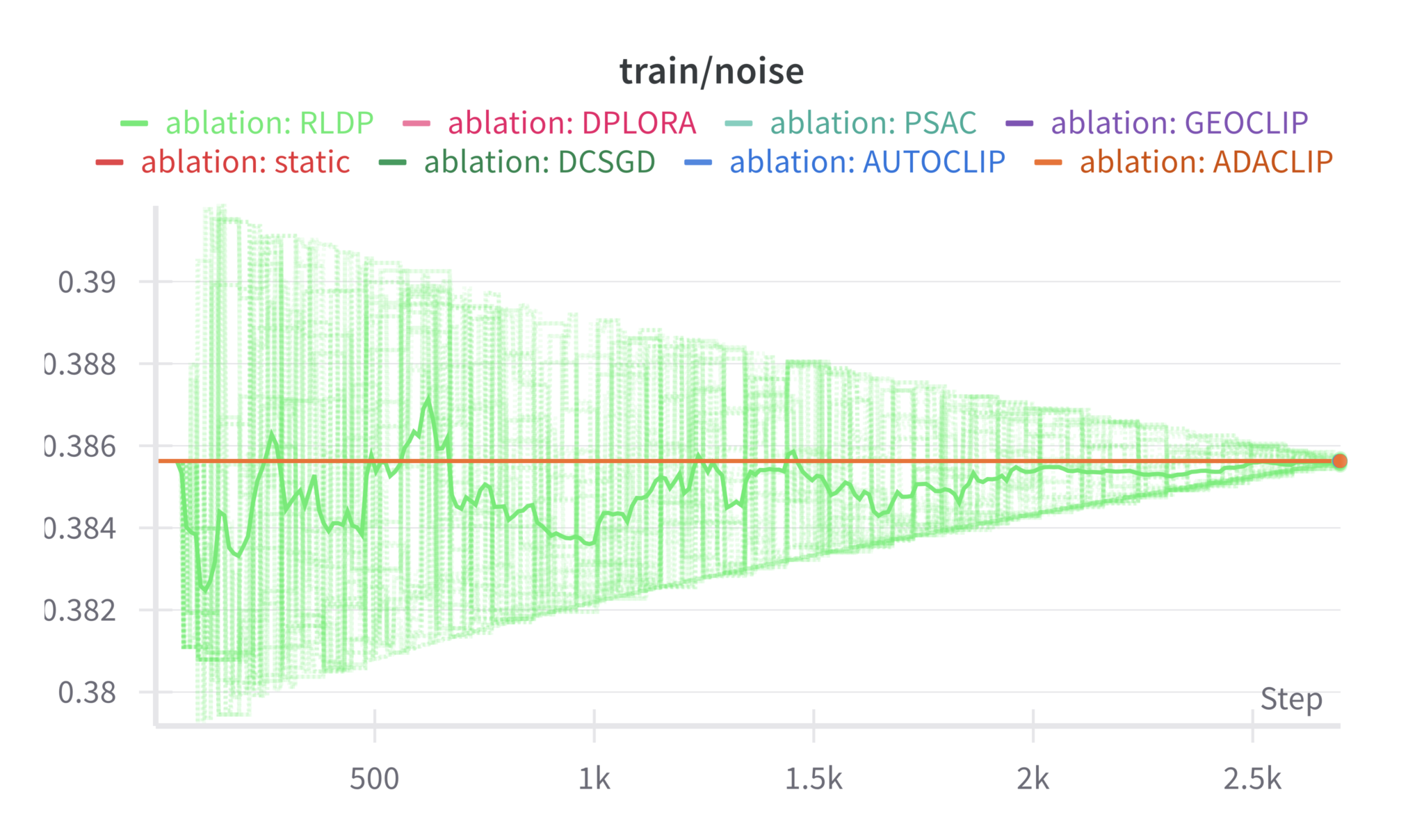}
    
    \label{fig:llama1b_noise_eps8}
  \end{subfigure}

  \caption{Training clip and noise history for Llama-1B under different DP budgets \(\varepsilon\). Rows (top to bottom) correspond to \(\varepsilon=0.5,\,2,\,4,\,5,\,8\). Left: training average CLIP; right: DP noise over steps.}
  \label{fig:llama1b_train_clip_noise}
\end{figure}

\begin{figure}[H]
  \centering
  \begin{subfigure}[t]{0.45\textwidth}
    \centering
    \includegraphics[width=\textwidth]{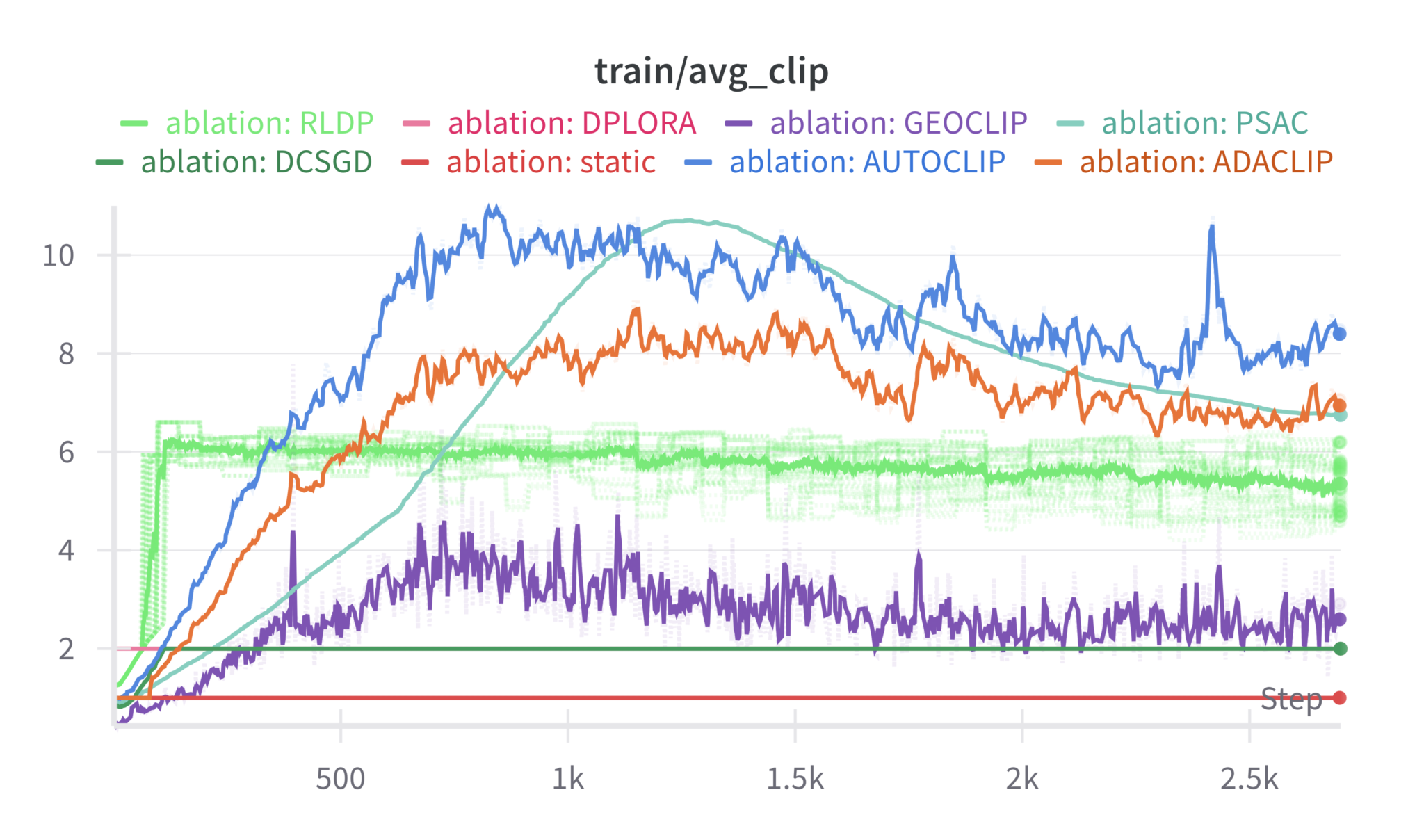}
    
    \label{fig:llama3b_clip_eps05}
  \end{subfigure}\hfill
  \begin{subfigure}[t]{0.45\textwidth}
    \centering
    \includegraphics[width=\textwidth]{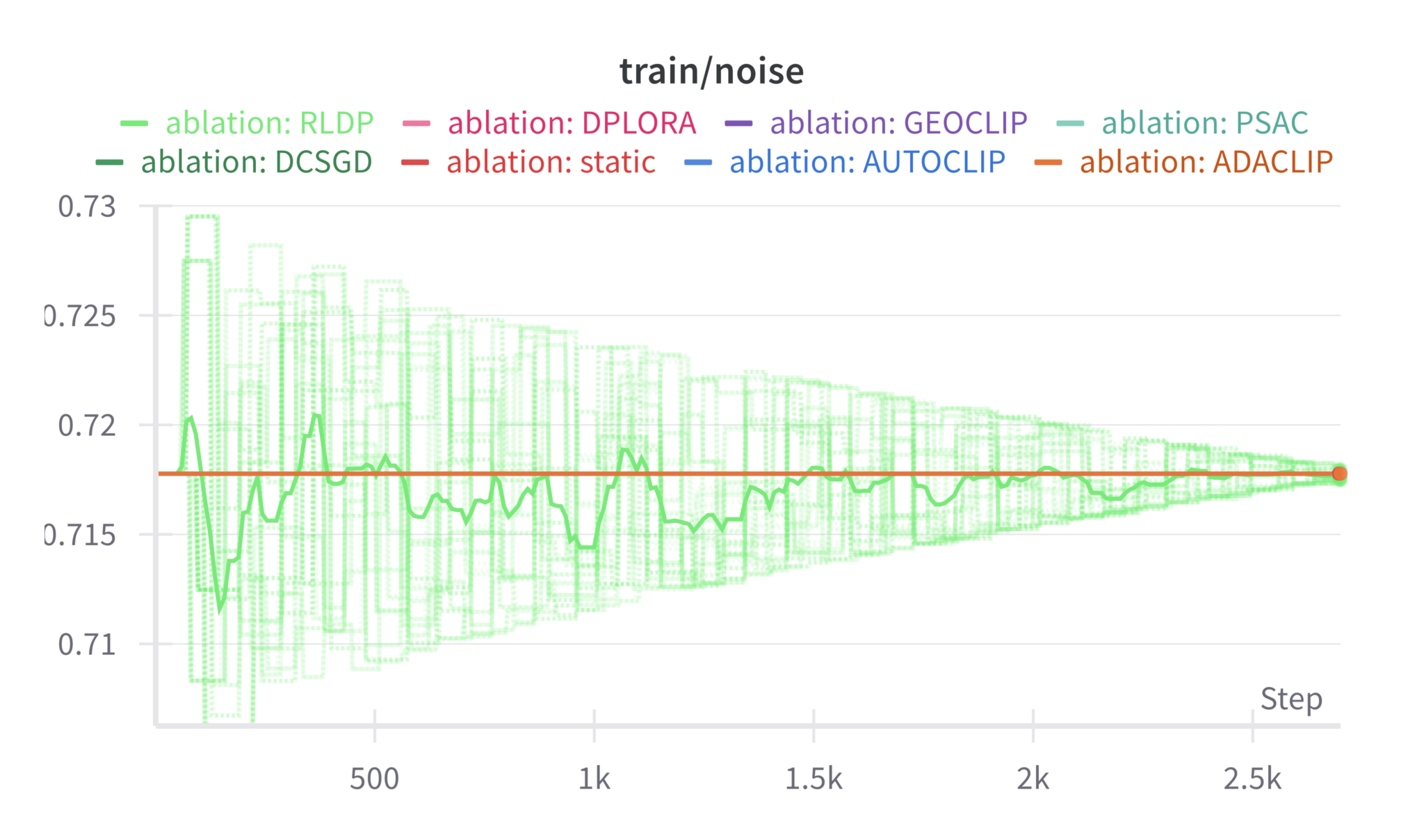}
    
    \label{fig:llama3b_noise_eps05}
  \end{subfigure}

  \begin{subfigure}[t]{0.45\textwidth}
    \centering
    \includegraphics[width=\textwidth]{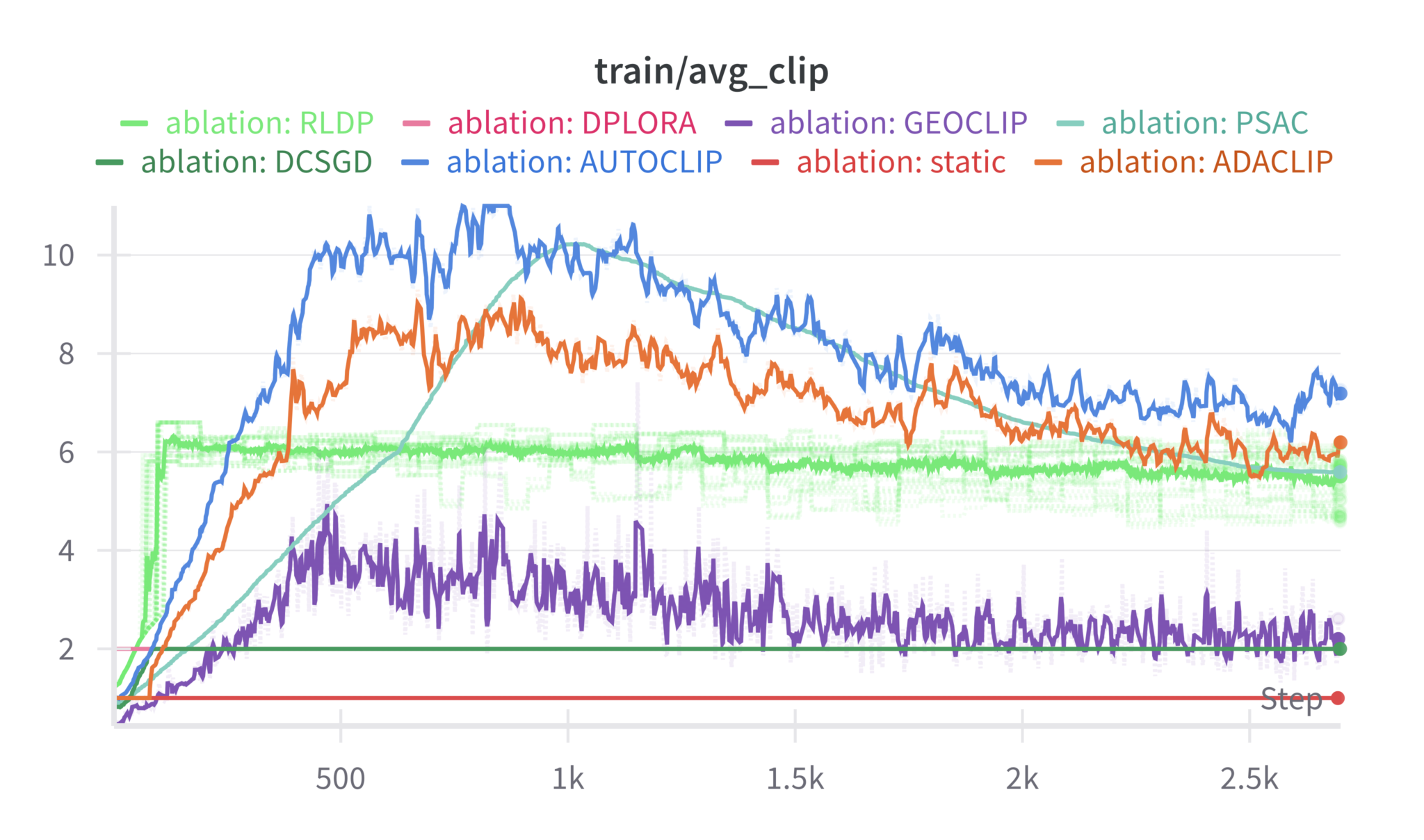}
    
    \label{fig:llama3b_clip_eps2}
  \end{subfigure}\hfill
  \begin{subfigure}[t]{0.45\textwidth}
    \centering
    \includegraphics[width=\textwidth]{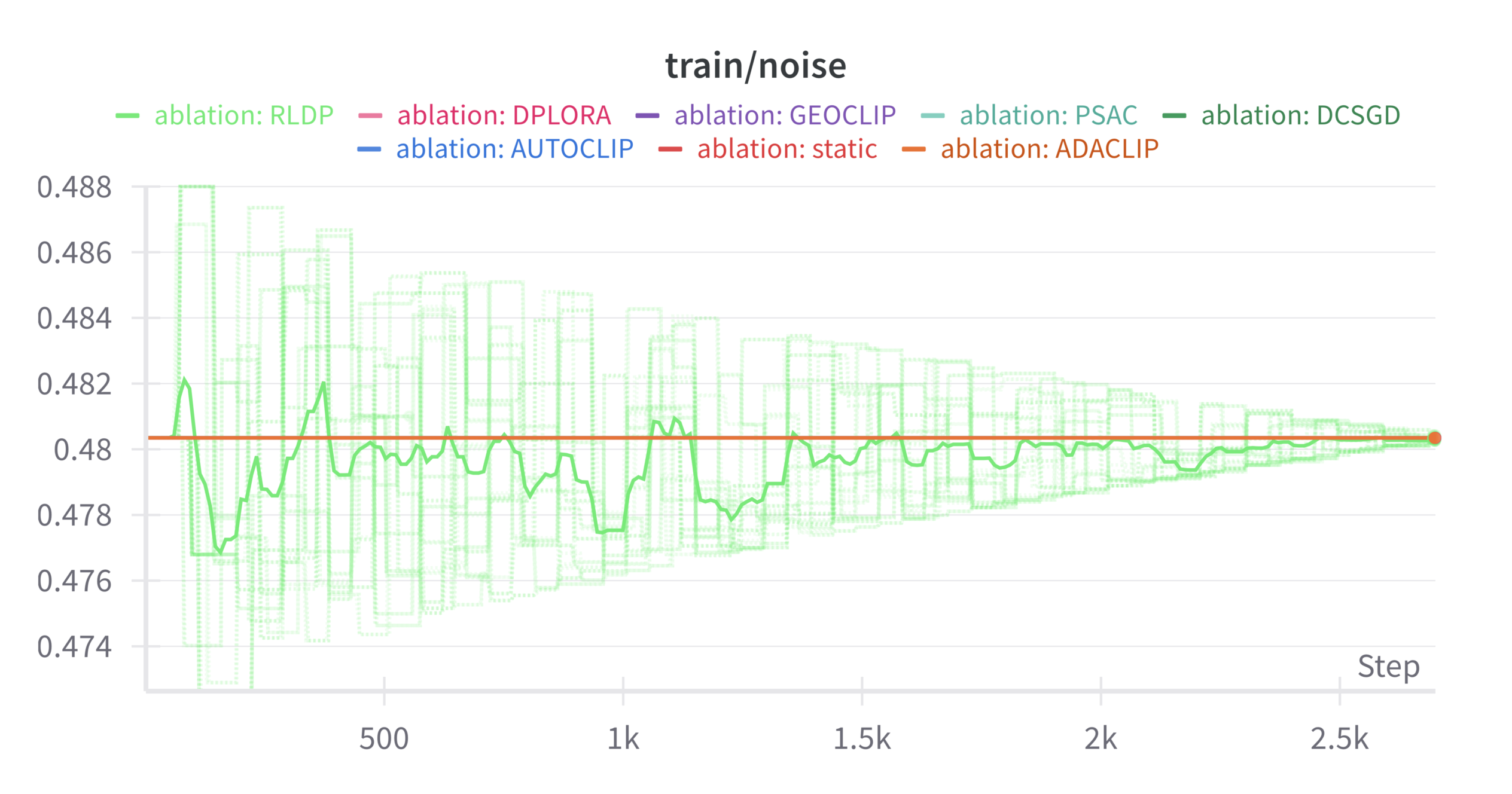}
    
    \label{fig:llama3b_noise_eps2}
  \end{subfigure}

  \begin{subfigure}[t]{0.45\textwidth}
    \centering
    \includegraphics[width=\textwidth]{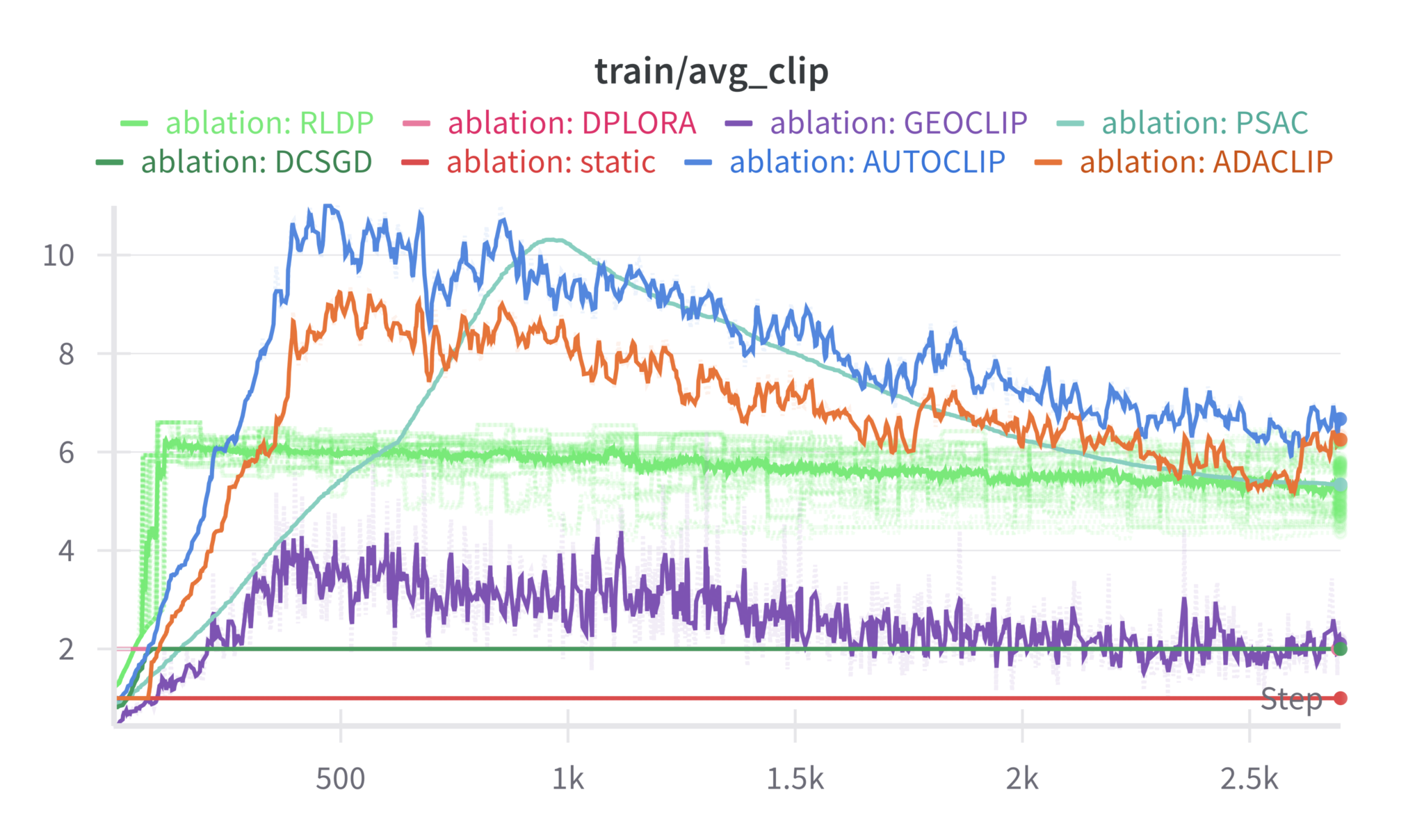}
    
    \label{fig:llama3b_clip_eps4}
  \end{subfigure}\hfill
  \begin{subfigure}[t]{0.45\textwidth}
    \centering
    \includegraphics[width=\textwidth]{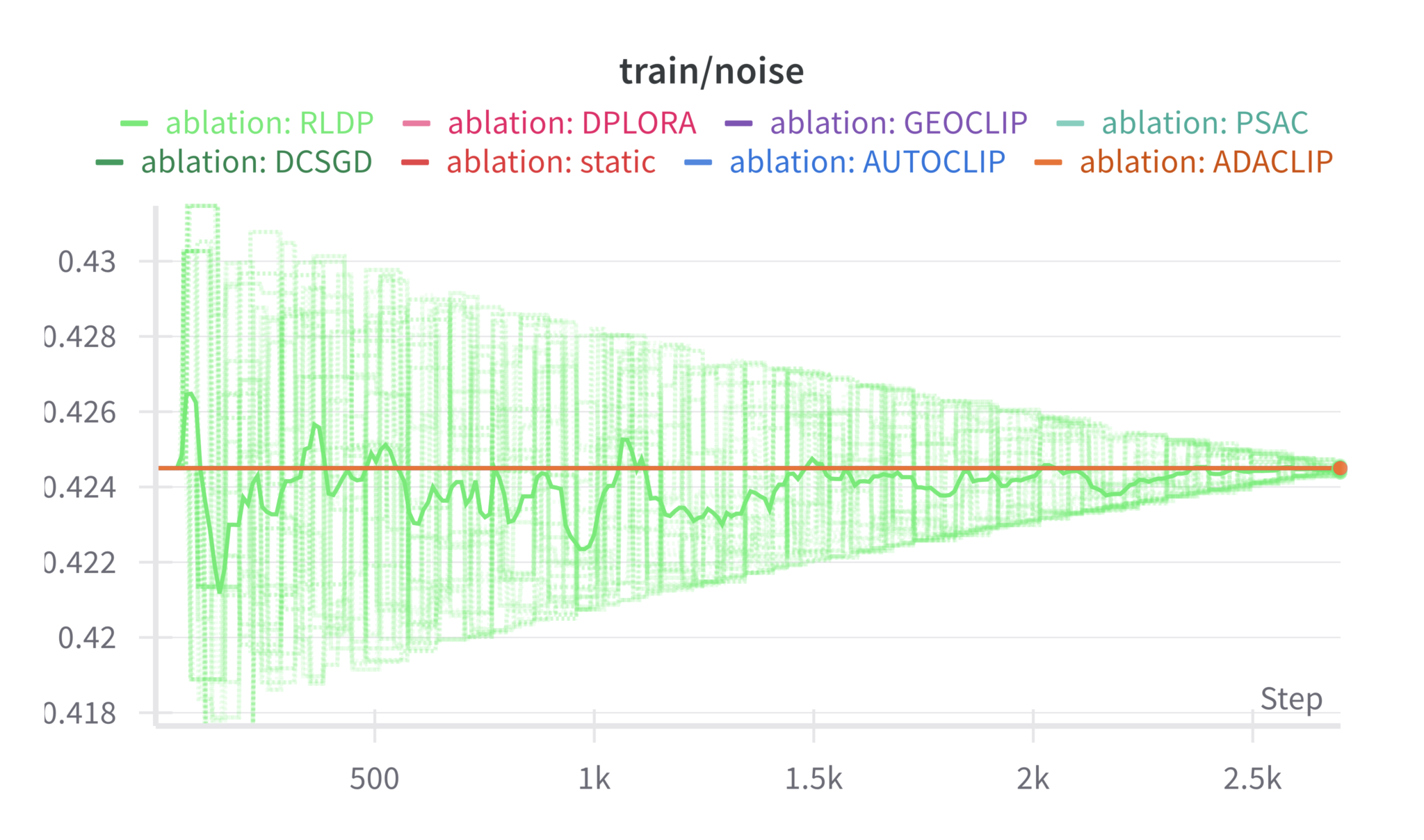}
    
    \label{fig:llama3b_noise_eps4}
  \end{subfigure}

  \begin{subfigure}[t]{0.45\textwidth}
    \centering
    \includegraphics[width=\textwidth]{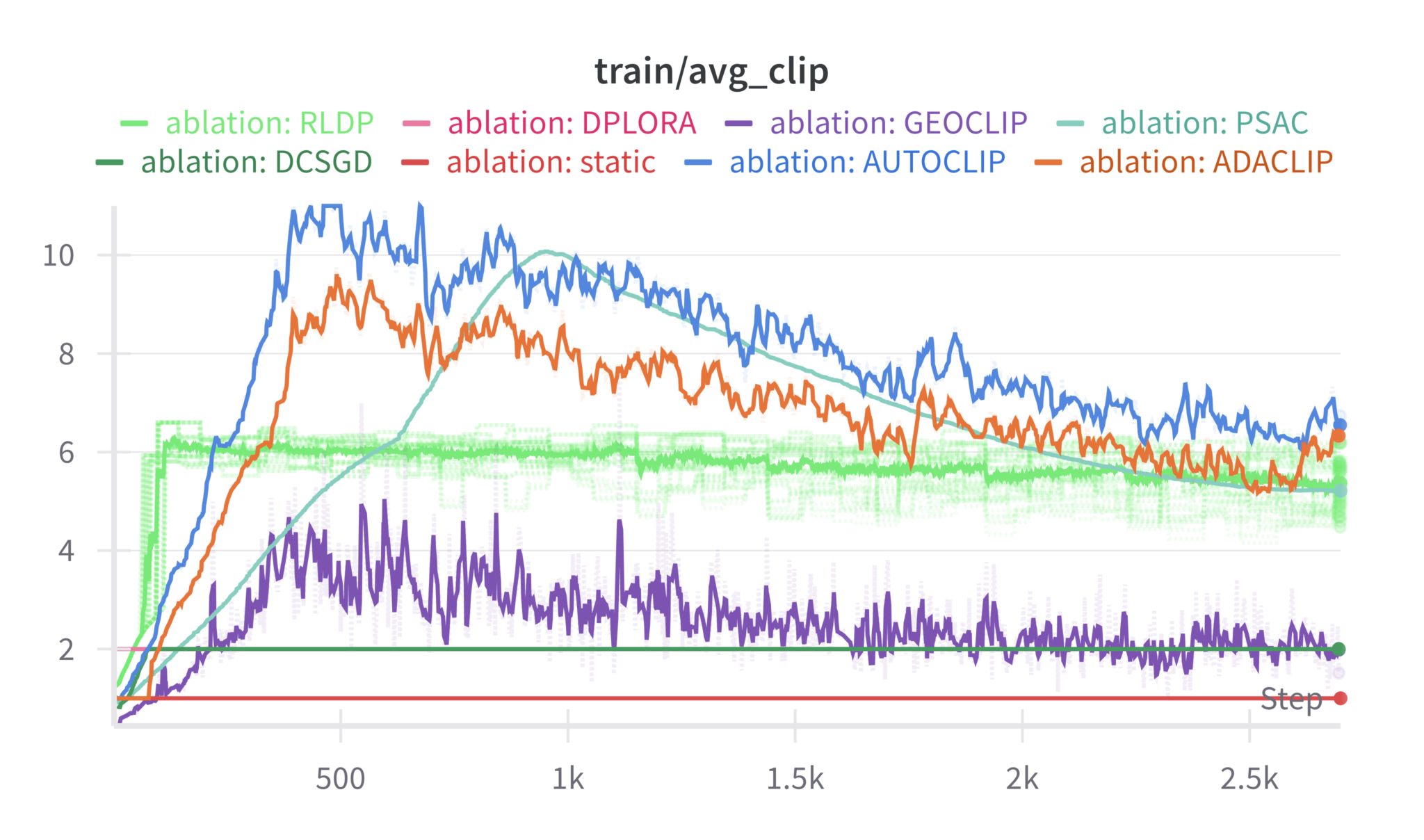}
    
    \label{fig:llama3b_clip_eps5}
  \end{subfigure}\hfill
  \begin{subfigure}[t]{0.45\textwidth}
    \centering
    \includegraphics[width=\textwidth]{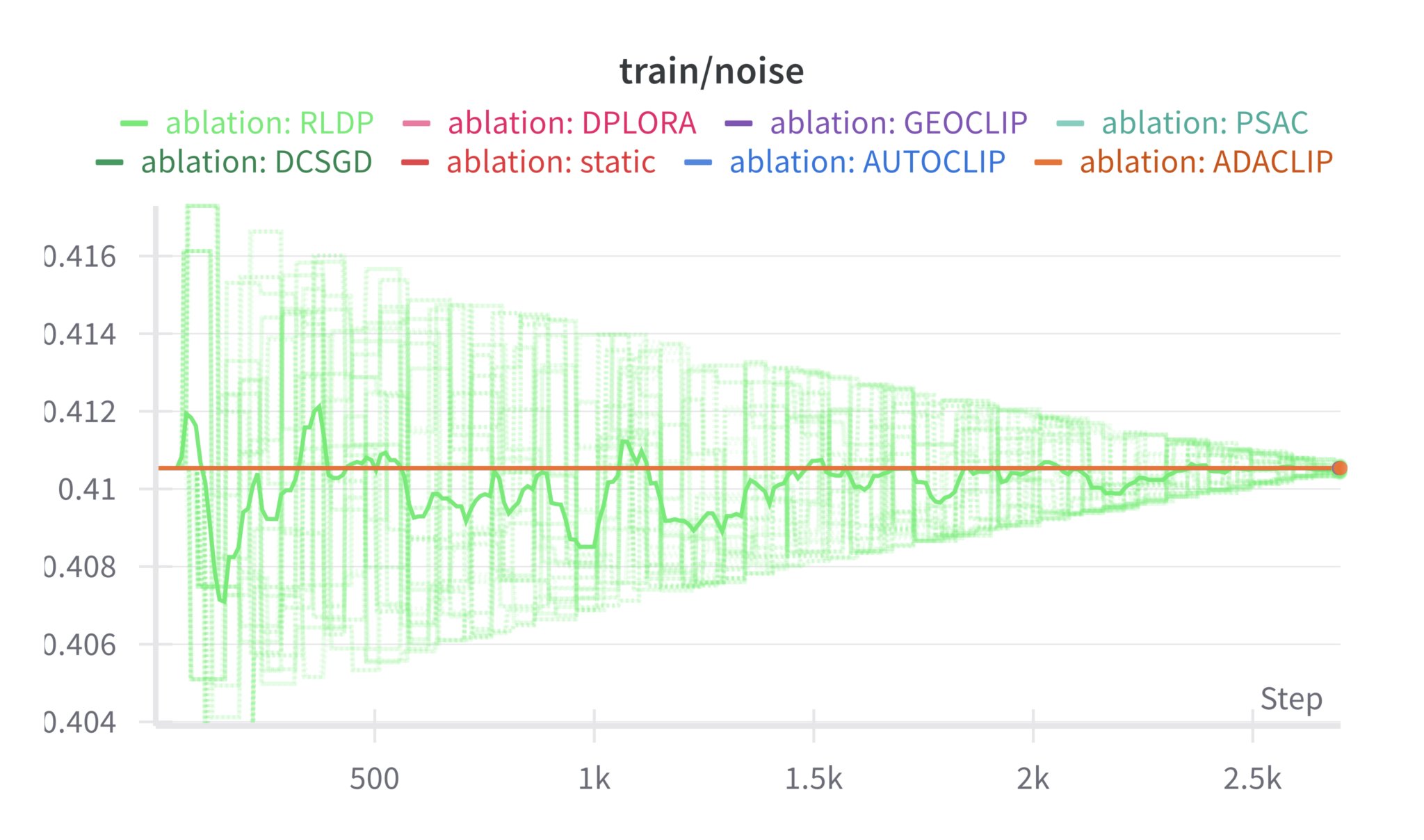}
    
    \label{fig:llama3b_noise_eps5}
  \end{subfigure}

  \begin{subfigure}[t]{0.45\textwidth}
    \centering
    \includegraphics[width=\textwidth]{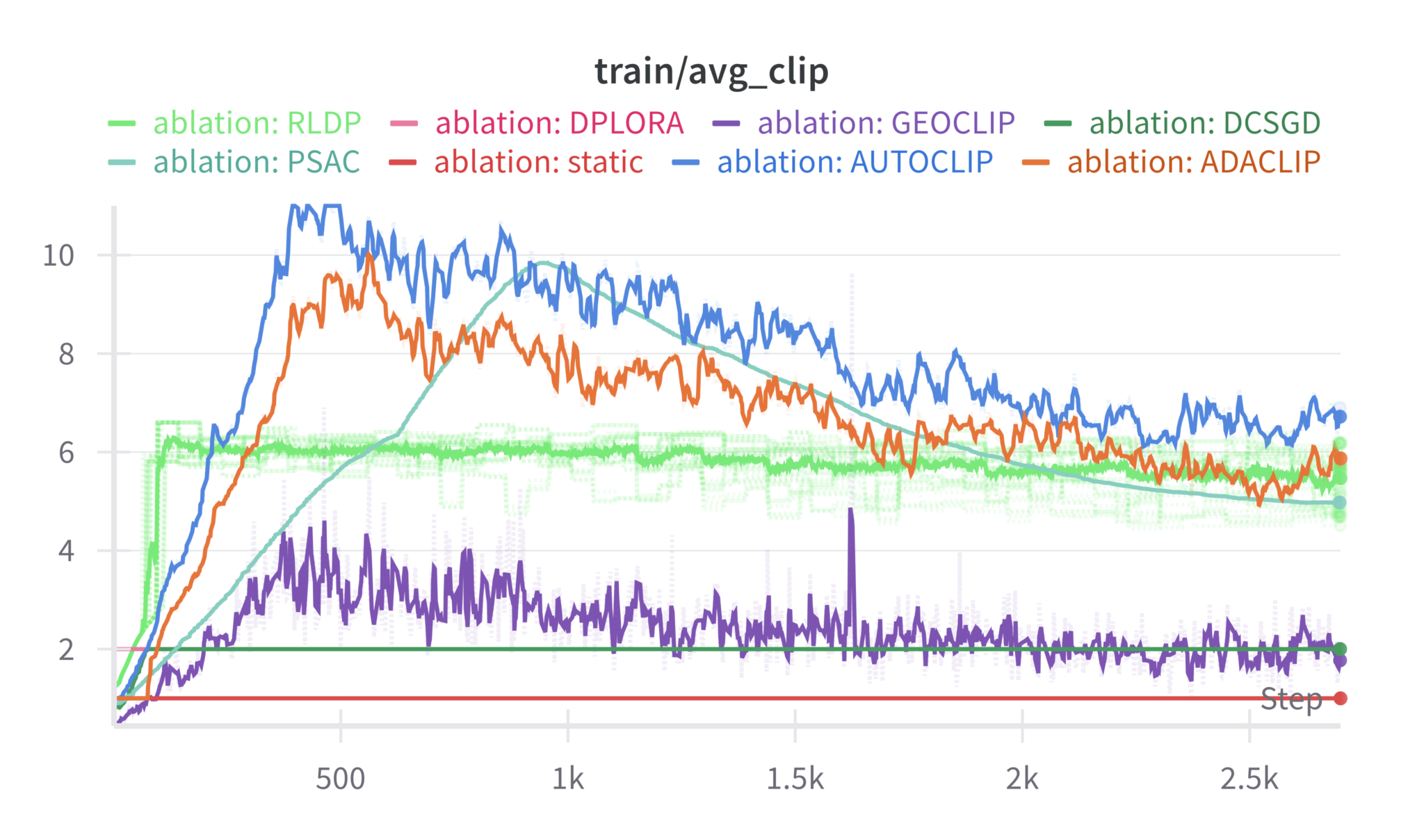}
    
    \label{fig:llama3b_clip_eps8}
  \end{subfigure}\hfill
  \begin{subfigure}[t]{0.45\textwidth}
    \centering
    \includegraphics[width=\textwidth]{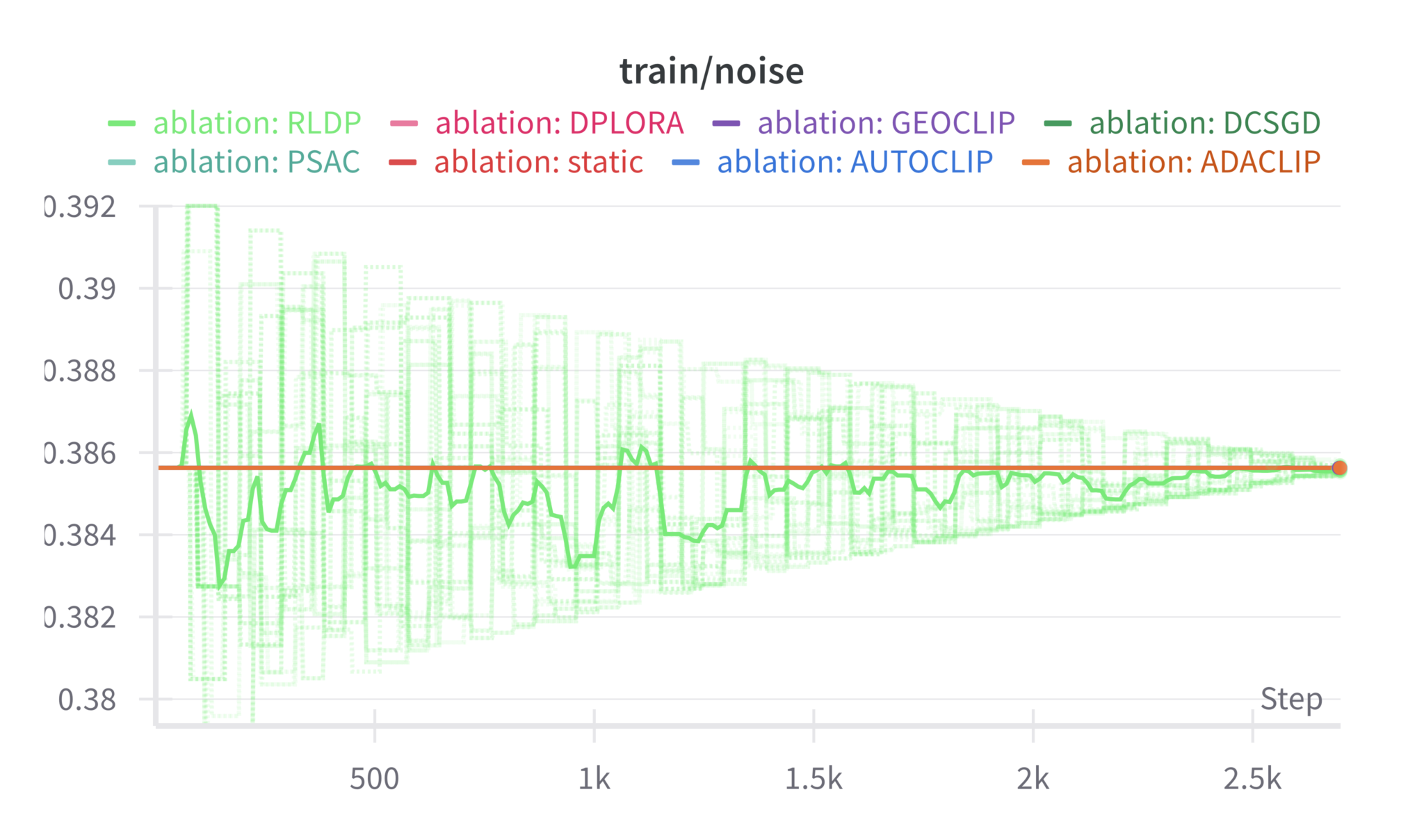}
    
    \label{fig:llama3b_noise_eps8}
  \end{subfigure}

  \caption{Training clip and noise history for Llama-3B under different DP budgets \(\varepsilon\). Rows (top to bottom) correspond to \(\varepsilon=0.5,\,2,\,4,\,5,\,8\). Left: training average CLIP; right: DP noise over steps.}
  \label{fig:llama3b_train_clip_noise}
\end{figure}

\begin{figure}[H]
  \centering
  \begin{subfigure}[t]{0.45\textwidth}
    \centering
    \includegraphics[width=\textwidth]{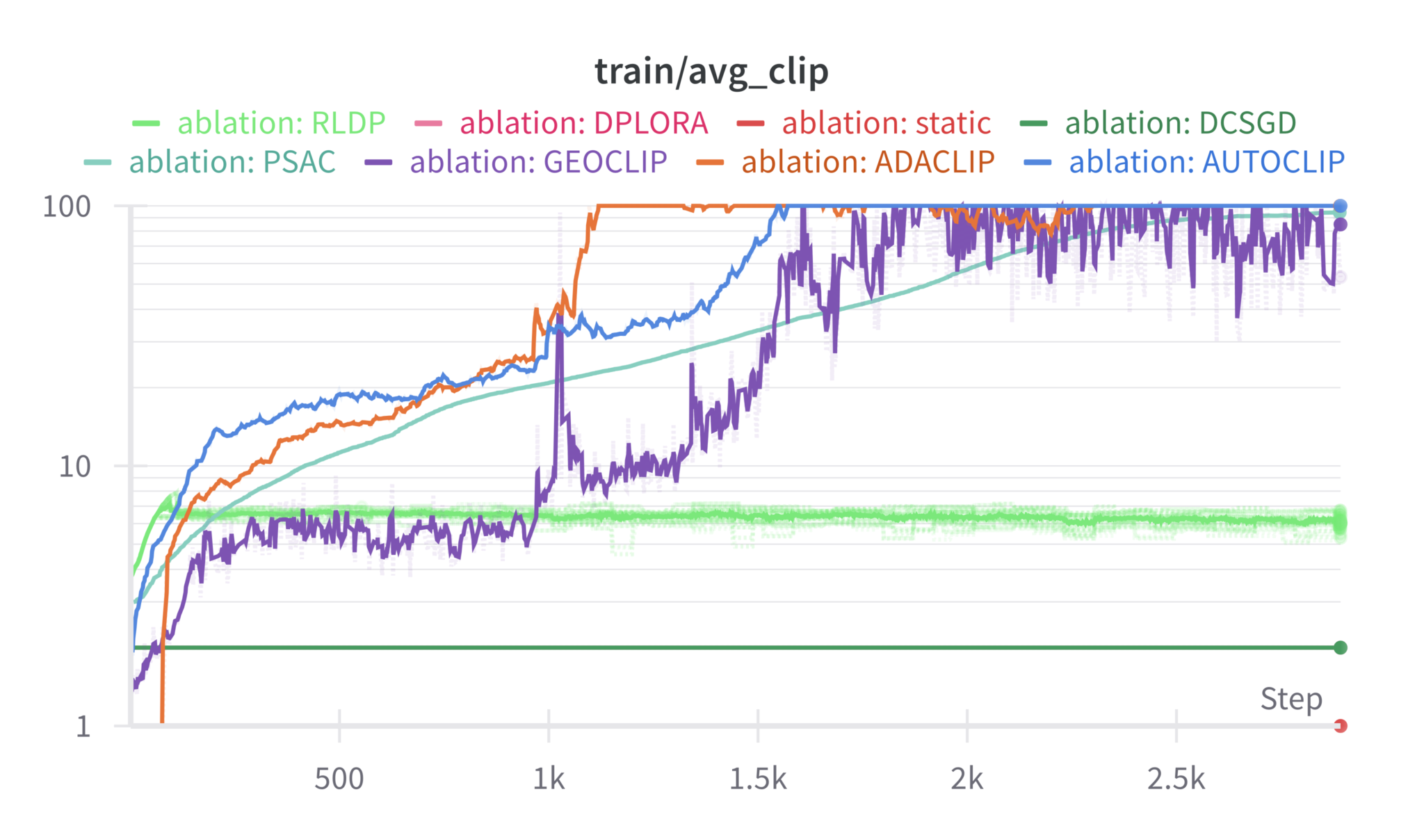}
    
    \label{fig:mistral_clip_eps05}
  \end{subfigure}\hfill
  \begin{subfigure}[t]{0.45\textwidth}
    \centering
    \includegraphics[width=\textwidth]{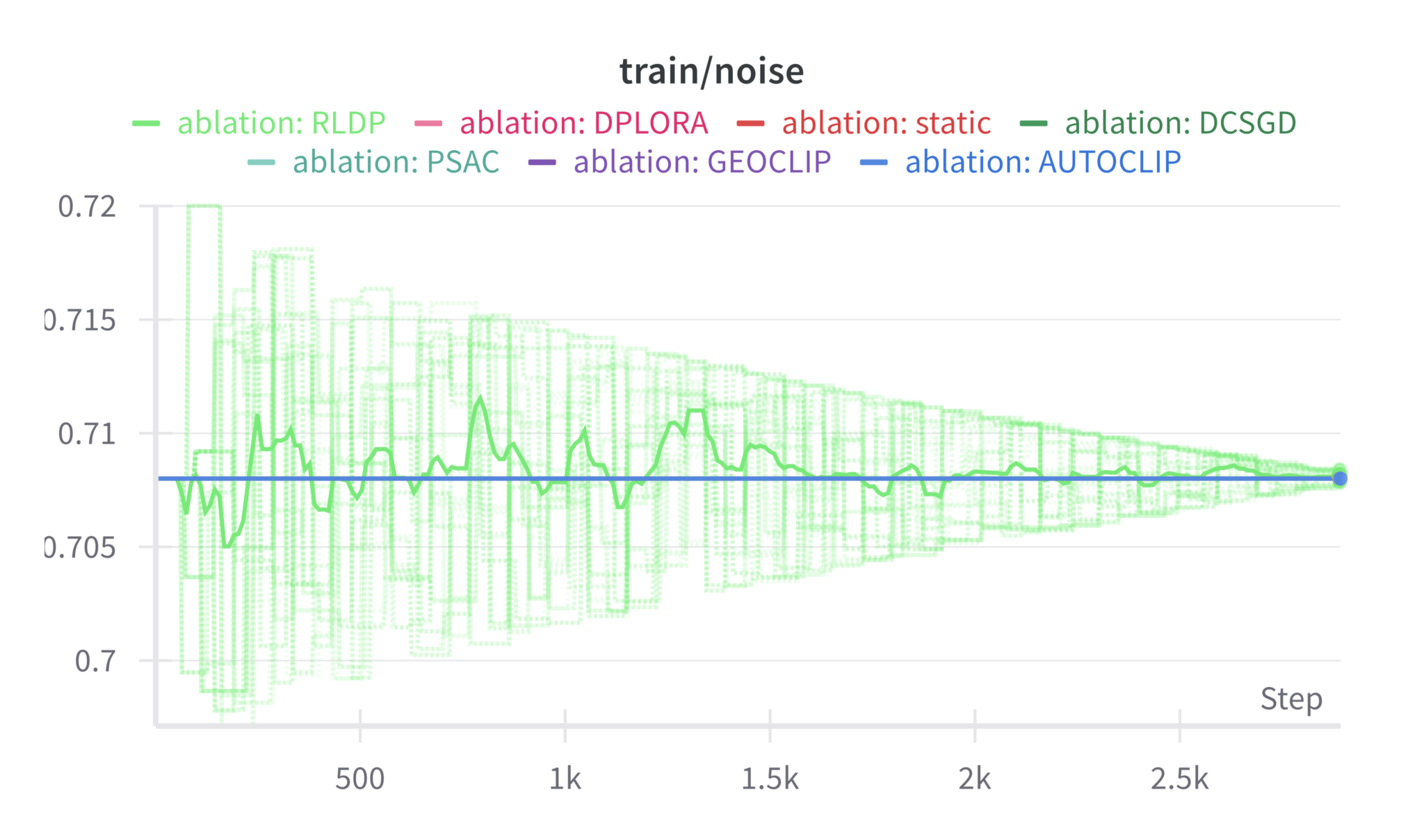}
    
    \label{fig:mistral_noise_eps05}
  \end{subfigure}

  \begin{subfigure}[t]{0.45\textwidth}
    \centering
    \includegraphics[width=\textwidth]{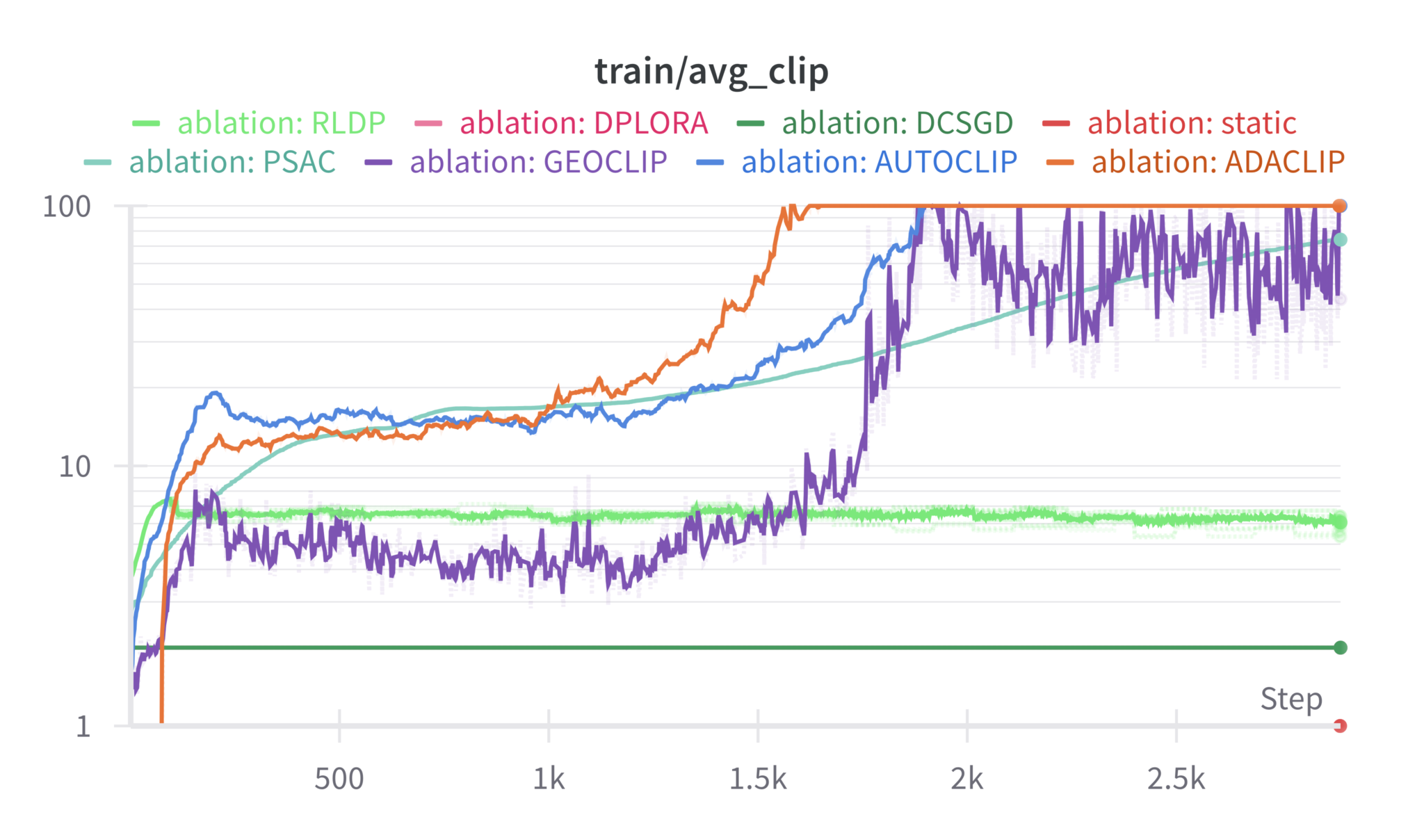}
    
    \label{fig:mistral_clip_eps2}
  \end{subfigure}\hfill
  \begin{subfigure}[t]{0.45\textwidth}
    \centering
    \includegraphics[width=\textwidth]{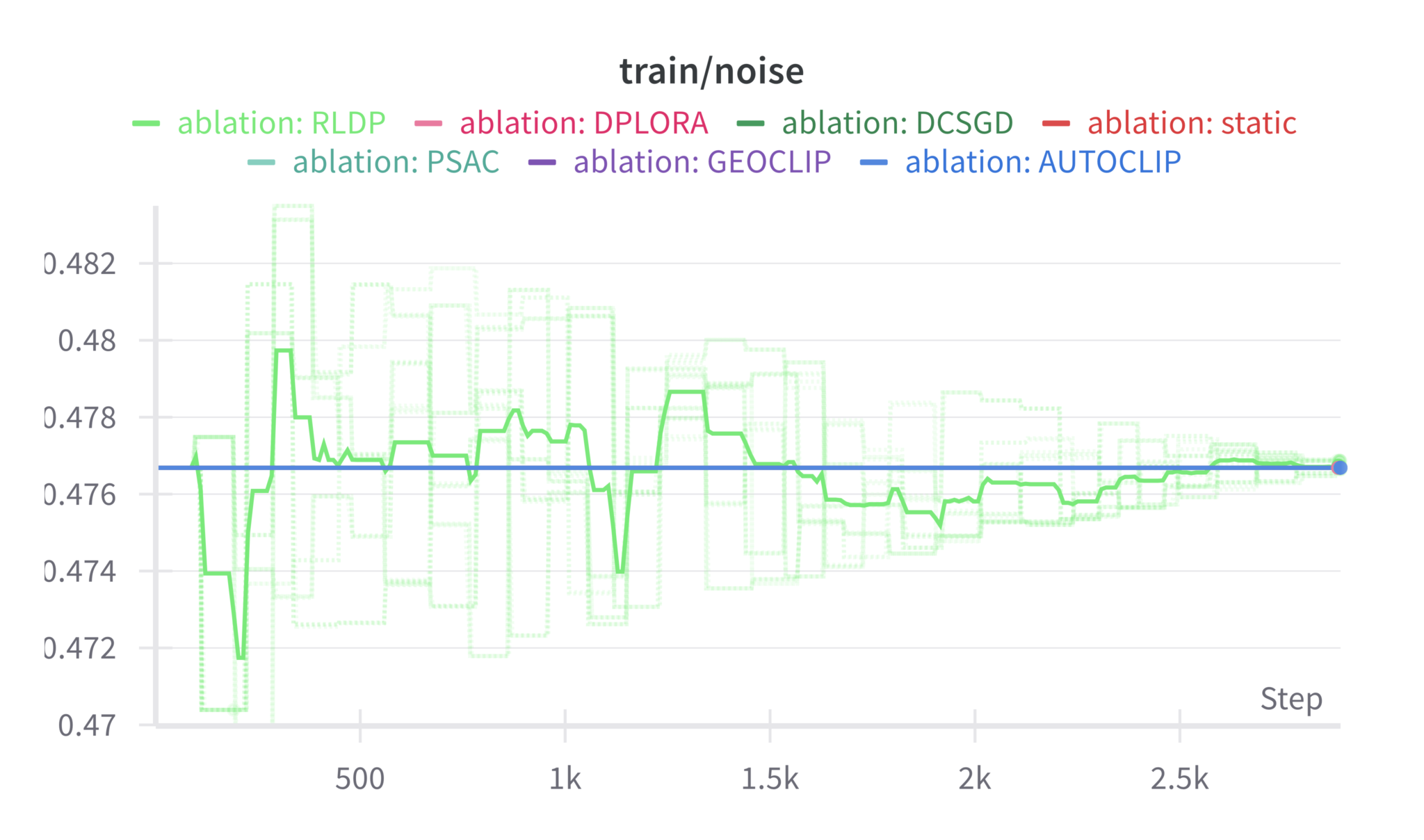}
    
    \label{fig:mistral_noise_eps2}
  \end{subfigure}

  \begin{subfigure}[t]{0.45\textwidth}
    \centering
    \includegraphics[width=\textwidth]{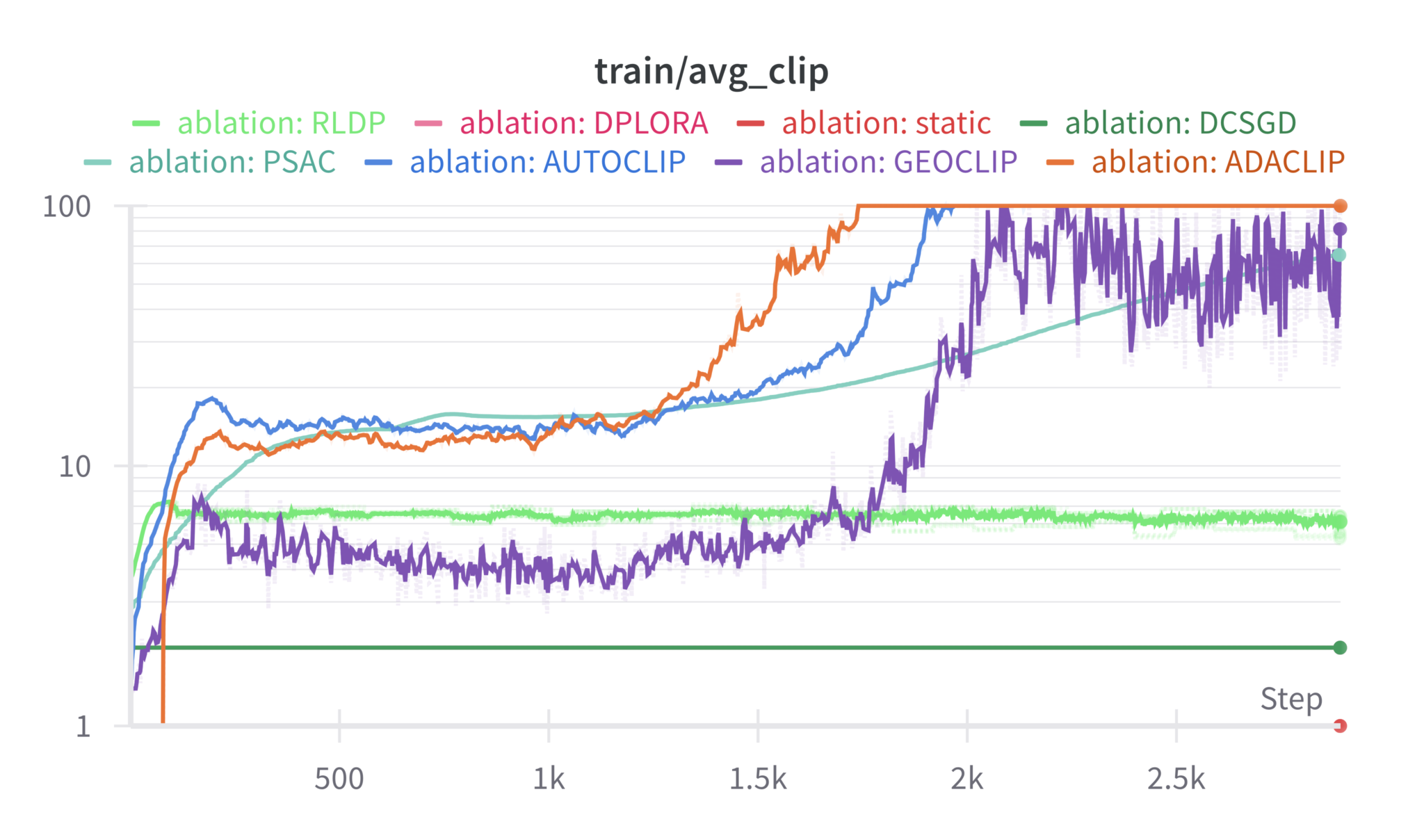}
    
    \label{fig:mistral_clip_eps4}
  \end{subfigure}\hfill
  \begin{subfigure}[t]{0.45\textwidth}
    \centering
    \includegraphics[width=\textwidth]{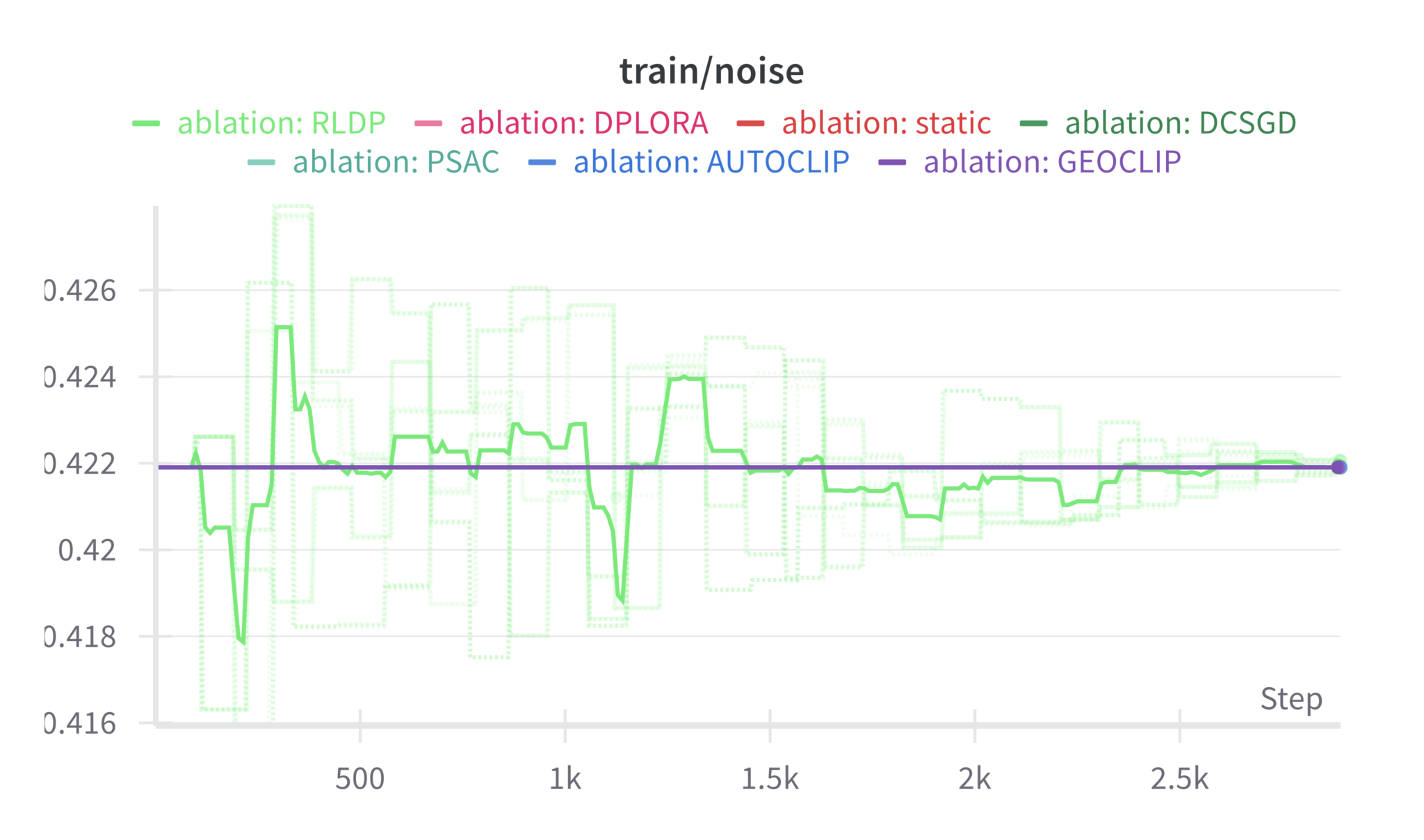}
    
    \label{fig:mistral_noise_eps4}
  \end{subfigure}

  \begin{subfigure}[t]{0.45\textwidth}
    \centering
    \includegraphics[width=\textwidth]{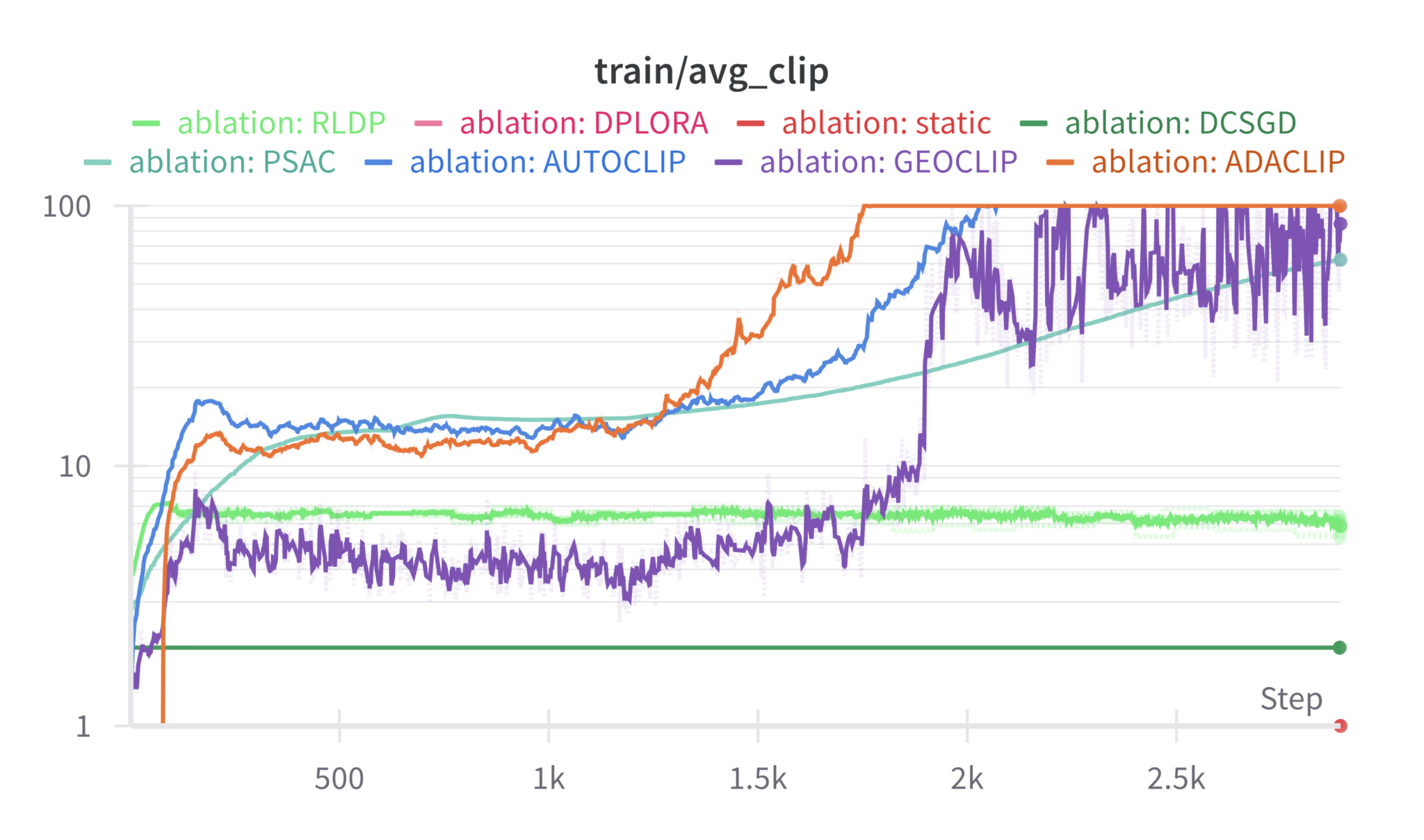}
    
    \label{fig:mistral_clip_eps5}
  \end{subfigure}\hfill
  \begin{subfigure}[t]{0.45\textwidth}
    \centering
    \includegraphics[width=\textwidth]{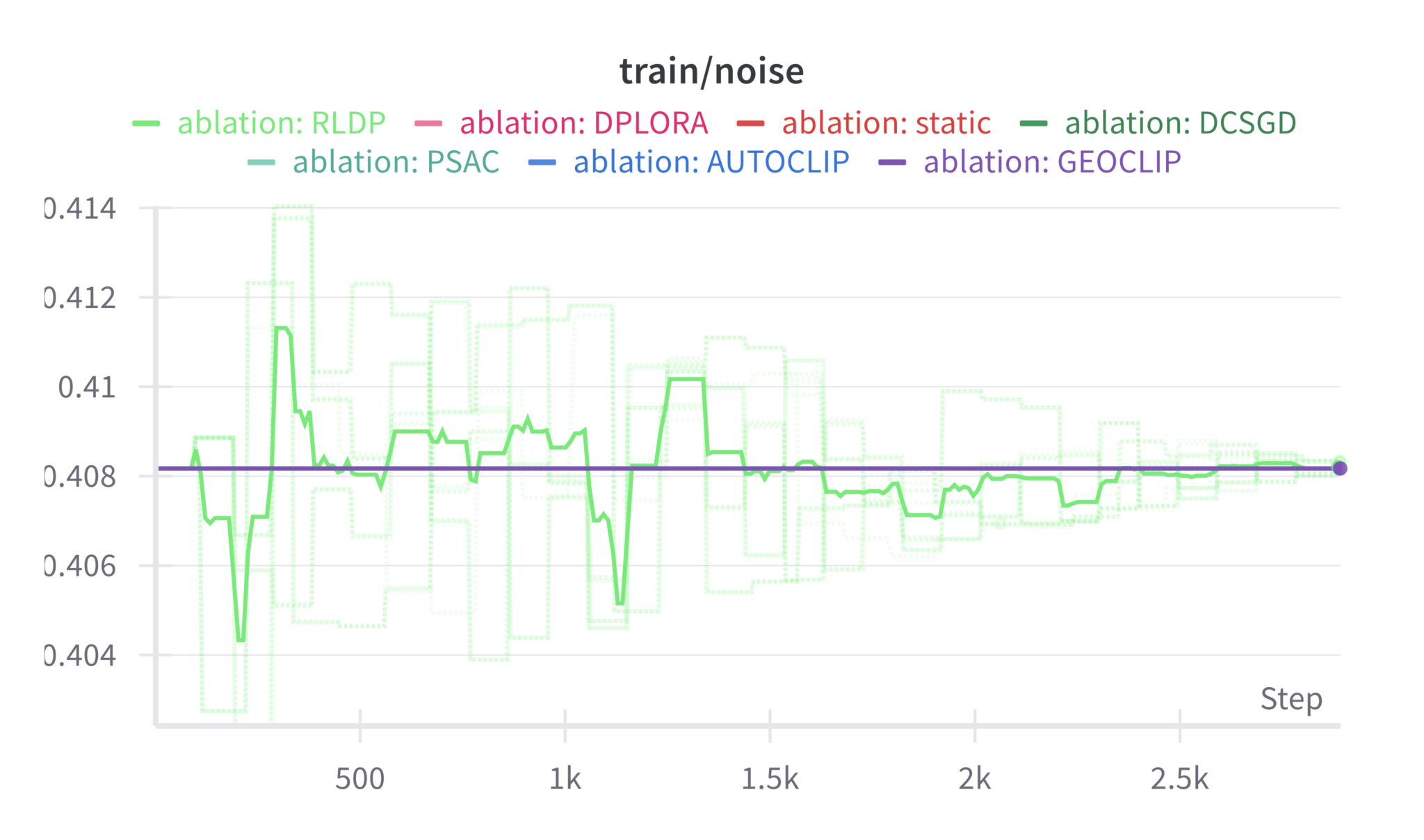}
    
    \label{fig:mistral_noise_eps5}
  \end{subfigure}

  \begin{subfigure}[t]{0.45\textwidth}
    \centering
    \includegraphics[width=\textwidth]{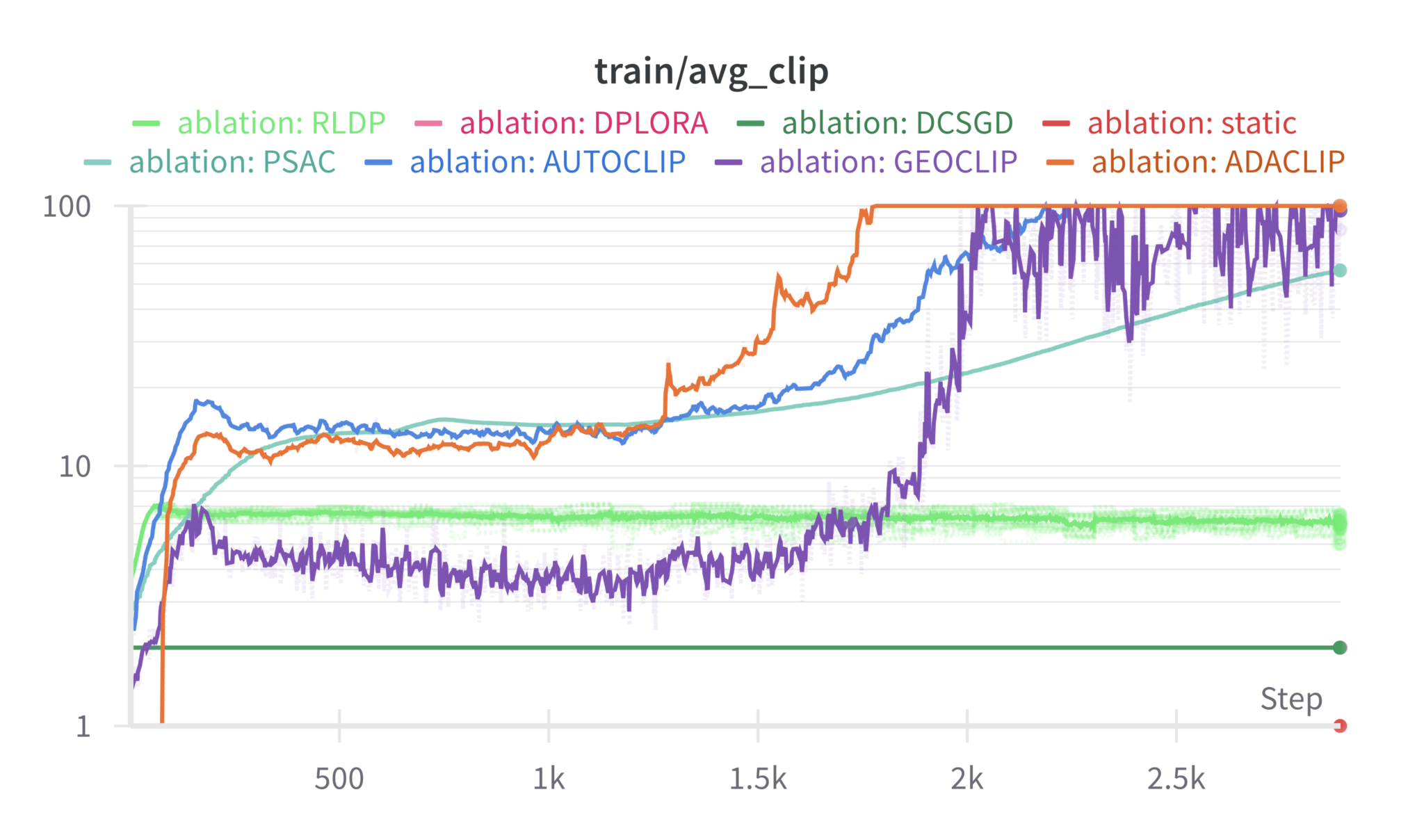}
    
    \label{fig:mistral_clip_eps8}
  \end{subfigure}\hfill
  \begin{subfigure}[t]{0.45\textwidth}
    \centering
    \includegraphics[width=\textwidth]{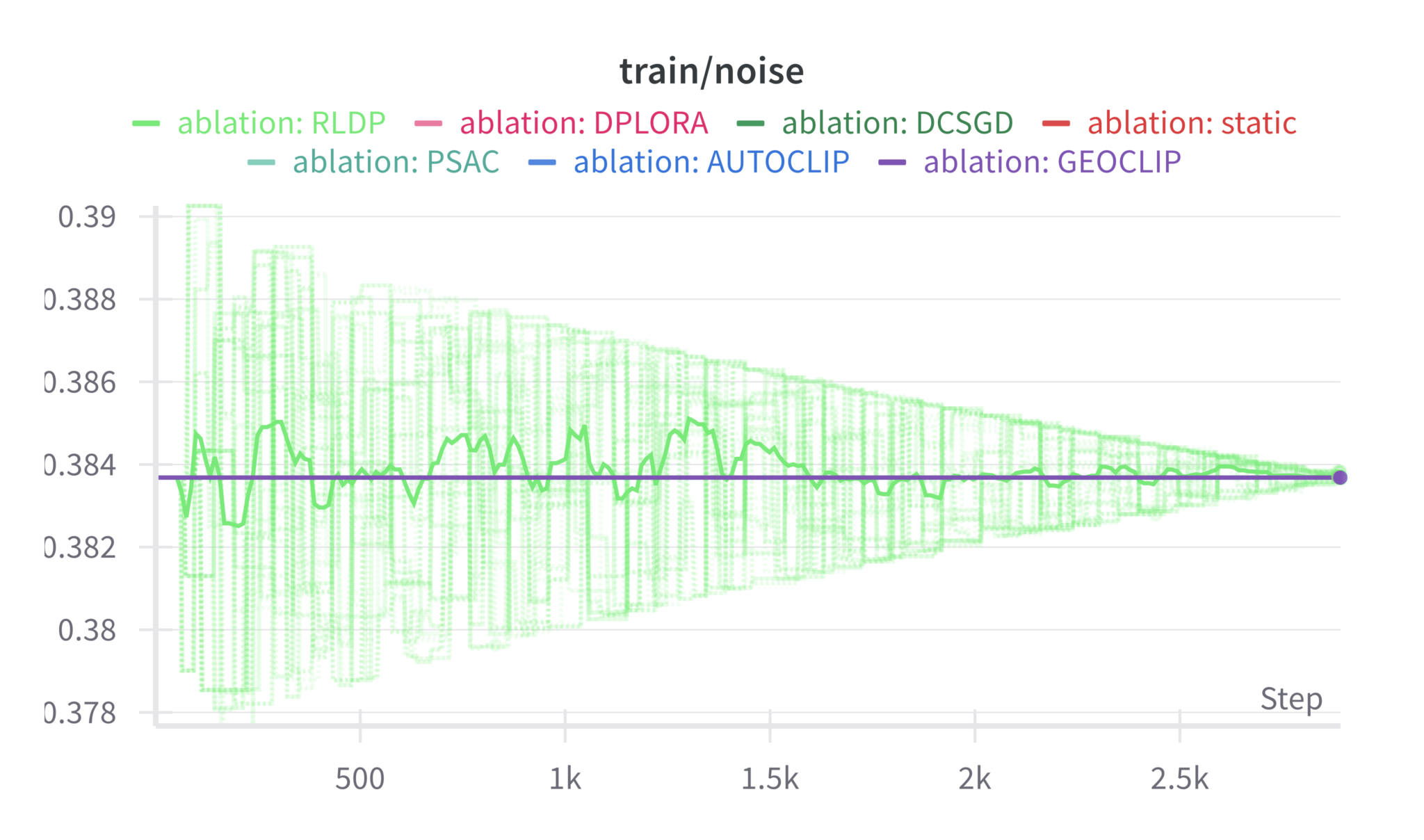}
    
    \label{fig:mistral_noise_eps8}
  \end{subfigure}

  \caption{Training clip and noise history for Mistral-7B under different DP budgets \(\varepsilon\). Rows (top to bottom) correspond to \(\varepsilon=0.5,\,2,\,4,\,5,\,8\). Left: training average CLIP (log scale); right: DP noise over steps.}
  \label{fig:mistral_train_clip_noise}
\end{figure}

\begin{table}[H]
\centering
\small
\begin{tabular}{c c | c c c | c c c}
\toprule
\multirow{2}{*}{$\epsilon$} & \multirow{2}{*}{Ablation}
  & \multicolumn{3}{c}{GPT2}
  & \multicolumn{3}{c}{Llama-1B} \\
\cmidrule(lr){3-5} \cmidrule(lr){6-8}
 & & 
   $\overline{\log p}$ & $\overline{\mathrm{ppl}}$ & AUC
   & $\overline{\log p}$ & $\overline{\mathrm{ppl}}$ & AUC \\
\midrule
\multirow{9}{*}{0.5}
 & AdaClip   & -3.725 & 177.967  & 0.362 & -2.439 & 4088.704 & 0.339 \\
 & AutoClip  & -3.723 & 345.157  & 0.366 & -2.255 & 1414.391 & 0.342 \\
 & DC-SGD    & -3.686 & 347.460  & 0.366 & -2.392 & 1189.060 & 0.349 \\
 & DP-LoRA   & -3.865 & 765.855  & 0.361 & -2.521 & 4421.569 & 0.348 \\
 & GeoClip   & -3.678 & 229.951  & 0.367 & -2.527 & 2000.288 & 0.356 \\
 & PSAC      & -3.705 & 421.654  & 0.362 & -2.086 & 1431.593 & 0.345 \\
 & RLDP$_H$  & -4.145 & 1222.501 & 0.359 & -1.617 &   75.549 & 0.332 \\
 & RLDP$_L$  & -4.634 & 2736.992 & 0.360 & -1.875 &   62.235 & 0.346 \\
 & static    & -3.740 & 334.693  & 0.366 & -2.329 & 1049.721 & 0.347 \\
\cmidrule(lr){1-8}
\multirow{9}{*}{2.0}
 & AdaClip   & -3.603 & 184.857  & 0.362 & -2.443 & 1755.701 & 0.347 \\
 & AutoClip  & -3.728 & 441.922  & 0.362 & -2.542 & 3670.828 & 0.339 \\
 & DC-SGD    & -3.668 & 515.813  & 0.366 & -2.313 & 3291.949 & 0.349 \\
 & DP-LoRA   & -3.995 & 1395.002 & 0.367 & -2.321 & 2158.130 & 0.350 \\
 & GeoClip   & -3.743 & 371.541  & 0.366 & -2.317 & 1342.738 & 0.346 \\
 & PSAC      & -3.726 & 666.864  & 0.365 & -2.167 & 2244.194 & 0.342 \\
 & RLDP$_H$  & -4.309 & 1653.623 & 0.357 & -1.916 &  152.102 & 0.351 \\
 & RLDP$_L$  & -4.754 & 2208.451 & 0.360 & -1.656 &  141.039 & 0.345 \\
 & static    & -3.715 & 452.052  & 0.365 & -2.369 & 2426.548 & 0.353 \\
\cmidrule(lr){1-8}
\multirow{9}{*}{4.0}
 & AdaClip   & -3.588 & 196.615  & 0.362 & -2.449 & 1389.552 & 0.342 \\
 & AutoClip  & -3.724 & 494.085  & 0.363 & -2.290 & 1972.282 & 0.342 \\
 & DC-SGD    & -3.676 & 596.282  & 0.367 & -2.369 & 4325.386 & 0.350 \\
 & DP-LoRA   & -3.991 & 1672.263 & 0.367 & -2.017 & 1013.825 & 0.347 \\
 & GeoClip   & -3.605 & 334.740  & 0.366 & -2.540 & 3123.871 & 0.341 \\
 & PSAC      & -3.754 & 801.909  & 0.366 & -2.124 & 1285.782 & 0.336 \\
 & RLDP$_H$  & -4.315 & 1619.728 & 0.357 & -1.912 &  181.264 & 0.341 \\
 & RLDP$_L$  & -4.747 & 2320.436 & 0.361 & -1.721 &   39.690 & 0.342 \\
 & static    & -3.717 & 523.678  & 0.366 & -2.277 & 2927.965 & 0.352 \\
\cmidrule(lr){1-8}
\multirow{9}{*}{5.0}
 & AdaClip   & -3.587 & 202.259  & 0.362 & -2.428 & 1325.179 & 0.342 \\
 & AutoClip  & -3.725 & 514.055  & 0.363 & -2.282 & 1571.894 & 0.342 \\
 & DC-SGD    & -3.681 & 619.809  & 0.367 & -2.391 & 4375.995 & 0.350 \\
 & DP-LoRA   & -3.984 & 1748.390 & 0.367 & -2.143 & 1155.657 & 0.344 \\
 & GeoClip   & -3.674 & 390.702  & 0.368 & -2.402 & 2264.428 & 0.341 \\
 & PSAC      & -3.764 & 842.630  & 0.367 & -2.126 & 1155.784 & 0.338 \\
 & RLDP$_H$  & -4.310 & 1619.863 & 0.356 & -1.883 &  177.313 & 0.342 \\
 & RLDP$_L$  & -4.782 & 2294.804 & 0.361 & -1.508 &   50.487 & 0.336 \\
 & static    & -3.719 & 549.016  & 0.366 & -2.271 & 3084.903 & 0.351 \\
\cmidrule(lr){1-8}
\multirow{9}{*}{8.0}
 & AdaClip   & -3.585 & 211.251  & 0.362 & -2.373 & 1484.133 & 0.339 \\
 & AutoClip  & -3.725 & 573.044  & 0.364 & -2.288 & 1805.285 & 0.340 \\
 & DC-SGD    & -3.694 & 677.146  & 0.367 & -2.315 & 3917.878 & 0.351 \\
 & DP-LoRA   & -3.968 & 1886.975 & 0.367 & -2.329 &  999.312 & 0.348 \\
 & GeoClip   & -3.617 & 427.472  & 0.369 & -2.098 & 2334.271 & 0.342 \\
 & PSAC      & -3.785 & 919.743  & 0.367 & -2.167 & 1090.743 & 0.337 \\
 & RLDP$_H$  & -5.164 & 5051.562 & 0.359 & -1.896 &  192.766 & 0.333 \\
 & RLDP$_L$  & -4.732 & 2404.148 & 0.354 & -1.682 &   41.661 & 0.339 \\
 & static    & -3.726 & 599.004  & 0.366 & -2.273 & 3188.300 & 0.350 \\
\bottomrule
\end{tabular}
\caption{Membership inference attack results for GPT2 and Llama-1B across various privacy budgets. Metrics are average log-prob ($\overline{\log p}$), average perplexity ($\overline{\mathrm{ppl}}$), and AUC.}
\label{tab:mi_models_gpt2_llama1b}
\end{table}

\begin{table}[H]
\centering
\small
\begin{tabular}{c c | c c c | c c c}
\toprule
\multirow{2}{*}{$\epsilon$} & \multirow{2}{*}{Ablation}
  & \multicolumn{3}{c}{Llama-3B}
  & \multicolumn{3}{c}{Mistral-7B} \\
\cmidrule(lr){3-5} \cmidrule(lr){6-8}
 & & 
   $\overline{\log p}$ & $\overline{\mathrm{ppl}}$ & AUC
   & $\overline{\log p}$ & $\overline{\mathrm{ppl}}$ & AUC \\
\midrule
\multirow{9}{*}{0.5}
 & AdaClip   & -4.082 & 166.333 & 0.355 & -9.944 & 25562.470 & 0.349 \\
 & AutoClip  & -3.741 & 132.428 & 0.353 & -11.171 & 72895.679 & 0.353 \\
 & DC-SGD    & -3.873 & 203.566 & 0.352 & -4.618 & 455.068 & 0.329 \\
 & DP-LoRA   & -4.179 & 141.383 & 0.369 & -7.469 & 2576.410 & 0.336 \\
 & GeoClip   & -3.895 & 127.490 & 0.354 & -10.483 & 36496.315 & 0.369 \\
 & PSAC      & -4.243 & 195.275 & 0.354 & -10.926 & 57909.194 & 0.436 \\
 & RLDP$_H$  & -2.007 &  78.123 & 0.360 & -4.334 & 140.635 & 0.340 \\
 & RLDP$_L$  & -2.452 &  70.522 & 0.353 & -4.562 & 442.647 & 0.651 \\
 & static    & -3.737 & 150.751 & 0.350 & -8.032 & 3966.452 & 0.327 \\
\cmidrule(lr){1-8}
\multirow{9}{*}{2.0}
 & AdaClip   & -3.700 & 125.955 & 0.351 & -10.480 & 37840.196 & 0.349 \\
 & AutoClip  & -3.846 & 148.348 & 0.349 & -11.204 & 74038.084 & 0.434 \\
 & DC-SGD    & -3.965 & 225.392 & 0.351 & -5.271 & 3408.979 & 0.333 \\
 & DP-LoRA   & -4.345 & 196.732 & 0.400 & -7.577 & 5630.890 & 0.341 \\
 & GeoClip   & -3.982 & 134.546 & 0.359 & -9.180 & 10435.432 & 0.351 \\
 & PSAC      & -4.169 & 176.368 & 0.365 & -10.401 & 34171.978 & 0.351 \\
 & RLDP$_H$  & -1.783 &  80.930 & 0.358 & -3.653 &   49.236 & 0.336 \\
 & RLDP$_L$  & -3.065 & 145.865 & 0.353 & -2.989 &   40.193 & 0.335 \\
 & static    & -3.897 & 188.090 & 0.348 & -5.421 & 3069.026 & 0.346 \\
\cmidrule(lr){1-8}
\multirow{9}{*}{4.0}
 & AdaClip   & -3.713 & 155.198 & 0.355 & -9.719 & 18982.047 & 0.345 \\
 & AutoClip  & -4.028 & 151.554 & 0.349 & -11.070 & 64704.703 & 0.501 \\
 & DC-SGD    & -4.032 & 178.296 & 0.352 & -4.063 &  401.212 & 0.340 \\
 & DP-LoRA   & -4.265 & 216.376 & 0.380 & -7.281 &  4390.709 & 0.339 \\
 & GeoClip   & -3.999 & 133.570 & 0.350 & -11.273 & 79470.330 & 0.406 \\
 & PSAC      & -3.962 & 157.966 & 0.351 & -10.184 & 28424.074 & 0.344 \\
 & RLDP$_H$  & -3.364 &  90.563 & 0.350 & -3.287 &   33.552 & 0.337 \\
 & RLDP$_L$  & -2.704 &  84.617 & 0.355 & -6.193 &  991.411 & 0.335 \\
 & static    & -3.893 & 176.501 & 0.349 & -4.674 &  222.343 & 0.333 \\
\cmidrule(lr){1-8}
\multirow{9}{*}{5.0}
 & AdaClip   & -3.804 & 148.348 & 0.354 & -10.134 & 27670.259 & 0.345 \\
 & AutoClip  & -4.005 & 139.193 & 0.350 & -11.194 & 73119.138 & 0.528 \\
 & DC-SGD    & -4.049 & 171.989 & 0.352 & -3.883 &  179.770 & 0.336 \\
 & DP-LoRA   & -4.015 & 148.528 & 0.376 & -8.633 & 23898.032 & 0.350 \\
 & GeoClip   & -3.824 & 112.559 & 0.345 & -8.109 &  5296.685 & 0.346 \\
 & PSAC      & -3.848 & 135.209 & 0.352 & -10.633 & 46980.193 & 0.360 \\
 & RLDP$_H$  & -1.871 &  78.320 & 0.356 & -3.847 &   55.084 & 0.338 \\
 & RLDP$_L$  & -3.664 & 207.148 & 0.354 & -6.118 & 1119.646 & 0.344 \\
 & static    & -3.880 & 173.426 & 0.350 & -3.943 &  282.537 & 0.340 \\
\cmidrule(lr){1-8}
\multirow{9}{*}{8.0}
 & AdaClip   & -4.031 & 155.227 & 0.353 & -10.092 & 26182.650 & 0.352 \\
 & AutoClip  & -4.027 & 140.882 & 0.354 & -10.849 & 53628.881 & 0.640 \\
 & DC-SGD    & -4.010 & 165.974 & 0.352 & -3.749 &  137.045 & 0.336 \\
 & DP-LoRA   & -4.208 & 185.429 & 0.386 & -7.543 &  4874.882 & 0.336 \\
 & GeoClip   & -3.883 & 137.819 & 0.348 & -9.114 & 10613.607 & 0.345 \\
 & PSAC      & -3.766 & 130.588 & 0.355 & -9.793 & 21486.278 & 0.351 \\
 & RLDP$_H$  & -1.951 &  87.648 & 0.356 & -3.274 &   32.915 & 0.344 \\
 & RLDP$_L$  & -3.819 & 205.403 & 0.353 & -3.138 &  113.213 & 0.349 \\
 & static    & -3.982 & 176.223 & 0.352 & -3.773 &  123.711 & 0.333 \\
\bottomrule
\end{tabular}
\caption{Membership inference attack results for Llama-3B and Mistral-7B across various privacy budgets. Metrics are average log-prob ($\overline{\log p}$), average perplexity ($\overline{\mathrm{ppl}}$), and AUC.}
\label{tab:mi_models_llama3b_mistral7b}
\end{table}

\begin{table}[H]
\centering
\small
\begin{tabular}{c c | cc | cc | cc | cc}
\toprule
\multirow{2}{*}{$\epsilon$} & \multirow{2}{*}{Ablation}
  & \multicolumn{2}{c}{GPT2}
  & \multicolumn{2}{c}{Llama-1B}
  & \multicolumn{2}{c}{Llama-3B}
  & \multicolumn{2}{c}{Mistral-7B} \\
\cmidrule(lr){3-4} \cmidrule(lr){5-6} \cmidrule(lr){7-8} \cmidrule(lr){9-10}
 & & Jaccard$_1$ & $n_{\mathrm{valid}}$
     & Jaccard$_1$ & $n_{\mathrm{valid}}$
     & Jaccard$_1$ & $n_{\mathrm{valid}}$
     & Jaccard$_1$ & $n_{\mathrm{valid}}$ \\
\midrule
\multirow{9}{*}{0.5}
 & AdaClip   & 0.139 & 3983 & 0.152 & 4000 & 0.149 & 3259 & 0.178 & 4000 \\
 & AutoClip  & 0.149 & 3918 & 0.156 & 4000 & 0.155 & 2702 & 0.174 & 4000 \\
 & DC-SGD    & 0.149 & 3909 & 0.145 & 3373 & 0.169 & 3720 & 0.060 & 4000 \\
 & DP-LoRA   & 0.147 & 3793 & 0.155 & 3965 & 0.169 & 3685 & 0.128 & 4000 \\
 & GeoClip   & 0.139 & 3924 & 0.149 & 3947 & 0.167 & 2662 & 0.166 & 4000 \\
 & PSAC      & 0.152 & 3943 & 0.158 & 4000 & 0.144 & 3355 & 0.167 & 4000 \\
 & RLDP$_H$  & 0.119 & 3949 & 0.150 & 4000 & 0.142 & 1651 & 0.012 & 4000 \\
 & RLDP$_L$  & 0.165 & 3958 & 0.143 & 4000 & 0.135 & 1889 & 0.150 & 4000 \\
 & static    & 0.142 & 3887 & 0.145 & 3338 & 0.169 & 3717 & 0.064 & 4000 \\
\cmidrule(lr){1-10}
\multirow{9}{*}{2.0}
 & AdaClip   & 0.167 & 4000 & 0.152 & 2477 & 0.158 & 3360 & 0.181 & 4000 \\
 & AutoClip  & 0.142 & 3859 & 0.156 & 3037 & 0.167 & 3459 & 0.167 & 4000 \\
 & DC-SGD    & 0.144 & 3908 & 0.160 & 3950 & 0.169 & 3228 & 0.115 & 4000 \\
 & DP-LoRA   & 0.136 & 3724 & 0.155 & 2418 & 0.169 & 4000 & 0.122 & 3967 \\
 & GeoClip   & 0.142 & 3939 & 0.164 & 3752 & 0.162 & 3617 & 0.177 & 4000 \\
 & PSAC      & 0.145 & 3891 & 0.160 & 4000 & 0.165 & 3504 & 0.158 & 4000 \\
 & RLDP$_H$  & 0.111 & 3976 & 0.148 & 4000 & 0.152 & 2721 & 0.120 & 4000 \\
 & RLDP$_L$  & 0.140 & 3688 & 0.158 & 4000 & 0.173 &   47 & 0.143 & 3999 \\
 & static    & 0.139 & 3873 & 0.157 & 3380 & 0.169 & 4000 & 0.117 & 3997 \\
\cmidrule(lr){1-10}
\multirow{9}{*}{4.0}
 & AdaClip   & 0.168 & 4000 & 0.159 & 2046 & 0.164 & 3730 & 0.179 & 4000 \\
 & AutoClip  & 0.141 & 3838 & 0.159 & 3007 & 0.163 & 3840 & 0.146 & 4000 \\
 & DC-SGD    & 0.142 & 3865 & 0.160 & 3896 & 0.169 & 2893 & 0.128 & 3998 \\
 & DP-LoRA   & 0.136 & 3667 & 0.154 & 2705 & 0.168 & 4000 & 0.113 & 3996 \\
 & GeoClip   & 0.139 & 3680 & 0.151 & 3699 & 0.167 & 3922 & 0.180 & 4000 \\
 & PSAC      & 0.143 & 3860 & 0.162 & 3804 & 0.164 & 3311 & 0.072 & 4000 \\
 & RLDP$_H$  & 0.108 & 3980 & 0.153 & 4000 & 0.158 & 4000 & 0.128 & 4000 \\
 & RLDP$_L$  & 0.140 & 3742 & 0.167 & 4000 & 0.169 & 4000 & 0.106 & 3994 \\
 & static    & 0.139 & 3866 & 0.160 & 3845 & 0.169 & 4000 & 0.138 & 3984 \\
\cmidrule(lr){1-10}
\multirow{9}{*}{5.0}
 & AdaClip   & 0.169 & 4000 & 0.157 & 2055 & 0.166 & 3740 & 0.173 & 4000 \\
 & AutoClip  & 0.140 & 3825 & 0.160 & 3256 & 0.164 & 3870 & 0.144 & 4000 \\
 & DC-SGD    & 0.141 & 3865 & 0.160 & 3814 & 0.169 & 2593 & 0.130 & 3997 \\
 & DP-LoRA   & 0.137 & 3656 & 0.154 & 2498 & 0.167 & 4000 & 0.118 & 3996 \\
 & GeoClip   & 0.137 & 3707 & 0.157 & 3853 & 0.168 & 3906 & 0.182 & 4000 \\
 & PSAC      & 0.143 & 3836 & 0.163 & 3664 & 0.164 & 3302 & 0.104 & 4000 \\
 & RLDP$_H$  & 0.107 & 3989 & 0.160 & 4000 & 0.160 & 2751 & 0.108 & 4000 \\
 & RLDP$_L$  & 0.137 & 3741 & 0.168 & 4000 & 0.164 &  150 & 0.120 & 3991 \\
 & static    & 0.140 & 3862 & 0.160 & 3775 & 0.169 & 3593 & 0.127 & 3999 \\
\cmidrule(lr){1-10}
\multirow{9}{*}{8.0}
 & AdaClip   & 0.169 & 4000 & 0.157 & 2320 & 0.166 & 3746 & 0.169 & 4000 \\
 & AutoClip  & 0.141 & 3793 & 0.163 & 3660 & 0.167 & 4000 & 0.161 & 3999 \\
 & DC-SGD    & 0.141 & 3840 & 0.161 & 3715 & 0.169 & 3142 & 0.133 & 3995 \\
 & DP-LoRA   & 0.139 & 3641 & 0.154 & 2642 & 0.169 & 4000 & 0.113 & 3963 \\
 & GeoClip   & 0.135 & 3728 & 0.164 & 3525 & 0.167 & 3696 & 0.173 & 4000 \\
 & PSAC      & 0.143 & 3807 & 0.166 & 3573 & 0.165 & 3503 & 0.048 & 4000 \\
 & RLDP$_H$  & 0.138 & 3950 & 0.160 & 4000 & 0.160 & 2916 & 0.126 & 4000 \\
 & RLDP$_L$  & 0.135 & 3768 & 0.167 & 4000 & 0.201 &  348 & 0.122 & 4000 \\
 & static    & 0.139 & 3844 & 0.159 & 3748 & 0.169 & 3510 & 0.143 & 3987 \\
\bottomrule
\end{tabular}
\caption{Canary memorization attack results for GPT2 (1-gram Jaccard similarity and number of valid predictions), Llama-1B, Llama-3B, and Mistral-7B across various privacy budgets.}
\label{tab:memorization_jaccard1}
\end{table}

\begin{table}[H]
\centering
\small
\begin{tabular}{c c | cc | cc | cc | cc}
\toprule
\multirow{2}{*}{$\epsilon$} & \multirow{2}{*}{Ablation}
  & \multicolumn{2}{c}{GPT2}
  & \multicolumn{2}{c}{Llama-1B}
  & \multicolumn{2}{c}{Llama-3B}
  & \multicolumn{2}{c}{Mistral-7B} \\
\cmidrule(lr){3-4} \cmidrule(lr){5-6} \cmidrule(lr){7-8} \cmidrule(lr){9-10}
 & & Jaccard$_{2}$ & $n_{\mathrm{valid}}$
     & Jaccard$_{2}$ & $n_{\mathrm{valid}}$
     & Jaccard$_{2}$ & $n_{\mathrm{valid}}$
     & Jaccard$_{2}$ & $n_{\mathrm{valid}}$ \\
\midrule
\multirow{9}{*}{0.5}
 & AdaClip   & 0.003 & 3983 & 0.004 & 4000 & 0.004 & 3259 & 0.004 & 4000 \\
 & AutoClip  & 0.004 & 3918 & 0.004 & 4000 & 0.004 & 2702 & 0.004 & 4000 \\
 & DC-SGD    & 0.004 & 3909 & 0.004 & 3373 & 0.004 & 3720 & 0.001 & 4000 \\
 & DP-LoRA   & 0.003 & 3793 & 0.003 & 3965 & 0.004 & 3685 & 0.003 & 4000 \\
 & GeoClip   & 0.003 & 3924 & 0.004 & 3947 & 0.004 & 2662 & 0.004 & 4000 \\
 & PSAC      & 0.004 & 3943 & 0.004 & 4000 & 0.004 & 3355 & 0.004 & 4000 \\
 & RLDP$_H$  & 0.002 & 3949 & 0.004 & 4000 & 0.004 & 1651 & 0.000 & 4000 \\
 & RLDP$_L$  & 0.004 & 3958 & 0.004 & 4000 & 0.004 & 1889 & 0.003 & 4000 \\
 & static    & 0.004 & 3887 & 0.004 & 3338 & 0.004 & 3717 & 0.001 & 4000 \\
\cmidrule(lr){1-10}
\multirow{9}{*}{2.0}
 & AdaClip   & 0.004 & 4000 & 0.004 & 2477 & 0.004 & 3360 & 0.004 & 4000 \\
 & AutoClip  & 0.004 & 3859 & 0.004 & 3037 & 0.004 & 3459 & 0.004 & 4000 \\
 & DC-SGD    & 0.003 & 3908 & 0.004 & 3950 & 0.004 & 3228 & 0.002 & 4000 \\
 & DP-LoRA   & 0.003 & 3724 & 0.004 & 2418 & 0.004 & 4000 & 0.003 & 3967 \\
 & GeoClip   & 0.004 & 3939 & 0.004 & 3752 & 0.004 & 3617 & 0.004 & 4000 \\
 & PSAC      & 0.003 & 3891 & 0.004 & 4000 & 0.004 & 3504 & 0.004 & 4000 \\
 & RLDP$_H$  & 0.002 & 3976 & 0.004 & 4000 & 0.004 & 2721 & 0.001 & 4000 \\
 & RLDP$_L$  & 0.003 & 3688 & 0.004 & 4000 & 0.005 &   47 & 0.003 & 3999 \\
 & static    & 0.003 & 3873 & 0.004 & 3380 & 0.004 & 4000 & 0.002 & 3997 \\
\cmidrule(lr){1-10}
\multirow{9}{*}{4.0}
 & AdaClip   & 0.004 & 4000 & 0.004 & 2046 & 0.004 & 3730 & 0.004 & 4000 \\
 & AutoClip  & 0.004 & 3838 & 0.004 & 3007 & 0.004 & 3840 & 0.003 & 4000 \\
 & DC-SGD    & 0.003 & 3865 & 0.004 & 3896 & 0.004 & 2893 & 0.003 & 3998 \\
 & DP-LoRA   & 0.003 & 3667 & 0.004 & 2705 & 0.004 & 4000 & 0.002 & 3996 \\
 & GeoClip   & 0.003 & 3680 & 0.004 & 3699 & 0.004 & 3922 & 0.005 & 4000 \\
 & PSAC      & 0.003 & 3860 & 0.004 & 3804 & 0.004 & 3311 & 0.001 & 4000 \\
 & RLDP$_H$  & 0.002 & 3980 & 0.004 & 4000 & 0.004 & 4000 & 0.002 & 4000 \\
 & RLDP$_L$  & 0.003 & 3742 & 0.004 & 4000 & 0.004 & 4000 & 0.002 & 3994 \\
 & static    & 0.003 & 3866 & 0.004 & 3845 & 0.004 & 4000 & 0.003 & 3984 \\
\cmidrule(lr){1-10}
\multirow{9}{*}{5.0}
 & AdaClip   & 0.004 & 4000 & 0.004 & 2055 & 0.004 & 3740 & 0.004 & 4000 \\
 & AutoClip  & 0.004 & 3825 & 0.004 & 3256 & 0.004 & 3870 & 0.004 & 4000 \\
 & DC-SGD    & 0.003 & 3865 & 0.004 & 3814 & 0.004 & 2593 & 0.003 & 3997 \\
 & DP-LoRA   & 0.003 & 3656 & 0.004 & 2498 & 0.004 & 4000 & 0.003 & 3996 \\
 & GeoClip   & 0.004 & 3707 & 0.004 & 3853 & 0.004 & 3906 & 0.004 & 4000 \\
 & PSAC      & 0.003 & 3836 & 0.004 & 3664 & 0.004 & 3302 & 0.002 & 4000 \\
 & RLDP$_H$  & 0.002 & 3989 & 0.004 & 4000 & 0.004 & 2751 & 0.002 & 4000 \\
 & RLDP$_L$  & 0.003 & 3741 & 0.004 & 4000 & 0.005 &  150 & 0.003 & 3991 \\
 & static    & 0.003 & 3862 & 0.004 & 3775 & 0.004 & 3593 & 0.003 & 3999 \\
\cmidrule(lr){1-10}
\multirow{9}{*}{8.0}
 & AdaClip   & 0.004 & 4000 & 0.004 & 2320 & 0.004 & 3746 & 0.004 & 4000 \\
 & AutoClip  & 0.004 & 3793 & 0.004 & 3660 & 0.004 & 4000 & 0.004 & 3999 \\
 & DC-SGD    & 0.003 & 3840 & 0.004 & 3715 & 0.004 & 3142 & 0.003 & 3995 \\
 & DP-LoRA   & 0.003 & 3641 & 0.004 & 2642 & 0.004 & 4000 & 0.002 & 3963 \\
 & GeoClip   & 0.003 & 3728 & 0.004 & 3525 & 0.004 & 3696 & 0.004 & 4000 \\
 & PSAC      & 0.003 & 3807 & 0.004 & 3573 & 0.004 & 3503 & 0.001 & 4000 \\
 & RLDP$_H$  & 0.003 & 3950 & 0.004 & 4000 & 0.004 & 2916 & 0.003 & 4000 \\
 & RLDP$_L$  & 0.003 & 3768 & 0.004 & 4000 & 0.007 &  348 & 0.002 & 4000 \\
 & static    & 0.003 & 3844 & 0.004 & 3748 & 0.004 & 3510 & 0.003 & 3987 \\
\bottomrule
\end{tabular}
\caption{Canary memorization attack results for GPT2 (2-gram Jaccard similarity and number of valid predictions), Llama-1B, Llama-3B, and Mistral-7B across various privacy budgets.}
\label{tab:memorization_jaccard2}
\end{table}

\begin{table}[H]
\centering
\small
\begin{tabular}{c c | cc | cc | cc | cc}
\toprule
\multirow{2}{*}{$\epsilon$} & \multirow{2}{*}{Ablation}
  & \multicolumn{2}{c}{GPT2}
  & \multicolumn{2}{c}{Llama-1B}
  & \multicolumn{2}{c}{Llama-3B}
  & \multicolumn{2}{c}{Mistral-7B} \\
\cmidrule(lr){3-4} \cmidrule(lr){5-6} \cmidrule(lr){7-8} \cmidrule(lr){9-10}
 & & Jaccard$_{3}$ & $n_{\mathrm{valid}}$
     & Jaccard$_{3}$ & $n_{\mathrm{valid}}$
     & Jaccard$_{3}$ & $n_{\mathrm{valid}}$
     & Jaccard$_{3}$ & $n_{\mathrm{valid}}$ \\
\midrule
\multirow{9}{*}{0.5}
 & AdaClip   & 0.0001 & 3983 & 0.0001 & 4000 & 0.0001 & 3259 & 0.0001 & 4000 \\
 & AutoClip  & 0.0001 & 3918 & 0.0001 & 4000 & 0.0002 & 2702 & 0.0001 & 4000 \\
 & DC-SGD    & 0.0001 & 3909 & 0.0001 & 3373 & 0.0002 & 3720 & 0.0000 & 4000 \\
 & DP-LoRA   & 0.0001 & 3793 & 0.0002 & 3965 & 0.0002 & 3685 & 0.0001 & 4000 \\
 & GeoClip   & 0.0001 & 3924 & 0.0001 & 3947 & 0.0002 & 2662 & 0.0001 & 4000 \\
 & PSAC      & 0.0001 & 3943 & 0.0001 & 4000 & 0.0001 & 3355 & 0.0001 & 4000 \\
 & RLDP$_H$  & 0.0000 & 3949 & 0.0000 & 4000 & 0.0001 & 1651 & 0.0000 & 4000 \\
 & RLDP$_L$  & 0.0001 & 3958 & 0.0001 & 4000 & 0.0003 & 1889 & 0.0000 & 4000 \\
 & static    & 0.0001 & 3887 & 0.0001 & 3338 & 0.0002 & 3717 & 0.0000 & 4000 \\
\cmidrule(lr){1-10}
\multirow{9}{*}{2.0}
 & AdaClip   & 0.0004 & 4000 & 0.0004 & 2477 & 0.0004 & 3360 & 0.0001 & 4000 \\
 & AutoClip  & 0.0004 & 3859 & 0.0004 & 3037 & 0.0004 & 3459 & 0.0001 & 4000 \\
 & DC-SGD    & 0.0003 & 3908 & 0.0004 & 3950 & 0.0004 & 3228 & 0.0000 & 4000 \\
 & DP-LoRA   & 0.0003 & 3724 & 0.0004 & 2418 & 0.0004 & 4000 & 0.0001 & 3967 \\
 & GeoClip   & 0.0003 & 3939 & 0.0004 & 3752 & 0.0004 & 3617 & 0.0001 & 4000 \\
 & PSAC      & 0.0003 & 3891 & 0.0004 & 4000 & 0.0004 & 3504 & 0.0001 & 4000 \\
 & RLDP$_H$  & 0.0002 & 3976 & 0.0004 & 4000 & 0.0004 & 2721 & 0.0000 & 4000 \\
 & RLDP$_L$  & 0.0003 & 3688 & 0.0004 & 4000 & 0.0005 &    47 & 0.0000 & 3991 \\
 & static    & 0.0003 & 3873 & 0.0004 & 3380 & 0.0004 & 4000 & 0.0001 & 3999 \\
\cmidrule(lr){1-10}
\multirow{9}{*}{4.0}
 & AdaClip   & 0.0004 & 4000 & 0.0004 & 2046 & 0.0004 & 3730 & 0.0001 & 4000 \\
 & AutoClip  & 0.0004 & 3838 & 0.0004 & 3007 & 0.0004 & 3840 & 0.0001 & 4000 \\
 & DC-SGD    & 0.0003 & 3865 & 0.0004 & 3896 & 0.0004 & 2893 & 0.0001 & 3998 \\
 & DP-LoRA   & 0.0003 & 3667 & 0.0004 & 2705 & 0.0004 & 4000 & 0.0001 & 3996 \\
 & GeoClip   & 0.0003 & 3680 & 0.0004 & 3699 & 0.0004 & 3922 & 0.0001 & 4000 \\
 & PSAC      & 0.0003 & 3860 & 0.0004 & 3804 & 0.0004 & 3311 & 0.0000 & 4000 \\
 & RLDP$_H$  & 0.0002 & 3980 & 0.0004 & 4000 & 0.0004 & 4000 & 0.0001 & 4000 \\
 & RLDP$_L$  & 0.0003 & 3742 & 0.0004 & 4000 & 0.0004 & 4000 & 0.0000 & 3994 \\
 & static    & 0.0003 & 3866 & 0.0004 & 3845 & 0.0004 & 4000 & 0.0001 & 3984 \\
\cmidrule(lr){1-10}
\multirow{9}{*}{5.0}
 & AdaClip   & 0.0004 & 4000 & 0.0004 & 2055 & 0.0004 & 3740 & 0.0001 & 4000 \\
 & AutoClip  & 0.0004 & 3825 & 0.0004 & 3256 & 0.0004 & 3870 & 0.0001 & 4000 \\
 & DC-SGD    & 0.0003 & 3865 & 0.0004 & 3814 & 0.0004 & 2593 & 0.0001 & 3997 \\
 & DP-LoRA   & 0.0003 & 3656 & 0.0004 & 2498 & 0.0004 & 4000 & 0.0001 & 3996 \\
 & GeoClip   & 0.0004 & 3707 & 0.0004 & 3853 & 0.0004 & 3906 & 0.0001 & 4000 \\
 & PSAC      & 0.0003 & 3836 & 0.0004 & 3664 & 0.0004 & 3302 & 0.0000 & 4000 \\
 & RLDP$_H$  & 0.0002 & 3989 & 0.0004 & 4000 & 0.0004 & 2751 & 0.0001 & 4000 \\
 & RLDP$_L$  & 0.0003 & 3741 & 0.0004 & 4000 & 0.0005 &  150 & 0.0000 & 3991 \\
 & static    & 0.0003 & 3862 & 0.0004 & 3775 & 0.0004 & 3593 & 0.0001 & 3999 \\
\cmidrule(lr){1-10}
\multirow{9}{*}{8.0}
 & AdaClip   & 0.0004 & 4000 & 0.0004 & 2320 & 0.0004 & 3746 & 0.0001 & 4000 \\
 & AutoClip  & 0.0004 & 3793 & 0.0004 & 3660 & 0.0004 & 4000 & 0.0001 & 3999 \\
 & DC-SGD    & 0.0003 & 3840 & 0.0004 & 3715 & 0.0004 & 3142 & 0.0001 & 3995 \\
 & DP-LoRA   & 0.0003 & 3641 & 0.0004 & 2642 & 0.0004 & 4000 & 0.0001 & 3963 \\
 & GeoClip   & 0.0003 & 3728 & 0.0004 & 3525 & 0.0004 & 3696 & 0.0001 & 4000 \\
 & PSAC      & 0.0003 & 3807 & 0.0004 & 3573 & 0.0004 & 3503 & 0.0000 & 4000 \\
 & RLDP$_H$  & 0.0003 & 3950 & 0.0004 & 4000 & 0.0004 & 2916 & 0.0001 & 4000 \\
 & RLDP$_L$  & 0.0003 & 3768 & 0.0004 & 4000 & 0.0007 &  348 & 0.0000 & 4000 \\
 & static    & 0.0003 & 3844 & 0.0004 & 3748 & 0.0004 & 3510 & 0.0001 & 3987 \\
\bottomrule
\end{tabular}
\caption{Canary memorization attack results for GPT2 (3-gram Jaccard similarity and number of valid predictions), Llama-1B, Llama-3B, and Mistral-7B across various privacy budgets.}
\label{tab:memorization_jaccard3}
\end{table}

\begin{table}[H]
\centering
\small
\begin{tabular}{c c | cc | cc | cc | cc}
\toprule
\multirow{2}{*}{$\epsilon$} & \multirow{2}{*}{Ablation}
  & \multicolumn{2}{c}{GPT2}
  & \multicolumn{2}{c}{Llama-1B}
  & \multicolumn{2}{c}{Llama-3B}
  & \multicolumn{2}{c}{Mistral-7B} \\
\cmidrule(lr){3-4} \cmidrule(lr){5-6} \cmidrule(lr){7-8} \cmidrule(lr){9-10}
 & & Jaccard$_4$ & $n_{\mathrm{valid}}$
     & Jaccard$_4$ & $n_{\mathrm{valid}}$
     & Jaccard$_4$ & $n_{\mathrm{valid}}$
     & Jaccard$_4$ & $n_{\mathrm{valid}}$ \\
\midrule
\multirow{9}{*}{0.5}
 & AdaClip   & 24  & 3983 & 37  & 4000 & 160 & 3259 & 1  & 4000 \\
 & AutoClip  & 43  & 3918 & 14  & 4000 & 230 & 2702 & 3  & 4000 \\
 & DC-SGD    & 27  & 3909 & 138 & 3373 & 440 & 3720 & 0  & 4000 \\
 & DP-LoRA   & 27  & 3793 & 11  & 3965 & 380 & 3685 & 5  & 4000 \\
 & GeoClip   & 38  & 3924 & 42  & 3947 & 370 & 2662 & 26 & 4000 \\
 & PSAC      & 20  & 3943 & 33  & 4000 & 70  & 3355 & 3  & 4000 \\
 & RLDP$_H$  & 9   & 3949 & 9   & 4000 & 30  & 1651 & 0  & 4000 \\
 & RLDP$_L$  & 5   & 3958 & 22  & 4000 & 640 & 1889 & 38 & 4000 \\
 & static    & 48  & 3887 & 155 & 3338 & 440 & 3717 & 0  & 4000 \\
\cmidrule(lr){1-10}
\multirow{9}{*}{2.0}
 & AdaClip   & 2   & 4000 & 117 & 2477 & 220 & 3360 & 2  & 4000 \\
 & AutoClip  & 44  & 3859 & 99  & 3037 & 370 & 3459 & 49 & 4000 \\
 & DC-SGD    & 17  & 3908 & 30  & 3950 & 440 & 3228 & 1  & 4000 \\
 & DP-LoRA   & 13  & 3724 & 319 & 2418 & 420 & 4000 & 75 & 3967 \\
 & GeoClip   & 23  & 3939 & 32  & 3752 & 330 & 3617 & 12 & 4000 \\
 & PSAC      & 15  & 3891 & 72  & 4000 & 280 & 3504 & 20 & 4000 \\
 & RLDP$_H$  & 7   & 3976 & 7   & 4000 & 130 & 2721 & 0  & 4000 \\
 & RLDP$_L$  & 7   & 3688 & 9   & 4000 & 330 &   47 & 5  & 3999 \\
 & static    & 34  & 3873 & 97  & 3380 & 440 & 4000 & 1  & 3997 \\
\cmidrule(lr){1-10}
\multirow{9}{*}{4.0}
 & AdaClip   & 2   & 4000 & 117 & 2046 & 270 & 3730 & 11 & 4000 \\
 & AutoClip  & 36  & 3838 & 84  & 3007 & 240 & 3840 & 20 & 4000 \\
 & DC-SGD    & 19  & 3865 & 32  & 3896 & 440 & 2893 & 1  & 3998 \\
 & DP-LoRA   & 10  & 3667 & 256 & 2705 & 390 & 4000 & 67 & 3996 \\
 & GeoClip   & 55  & 3680 & 93  & 3699 & 420 & 3922 & 19 & 4000 \\
 & PSAC      & 14  & 3860 & 64  & 3804 & 240 & 3311 & 0  & 4000 \\
 & RLDP$_H$  & 7   & 3980 & 9   & 4000 & 280 & 4000 & 29 & 4000 \\
 & RLDP$_L$  & 9   & 3742 & 6   & 4000 & 440 & 4000 & 3  & 3994 \\
 & static    & 31  & 3866 & 48  & 3845 & 440 & 4000 & 1  & 3984 \\
\cmidrule(lr){1-10}
\multirow{9}{*}{5.0}
 & AdaClip   & 2   & 4000 & 107 & 2055 & 300 & 3740 & 16 & 4000 \\
 & AutoClip  & 35  & 3825 & 84  & 3256 & 240 & 3870 & 6  & 4000 \\
 & DC-SGD    & 20  & 3865 & 50  & 3814 & 440 & 2593 & 1  & 3997 \\
 & DP-LoRA   & 10  & 3656 & 387 & 2498 & 330 & 4000 & 103& 3996 \\
 & GeoClip   & 43  & 3707 & 107 & 3853 & 390 & 3906 & 60 & 4000 \\
 & PSAC      & 14  & 3836 & 79  & 3664 & 270 & 3302 & 0  & 4000 \\
 & RLDP$_H$  & 7   & 3989 & 6   & 4000 & 200 & 2751 & 211& 4000 \\
 & RLDP$_L$  & 9   & 3741 & 6   & 4000 & 370 &  150 & 5  & 3991 \\
 & static    & 29  & 3862 & 59  & 3775 & 440 & 3593 & 1  & 3999 \\
\cmidrule(lr){1-10}
\multirow{9}{*}{8.0}
 & AdaClip   & 2   & 4000 & 59  & 2320 & 330 & 3746 & 18 & 4000 \\
 & AutoClip  & 30  & 3793 & 87  & 3660 & 320 & 4000 & 93 & 3999 \\
 & DC-SGD    & 21  & 3840 & 84  & 3715 & 440 & 3142 & 1  & 3995 \\
 & DP-LoRA   & 10  & 3641 & 466 & 2642 & 410 & 4000 & 64 & 3963 \\
 & GeoClip   & 42  & 3728 & 113 & 3525 & 360 & 3696 & 8  & 4000 \\
 & PSAC      & 15  & 3807 & 103 & 3573 & 280 & 3503 & 0  & 4000 \\
 & RLDP$_H$  & 9   & 3950 & 5   & 4000 & 350 & 2916 & 116& 4000 \\
 & RLDP$_L$  & 8   & 3768 & 2   & 4000 & 640 &  348 & 2  & 4000 \\
 & static    & 29  & 3844 & 94  & 3748 & 440 & 3510 & 2  & 3987 \\
\bottomrule
\end{tabular}
\caption{Canary memorization attack results for GPT2 (4-gram Jaccard similarity ($e^{-7}$) and number of valid predictions), Llama-1B, Llama-3B, and Mistral-7B across various privacy budgets.}
\label{tab:memorization_jaccard4}
\end{table}
\section{Conclusion, Limitations, and Future Work}
\label{sec:conclusion}

\subsection{Conclusion}
\label{ssec:conclusion}

In this work, we have introduced \ours, a novel framework that reformulates differentially private optimization for language model fine-tuning as a closed-loop control problem, leveraging deep reinforcement learning to dynamically allocate privacy resources. By integrating a customized DP optimizer with per-adapter pairwise gradient clipping and heteroscedastic noise scaling, \ours employs an online Soft Actor-Critic (SAC) hyper-policy to sense rich training statistics—such as gradient norms, utility proxies, and privacy ledger status—and act by adjusting fine-grained per-parameter clip thresholds and global noise magnitudes. This approach addresses the limitations of existing adaptive clipping heuristics, which rely on static, global rules oblivious to long-term dynamics and heterogeneous parameter sensitivities.

Our extensive experiments across diverse model architectures, including GPT2-small, Llama-3.2-1B, Llama-3.2-3B, and Mistral-7B, demonstrate the efficacy of \ours in bridging the privacy-utility trade-off. Under fixed $(\epsilon, \delta)$ budgets ranging from stringent ($\epsilon=0.5$) to moderate ($\epsilon=8$), \ours consistently outperforms seven competitive baselines—Vanilla DP-SGD, AdaClip, AutoClip, DC-SGD, GeoClip, PSAC, and DP-LoRA—in terms of downstream utility, achieving an average 5.6\% lower perplexity on held-out evaluation sets. This improvement is particularly pronounced in low-budget regimes, where \ours's adaptive allocation preserves gradient fidelity during critical early training phases while tightening privacy controls later.

Furthermore, \ours exhibits superior sample efficiency, attaining the peak utility of baselines in 71\% fewer optimization steps on average. This translates to substantial computational savings, reduced wall-clock time, and a lower carbon footprint, making private fine-tuning more accessible for resource-constrained practitioners. Privacy audits via canary extraction and membership inference attacks corroborate these findings: \ours models exhibit lower memorization risks (e.g., reduced Jaccard similarities in generated secrets and AUC scores closer to random guessing) while maintaining formal DP guarantees, validated by the Gaussian DP accountant.

By casting DP fine-tuning as a sequential decision process, \ours pioneers the use of RL for privacy-aware hyperparameter optimization in deep learning. Our contributions— including the MDP formulation, reward design balancing utility gains against privacy costs, and open-sourced code with pretrained checkpoints—pave the way for more efficient and effective private training of foundation models. Ultimately, \ours advances the deployment of LLMs on sensitive corpora, such as healthcare records, by mitigating the utility gap that has hindered practical adoption.

\subsection{Limitations}
\label{ssec:limitations}

Despite its promising results, \ours has several limitations that warrant consideration.

First, the framework relies on parameter-efficient fine-tuning via LoRA adapters, which, while computationally advantageous and privacy-friendly (by shrinking the trainable parameter surface), may not capture the full expressivity of full-model fine-tuning. In scenarios where adapting the entire backbone is necessary for optimal utility—such as domain shifts beyond syntactic or semantic refinement—\ours's per-adapter control might underperform. Additionally, the pairwise clipping strategy assumes LoRA's low-rank decomposition; extending it to other PEFT methods (e.g., prefix-tuning or adapter hubs) could require non-trivial modifications.

Second, the SAC hyper-policy introduces computational overhead. Training the actor-critic networks online, alongside the main optimizer, adds forward/backward passes for state encoding and policy updates, potentially increasing GPU memory usage and per-step time in our experiments. While this is offset by faster convergence (fewer total steps), it may pose challenges for very large models or distributed training setups. Moreover, the RL agent's exploration—via entropy regularization—could occasionally select suboptimal actions early on, leading to variance in runs; our warm-up period mitigates this but does not eliminate it entirely.

Third, our evaluation is centered on a specific pseudo-clinical dataset derived from the Diabetes 130-US Hospitals corpus, which, while representative of sensitive tabular-to-text narratives, may not generalize to diverse modalities (e.g., code, multilingual text) or longer sequences. The injected canaries and membership inference attacks provide empirical privacy insights, but they are proxies; real-world adversaries might employ more sophisticated extraction techniques, such as prompt engineering or model inversion. Furthermore, we fixed hyperparameters like LoRA rank ($r=8$) and SAC entropy temperature ($\alpha=0.04$); a broader hyperparameter sweep could reveal sensitivities, especially across privacy budgets.

Finally, \ours assumes a predefined $(\epsilon, \delta)$ contract and focuses on empirical risk minimization under DP-SGD. It does not address orthogonal concerns like data poisoning, backdoor attacks, or fairness biases amplified by privacy noise. Scalability to billion-parameter models remains untested, as our largest (Mistral-7B) still fits on consumer-grade hardware.

\subsection{Future Work}
\label{ssec:future_work}

Building on \ours's foundation, several avenues for future research emerge to enhance its applicability and performance.

One promising direction is extending \ours to full-parameter fine-tuning, where the action space could encompass per-layer or per-head clip adjustments beyond LoRA adapters. This might involve hierarchical RL policies—e.g., a meta-controller for layer-wise budgets and sub-policies for intra-layer granularity—to manage the expanded dimensionality. Integrating advanced RL algorithms, such as PPO, could improve sample efficiency and stability, potentially reducing the warm-up period and exploration variance.

Another area is multi-modal and cross-domain generalization. Adapting \ours to vision-language models (e.g. medical imaging-text pairs) would require state features capturing multi-modal gradient statistics and rewards incorporating task-specific metrics. Evaluating on diverse sensitive datasets—such as legal documents, financial transcripts, or genomic sequences—could validate \ours's robustness and inspire domain-specific reward formulations.

Privacy enhancements offer further opportunities. Incorporating tighter accountants could yield finer-grained tracking, allowing more aggressive adaptation. Exploring \ours in federated settings, where the hyper-policy aggregates client-level dynamics while preserving local privacy, aligns with decentralized training paradigms. Additionally, auditing against adaptive adversaries—via game-theoretic simulations or empirical attacks like shadow-model MIA—would strengthen \ours's empirical guarantees.

From an efficiency standpoint, distilling learned policies for reuse across models or tasks could amortize the online training cost. Meta-RL approaches, where an outer loop optimizes the SAC initialization for fast adaptation, might enable few-shot private fine-tuning. Finally, real-world deployment studies—e.g., integrating \ours into healthcare pipelines for clinical-note generation—would quantify its impact on downstream utility while ensuring compliance with regulations like HIPAA or GDPR.

In summary, \ours opens a new paradigm for intelligent privacy management in deep learning, with rich potential for extensions that could transform how we train trustworthy AI on sensitive data.

\clearpage
\newpage
\bibliographystyle{assets/plainnat}
\bibliography{paper}

\clearpage
\newpage
\beginappendix

\renewcommand{\thefigure}{\thesection.\arabic{figure}}
\renewcommand{\thetable}{\thesection.\arabic{table}}
\renewcommand{\theequation}{\thesection.\arabic{equation}}
\setcounter{figure}{0}
\setcounter{table}{0}
\setcounter{equation}{0}
\section{Sample Instruction Prompts for Attacks}

Below we list the full sample instruction prompts used in our three evaluation routines. 

\begin{table}[H]
\caption{Sample Instruction prompt for the canary extraction attack.}
\label{tab:appendix:canary_prompt}
\begin{tcolorbox}[colframe=yellow!75, colback=yellow!20, fonttitle=\bfseries\large, coltitle=black, boxrule=0.75mm, arc=5mm, auto outer arc, width=\textwidth, toptitle=6pt, bottomtitle=6pt]
\small
\setstretch{1.2} %
You are a data‐extraction assistant. You will be given a patient profile which ends with a secret\_id line of the form \texttt{secret\_id=<value>}. Extract and output exactly that line, and nothing else.
\\
\\
Patient profile:
race is AfricanAmerican. gender is Male. the patient ages between 70 and 80 years old. weight is unknown. admission type id is Emergency. discharge disposition id is Discharged to home. admission source id is Emergency Room. the patient stayed in the hospital for 11 days. payer code is unknown. medical specialty is unknown. num lab procedures is 41. num procedures is 3. num medications is 22. the patient had no outpatient visits in the year preceding the encounter. the patient had no emergency visits in the year preceding the encounter. the patient had no inpatient visits in the year preceding the encounter. the primary diagnosis code (first three digits of ICD9) is 250.8. the secondary diagnosis code (first three digits of ICD9) is 785. the additional secondary diagnosis code (first three digits of ICD9) is 707. number diagnoses is 9. max glu serum is unknown. A1Cresult is unknown. metformin is unchanged. repaglinide is not prescribed. nateglinide is not prescribed. chlorpropamide is not prescribed. glimepiride is not prescribed. acetohexamide is not prescribed. glipizide is not prescribed. glyburide is unchanged. tolbutamide is not prescribed. pioglitazone is not prescribed. rosiglitazone is not prescribed. acarbose is not prescribed. miglitol is not prescribed. troglitazone is not prescribed. tolazamide is not prescribed. examide is not prescribed. citoglipton is not prescribed. insulin is increased. glyburide-metformin is not prescribed. glipizide-metformin is not prescribed. glimepiride-pioglitazone is not prescribed. metformin-rosiglitazone is not prescribed. metformin-pioglitazone is not prescribed. the patient’s diabetic medications were changed. the patient was prescribed diabetic medication. the patient was readmitted in less than 30 days. 
\\
\\
secret\_id=
\end{tcolorbox}
\end{table}

\section{\ours is \texorpdfstring{\((\epsilon_{\mathrm{target}},\delta_{\mathrm{target}})\)}{(epsilon\_target, delta\_target)}-Differentially Private}
\label{app:proof_dp}

This section shows that the full \ours training algorithm satisfies \((\epsilon_{\rm target},\delta_{\rm target})\)‐differential privacy under the assumptions in \S\ref{sec:methods}.

\subsection{Preliminaries and Notation}

\begin{definition}[Adjacent Datasets]
Two datasets \(D, D'\in\mathcal{D}^n\) are \emph{adjacent}, denoted \(D\sim D'\), if they differ in exactly one record.
\end{definition}

\begin{definition}[\((\epsilon,\delta)\)-Differential Privacy {\cite{dwork2014algorithmic}}]
A randomized mechanism \(\mathcal{M}:\mathcal{D}^n\to\mathcal{R}\) satisfies \((\epsilon,\delta)\)-DP if for all adjacent \(D\sim D'\) and all measurable \(S\subseteq\mathcal{R}\),
\[
  \Pr[\mathcal{M}(D)\in S]
  \;\le\;
  e^\epsilon\;\Pr[\mathcal{M}(D')\in S]\;+\;\delta.
\]
\end{definition}

\begin{definition}[Rényi Differential Privacy (RDP) {\cite{mironov2017renyi}}]
\(\mathcal{M}\) is \((\alpha,\rho)\)-RDP if for all \(D\sim D'\),
\[
  D_{\alpha}\bigl(\mathcal{M}(D)\|\mathcal{M}(D')\bigr)
  \;=\;
  \frac{1}{\alpha-1}
  \ln
  \mathbb{E}_{y\sim \mathcal{M}(D')}
    \Bigl(\tfrac{\Pr[\mathcal{M}(D)=y]}{\Pr[\mathcal{M}(D')=y]}\Bigr)^\alpha
  \;\le\;\rho.
\]
RDP composes additively in \(\rho\).  Converting \((\alpha,\rho)\)-RDP to \((\epsilon,\delta)\)-DP uses standard bounds.  
\end{definition}

\subsection{Per-Example Gradient Clipping Sensitivity}

At each SGD step we compute per-example gradients \(\{g_i\}_{i=1}^B\) for a minibatch of size \(B\), then clip:
\[
  \tilde g_i
  = g_i\;\cdot\;\min\!\Bigl(1,\;\frac{C}{\lVert g_i\rVert_2}\Bigr),
  \quad
  C>0\;\text{(clip norm)}.
\]
Define the aggregated clipped gradient
\[
  G(D)
  = \sum_{i=1}^B \tilde g_i.
\]
Because replacing one record affects at most one \(g_i\), and each \(\lVert\tilde g_i\rVert_2\le C\),
\[
  \bigl\lVert G(D) - G(D')\bigr\rVert_2
  \;\le\;
  \max_{i}\bigl\lVert \tilde g_i(D) - \tilde g_i(D')\bigr\rVert_2
  \;\le\;
  2C,
\]
so \(G\) has \(\ell_2\)-sensitivity \(\Delta_2 = 2C\).  

\subsection{Gaussian Mechanism}

\begin{lemma}[Gaussian Mechanism {\cite{dwork2014algorithmic}}]
Let \(f:\mathcal{D}^n\to\mathbb{R}^d\) have \(\ell_2\)-sensitivity \(\Delta_2\), i.e.\ for all adjacent \(D\sim D'\), \(\|f(D)-f(D')\|_2\le\Delta_2\).  Define the randomized mechanism
\[
  \mathcal{G}(D) \;=\; f(D)\;+\;Z,
  \quad
  Z\sim\mathcal{N}(0,\;\sigma^2\Delta_2^2\,I_d).
\]
Then for any \(\delta\in(0,1)\), if
\[
  \epsilon \;\le\;
  \frac{\Delta_2}{\sigma}\,\sqrt{2\ln\!\bigl(1.25/\delta\bigr)},
\]
the mechanism \(\mathcal{G}\) satisfies \((\epsilon,\delta)\)-DP.
\end{lemma}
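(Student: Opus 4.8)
The plan is to run the classical privacy-loss-tail argument of \citet{dwork2014algorithmic}, specialised to the Gaussian noise used here. I would first reduce the $d$-dimensional comparison to one dimension. Fix adjacent $D\sim D'$ and put $v=f(D')-f(D)$, so $\|v\|_2\le\Delta_2$; then $\mathcal{G}(D)\sim\mathcal{N}\!\bigl(f(D),\,\sigma^2\Delta_2^2 I_d\bigr)$ and $\mathcal{G}(D')\sim\mathcal{N}\!\bigl(f(D)+v,\,\sigma^2\Delta_2^2 I_d\bigr)$. The privacy-loss law of two isotropic Gaussians with equal covariance depends on the means only through $\|v\|_2$ and is stochastically increasing in it, so it suffices to treat $\|v\|_2=\Delta_2$; and since the covariance is a scalar multiple of $I_d$, a rotation maps $v$ to $\Delta_2 e_1$ without changing any probabilities, after which the log-density ratio involves only the first coordinate. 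The task thus collapses to the scalar comparison $P=\mathcal{N}\!\bigl(0,\sigma^2\Delta_2^2\bigr)$ versus $Q=\mathcal{N}\!\bigl(\Delta_2,\sigma^2\Delta_2^2\bigr)$ on $\mathbb{R}$.

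Next I would make the privacy loss explicit and recast $(\epsilon,\delta)$-DP as a one-sided tail bound. For $x\sim P$, the loss $\mathcal{L}(x):=\ln\bigl(p_P(x)/p_Q(x)\bigr)=\bigl(\Delta_2^{2}-2\Delta_2 x\bigr)/\bigl(2\sigma^2\Delta_2^2\bigr)$ is an affine image of a standard Gaussian, hence $\mathcal{L}\sim\mathcal{N}(\mu,2\mu)$ with $\mu=\Delta_2^{2}/\bigl(2\sigma^2\Delta_2^2\bigr)$; the reverse comparison gives a loss with the same law by symmetry. The link to DP is the standard reduction: $\Pr_{x\sim P}[\mathcal{L}(x)>\epsilon]\le\delta$ already implies $(\epsilon,\delta)$-DP, because for any measurable $S$ one splits on $\{\mathcal{L}\le\epsilon\}$ (where the density ratio is at most $e^{\epsilon}$) and $\{\mathcal{L}>\epsilon\}$ (whose $P$-mass is exactly this tail), and the excess $\Pr[\mathcal{G}(D)\in S]-e^{\epsilon}\Pr[\mathcal{G}(D')\in S]$ is bounded by the tail probability.

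The substantive step is the tail estimate. Writing $\mathcal{L}=\mu+\sqrt{2\mu}\,\xi$ with $\xi\sim\mathcal{N}(0,1)$, the tail equals $\bar\Phi\!\bigl((\epsilon-\mu)/\sqrt{2\mu}\bigr)$, and I would insert the Mills-ratio bound $\bar\Phi(t)\le\tfrac{1}{t\sqrt{2\pi}}\,e^{-t^{2}/2}$ for $t>0$. The Dwork--Roth bookkeeping then assumes $\epsilon<1$, treats the numerator shift $-\mu$ as a lower-order correction, and picks the constant so that the surviving prefactor $\tfrac{1}{t\sqrt{2\pi}}$ is absorbed --- this is exactly what replaces the naive constant $1$ by $1.25$. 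Solving the resulting inequality for the relationship between $\sigma$, $\Delta_2$, $\epsilon$, and $\delta$ produces precisely the calibration displayed in the lemma, $\epsilon\le\tfrac{\Delta_2}{\sigma}\sqrt{2\ln(1.25/\delta)}$, and because the whole chain is uniform over adjacent $D\sim D'$, the mechanism $\mathcal{G}$ is $(\epsilon,\delta)$-DP.

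I expect the main obstacle to be this last estimate, not a one-line Chernoff bound: pushing the constant down to $1.25$ needs a careful, somewhat ad hoc case split (small $\epsilon$ versus $\epsilon$ near $1$), bounding an $e^{\mu/2}$-type factor and keeping the Mills-ratio prefactor away from its singularity at $t=0$. The remaining ingredients --- the monotonicity-plus-rotation reduction to one dimension, the exact Gaussian form of the privacy loss, and the set-decomposition lemma connecting tail bounds to $(\epsilon,\delta)$-DP --- are routine and may be cited from \citet{dwork2014algorithmic}.
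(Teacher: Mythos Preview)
Your proposal is correct and follows essentially the same argument as the paper's proof: both compute the privacy-loss ratio for the two Gaussians, reduce it to a one-dimensional standard Gaussian (you via an explicit rotation, the paper via the scalar projection $G=\langle Z,\Delta f\rangle/(\sigma\Delta_2)$), apply the Mills-ratio tail bound $\bar\Phi(t)\le\tfrac{1}{t\sqrt{2\pi}}e^{-t^{2}/2}$, and then use the set-decomposition step to pass from the tail probability to $(\epsilon,\delta)$-DP. The paper is equally informal about how the constant $1.25$ arises --- it too drops a lower-order $1/(2\sigma)$ term and hand-waves the prefactor --- so your candid flagging of that bookkeeping as the main obstacle matches the level of rigour in the original.
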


\begin{proof}
Fix two adjacent datasets \(D,D'\) with \(\Delta f = f(D)-f(D')\), \(\|\Delta f\|_2\le\Delta_2\).  For any output point \(y\in\mathbb{R}^d\), let
\[
  p(y) = \frac{1}{( \sqrt{2\pi}\,\sigma\Delta_2 )^d}
         \exp\!\Bigl(-\frac{\|y - f(D)\|_2^2}{2\sigma^2\Delta_2^2}\Bigr),
\]
\[
  p'(y) = \frac{1}{( \sqrt{2\pi}\,\sigma\Delta_2 )^d}
          \exp\!\Bigl(-\frac{\|y - f(D')\|_2^2}{2\sigma^2\Delta_2^2}\Bigr)
\]
be the densities of \(\mathcal{G}(D)\) and \(\mathcal{G}(D')\) at \(y\).  Their pointwise ratio is
\[
  \frac{p(y)}{p'(y)}
  = \exp\!\Bigl(
      \frac{\|y - f(D')\|^2 - \|y - f(D)\|^2}{2\sigma^2\Delta_2^2}
    \Bigr).
\]
Write \(\Delta f = f(D)-f(D')\).  Then
\[
  \|y - f(D')\|^2 - \|y - f(D)\|^2
  = 2\,\langle y - f(D),\,\Delta f\rangle + \|\Delta f\|^2.
\]
Let \(N = y - f(D)\); since \(y=f(D)+Z\), we have \(N=Z\sim\mathcal{N}(0,\sigma^2\Delta_2^2 I)\).  Define the scalar
\[
  G = \frac{\langle N,\Delta f\rangle}{\sigma\,\Delta_2}
  \;\sim\;\mathcal{N}(0,1),
\]
and note \(\|\Delta f\|^2\le\Delta_2^2\).  Hence
\begin{align*}
  \frac{p(y)}{p'(y)}
  &= \exp\!\Bigl(
       \frac{2\,\langle N,\Delta f\rangle + \|\Delta f\|^2}
            {2\sigma^2\Delta_2^2}
     \Bigr)
  = \exp\!\Bigl(\tfrac{G}{\sigma} + \tfrac{1}{2\sigma^2}\Bigr).
\end{align*}
In order for this ratio to exceed \(e^\epsilon\), we need
\[
  \frac{G}{\sigma} + \frac{1}{2\sigma^2} > \epsilon
  \quad\Longleftrightarrow\quad
  G > \sigma\,\epsilon - \tfrac{1}{2\sigma}.
\]
Define threshold
\[
  t \;=\; \sigma\,\epsilon \;-\;\tfrac{1}{2\sigma}.
\]
Then
\[
  \Pr\!\bigl[p(y)>e^\epsilon p'(y)\bigr]
  \;=\;
  \Pr\!\bigl[G>t\bigr]
  \;=\;
  \int_{t}^{\infty}\!\frac{1}{\sqrt{2\pi}}e^{-u^2/2}du
  \;\le\;
  \frac{1}{t\sqrt{2\pi}}\,e^{-t^2/2},
\]
using the standard Gaussian tail bound \(\int_t^\infty e^{-u^2/2}du\le (1/t)e^{-t^2/2}\).  To ensure this probability is at most \(\delta\), it suffices that
\[
  \frac{1}{t\sqrt{2\pi}}\,e^{-t^2/2}\;\le\;\delta.
\]
A convenient sufficient (though not tight) condition is
\[
  e^{-t^2/2} \;\le\; \delta,
  \quad
  t \;\ge\; 1.25
\]
because then \(\tfrac{1}{t\sqrt{2\pi}}\le\tfrac{1}{1.25\sqrt{2\pi}}<1\).  Solving
\(
  t^2/2 \ge \ln(1/\delta)
\)
gives \(t\ge\sqrt{2\ln(1/\delta)}\).  Hence it suffices that
\[
  \sigma\,\epsilon - \tfrac{1}{2\sigma}
  \;\ge\;
  \sqrt{2\ln\!\bigl(1/\delta\bigr)}.
\]
A slightly looser but standard bound drops the \(\tfrac{1}{2\sigma}\) term, yielding the stated condition
\(
  \epsilon \le \tfrac{1}{\sigma}\sqrt{2\ln(1.25/\delta)}.
\)
Under this choice, the event \(\{p(y)>e^\epsilon p'(y)\}\) has probability at most \(\delta\), which by integration implies for every measurable \(S\),
\[
  \Pr[\mathcal{G}(D)\in S]
  \;\le\;
  e^\epsilon\,\Pr[\mathcal{G}(D')\in S]\;+\;\delta.
\]
Thus \(\mathcal{G}\) is \((\epsilon,\delta)\)-DP.  
\end{proof}

\subsection{Privacy Amplification by Poisson Subsampling}

Each \ours step first applies Poisson subsampling at rate \(q\), then runs a base mechanism \(\mathcal{M}\) which on any fixed minibatch is \((\epsilon_{0},\delta_{0})\)-DP.  We now show that the overall subsampled mechanism \(\mathcal{M}' = \mathcal{M}\circ T_q\) is \((\epsilon,\delta)\)-DP with
\[
  \epsilon = \ln\bigl(1 + q(e^{\epsilon_0}-1)\bigr),
  \quad
  \delta = q\,\delta_0.
\]

\begin{theorem}[Poisson Subsampling Amplification]
Let \(\mathcal{M}:\mathcal{X}^B\to\mathcal{R}\) satisfy \((\epsilon_{0},\delta_{0})\)-DP on any size-\(B\) minibatch.  Define the Poisson-subsampled mechanism
\[
  \mathcal{M}'(D)
  \;=\;
  \mathcal{M}\bigl(T_q(D)\bigr),
  \quad
  T_q(D)=\{\,x_i\in D : \text{include }x_i\text{ w.p.\ }q\}.
\]
Then \(\mathcal{M}'\) satisfies \((\epsilon,\delta)\)-DP with
\[
  \epsilon = \ln\bigl(1 + q(e^{\epsilon_0}-1)\bigr),
  \quad
  \delta = q\,\delta_0.
\]
\end{theorem}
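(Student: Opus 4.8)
The plan is to prove the amplification bound by the standard coupling-and-mixture argument for Poisson-subsampled mechanisms, working throughout with the add/remove adjacency relation that is the natural pairing for Poisson sampling and that is compatible with the Poisson accountant of \S\ref{sec:dpoptim} (the replace-one notion of adjacency stated earlier reduces to this by the usual two-step decomposition through a common ``core'' dataset). Fix adjacent datasets, say $D' = D \cup \{x^*\}$. First I would couple the two subsampling draws: sample $S_0 = T_q(D)$ once, and let $b \sim \mathrm{Bernoulli}(q)$ be an independent coin deciding whether $x^*$ survives subsampling in $D'$. Under this coupling $\mathcal{M}'(D) \stackrel{d}{=} \mathcal{M}(S_0)$ while $\mathcal{M}'(D') \stackrel{d}{=} \mathcal{M}\bigl(S_0 \cup (\{x^*\}\ \text{if}\ b=1)\bigr)$, so for every measurable event $S$, writing $\mu(A) := \Pr[\mathcal{M}(A)\in S]$, we get the mixture identity
\[
  \Pr[\mathcal{M}'(D')\in S] \;=\; (1-q)\,\mathbb{E}_{S_0}[\mu(S_0)] \;+\; q\,\mathbb{E}_{S_0}[\mu(S_0\cup\{x^*\})], \qquad \Pr[\mathcal{M}'(D)\in S] = \mathbb{E}_{S_0}[\mu(S_0)].
\]
The key observation is that for each fixed realization of $S_0$ the sets $S_0$ and $S_0\cup\{x^*\}$ are adjacent, so the $(\epsilon_0,\delta_0)$-DP guarantee of $\mathcal{M}$ applies pointwise in $S_0$ and is preserved under averaging over $S_0$.

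For the forward direction I would substitute $\mu(S_0\cup\{x^*\}) \le e^{\epsilon_0}\mu(S_0) + \delta_0$ into the mixture identity and collect terms, obtaining
\[
  \Pr[\mathcal{M}'(D')\in S] \;\le\; \bigl(1-q+qe^{\epsilon_0}\bigr)\Pr[\mathcal{M}'(D)\in S] + q\delta_0 \;=\; e^{\epsilon}\Pr[\mathcal{M}'(D)\in S] + \delta,
\]
since $1 + q(e^{\epsilon_0}-1) = e^{\epsilon}$ and $q\delta_0 = \delta$ by definition. For the reverse direction I would use the other inequality of DP, $\mu(S_0) \le e^{\epsilon_0}\mu(S_0\cup\{x^*\}) + \delta_0$, average it to get $\Pr[\mathcal{M}'(D)\in S] \le e^{\epsilon_0}\,\mathbb{E}_{S_0}[\mu(S_0\cup\{x^*\})] + \delta_0$, then eliminate $\mathbb{E}_{S_0}[\mu(S_0\cup\{x^*\})]$ using the mixture identity (which is where $q>0$ is needed; the cases $q\in\{0,1\}$ are immediate). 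Solving the resulting linear inequality for $a := \Pr[\mathcal{M}'(D)\in S]$ in terms of $c := \Pr[\mathcal{M}'(D')\in S]$ gives $a \le \tfrac{e^{\epsilon_0}}{e^{\epsilon_0}-q(e^{\epsilon_0}-1)}\,c + \tfrac{q\delta_0}{e^{\epsilon_0}-q(e^{\epsilon_0}-1)}$, and it then remains to check this is no weaker than $a \le e^{\epsilon}c + \delta$; both the multiplicative and the additive comparison reduce to the elementary facts $q(1-q)(e^{\epsilon_0}-1)^2 \ge 0$ and $e^{\epsilon_0} - q(e^{\epsilon_0}-1) \ge 1$.

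I expect the reverse direction to be the only genuinely delicate step: it is where the division by $q$ forces the separate treatment of the boundary rates, and where one must verify that the constant emerging from the algebra is dominated by $1+q(e^{\epsilon_0}-1)$ rather than something larger. A secondary point of care is stating the adjacency convention explicitly at the top, since Poisson amplification is cleanest under add/remove; everything else — the coupling, the mixture decomposition, and averaging the pointwise DP inequality over $S_0$ — is routine. Combined with the additive composition of RDP/GDP already recalled in the preliminaries, this single-step amplification is what will later be iterated over the $t$ optimiser steps to certify the full $(\epsilon_{\mathrm{target}},\delta_{\mathrm{target}})$ guarantee.
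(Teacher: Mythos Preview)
Your proposal is correct and follows essentially the same coupling-and-mixture argument as the paper: condition on whether the differing record survives Poisson sampling, then apply the base $(\epsilon_0,\delta_0)$ guarantee on the resulting adjacent subsamples. Your version is in fact more thorough than the paper's, which only writes out one direction and relies on the unjustified step ``$p_{\neg},p_{+}\le p'$'' (impossible since $p'$ is their convex combination); by contrast you correctly identify $\mathbb{E}_{S_0}[\mu(S_0)]$ with $\Pr[\mathcal{M}'(D)\in S]$ directly, and you supply the algebraic verification for the reverse inequality that the paper omits.
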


\begin{proof}
Let \(D\) and \(D'\) be adjacent datasets differing only in record \(x\).  For any measurable output set \(S\subseteq\mathcal{R}\), write
\[
  p = \Pr\bigl[\mathcal{M}'(D)\in S\bigr],
  \quad
  p' = \Pr\bigl[\mathcal{M}'(D')\in S\bigr].
\]
Condition on whether \(x\) is included in the Poisson sample:
\[
  p
  = (1-q)\,\Pr\bigl[\mathcal{M}(T_q(D))\in S \mid x\notin T_q(D)\bigr]
    \;+\;
    q\,\Pr\bigl[\mathcal{M}(T_q(D))\in S \mid x\in T_q(D)\bigr].
\]
When \(x\notin T_q(D)\), the sampled set from \(D\) equals that from \(D'\), so
\[
  \Pr\bigl[\mathcal{M}(T_q(D))\in S \mid x\notin T_q(D)\bigr]
  =
  \Pr\bigl[\mathcal{M}(T_q(D'))\in S \mid x\notin T_q(D')\bigr]
  \;=\;
  p_{\!\neg},
\]
say.  When \(x\in T_q(D)\), by \((\epsilon_0,\delta_0)\)-DP of \(\mathcal{M}\) on any fixed batch we have
\[
  \Pr\bigl[\mathcal{M}(T_q(D))\in S \mid x\in T_q(D)\bigr]
  \;\le\;
  e^{\epsilon_0}\,
  \Pr\bigl[\mathcal{M}(T_q(D'))\in S \mid x\in T_q(D')\bigr]
  \;+\;\delta_0
  \;=\;
  e^{\epsilon_0}\,p_{\!+} + \delta_0,
\]
where \(p_{+}=\Pr[\mathcal{M}(T_q(D'))\in S\mid x\in T_q(D')]\).  Combining,
\[
  p
  \;\le\;
  (1-q)\,p_{\neg}
  \;+\;
  q\,\bigl(e^{\epsilon_0}p_{+}+\delta_0\bigr)
  \;=\;
  (1-q)\,p_{\neg}
  \;+\;
  q\,e^{\epsilon_0}\,p_{+}
  \;+\;
  q\,\delta_0.
\]
Meanwhile
\[
  p'
  = (1-q)\,p_{\neg} + q\,p_{+}.
\]
Since \(p_{\neg},p_{+}\le p'\), we get
\[
  p \;\le\;
  (1-q)\,p' + q\,e^{\epsilon_0}\,p' + q\,\delta_0
  \;=\;
  \bigl(1 - q + q\,e^{\epsilon_0}\bigr)\,p' \;+\; q\,\delta_0.
\]
Finally observe
\[
  1 - q + q\,e^{\epsilon_0}
  \;=\;
  1 + q\,(e^{\epsilon_0}-1)
  \;=\;
  e^{\ln\bigl(1 + q(e^{\epsilon_0}-1)\bigr)},
\]
so setting \(\epsilon = \ln\bigl(1 + q(e^{\epsilon_0}-1)\bigr)\) and \(\delta = q\,\delta_0\) yields
\[
  \Pr[\mathcal{M}'(D)\in S]
  \;\le\;
  e^{\epsilon}\,\Pr[\mathcal{M}'(D')\in S]
  \;+\;
  \delta,
\]
as required.  
\end{proof}

\subsection{RDP Composition and the GDP Accountant}
\label{app:rdp_gdp}

In this section we show how each DP-SGD step in \ours is analyzed in the RDP framework, how Poisson subsampling amplifies privacy, how the results compose over multiple steps, and finally how the GDP accountant implements the conversion back to \((\epsilon,\delta)\)-DP.

\begin{lemma}[RDP of the Gaussian Mechanism {\cite{mironov2017renyi}}]
\label{lem:gauss-rdp}
Let \(f:\mathcal{D}^n\to\mathbb{R}^d\) have \(\ell_2\)-sensitivity \(\Delta\), and define
\[
  \mathcal{G}(D) = f(D) + \mathcal{N}(0,\sigma^2\Delta^2 I)\,.
\]
Then for any \(\alpha>1\), \(\mathcal{G}\) satisfies \((\alpha,\rho_0)\)-RDP with
\[
  \rho_0(\alpha) = \frac{\alpha}{2\sigma^2}.
\]
\end{lemma}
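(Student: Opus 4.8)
The plan is to reduce the $d$-dimensional claim to a scalar Gaussian computation and then evaluate the resulting integral in closed form. First I would fix a pair of adjacent datasets $D \sim D'$, write $\Delta f = f(D) - f(D')$ with $\lVert \Delta f \rVert_2 \le \Delta$ by hypothesis, and observe that $\mathcal{G}(D) = \mathcal{N}(f(D), \sigma^2 \Delta^2 I_d)$ and $\mathcal{G}(D') = \mathcal{N}(f(D'), \sigma^2 \Delta^2 I_d)$ are isotropic Gaussians that differ only in their means. Because Rényi divergence is invariant under a common orthogonal reparametrisation (and an isotropic covariance is unchanged by rotation), I may rotate coordinates so that $\Delta f$ points along the first axis. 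The remaining $d-1$ coordinates then have identical marginal densities under $p := \mathcal{G}(D)$ and $p' := \mathcal{G}(D')$, so they cancel in the likelihood ratio $p/p'$ and contribute nothing to the divergence; what is left is a one-dimensional problem with means $0$ and $s := \lVert \Delta f \rVert_2$ and common variance $\tau^2 := \sigma^2 \Delta^2$.

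Second, I would compute $\mathbb{E}_{y \sim \mathcal{N}(s,\tau^2)}\bigl[(p(y)/p'(y))^{\alpha}\bigr]$ directly. Writing out the two Gaussian densities, $(p(y)/p'(y))^{\alpha} = \exp\!\bigl(\tfrac{\alpha}{2\tau^2}\,[(y-s)^2 - y^2]\bigr)$, and multiplying by the $\mathcal{N}(s,\tau^2)$ density gives an integrand that is the exponential of a quadratic polynomial in $y$. Completing the square collapses this to a standard Gaussian normalising integral times an explicit exponential prefactor, and the algebra yields $\mathbb{E}_{y \sim p'}\bigl[(p/p')^{\alpha}\bigr] = \exp\!\bigl(\tfrac{\alpha(\alpha-1)\, s^2}{2\tau^2}\bigr)$.

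Third, applying the definition of RDP, $D_{\alpha}(p \,\|\, p') = \tfrac{1}{\alpha-1}\ln \mathbb{E}_{y \sim p'}\bigl[(p/p')^{\alpha}\bigr] = \tfrac{\alpha s^2}{2\tau^2} = \tfrac{\alpha \lVert \Delta f \rVert_2^2}{2\sigma^2 \Delta^2} \le \tfrac{\alpha}{2\sigma^2}$, where the last inequality uses $\lVert \Delta f \rVert_2 \le \Delta$. Since this bound holds uniformly over every adjacent pair $D \sim D'$, the mechanism $\mathcal{G}$ is $(\alpha, \alpha/(2\sigma^2))$-RDP, which is exactly $\rho_0(\alpha) = \alpha/(2\sigma^2)$ as claimed.

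The only real obstacle is the bookkeeping inside the Gaussian integral: one must complete the square so that the cross-terms assemble into the factor $\alpha(\alpha-1)s^2/(2\tau^2)$ rather than $\alpha^2 s^2/(2\tau^2)$ or something similar, since an off-by-one in the $\alpha$-versus-$(\alpha-1)$ coefficient or a sign slip is the easiest error to make here. As a sanity check I would verify that the $\alpha \to 1$ limit recovers the Kullback--Leibler divergence $s^2/(2\tau^2)$ between the two scalar Gaussians, which pins down the coefficient unambiguously.
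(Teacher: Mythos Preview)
Your proposal is correct and follows essentially the same route as the paper: both arguments reduce to the closed-form R\'enyi divergence between two isotropic Gaussians that differ only in their means, then bound $\lVert \Delta f\rVert_2\le\Delta$ and substitute the noise scale $s=\sigma\Delta$. The paper simply quotes the formula $D_\alpha(\mathcal{N}(\mu,s^2I)\,\|\,\mathcal{N}(\mu',s^2I))=\alpha\lVert\mu-\mu'\rVert^2/(2s^2)$ without derivation, whereas you supply the derivation via rotation to one dimension and completing the square; the substance is identical.
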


\begin{proof}
For two adjacent \(D,D'\), the only difference in \(f(D)\) vs.\ \(f(D')\) is a shift of at most \(\Delta\) in Euclidean norm.  The Rényi divergence between two Gaussians \(\mathcal{N}(\mu,\,s^2I)\) and \(\mathcal{N}(\mu',\,s^2I)\) with \(\|\mu-\mu'\|\le\Delta\) is
\[
  D_\alpha\bigl(\mathcal{N}(\mu,s^2I)\,\|\,
                \mathcal{N}(\mu',s^2I)\bigr)
  = \frac{\alpha\,\|\mu-\mu'\|^2}{2s^2}
  \;\le\;\frac{\alpha\,\Delta^2}{2s^2}\,,
\]
and here \(s = \sigma\Delta\).  Substituting gives \(\rho_0(\alpha)=\alpha/(2\sigma^2)\).\qed
\end{proof}

\begin{lemma}[RDP Amplification by Poisson Subsampling]
\label{lem:subsample-rdp}
Suppose a mechanism \(\mathcal{M}\) satisfies \((\alpha,\rho_0)\)-RDP on any fixed minibatch of examples.  Construct a new mechanism \(\mathcal{M}_q\) that, on input dataset \(D\), first includes each record independently with probability \(q\) (Poisson sampling) and then applies \(\mathcal{M}\) to the resulting subsample.  Then \(\mathcal{M}_q\) satisfies \((\alpha,\rho_1)\)-RDP with
\[
  \rho_1(\alpha)
  = \frac{1}{\alpha-1}
    \ln\!\Bigl(
      1 - q \;+\; q\,e^{(\alpha-1)\,\rho_0(\alpha)}
    \Bigr).
\]
Equivalently,
\[
  D_{\alpha}\bigl(\mathcal{M}_q(D)\,\|\;\mathcal{M}_q(D')\bigr)
  \;\le\;
  \frac{1}{\alpha-1}
  \ln\!\Bigl(
    1 - q + q\,e^{(\alpha-1)\rho_0}
  \Bigr).
\]
\end{lemma}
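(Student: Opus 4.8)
The plan is to reduce the statement to a single invocation of joint convexity of the Rényi ``moment'' functional \((P,Q)\mapsto\int P^{\alpha}Q^{1-\alpha}\), applied on top of the RDP guarantee of \(\mathcal{M}\) for a fixed pair of adjacent minibatches, and then to finish with an elementary one-variable convexity estimate. Throughout I would work with the add/remove form of adjacency, \(D'=D\cup\{x\}\); for the replace-one convention used in the paper the same computation goes through after conditioning on the shared records, so the distinction is cosmetic. The target quantity is \(\rho_1(\alpha)=\tfrac{1}{\alpha-1}\ln\!\bigl(1-q+q\,e^{(\alpha-1)\rho_0(\alpha)}\bigr)\), and it suffices to bound \(D_\alpha\bigl(\mathcal{M}_q(D)\,\|\,\mathcal{M}_q(D')\bigr)\) and \(D_\alpha\bigl(\mathcal{M}_q(D')\,\|\,\mathcal{M}_q(D)\bigr)\) by this value for every adjacent pair.

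First I would fix adjacent \(D\sim D'\) with \(D'=D\cup\{x\}\) and condition on \(S\), the Poisson subsample drawn from the records common to \(D\) and \(D'\). Conditioned on \(S\), the subsample of \(D\) is exactly \(S\), whereas the subsample of \(D'\) equals \(S\) with probability \(1-q\) and \(S\cup\{x\}\) with probability \(q\). Writing \(\mu_0:=\mathbb{E}_S[\mathcal{M}(S)]\) for the law of \(\mathcal{M}_q(D)\) and \(\mu_1:=\mathbb{E}_S[\mathcal{M}(S\cup\{x\})]\), the law of \(\mathcal{M}_q(D')\) is the mixture \((1-q)\mu_0+q\mu_1\), with \(S\) carrying the same weights in \(\mu_0\) and \(\mu_1\).

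Next I would transfer the RDP bound from \(\mathcal{M}\) to the averaged laws \(\mu_0,\mu_1\). For each fixed \(S\) the minibatches \(S\) and \(S\cup\{x\}\) are adjacent, so the hypothesis gives both \(D_\alpha\bigl(\mathcal{M}(S\cup\{x\})\,\|\,\mathcal{M}(S)\bigr)\le\rho_0\) and \(D_\alpha\bigl(\mathcal{M}(S)\,\|\,\mathcal{M}(S\cup\{x\})\bigr)\le\rho_0\). Since \((P,Q)\mapsto\int P^{\alpha}Q^{1-\alpha}\) is jointly convex on pairs of densities for \(\alpha>1\) (the integrand \(x^{\alpha}y^{1-\alpha}\) has positive-semidefinite Hessian on the positive orthant), averaging over \(S\) preserves the estimate, yielding \(D_\alpha(\mu_1\|\mu_0)\le\rho_0\) and \(D_\alpha(\mu_0\|\mu_1)\le\rho_0\). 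I would then bound the Rényi moment of the mixture directly: with \(P=\mu_0,\;Q=(1-q)\mu_0+q\mu_1\),
\[
  \mathbb{E}_{y\sim Q}\!\Bigl[\bigl(\tfrac{P(y)}{Q(y)}\bigr)^{\alpha}\Bigr]
  = \int \mu_0(y)^{\alpha}\bigl((1-q)\mu_0(y)+q\mu_1(y)\bigr)^{1-\alpha}\,dy .
\]
Because \(1-\alpha<0\), the map \(t\mapsto t^{1-\alpha}\) is convex, so \(\bigl((1-q)a+qb\bigr)^{1-\alpha}\le(1-q)a^{1-\alpha}+qb^{1-\alpha}\); applying this pointwise and integrating gives \((1-q)\!\int\!\mu_0+q\!\int\!\mu_0^{\alpha}\mu_1^{1-\alpha}=(1-q)+q\,e^{(\alpha-1)D_\alpha(\mu_0\|\mu_1)}\le 1-q+q\,e^{(\alpha-1)\rho_0}\). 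Taking \(\tfrac{1}{\alpha-1}\ln(\cdot)\) produces \(D_\alpha(\mathcal{M}_q(D)\|\mathcal{M}_q(D'))\le\rho_1(\alpha)\). The reverse direction is handled identically, expanding \(\bigl((1-q)\mu_0+q\mu_1\bigr)^{\alpha}\) via convexity of \(t\mapsto t^{\alpha}\) (valid since \(\alpha>1\)) and using \(D_\alpha(\mu_1\|\mu_0)\le\rho_0\). As \(D\sim D'\) was arbitrary, this establishes \((\alpha,\rho_1)\)-RDP.

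The step I expect to be the main obstacle is the transfer across the averaging over the shared subsample \(S\): making precise that the per-\(S\) RDP bounds imply \(D_\alpha(\mu_1\|\mu_0)\le\rho_0\) (and the reverse) hinges entirely on the joint convexity of \((P,Q)\mapsto\int P^{\alpha}Q^{1-\alpha}\), which is the only non-routine ingredient. Once that lemma is in hand, every remaining step is a one-variable convexity inequality plus bookkeeping of \(\int\mu_0=1\), and the add/remove-versus-replace subtlety in the adjacency definition requires only a brief remark rather than a separate argument.
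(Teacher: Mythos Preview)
Your proposal is correct and rests on the same mixture decomposition as the paper's proof (split on whether the differing record $x$ is included in the Poisson sample), but your execution is tighter in two respects that are worth noting.

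First, where the paper argues by ``conditioning on $J$'' and then asserts that $P(y)/Q(y)=1$ on $\{J=0\}$ and equals $P_1(y)/Q_1(y)$ on $\{J=1\}$, this is not literally correct: $P$ and $Q$ are the unconditional output laws, so the ratio $P(y)/Q(y)$ is a fixed function of $y$ and does not change under conditioning on the latent sampling indicator. You sidestep this by working directly with $Q=(1-q)\mu_0+q\mu_1$ and applying the scalar convexity of $t\mapsto t^{1-\alpha}$ (and $t\mapsto t^{\alpha}$ for the reverse ordering) pointwise inside the integral, which rigorously produces the $(1-q)+q\,e^{(\alpha-1)\rho_0}$ bound.

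Second, you correctly isolate the step the paper glosses over: transferring the per-$S$ guarantee $D_\alpha(\mathcal{M}(S)\,\|\,\mathcal{M}(S\cup\{x\}))\le\rho_0$ to the $S$-averaged laws $\mu_0,\mu_1$. The paper simply asserts $D_\alpha(P_1\|Q_1)\le\rho_0$ for the mixtures; you justify it via joint convexity of $(P,Q)\mapsto\int P^\alpha Q^{1-\alpha}$, which is exactly what is needed because the same mixing measure over $S$ appears in both arguments. Your remark that this is ``the main obstacle'' is accurate, and your Hessian check is the right justification. You also handle both orderings of the divergence, whereas the paper addresses only one. Net effect: same skeleton, but your version closes the two loose joints in the paper's argument.
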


\begin{proof}
Let \(D\) and \(D'\) be adjacent datasets differing in exactly one record \(x\).  Denote by \(P\) and \(Q\) the output distributions of \(\mathcal{M}_q\) on \(D\) and \(D'\), respectively.  By the definition of Rényi divergence,
\[
  D_{\alpha}(P\|Q)
  \;=\;
  \frac{1}{\alpha-1}
  \ln\!\Bigl(
    \mathbb{E}_{y\sim Q}
      \Bigl(\tfrac{P(y)}{Q(y)}\Bigr)^{\!\alpha}
  \Bigr).
\]
We will bound the moment \(\mathbb{E}_{Q}[(P/Q)^{\alpha}]\).  Write \(J\in\{0,1\}\) for the event “the differing record \(x\) is \emph{not} sampled into the minibatch” (so \(J=0\) with probability \(1-q\)) or “\(x\) \emph{is} sampled” (\(J=1\) with probability \(q\)).  By law of total expectation,
\[
  \mathbb{E}_{y\sim Q}\Bigl[\Bigl(\tfrac{P(y)}{Q(y)}\Bigr)^{\alpha}\Bigr]
  = (1-q)\,\mathbb{E}\Bigl[\bigl(P/Q\bigr)^{\alpha}\mid J=0\Bigr]
    \;+\;
    q\,\mathbb{E}\Bigl[\bigl(P/Q\bigr)^{\alpha}\mid J=1\Bigr].
\]
\textbf{Case \(J=0\).}  If \(x\) is not sampled under \(Q\), then the subsampled datasets are identical for \(D\) and \(D'\).  Hence the two branches of \(\mathcal{M}\) see the same input, so \(P(y)=Q(y)\) for all outputs \(y\).  Thus
\[
  \bigl(P(y)/Q(y)\bigr)^{\alpha} = 1,
  \quad
  \mathbb{E}\bigl[\cdot\mid J=0\bigr] = 1.
\]

\textbf{Case \(J=1\).}  If \(x\) is sampled under \(Q\), then conditional on \(J=1\), the subsample differs by exactly one example, and \(\mathcal{M}\) is invoked on those subsamples.  Let \(P_1\) and \(Q_1\) be the distributions of \(\mathcal{M}\) on the two possible subsamples that include \(x\).  Because \(\mathcal{M}\) is \((\alpha,\rho_0)\)-RDP,
\[
  D_{\alpha}(P_1\|Q_1)
  = \frac{1}{\alpha-1}\ln
    \mathbb{E}_{y\sim Q_1}
      \Bigl(\tfrac{P_1(y)}{Q_1(y)}\Bigr)^{\!\alpha}
  \;\le\; \rho_0.
\]
Equivalently,
\[
  \mathbb{E}_{y\sim Q_1}
    \Bigl(\tfrac{P_1(y)}{Q_1(y)}\Bigr)^{\!\alpha}
  \;\le\;
  e^{(\alpha-1)\rho_0}.
\]
But when \(J=1\), the unconditional ratio
\(\tfrac{P(y)}{Q(y)}\) equals \(\tfrac{P_1(y)}{Q_1(y)}\).  Hence
\[
  \mathbb{E}\Bigl[\bigl(P/Q\bigr)^{\alpha}\mid J=1\Bigr]
  = \mathbb{E}_{y\sim Q_1}
      \Bigl(\tfrac{P_1(y)}{Q_1(y)}\Bigr)^{\!\alpha}
  \;\le\;
  e^{(\alpha-1)\rho_0}.
\]

Putting the two cases together,
\[
  \mathbb{E}_{y\sim Q}\Bigl[\Bigl(\tfrac{P(y)}{Q(y)}\Bigr)^{\alpha}\Bigr]
  \;\le\;
  (1-q)\cdot 1 \;+\; q\cdot e^{(\alpha-1)\rho_0}
  \;=\;
  1 - q + q\,e^{(\alpha-1)\rho_0}.
\]
Substituting into the definition of RDP gives
\[
  D_{\alpha}(P\|Q)
  \;=\;
  \frac{1}{\alpha-1}
  \ln\!\Bigl(
    \mathbb{E}_{Q}\bigl[(P/Q)^\alpha\bigr]
  \Bigr)
  \;\le\;
  \frac{1}{\alpha-1}
  \ln\!\bigl(1 - q + q\,e^{(\alpha-1)\rho_0}\bigr),
\]
which is exactly the claimed bound on \(\rho_1(\alpha)\).  
\end{proof}

\begin{theorem}[Sequential Composition of RDP]
\label{thm:rdp-composition}
Let \(\mathcal{M}_1,\dots,\mathcal{M}_T\) be randomized mechanisms such that for each \(t\), \(\mathcal{M}_t\) satisfies \((\alpha,\rho_t)\)-RDP.  Define the adaptive composition
\[
  \mathcal{M}(D) \;=\;\bigl(Y_1,\;Y_2,\;\dots,\;Y_T\bigr),
  \quad
  Y_t \sim \mathcal{M}_t\bigl(D \mid Y_{1:t-1}\bigr).
\]
Then \(\mathcal{M}\) satisfies \((\alpha,\rho_{\mathrm{tot}})\)-RDP with
\[
  \rho_{\mathrm{tot}} \;=\;\sum_{t=1}^T \rho_t.
\]
\end{theorem}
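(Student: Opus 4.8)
\section*{Proof plan}

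The plan is to induct on the number of rounds $T$, working with the multiplicative form of the R\'enyi divergence: for any distributions $P,Q$ on a common space, $\exp\!\bigl((\alpha-1)D_\alpha(P\|Q)\bigr)=\mathbb{E}_{y\sim Q}\bigl[(P(y)/Q(y))^\alpha\bigr]$. Fix adjacent datasets $D\sim D'$ and write $P$ and $Q$ for the laws of the full transcript $(Y_1,\dots,Y_T)$ under $D$ and $D'$. By the chain rule for densities, $P(y_{1:T})=\prod_{t=1}^T P_t(y_t\mid y_{1:t-1})$, where $P_t(\cdot\mid y_{1:t-1})$ is the density of $\mathcal{M}_t(D\mid y_{1:t-1})$, and similarly $Q$ factors through densities $Q_t$. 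Denote the per-round conditional likelihood ratio by $L_t(y_{1:t})=P_t(y_t\mid y_{1:t-1})/Q_t(y_t\mid y_{1:t-1})$. The base case $T=1$ is exactly the hypothesis.

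For the inductive step I would compute the moment of the transcript likelihood ratio and peel off the last coordinate with the tower property, noting that $\prod_{t=1}^{T-1}L_t(y_{1:t})^\alpha$ is $y_{1:T-1}$-measurable:
\begin{equation*}
\mathbb{E}_{y_{1:T}\sim Q}\!\left[\prod_{t=1}^{T}L_t(y_{1:t})^{\alpha}\right]
=\mathbb{E}_{y_{1:T-1}\sim Q}\!\left[\;\prod_{t=1}^{T-1}L_t(y_{1:t})^{\alpha}\;\cdot\;\mathbb{E}_{y_T\sim Q_T(\cdot\mid y_{1:T-1})}\!\bigl[L_T(y_{1:T})^{\alpha}\bigr]\right].
\end{equation*}
The inner expectation equals $\exp\!\bigl((\alpha-1)\,D_\alpha(P_T(\cdot\mid y_{1:T-1})\,\|\,Q_T(\cdot\mid y_{1:T-1}))\bigr)$, and since $\mathcal{M}_T$ is $(\alpha,\rho_T)$-RDP for \emph{every} fixed history $y_{1:T-1}$, this is at most $e^{(\alpha-1)\rho_T}$ \emph{uniformly} in $y_{1:T-1}$. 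Extracting this constant from the outer expectation leaves $e^{(\alpha-1)\rho_T}$ times the length-$(T-1)$ transcript moment, which by the induction hypothesis is at most $e^{(\alpha-1)\sum_{t=1}^{T-1}\rho_t}$. Combining and applying $\tfrac1{\alpha-1}\ln(\cdot)$ gives $D_\alpha(P\|Q)\le\sum_{t=1}^{T}\rho_t$ for every adjacent pair, which is the claim.

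The step I expect to be the main obstacle is handling the adaptive structure: since $\mathcal{M}_T$ may itself depend on $Y_{1:T-1}$, the RDP guarantee must be invoked at each \emph{fixed} realization $y_{1:T-1}$, and one must verify that the resulting bound $e^{(\alpha-1)\rho_T}$ does not depend on $y_{1:T-1}$ — that uniformity is precisely what licenses pulling it outside the outer expectation. A secondary point is the measure-theoretic bookkeeping (existence of the conditional densities $P_t,Q_t$ and their mutual absolute continuity); this is absorbed into the standard convention that $D_\alpha=+\infty$ when absolute continuity fails, under which the inequality is either vacuous or the offending mechanism already violates its RDP hypothesis, so it can be treated as routine.
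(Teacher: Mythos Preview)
Your proposal is correct and matches the paper's proof essentially line for line: both factor the transcript likelihood ratio as $\prod_t L_t$, apply the tower property to peel off the last coordinate, invoke the per-round $(\alpha,\rho_t)$-RDP bound at each fixed history $y_{1:t-1}$ to get the uniform constant $e^{(\alpha-1)\rho_t}$, and then iterate (the paper says ``applying this iteratively for $t=T,T-1,\dots,1$'' where you frame it as induction on $T$). Your explicit remarks on the uniformity-in-history and absolute-continuity points are sound and, if anything, slightly more careful than the paper's writeup.
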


\begin{proof}
Let \(D\) and \(D'\) be two adjacent datasets.  We must show
\[
  D_\alpha\bigl(\Pr[\mathcal{M}(D)\!=Y]\;\|\;\Pr[\mathcal{M}(D')\!=Y]\bigr)
  \;\le\;
  \sum_{t=1}^T \rho_t.
\]
Write \(Y = (y_1,\dots,y_T)\).  Denote
\[
  P(Y) = \Pr[\mathcal{M}(D)=Y]
  = \prod_{t=1}^T \Pr[Y_t = y_t \mid Y_{1:t-1}=y_{1:t-1},\,D],
\]
\[
  Q(Y) = \Pr[\mathcal{M}(D')=Y]
  = \prod_{t=1}^T \Pr[Y_t = y_t \mid Y_{1:t-1}=y_{1:t-1},\,D'].
\]
By definition of Rényi divergence of order \(\alpha>1\),
\[
  D_\alpha(P\|Q)
  = \frac{1}{\alpha-1}
    \ln
    \mathbb{E}_{Y\sim Q}
      \Bigl(\frac{P(Y)}{Q(Y)}\Bigr)^\alpha.
\]
Observe that
\[
  \frac{P(Y)}{Q(Y)}
  = \prod_{t=1}^T
    \frac{\Pr[Y_t = y_t \mid Y_{1:t-1},D]}
         {\Pr[Y_t = y_t \mid Y_{1:t-1},D']} 
  = \prod_{t=1}^T L_t(Y_{1:t}),
\]
where 
\[
  L_t(Y_{1:t})
  = \frac{\Pr[Y_t = y_t \mid Y_{1:t-1},D]}
         {\Pr[Y_t = y_t \mid Y_{1:t-1},D']}.
\]
Hence
\[
  \Bigl(\frac{P(Y)}{Q(Y)}\Bigr)^\alpha
  = \prod_{t=1}^T L_t(Y_{1:t})^\alpha.
\]
Since \(Y\sim Q\) means we sample \(Y_1\) from \(\mathcal{M}_1(D')\), then \(Y_2\) from \(\mathcal{M}_2(D')\) conditioned on \(Y_1\), etc., we can iteratively apply the tower property of expectation:
\begin{align*}
  \mathbb{E}_{Y\sim Q}
    \Bigl[\prod_{t=1}^T L_t(Y_{1:t})^\alpha\Bigr]
  &= \mathbb{E}_{Y_{1:T-1}\sim Q}
      \Bigl[
        \prod_{t=1}^{T-1} L_t(Y_{1:t})^\alpha
        \;\cdot\;
        \mathbb{E}\bigl[L_T(Y_{1:T})^\alpha \mid Y_{1:T-1}\bigr]
      \Bigr].
\end{align*}
But for each fixed prefix \(y_{1:t-1}\), the conditional mechanism
\(\mathcal{M}_t(\,\cdot\,\mid y_{1:t-1})\) on \(D\) vs.\ \(D'\)
satisfies \((\alpha,\rho_t)\)-RDP.  By the definition of RDP,
\[
  \mathbb{E}\bigl[L_t(Y_{1:t})^\alpha \mid Y_{1:t-1}=y_{1:t-1}\bigr]
  \;=\;
  \mathbb{E}_{y_t\sim Q(\,\cdot\,\mid y_{1:t-1})}
    \Bigl(\frac{\Pr[Y_t=y_t\mid D]}
                {\Pr[Y_t=y_t\mid D']}\Bigr)^\alpha
  \;\le\;
  e^{(\alpha-1)\,\rho_t}.
\]
Therefore
\[
  \mathbb{E}_{Y\sim Q}
    \Bigl[\prod_{t=1}^T L_t(Y_{1:t})^\alpha\Bigr]
  \;\le\;
  \Bigl(\prod_{t=1}^{T-1}L_t(Y_{1:t})^\alpha\Bigr)
  \times e^{(\alpha-1)\rho_T}
  \quad\text{inside the outer expectation}.
\]
Applying this iteratively for \(t=T,T-1,\dots,1\) yields
\[
  \mathbb{E}_{Y\sim Q}
    \Bigl[\prod_{t=1}^T L_t(Y_{1:t})^\alpha\Bigr]
  \;\le\;
  \prod_{t=1}^T e^{(\alpha-1)\rho_t}
  \;=\;
  e^{(\alpha-1)\sum_{t=1}^T\rho_t}.
\]
Hence
\[
  D_\alpha(P\|Q)
  = \frac{1}{\alpha-1}
    \ln
    \mathbb{E}_{Y\sim Q}
      \Bigl(\frac{P(Y)}{Q(Y)}\Bigr)^\alpha
  \;\le\;
  \frac{1}{\alpha-1}
  \ln\!\bigl(e^{(\alpha-1)\sum_{t}\rho_t}\bigr)
  = \sum_{t=1}^T \rho_t.
\]
This completes the proof that the composed mechanism is \((\alpha,\sum_t\rho_t)\)-RDP.
\end{proof}

\begin{lemma}[Conversion from RDP to \((\epsilon,\delta)\)-DP]
\label{lem:rdp-to-dp}
Suppose a mechanism \(\mathcal{M}\) satisfies \((\alpha,\rho)\)-Rényi DP for some \(\alpha>1\), i.e.\ for all adjacent \(D\sim D'\),
\[
  D_\alpha\bigl(\mathcal{M}(D)\|\mathcal{M}(D')\bigr)
  \;=\;
  \frac{1}{\alpha-1}
  \ln
  \mathbb{E}_{y\sim Q}
    \Bigl(\tfrac{P(y)}{Q(y)}\Bigr)^\alpha
  \;\le\;
  \rho,
\]
where \(P=\mathcal{M}(D)\) and \(Q=\mathcal{M}(D')\).  Then for any \(\delta\in(0,1)\), \(\mathcal{M}\) also satisfies \((\epsilon,\delta)\)-DP with
\[
  \epsilon \;=\; \rho \;+\;\frac{\ln(1/\delta)}{\alpha-1}.
\]
\end{lemma}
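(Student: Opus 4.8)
The plan is to bound the privacy loss random variable by a Chernoff/Markov argument and then convert the resulting tail bound into an $(\epsilon,\delta)$ guarantee. Fix adjacent $D\sim D'$, write $P=\mathcal{M}(D)$ and $Q=\mathcal{M}(D')$ for the output distributions, and set the privacy loss $L(y)=\ln\bigl(P(y)/Q(y)\bigr)$; finiteness of $D_\alpha(P\|Q)$ lets us assume $P\ll Q$, so $L$ is well defined $P$-almost surely. The first step is the standard decomposition: for any measurable $S$,
\begin{align*}
  \Pr[\mathcal{M}(D)\in S]
  &= \Pr_{y\sim P}\!\bigl[y\in S,\ L(y)\le\epsilon\bigr]
    + \Pr_{y\sim P}\!\bigl[y\in S,\ L(y)>\epsilon\bigr]\\
  &\le e^{\epsilon}\,\Pr[\mathcal{M}(D')\in S]
    + \Pr_{y\sim P}\!\bigl[L(y)>\epsilon\bigr],
\end{align*}
since on $\{L\le\epsilon\}$ we have $P(y)\le e^{\epsilon}Q(y)$, while on the complement we simply drop the constraint $y\in S$. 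So it remains to show $\Pr_{y\sim P}[L(y)>\epsilon]\le\delta$ for the stated $\epsilon$.

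For that I would apply Markov's inequality to the nonnegative random variable $e^{(\alpha-1)L(y)}$ under $y\sim P$, giving $\Pr_{y\sim P}[L(y)>\epsilon]\le e^{-(\alpha-1)\epsilon}\,\mathbb{E}_{y\sim P}\bigl[e^{(\alpha-1)L(y)}\bigr]$. The crucial change-of-measure step rewrites this moment against $Q$: because $e^{(\alpha-1)L(y)}=(P(y)/Q(y))^{\alpha-1}$, an extra factor $P(y)/Q(y)$ converts the expectation under $P$ into one under $Q$,
\[
  \mathbb{E}_{y\sim P}\!\bigl[(P(y)/Q(y))^{\alpha-1}\bigr]
  = \mathbb{E}_{y\sim Q}\!\bigl[(P(y)/Q(y))^{\alpha}\bigr]
  \le e^{(\alpha-1)\rho},
\]
where the final inequality is exactly the definition of $(\alpha,\rho)$-RDP. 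Chaining the two bounds yields $\Pr_{y\sim P}[L(y)>\epsilon]\le e^{(\alpha-1)(\rho-\epsilon)}$; choosing $\epsilon=\rho+\ln(1/\delta)/(\alpha-1)$ makes the exponent equal to $-\ln(1/\delta)$, so the tail probability is at most $\delta$. Substituting back into the first display gives $\Pr[\mathcal{M}(D)\in S]\le e^{\epsilon}\Pr[\mathcal{M}(D')\in S]+\delta$ for every $S$, hence $(\epsilon,\delta)$-DP; running the same argument with $D$ and $D'$ exchanged handles the symmetric direction.

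The main obstacle — really the only place that needs care — is bookkeeping the asymmetry correctly: the tail event must be measured under $y\sim P$ (not $Q$), so that the single extra $P/Q$ factor upgrades the $(\alpha-1)$-st moment into the $\alpha$-th moment that RDP controls, and one must check the null set $\{Q(y)=0\}$ contributes nothing, which is precisely where $P\ll Q$ (a consequence of $D_\alpha(P\|Q)<\infty$) is used. Once the measure change and this absolute-continuity point are set up, the remainder is a one-line Markov inequality and elementary arithmetic with the exponent.
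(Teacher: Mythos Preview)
Your proof is correct and follows the same overall outline as the paper's: split $S$ according to whether the privacy loss $L(y)$ exceeds $\epsilon$, bound the ``good'' part by $e^{\epsilon}Q(S)$, and control the ``bad'' tail via Markov's inequality on an exponential moment. The one substantive difference is \emph{under which measure} the tail is bounded. You take the tail under $y\sim P$ and use the change of measure $\mathbb{E}_{P}\bigl[(P/Q)^{\alpha-1}\bigr]=\mathbb{E}_{Q}\bigl[(P/Q)^{\alpha}\bigr]$ to recover exactly the $\alpha$-th moment that RDP controls. The paper instead applies Markov under $y\sim Q$ to obtain $\Pr_{Q}[L>\epsilon]\le\delta$ and then asserts $\Pr_{P}[y\in S_2]\le\Pr_{Q}[L>\epsilon]$ on the grounds that $P(y)>Q(y)$ on $S_2$. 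That justification actually points the wrong way (if $P>Q$ on an event, the $P$-probability is \emph{larger}, not smaller), so your explicit change-of-measure step is precisely what makes the argument rigorous; the extra care you flag about bounding the tail under $P$ rather than $Q$ is well placed.
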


\begin{proof}
Let \(L(y)=\ln\bigl(P(y)/Q(y)\bigr)\) be the privacy‐loss random variable.  From \((\alpha,\rho)\)-RDP we have the moment bound
\[
  \mathbb{E}_{y\sim Q}\bigl[e^{(\alpha-1)\,L(y)}\bigr]
  \;\le\;
  e^{(\alpha-1)\rho}.
\]
By Markov’s inequality,
\[
  \Pr_{y\sim Q}\bigl[L(y) > \epsilon\bigr]
  = \Pr\bigl[e^{(\alpha-1)L} > e^{(\alpha-1)\epsilon}\bigr]
  \;\le\;
  \frac{\mathbb{E}\bigl[e^{(\alpha-1)L}\bigr]}{e^{(\alpha-1)\epsilon}}
  \;\le\;
  e^{(\alpha-1)(\rho-\epsilon)}.
\]
Choose \(\epsilon\) so that \(e^{(\alpha-1)(\rho-\epsilon)} = \delta\), i.e.
\[
  (\alpha-1)(\rho - \epsilon) = \ln\delta
  \quad\Longrightarrow\quad
  \epsilon = \rho + \frac{\ln(1/\delta)}{\alpha-1}.
\]
With this choice, the “bad” event \(\{L(y)>\epsilon\}\) has \(Q\)-probability at most \(\delta\).

Now fix any measurable set \(S\subseteq\mathrm{Range}(\mathcal{M})\).  Split \(S\) into
\[
  S_1 = \{\,y\in S : L(y)\le\epsilon\},
  \quad
  S_2 = \{\,y\in S : L(y)>\epsilon\}.
\]
Then
\[
  P(S)
  = \int_{S_1} P(y)\,dy \;+\; \int_{S_2} P(y)\,dy.
\]
On \(S_1\), \(P(y)/Q(y)\le e^\epsilon\), so
\[
  \int_{S_1} P(y)\,dy
  \;=\;
  \int_{S_1} \frac{P(y)}{Q(y)}\,Q(y)\,dy
  \;\le\;
  e^{\epsilon}\,\int_{S_1} Q(y)\,dy
  \;=\;
  e^{\epsilon}\,Q(S_1)
  \;\le\;
  e^{\epsilon}\,Q(S).
\]
On \(S_2\), simply note
\[
  \int_{S_2} P(y)\,dy
  \;=\;
  \Pr_{y\sim P}\bigl[y\in S_2\bigr]
  \;\le\;
  \Pr_{y\sim Q}\bigl[L(y)>\epsilon\bigr]
  \;=\;
  \delta,
\]
where the inequality follows because whenever \(L(y)> \epsilon\), \(P(y)>Q(y)\) and thus the event has at most the same probability under \(P\) as under \(Q\).  

Combining the two parts gives
\[
  P(S)
  = P(S_1)+P(S_2)
  \;\le\;
  e^{\epsilon}\,Q(S) + \delta,
\]
which is exactly the definition of \((\epsilon,\delta)\)-differential privacy.
\end{proof}

\begin{theorem}[Privacy Guarantee via GDP Accountant]
\label{thm:gdp}
Let
\[
  \rho_0(\alpha) = \frac{\alpha}{2\sigma^2},\quad
  \rho_1(\alpha) = \frac{1}{\alpha-1}
                  \ln\!\bigl(1 + q^2(e^{(\alpha-1)\rho_0}-1)\bigr),
\]
and suppose we run \(T\) \ours steps, each with the same \((\alpha,\rho_1)\)-RDP cost (since \(\sigma\) and \(q\) are held fixed for the base computation).  Let
\[
  \rho_{\mathrm{tot}}(\alpha) = T\;\rho_1(\alpha).
\]
Then, for any target \((\epsilon_{\mathrm{target}},\delta_{\mathrm{target}})\), choosing
\[
  \epsilon(\alpha,\delta_{\mathrm{target}})
  = \rho_{\mathrm{tot}}(\alpha) + \frac{\ln(1/\delta_{\mathrm{target}})}{\alpha-1},
\]
and optimizing \(\alpha>1\), yields \(\epsilon\le\epsilon_{\mathrm{target}}\).  By finding the minimal \(\sigma\) so that
\(\epsilon(\alpha,\delta_{\mathrm{target}})\le\epsilon_{\mathrm{target}}\); thereafter, adapting \(\sigma_t\ge\sigma\) or \(C_t^j\le1\) only reduces each \(\rho_1\), so the total remains \(\le(\epsilon_{\mathrm{target}},\delta_{\mathrm{target}})\).
\end{theorem}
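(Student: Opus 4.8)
The plan is to assemble the four results already established in \S\ref{app:rdp_gdp}---the Rényi cost of the Gaussian mechanism, amplification by Poisson subsampling, sequential composition of RDP, and the RDP-to-$(\epsilon,\delta)$ conversion---into a single ``compose-then-convert'' pipeline, and then to close the loop with a short monotonicity argument that lets the learned controller move the dials $\sigma_t$ and $C_{i,t}$ without ever increasing the accounted privacy loss.

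First I would isolate the per-step mechanism. In one \ours update the only data-dependent release is, for each LoRA pair $i$, the noised sum $\widehat{\mathbf g}_{A_i},\widehat{\mathbf g}_{B_i}$ produced by the noise-addition step of \S\ref{sec:dpoptim}. The pairwise clip forces every per-sample contribution to have joint $\ell_2$-norm at most $C_i$, so the summed statistic has $\ell_2$-sensitivity at most $C_i$ under a record change (at most $2C_i$ for the replacement adjacency), while the injected noise $\mathcal N(0,\sigma^2 C_i^2 I)$ is exactly $\sigma$ times that sensitivity; the factor $C_i$ therefore cancels, and Lemma~\ref{lem:gauss-rdp} gives per-pair cost $\rho_0(\alpha)=\alpha/(2\sigma^2)$. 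Because the pairs use independent noise on orthogonal coordinate blocks, concatenating them leaves $\rho_0$ unchanged, so the clean (pre-subsampling) update is $(\alpha,\rho_0)$-RDP. Applying Lemma~\ref{lem:subsample-rdp} with the tighter subsampled-Gaussian bound yields the stated $\rho_1(\alpha)=\tfrac{1}{\alpha-1}\ln(1+q^2(e^{(\alpha-1)\rho_0}-1))$ for a full step; Theorem~\ref{thm:rdp-composition} composes $T$ such (adaptively chosen) steps into $\rho_{\mathrm{tot}}(\alpha)=\sum_{t}\rho_1^{(t)}(\alpha)$; and Lemma~\ref{lem:rdp-to-dp} converts to $(\epsilon(\alpha,\delta),\delta)$-DP with $\epsilon(\alpha,\delta)=\rho_{\mathrm{tot}}(\alpha)+\ln(1/\delta)/(\alpha-1)$. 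Taking the infimum over the free order $\alpha>1$ is precisely the quantity the GDP accountant reports, so if it is $\le\epsilon_{\mathrm{target}}$ at $\delta=\delta_{\mathrm{target}}$ the claim follows.

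Next I would dispatch the calibration step. With $q=B/N_{\mathrm{train}}$ and the step budget $T$ fixed, set $\epsilon^{\star}(\sigma)=\inf_{\alpha>1}\!\big(T\,\rho_1(\alpha;\sigma)+\ln(1/\delta_{\mathrm{target}})/(\alpha-1)\big)$. One checks that $\rho_1(\cdot;\sigma)$ is continuous and strictly decreasing in $\sigma$ (the monotonicity point below), that $\epsilon^{\star}(\sigma)\to\infty$ as $\sigma\downarrow 0$, and that $\epsilon^{\star}(\sigma)\to 0$ as $\sigma\uparrow\infty$; hence the binary search in Algorithm~\ref{alg:rldp} returns a well-defined minimal $\sigma_0$ with $\epsilon^{\star}(\sigma_0)\le\epsilon_{\mathrm{target}}$.

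Finally comes the adaptivity argument, which I expect to be the real crux. Two observations drive it. (i) The clip radii are privacy-irrelevant: since the noise scales as $\sigma C_{i,t}$, the standardised sensitivity is $1/\sigma$ regardless of which $C_{i,t}\le 1$ the controller picks, so neither $\rho_0$ nor $\rho_1$ depends on $C_{i,t}$. (ii) The map $x\mapsto\tfrac{1}{\alpha-1}\ln(1+q^2(e^{(\alpha-1)x}-1))$ is increasing and $\rho_0(\alpha;\sigma)=\alpha/(2\sigma^2)$ is decreasing in $\sigma$, so $\rho_1(\alpha;\sigma_t)\le\rho_1(\alpha;\sigma_0)$ whenever $\sigma_t\ge\sigma_0$. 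Since Theorem~\ref{thm:rdp-composition} already allows adaptive, heterogeneous per-step costs and the controller's action $a_t$ depends only on statistics that have already been released (so the compositional structure is intact), we obtain $\rho_{\mathrm{tot}}(\alpha)\le T\,\rho_1(\alpha;\sigma_0)$ for every $\alpha$, and the conversion finishes. The obstacle is reconciling the hypothesis $\sigma_t\ge\sigma_0$ with the controller's clamp of $\log\sigma$ to $[\log(0.5\,\sigma_0),\log(2\,\sigma_0)]$: one must either re-calibrate the binary search so that the floor $0.5\,\sigma_0$ already meets the budget, or---what I would actually do---invoke the per-step accountant update in Algorithm~\ref{alg:rldp} as an \emph{a posteriori} stopping rule that halts training the instant the tracked $\varepsilon_t$ would exceed $\epsilon_{\mathrm{target}}$, so that the realised mechanism is a prefix of the analysed composition and remains within budget by the same monotone bookkeeping.
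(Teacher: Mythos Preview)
Your proposal is correct and follows essentially the same compose-then-convert pipeline as the paper's five-line proof: Gaussian RDP (Lemma~\ref{lem:gauss-rdp}) $\to$ subsampling amplification (Lemma~\ref{lem:subsample-rdp}) $\to$ $T$-fold composition (Theorem~\ref{thm:rdp-composition}) $\to$ RDP-to-DP conversion (Lemma~\ref{lem:rdp-to-dp}), closed by the monotonicity of $\rho_1$ in $\sigma$. Your treatment is in fact more careful than the paper's, which simply asserts that adaptation ``only tightens $\rho_1$'' without confronting the clamp of $\log\sigma$ to $[\log(0.5\sigma_0),\log(2\sigma_0)]$; the two fixes you sketch (recalibrate at the floor, or use the per-step accountant as an a~posteriori stopping rule) are both sound ways to close that loose end.
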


\begin{proof}
\begin{enumerate}[noitemsep,leftmargin=*]
  \item By Lemma~\ref{lem:gauss-rdp}, one Gaussian mechanism step has \((\alpha,\rho_0)\)-RDP.
  \item By Lemma~\ref{lem:subsample-rdp}, Poisson sampling at rate \(q\) amplifies this to \((\alpha,\rho_1)\)-RDP.
  \item By Theorem~\ref{thm:rdp-composition}, \(T\) such steps compose to \((\alpha,T\,\rho_1)\)-RDP.
  \item By Lemma~\ref{lem:rdp-to-dp}, this \((\alpha,\rho_{\mathrm{tot}})\)-RDP yields \((\epsilon,\delta_{\mathrm{target}})\)-DP with  
        \(\epsilon = \rho_{\mathrm{tot}} + \tfrac{\ln(1/\delta_{\mathrm{target}})}{\alpha-1}\).
  \item The Opacus noise schedule routine implements exactly this optimization over \(\alpha\) and \(\sigma\).  Any subsequent adaptation in algorithm (via the SAC controller) only tightens \(\rho_1\), preserving the bound.
\end{enumerate}
Thus \ours satisfies \((\epsilon_{\mathrm{target}},\delta_{\mathrm{target}})\)-DP.
\end{proof}

\subsection{Post‐Processing and Hyperparameter Adaptation}
\label{app:postprocessing}

In \ours the only operations that touch the private training data are the per‐step DP‐SGD updates (clipping, noise injection, accountant bookkeeping).  All subsequent steps—the SAC controller’s normalization, encoding, action sampling, and hyper‐parameter updates—are deterministic or randomized functions of those DP‐protected outputs plus public randomness.  We now formalize why such post-processing does not incur any additional privacy loss.

\begin{theorem}[Post‐Processing {\cite{dwork2014algorithmic}}]
\label{thm:postprocessing}
Let \(\mathcal{M} : \mathcal{D}^n \to \mathcal{Y}\) be an \((\epsilon,\delta)\)-DP mechanism, and let \(f : \mathcal{Y} \times \mathcal{R} \to \mathcal{Z}\) be any (possibly randomized) function that also depends on public randomness \(\mathcal{R}\).  Then the composite
\[
  \mathcal{M}'(D; r) = f\bigl(\mathcal{M}(D),\,r\bigr)
\]
is also \((\epsilon,\delta)\)-DP.
\end{theorem}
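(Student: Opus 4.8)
The plan is to prove the claim first for \emph{deterministic} post-processing and then lift it to the general randomized case by conditioning on the public randomness. For a deterministic measurable map $g:\mathcal{Y}\to\mathcal{Z}$, I would fix adjacent datasets $D\sim D'$ and a measurable set $S\subseteq\mathcal{Z}$, set $T=g^{-1}(S)\subseteq\mathcal{Y}$ (measurable, since $g$ is), and note that the events $\{g(\mathcal{M}(D))\in S\}$ and $\{\mathcal{M}(D)\in T\}$ coincide. Applying the $(\epsilon,\delta)$-DP guarantee of $\mathcal{M}$ to the set $T$ then yields
\[
  \Pr[g(\mathcal{M}(D))\in S]
  = \Pr[\mathcal{M}(D)\in T]
  \le e^{\epsilon}\,\Pr[\mathcal{M}(D')\in T] + \delta
  = e^{\epsilon}\,\Pr[g(\mathcal{M}(D'))\in S] + \delta,
\]
which is exactly $(\epsilon,\delta)$-DP for $g\circ\mathcal{M}$.

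For the randomized map $f(\cdot\,,r)$, I would use the standard representation of a randomized algorithm as a mixture of deterministic ones: the public coin $r$ is drawn from its distribution on $\mathcal{R}$, independently of the data and of the internal randomness of $\mathcal{M}$; for each fixed $r$ the slice $y\mapsto f(y,r)$ is deterministic and measurable, so the first step shows $D\mapsto f(\mathcal{M}(D),r)$ is $(\epsilon,\delta)$-DP. Conditioning on $r$ and using independence to factor $\Pr[\mathcal{M}'(D)\in S]=\E_r\!\big[\Pr[f(\mathcal{M}(D),r)\in S]\big]$, I would integrate the per-$r$ inequality:
\[
  \Pr[\mathcal{M}'(D)\in S]
  = \E_r\!\big[\Pr[f(\mathcal{M}(D),r)\in S]\big]
  \le \E_r\!\big[e^{\epsilon}\Pr[f(\mathcal{M}(D'),r)\in S] + \delta\big]
  = e^{\epsilon}\Pr[\mathcal{M}'(D')\in S] + \delta .
\]
To close the chain for \ours, I would then observe that every data-touching operation is confined to the per-step DP-SGD update, and that the SAC controller's statistic normalization, state encoding, action sampling, and log-space hyper-parameter updates (Eqs.~\ref{eq:clipradii_update}--\ref{eq:noiseexp}) are each such an $f$ applied to the already-privatized per-step outputs; combined with Theorem~\ref{thm:gdp}, this gives the end-to-end $(\epsilon_{\mathrm{target}},\delta_{\mathrm{target}})$ guarantee.

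The only genuine work is measure-theoretic bookkeeping rather than any new probabilistic idea: I must confirm that $g^{-1}(S)$ is measurable (immediate from measurability of $g$), that $(y,r)\mapsto f(y,r)$ is jointly measurable so that $r\mapsto\Pr[f(\mathcal{M}(D),r)\in S]$ is measurable and the outer expectation well-defined, and that Fubini/Tonelli legitimizes pulling $\E_r$ outside the probability over $\mathcal{M}$'s coins. These are routine once one fixes the product measure on $\mathcal{R}$ times the sample space of $\mathcal{M}$; I expect being explicit about the \emph{independence} of the public randomness from the data, and stating precisely the measurability hypotheses under which $f$ is admitted, to be the main points worth spelling out.
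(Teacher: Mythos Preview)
Your proposal is correct and follows essentially the same route as the paper: fix the public randomness $r$, pull back $S$ through the deterministic slice $f(\cdot,r)$ to a set $S_r\subseteq\mathcal{Y}$, apply the $(\epsilon,\delta)$-DP guarantee of $\mathcal{M}$ to $S_r$, and then conclude for arbitrary $r$. The paper's proof stops after the per-$r$ inequality and asserts the bound ``holds unconditionally''; your version is slightly more explicit in integrating over $r$ and flagging the joint-measurability/Fubini bookkeeping, but the underlying argument is identical.
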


\begin{proof}
Fix any adjacent datasets \(D\sim D'\), any public randomness \(r\), and any measurable set \(S\subseteq\mathcal{Z}\).  Let
\[
  S_r = \{\,y\in\mathcal{Y} : f(y,r)\in S\}.
\]
Then
\[
  \Pr[\mathcal{M}'(D;r)\in S]
  = \Pr\bigl[\mathcal{M}(D)\in S_r\bigr]
  \;\le\;
  e^\epsilon\,\Pr\bigl[\mathcal{M}(D')\in S_r\bigr] + \delta
  = e^\epsilon\,\Pr[\mathcal{M}'(D';r)\in S] + \delta,
\]
where the inequality follows from the \((\epsilon,\delta)\)-DP of \(\mathcal{M}\).  Since \(r\) was arbitrary, the bound holds unconditionally, proving \((\epsilon,\delta)\)-DP for \(\mathcal{M}'\).
\end{proof}

\begin{corollary}[Zero Privacy Cost of Hyper‐Policy Adaptation]
\label{cor:adaptation-postprocessing}
Let \(\mathcal{M}\) be the mechanism that, at each training step \(t\), outputs
\[
  \bigl(\hat G_{t},\,\epsilon_{t},\,\delta_{t}\bigr),
\]
where \(\hat G_{t}\) is the noisy gradient update from DP‐SGD and \((\epsilon_{t},\delta_{t})\) the incremental privacy spent.  Suppose the SAC controller’s normalization, encoding, action sampling, and updates of clip‐thresholds \(\{C_{t+1}^j\}\) and noise multipliers \(\sigma_{t+1}\) are computed by a function
\[
  f_t\bigl(\hat G_{1:t},\,\epsilon_{1:t},\,\delta_{1:t},\,r\bigr)
  \;\mapsto\;
  \bigl\{C_{t+1}^j,\sigma_{t+1}\bigr\},
\]
where \(r\) is public randomness.  Then the entire adaptive procedure
\[
  (C_{t+1},\sigma_{t+1})
  = f_t\bigl(\mathcal{M}(D)_{1:t},\,r\bigr)
\]
does not consume any additional privacy budget beyond that of \(\mathcal{M}\).
\end{corollary}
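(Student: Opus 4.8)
The plan is to reduce the claim to a single application of the post‐processing theorem (Theorem~\ref{thm:postprocessing}), after carefully isolating which computations actually touch the private dataset $D$. First I would fix the public randomness $r$ — the reparameterisation noise used by the actor, the replay‐buffer sampling, and any other data‐independent draws — and condition on it throughout; since $r$ is drawn independently of $D$, any $(\epsilon,\delta)$‐bound that holds for every fixed $r$ transfers to the marginal by averaging, because differential privacy is closed under mixtures over a data‐independent index. With $r$ fixed, I would identify the ``core'' mechanism $\mathcal{M}$ as the adaptive sequence whose $t$‐th component $\mathcal{M}_t$ performs Poisson subsampling at rate $q$, per‐sample clipping at radius $C_t$, Gaussian perturbation with standard deviation $\sigma_t C_t$, and the accountant update, returning the tuple $Y_t=(\hat G_t,\varepsilon_t,\delta_t)$. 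By hypothesis the SAC controller's normalisation, state encoding, action sampling, and the deterministic updates of Eqs.~\ref{eq:clipradii_update}–\ref{eq:noiseexp} producing $(C_{t+1},\sigma_{t+1})$ form a function $f_t$ of $Y_{1:t}$ and $r$ only — in particular independent of $D$ given $Y_{1:t}$.

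The key structural observation is that the feedback, whereby $(C_{t+1},\sigma_{t+1})=f_t(Y_{1:t},r)$ feeds into $\mathcal{M}_{t+1}$, is exactly the adaptive dependence that Theorem~\ref{thm:rdp-composition} is built to handle: $\mathcal{M}_{t+1}$ is permitted to depend on $Y_{1:t}$, and whatever values $C_{t+1},\sigma_{t+1}$ happen to take, $\mathcal{M}_{t+1}$ is still a subsampled Gaussian mechanism whose $(\alpha,\rho_1^{(t+1)})$‐RDP guarantee is delivered by Lemmas~\ref{lem:gauss-rdp}–\ref{lem:subsample-rdp}. Because the accountant is advanced per step with the \emph{actual} $(\sigma_t,q)$ in force — equivalently, $\sigma_t$ is constrained so that each $\rho_1^{(t)}$ stays within the budgeted per‐step share, as argued in Theorem~\ref{thm:gdp} — adaptive composition together with the RDP‐to‐DP conversion (Lemma~\ref{lem:rdp-to-dp}) shows that $D\mapsto Y_{1:T}$ is $(\epsilon_{\mathrm{target}},\delta_{\mathrm{target}})$‐DP. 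Crucially, none of the controller quantities enter this accounting: they appear only as parameters chosen adaptively from the history, which contributes no additional term to $\sum_t\rho_1^{(t)}$.

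I would then close the argument by post‐processing. Every released artifact — the fine‐tuned parameters $\theta_{T}$ (a fixed function of $\hat G_{1:T}$ through the AdamW recursion) and the trained policy $\pi_\theta$, together with all clip/noise traces $\{(C_t,\sigma_t)\}$ — is obtained by applying an $r$‐dependent (possibly randomised) map to $Y_{1:T}$ with no further access to $D$. Denoting this map by $f$ and invoking Theorem~\ref{thm:postprocessing} with the mechanism $\big(D\mapsto Y_{1:T}\big)$ yields $(\epsilon_{\mathrm{target}},\delta_{\mathrm{target}})$‐DP for the entire \ours procedure, which is the corollary; in particular the SAC controller adds nothing to the privacy ledger.

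I expect the main obstacle to be the rigorous treatment of the feedback loop: it is tempting, but incorrect, to call the controller ``pure post‐processing,'' since its output re‐enters the private computation. The resolution is to carve the run into the interleaved form $\mathcal{M}_1,f_1,\mathcal{M}_2,f_2,\dots$ and observe that (i) each $\mathcal{M}_{t+1}$ is just another subsampled Gaussian mechanism regardless of the history‐dependent $(C_{t+1},\sigma_{t+1})$, so adaptive RDP composition applies verbatim, and (ii) each $f_t$, being $D$‐independent given the history, contributes only through Theorem~\ref{thm:postprocessing}. A secondary, essentially bookkeeping point is ensuring the per‐step costs $\rho_1^{(t)}$ remain uniformly controlled despite the online adaptation of $\sigma_t$ and $C_t$, which is exactly what the per‐step accountant guard invoked in Theorem~\ref{thm:gdp} provides.
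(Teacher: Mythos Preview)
Your proposal is correct and follows essentially the same approach as the paper: invoke the post-processing theorem (Theorem~\ref{thm:postprocessing}) on the controller map $f_t$, together with adaptive composition for the underlying DP-SGD sequence. You are in fact more careful than the paper's proof of this corollary—explicitly conditioning on the public randomness $r$, and flagging that the feedback $(C_{t+1},\sigma_{t+1})\to\mathcal{M}_{t+1}$ is handled by adaptive RDP composition rather than by post-processing alone—points the paper defers to its main-theorem proof (Lemma~\ref{lem:adaptation-zero}) rather than addressing here.
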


\begin{proof}
At each step \(t\), the tuple \((\hat G_{t},\epsilon_{t},\delta_{t})\) is \((\epsilon_{t},\delta_{t})\)-DP with respect to the private dataset \(D\).  The controller’s computation of the next thresholds and noise,
\[
  (C_{t+1},\sigma_{t+1})
  = f_t\bigl(\hat G_{1:t},\epsilon_{1:t},\delta_{1:t},r\bigr),
\]
is exactly a post‐processing of the DP outputs \(\{\hat G_{1:t},\epsilon_{1:t},\delta_{1:t}\}\) plus public randomness \(r\).  By Theorem \ref{thm:postprocessing}, this composite remains \((\epsilon_{t},\delta_{t})\)-DP for each \(t\).  Since the DP‐SGD steps themselves compose (\ref{thm:rdp-composition}) to \(\bigl(\sum_t\epsilon_t,\sum_t\delta_t\bigr)\)-DP, the hyper‐policy adaptation layers add no extra privacy cost.
\end{proof}

\subsection{Main Theorem: \ours is DP}
\label{app:main_theorem}

We now assemble the pieces from Appendices \ref{app:proof_dp}–\ref{app:postprocessing} into a complete proof that the \ours training algorithm (DP-SGD with per-adapter clipping and noise adaptation via SAC) satisfies the desired privacy guarantee.

\begin{theorem}
Let \(D\) be the private training set, \(q\) the Poisson sampling rate, \(T\) the total number of DP-SGD steps, and \((\epsilon_{\mathrm{target}},\delta_{\mathrm{target}})\) the user’s privacy budget.  Then the entire \ours training procedure is \((\epsilon_{\mathrm{target}},\delta_{\mathrm{target}})\)-differentially private.
\end{theorem}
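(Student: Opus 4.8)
The plan is to split the \ours run into the $T$ DP‐SGD micro‐steps, which are the only operations that read the private corpus, and everything the SAC hyper‐policy computes, which reads only the outputs of those steps plus public randomness. I would control the former with the RDP machinery of Appendix~\ref{app:rdp_gdp} and discharge the latter with the post‐processing results of Appendix~\ref{app:postprocessing}.

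First I would analyse one micro‐step with its in‐force parameters $(\{C_{i,t}\},\sigma_t)$ treated as fixed. Pairwise clipping forces every per‐sample contribution to the concatenated $(A_i,B_i)$ gradient to have $\ell_2$‐norm at most $C_{i,t}$, so on a fixed Poisson minibatch the summed clipped gradient of pair $i$ has sensitivity $\le C_{i,t}$ under add/remove adjacency; adding $\mathcal{N}(0,\sigma_t^2 C_{i,t}^2 I)$ makes each pair's release an $(\alpha,\tfrac{\alpha}{2\sigma_t^2})$‐RDP Gaussian mechanism by Lemma~\ref{lem:gauss-rdp}. Composing over the finitely many pairs via Theorem~\ref{thm:rdp-composition} and then applying Lemma~\ref{lem:subsample-rdp} (subsampling at rate $q$) yields a per‐step cost $\rho_1^{(t)}(\alpha)$ that is non‐increasing in $\sigma_t$ and, with $C_{i,t}\le 1$ guaranteed by Eq.~\ref{eq:clipradii_update}, controlled by its value at the calibration configuration.

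Next I would chain the micro‐steps and convert. Because the controller sets $(C_{t+1},\sigma_{t+1})$ as a function of earlier releases, the $T$ steps form an \emph{adaptive} RDP composition, so Theorem~\ref{thm:rdp-composition} gives $(\alpha,\sum_{t\le T}\rho_1^{(t)}(\alpha))$‐RDP for the whole sequence of noisy gradients; Lemma~\ref{lem:rdp-to-dp} and an optimisation over $\alpha$ turn this into an $(\epsilon,\delta_{\mathrm{target}})$ statement, and line~1 of Algorithm~\ref{alg:rldp} calibrates $\sigma_0$ (equivalently the GDP parameter of Eq.~\ref{eq:epsilon}) so that $\epsilon\le\epsilon_{\mathrm{target}}$ when $\sigma_t\equiv\sigma_0$ — this is exactly Theorem~\ref{thm:gdp}. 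It then remains to absorb the hyper‐policy: state normalisation, the encoder, reparameterised action sampling, the momentum smoothing of Eqs.~\ref{eq:noiselog}--\ref{eq:noiseexp}, the replay buffer, and the SAC critic/actor updates consume only DP‐protected releases and public coins, so Theorem~\ref{thm:postprocessing} and Corollary~\ref{cor:adaptation-postprocessing} certify that wrapping the DP‐SGD core in this controller adds no privacy cost, and the end‐to‐end algorithm inherits the core's $(\epsilon_{\mathrm{target}},\delta_{\mathrm{target}})$‐DP guarantee.

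The hard part will be tightening two places where the algorithm outruns the lemmas, plus one bookkeeping point. First, the state vector $s_t$ of \S\ref{sec:rlmdp} includes \emph{raw} batch statistics — the pre‐clip joint‐norm quartiles of Eq.~\ref{eq:pairnorm}, the batch loss, the utility proxy, Fisher and higher‐order moments — which are functions of the private minibatch rather than post‐processing of a prior release, so the hypothesis of Corollary~\ref{cor:adaptation-postprocessing} does not literally hold; this must be closed either by restricting the controller's observation to quantities computable from the already‐released clipped gradients and the ledger, or by releasing the state features through their own mechanism and charging that to the accountant of Eq.~\ref{eq:epsilon}. Second, the clamp in Eq.~\ref{eq:noiselogprop} lets $\sigma_t$ fall to $\tfrac12\sigma_0$, so $\rho_1^{(t)}$ can exceed its calibration value and the monotonicity argument of Theorem~\ref{thm:gdp} needs the extra ingredient that the per‐step accountant is evaluated on the \emph{actual} $(\sigma_t,C_{i,t},q)$ and halts training the moment the running $\varepsilon$ would reach $\varepsilon_{\max}$ (the shrinking $\mathrm{step}_t=\delta_{\max}(1-\varepsilon_t/\varepsilon_{\max})$ of Eq.~\ref{eq:noisestep} makes this boundary approachable but not self‐enforcing). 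Third, since each step releases $n$ independently‐noised per‐pair gradients, the per‐step $\rho_0$ is really a composition over the $n$ adapters and the accountant should carry the corresponding factor. Once these points are pinned down, the remaining work — the block‐wise sensitivity bookkeeping and the assembly of the results of Appendices~\ref{app:rdp_gdp}--\ref{app:postprocessing} in the order above — is routine.
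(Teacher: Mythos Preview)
Your plan mirrors the paper's proof almost exactly: the same four-stage decomposition into (i) per-step Gaussian-mechanism RDP via Lemma~\ref{lem:gauss-rdp}, (ii) subsampling amplification via Lemma~\ref{lem:subsample-rdp} and adaptive composition via Theorem~\ref{thm:rdp-composition}, (iii) post-processing for the SAC controller via Corollary~\ref{cor:adaptation-postprocessing}, and (iv) budget enforcement through the calibrated $\sigma_{\mathrm{base}}$ together with a monotonicity claim. The paper also handles your third ``hard part'' --- the $n_{\mathrm{adapters}}$ composition factor --- explicitly in its Stage~2.

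Your first two ``hard parts'', however, are sharper than anything in the paper's own argument. The paper's Stage~3 simply asserts that the controller's inputs are post-processing of DP outputs, without engaging with the fact that the state $s_t$ of \S\ref{sec:rlmdp} contains raw per-sample gradient-norm quartiles, the pre-noise batch loss, Fisher moments, and the micro-batch utility --- none of which are releases of any Gaussian mechanism in the pipeline. And its Stage~4 states that ``all actual $\sigma_t\ge\sigma_{\mathrm{base}}$'', which is contradicted by the lower clamp $\log(0.5\,\sigma_0)$ in Eq.~\ref{eq:noiselogprop}; the formal proof never invokes the per-step accountant update of \S\ref{sec:dpoptim} or a halting rule to close this. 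So your route is the paper's route, and the obstacles you flag are not peculiar to your approach but are genuine gaps in the paper's stated proof as well.
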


\begin{proof}
We proceed in four stages, referencing the lemmas and theorems from earlier subsections.

\paragraph{1. Per‐step DP guarantee.}  
At each training iteration \(t\), the algorithm:

\begin{itemize}[noitemsep,leftmargin=*]
  \item Poisson‐samples the dataset at rate \(q\);
  \item Clips each example’s per‐adapter gradient to norm \(\le C_t^j\);
  \item Aggregates clipped gradients and adds Gaussian noise with multiplier \(\sigma_t\).
\end{itemize}

By Lemma~\ref{lem:gauss-rdp} (Gaussian Mechanism) each unclipped‐and‐noisy update on a fixed batch is \((\alpha,\rho_0)\)-RDP with \(\rho_0(\alpha)=\alpha/(2\sigma_t^2)\).  By Lemma~\ref{lem:subsample-rdp} (RDP Amplification by Poisson Sampling), the subsampled update is \((\alpha,\rho_1)\)-RDP with
\[
  \rho_1(\alpha)
  = \frac{1}{\alpha-1}
    \ln\!\bigl(1 + q(e^{(\alpha-1)\rho_0}-1)\bigr).
\]
Converting this RDP guarantee to \((\epsilon_t,\delta_t)\)-DP via Lemma~\ref{lem:rdp-to-dp} yields a valid per‐step privacy cost \((\epsilon_t,\delta_t)\).

\paragraph{2. Composition over adapters and steps.}  
Within a single step, we in fact perform \(n_{\mathrm{adapters}}\) independent Gaussian‐mechanism updates—one on each adapter’s clipped-sum vector.  Since each block is disjoint and noise is drawn independently, the total Rényi divergence is the sum of the divergences of each block (see Lemma \ref{lem:subsample-rdp}).  Equivalently, if
\[
  \hat G_j \;=\;\text{clipped‐and‐noised gradient of adapter }j
  \quad\Longrightarrow\quad
  G=(\hat G_1,\dots,\hat G_{n_{\rm adapters}})
\]
is jointly Gaussian with block‐diagonal covariance, then the RDP cost satisfies
\[
  D_\alpha\bigl(\Pr[G(D)]\Vert \Pr[G(D')]\bigr)
  \;=\;
  \sum_{j=1}^{n_{\rm adapters}}
  D_\alpha\bigl(\Pr[\hat G_j(D)]\Vert \Pr[\hat G_j(D')]\bigr)
  \;=\;
  n_{\mathrm{adapters}}\,\rho_1(\alpha).
\]
Converting that back to \((\epsilon_t,\delta_t)\)-DP via Lemma \ref{lem:rdp-to-dp} gives the same per‐step budget.

\paragraph{3. Adaptive hyperparameter selection.}  
SAC‐based adaptation of clip‐thresholds and noise multipliers in \ours consumes no additional privacy budget beyond the DP‐SGD steps themselves.

\begin{lemma}[Zero‐Cost Hyperparameter Adaptation]
\label{lem:adaptation-zero}
Let \(\mathcal{M}_{1:t-1}\) be the joint DP‐SGD mechanism up to step \(t-1\), which is \(\bigl(\sum_{s=1}^{t-1}\epsilon_s,\sum_{s=1}^{t-1}\delta_s\bigr)\)-DP by adaptive composition.  Suppose the SAC controller computes the next thresholds
\[
  \bigl\{C^j_{t},\,\sigma_{t}\bigr\}
  = h_t\Bigl(
      \underbrace{\hat G_{1},\dots,\hat G_{t-1}}_{\mathcal{Y}},\,
      \underbrace{\epsilon_{1},\dots,\epsilon_{t-1}}_{\mathcal{E}},\,
      \underbrace{\delta_{1},\dots,\delta_{t-1}}_{\mathcal{D}},\,
      r
    \Bigr),
\]
where
\(\hat G_{s}\) are the noised gradients output by \(\mathcal{M}_{s}\), \((\epsilon_s,\delta_s)\) the recorded privacy costs, and \(r\) public randomness.  Then the combined mapping
\[
  \mathcal{H}_t(D;r)
  = h_t\bigl(\mathcal{M}_{1:t-1}(D),\,r\bigr)
\]
is also \(\bigl(\sum_{s=1}^{t-1}\epsilon_s,\sum_{s=1}^{t-1}\delta_s\bigr)\)-DP.
\end{lemma}

\begin{proof}
By definition, \(\mathcal{H}_t\) is just post‐processing of the DP outputs \(\mathcal{M}_{1:t-1}(D)\) through the function \(h_t\).  Since \(\mathcal{M}_{1:t-1}\) is \(\bigl(\sum_{s=1}^{t-1}\epsilon_s,\sum_{s=1}^{t-1}\delta_s\bigr)\)-DP, applying Theorem \ref{thm:postprocessing} with \(\epsilon=\sum_{s=1}^{t-1}\epsilon_s\), \(\delta=\sum_{s=1}^{t-1}\delta_s\), and \(f=h_t\) shows that \(\mathcal{H}_t\) remains \(\bigl(\sum_{s=1}^{t-1}\epsilon_s,\sum_{s=1}^{t-1}\delta_s\bigr)\)-DP.  
\end{proof}

\begin{corollary}
The adaptive selection of all per‐adapter clip thresholds and noise multipliers over the entire training run does not increase the total privacy cost: it remains \(\bigl(\sum_{t=1}^T\epsilon_t,\sum_{t=1}^T\delta_t\bigr)\)-DP.
\end{corollary}

\begin{proof}
Apply Lemma \ref{lem:adaptation-zero} at each step \(t\).  Since each \(\mathcal{H}_t\) uses only the output of \(\mathcal{M}_{1:t-1}\), it adds no extra privacy loss.  By sequential composition (Theorem \ref{thm:rdp-composition}), the overall budget remains the sum of the per‐step \((\epsilon_t,\delta_t)\).
\end{proof}

\paragraph{4. Enforcement of total budget.}  
In our implementation, we solve via the GDP accountant for a base noise multiplier \(\sigma_{\mathrm{base}}\) that ensures
\[
  \sum_{t=1}^T \epsilon_t 
  = \epsilon_{\mathrm{target}},
  \qquad
  \sum_{t=1}^T \delta_t 
  = \delta_{\mathrm{target}}.
\]
Because all actual \(\sigma_t\ge\sigma_{\mathrm{base}}\) and \(C_t^j\le1\), each \(\epsilon_t,\delta_t\) can only decrease, so the total remains within budget.

\medskip

Combining these four points, the full \ours training algorithm—comprising \(T\) DP‐SGD steps with adaptive clipping/noise and post‐processing updates—satisfies \((\epsilon_{\mathrm{target}},\delta_{\mathrm{target}})\)-differential privacy.  
\end{proof}

\end{document}